\theoremstyle{plain}
\newtheorem{theorem}{Theorem}[section]
\newtheorem{lemma}[theorem]{Lemma}
\newtheorem{corollary}[theorem]{Corollary}
\theoremstyle{definition}
\newtheorem{definition}[theorem]{Definition}
\theoremstyle{remark}
\newcommand{\Reals}{\mathbb{R}}
\newcommand{\bigO}{\mathcal{O}}
\newcommand{\rmax}{r_{\text{max}}}
\newcommand{\rmin}{r_{\text{min}}}
\newcommand{\EV}[2]{\mathbb{E}_{#2}\left[ {#1} \right]}
\newcommand{\prob}[1]{\mathbb{P}\left( {#1} \right)}
\newcommand{\II}[1]{\mathds{1}\left\{ {#1} \right\}}
\renewcommand{\c}[1]{\mathcal{#1}}
\renewcommand{\angle}[1]{\left\langle #1 \right\rangle}
\newcommand{\env}[1]{#1^\text{e}}
\newcommand{\mon}[1]{#1^\text{m}}
\newcommand{\rprox}{\env{\widehat{R}}}
\newcommand{\abs}[1]{\left \lvert #1 \right \rvert}
\newcommand{\oneNorm}[1]{\left \lVert #1 \right \rVert_1}
\newcommand{\thealgo}{Monitored MBIE-EB\xspace}
\newcommand{\aleft}{\texttt{LEFT}\xspace}
\newcommand{\aright}{\texttt{RIGHT}\xspace}
\newcommand{\aup}{\texttt{UP}\xspace}
\newcommand{\adown}{\texttt{DOWN}\xspace}
\newcommand{\astay}{\texttt{STAY}\xspace}
\newcommand{\son}{\texttt{ON}\xspace}
\newcommand{\soff}{\texttt{OFF}\xspace}
\def\estimate#1{\bar{#1}}
\def\model#1{\widetilde{#1}}
\newcommand{\Qobs}{\model Q_{\text{obs}}}
\newcommand{\Qopt}{\model Q_{\text{opt}}}
\DeclareMathOperator*{\argmin}{argmin}
\icmltitlerunning{Model-Based Exploration in Mon-MDPs}
\begin{document}

\twocolumn[
\icmltitle{Model-Based Exploration in Monitored Markov Decision Processes}



\icmlsetsymbol{equal}{*}

\begin{icmlauthorlist}
\icmlauthor{Alireza Kazemipour}{yyy,zzz}
\icmlauthor{Simone Parisi}{yyy,zzz}
\icmlauthor{Matthew E. Taylor}{yyy,zzz,ccc}
\icmlauthor{Michael Bowling}{yyy,zzz,ccc}
\end{icmlauthorlist}

\icmlaffiliation{yyy}{Department of Computing Science, University of Alberta}
\icmlaffiliation{zzz}{Alberta Machine Intelligence Institute (Amii)}
\icmlaffiliation{ccc}{Canada CIFAR AI Chair, Amii}

\icmlcorrespondingauthor{Alireza Kazemipour}{kazemipo@ualberta.ca}

\icmlkeywords{Reinforcement Learning, Exploration-Exploitation, Model-Based Interval Estimation, Monitored Markov Decision Processes}

\vskip 0.3in
]



\printAffiliationsAndNotice{} 

\begin{abstract}
A tenet of reinforcement learning is that the agent always observes rewards. However, this is not true in many realistic settings, e.g., a human observer may not always be available to provide rewards, sensors may be limited or malfunctioning, or rewards may be inaccessible during deployment. Monitored Markov decision processes (Mon-MDPs) have recently been proposed to model such settings. However, existing Mon-MDP algorithms have several limitations: they do not fully exploit the problem structure, cannot leverage a known monitor, lack worst-case guarantees for ``unsolvable'' Mon-MDPs without specific initialization, and offer only asymptotic convergence proofs. This paper makes three contributions. 
First, we introduce a model-based algorithm for Mon-MDPs that addresses these shortcomings. The algorithm employs two instances of model-based interval estimation: one to ensure that observable rewards are reliably captured, and another to learn a minimax-optimal policy.
Second, we empirically demonstrate the advantages. We show faster convergence than prior algorithms in more than four dozen benchmarks, and even more dramatic improvements when the monitoring process is known.
Third, we present the first finite sample-bound on performance. We show convergence to a minimax-optimal policy even when some rewards are never observable. 
\end{abstract}
\section{Introduction}
Reinforcement learning (RL) is based on trial-and-error: instead of being directly shown what to do, an agent receives consistent numerical feedback in the form of rewards for its decisions. However, this assumption is not always realistic because feedback often comes from an exogenous entity, such as humans~\citep{Shao2020Concept2RobotLM, hejna2024inverse} or monitoring instruments~\citep{thanhlong2021barrier}. Assuming the reward is available at all times is not reasonable in such settings, e.g., due to human time constraints~\citep{pilarski2011online}, hardware failure~\citep{bossev2016radiation, dixit2021silent}, or inaccessible rewards during deployment~\citep{andrychowicz2020learning}. Hence, relaxing the assumption that rewards are always observable would mark a significant step toward agents continually operating in the real world. As an extension of Markov decision processes (MDPs), monitored Markov decision processes (Mon-MDPs)~\citep{parisi2024monitored} have been proposed to model such situations, although algorithms for Mon-MDPs remain in their infancy~\citep{parisi2024beyond}.
Existing algorithms do not leverage the structure of Mon-MDPs, focusing exploration uniformly across the entire state-action space, and only have asymptotic guarantees without providing sample complexity bounds. Furthermore, they have focused on \emph{solvable} Mon-MDPs, where observing every reward under some circumstances is possible. The original introduction of Mon-MDPs also considered \emph{unsolvable} Mon-MDPs, proposing a minimax formulation as the optimal behavior, but no algorithms have explored this setting.

In this paper, we introduce \emph{Monitored Model-Based Interval Estimation with Exploration Bonus (\thealgo)}, a model-based algorithm for Mon-MDPs that offers several advantages over previous algorithms. \thealgo exploits the Mon-MDP structure to consider the uncertainty on each unknown component separately. This approach also makes it the first algorithm that can take advantage of situations where the agent knows the monitoring process in advance. Furthermore, \thealgo balances optimism in uncertain quantities with pessimism for rewards that have never been observed, reaching the minimax-optimal behavior in unsolvable Mon-MDPs. The trade-off between optimism and pessimism is challenging because pessimism may dissuade agents from exploring sufficiently in solvable Mon-MDPs. We address this by having a \emph{second} instance of MBIE-EB to force the agent to efficiently observe all rewards that can be observed. Building off of MBIE-EB~\cite{strehl2008analysis}, we prove the first polynomial sample complexity bounds~\citep{kakade2003sample, lattimore2012pac} as the measure of \thealgo's efficiency, which applies equally to solvable and unsolvable Mon-MDPs. We then explore the \thealgo's efficacy empirically. We present its efficient exploration in practice, outperforming the recent Directed Exploration-Exploitation algorithm~\citep{parisi2024beyond} in four dozen benchmarks, including all of environments from~\citet{parisi2024beyond}. We also demonstrate that \thealgo converges to optimal policies in solvable Mon-MDPs and minimax-optimal policies in unsolvable Mon-MDPs. This confirms \thealgo is able to separate the solvable Mon-MDPs from the unsolvable ones. Finally, we illustrate \thealgo can exploit knowledge of the monitoring process to learn even faster.
\section{Preliminaries}
\label{sec:preliminaries}
Traditionally, RL agent-environment interaction is modeled as a Markov decision process (MDP)~\citep{puterman1994discounted, sutton2018reinforcement}: at every timestep $t$ the agent performs an action $A_t$\footnote{We denote random variables with capital letters.}, according to the environment state $S_t$; in turn, the environment transitions to a new state $S_{t+1}$ and generates a bounded numerical reward $R_{t + 1}$. It is assumed that the agent always observes the rewards. Any partial observability is only considered for environment states, resulting in partially observable MDPs (POMDPs)~\citep{KAELBLING199899, chadès2021}. 
Until recently, prior work on partially observable rewards was limited to active RL~\citep{schulze2018active, krueger2020active}, RL from human feedback (RLHF)~\citep{kausik2024framework}, options framework~\citep{machado2016learning}, and reward-uncertain MDPs~\citep{regan2010robust}.
However, these frameworks lack the complexity to formalize situations where the reward observability stems from some stochastic processes. In active RL, the reward can always be observed by simply paying a cost; in RLHF, the reward is either never observable (with no guidance coming from the human) or always observable (with the human providing all the guidance). Neither of these settings capture scenarios, where there are rewards that the agent can only \emph{sometimes} see and whose observability can be predicted and possibly controlled. 

\begin{figure}[t]
    \centering
    \begin{subfigure}[b]{0.49\columnwidth}
        \centering
        \includegraphics[width=\columnwidth]{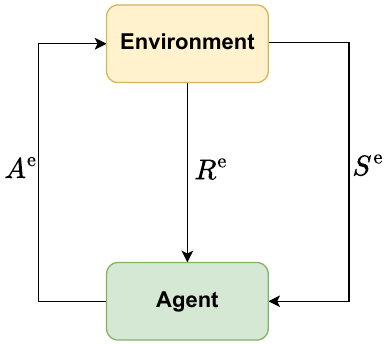}
        \caption{MDP framework} 
        \label{fig:mdp_framework}
    \end{subfigure}
    \hfill
    \begin{subfigure}[b]{0.49\columnwidth}
        \centering
        \includegraphics[width=\columnwidth]{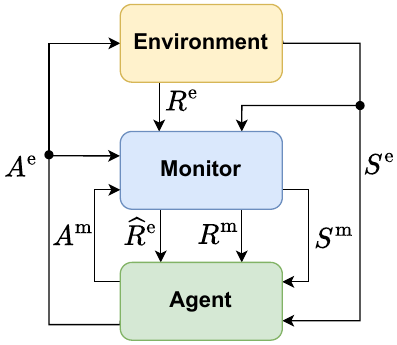}
        \caption{Mon-MDP framework}  
        \label{fig:mon-mdp_framework}
    \end{subfigure}
    \caption{In MDPs (left) the agent interacts only with the environment and observes rewards at all times. In Mon-MDPs (right), the agent also interacts with the monitor, which dictates what rewards the agent observes. For clarity, we have omitted the dependence on time from the notation.}
    \label{fig:mdp_vs_monmdp}
    \vspace{-17pt}
\end{figure}
Recently, inspired by partial monitoring~\citep{bartok2014partial}, \citet{parisi2024monitored} extended the MDP framework to also consider partially observable rewards by proposing the Monitored MDP (Mon-MDP). In Mon-MDPs, the observability of the reward is dictated by a ``Markovian entity'' (the monitor). Thus, actions can have either immediate or long-term effects on the reward observability. For example, rewards may become observable only when certain conditions are met — such as when the agent presses a button in the environment, carries a special item, or operates in areas equipped with instrumentation.
The control over observability of the reward opens avenues for model-based methods that attempt to model the process governing reward observability in order to \emph{plan which rewards to observe, or which states to visit where rewards are more likely to be observed}.
In the next sections, we 1) revisit Mon-MDPs as an extension of MDPs, 2) define minimax-optimality when some rewards may \emph{never be observable}, and 3) highlight how our algorithm addresses existing limitations.

\begin{figure*}[t]
    \begin{subfigure}[t]{0.3\linewidth}
        \centering
        \includegraphics[width=\linewidth]{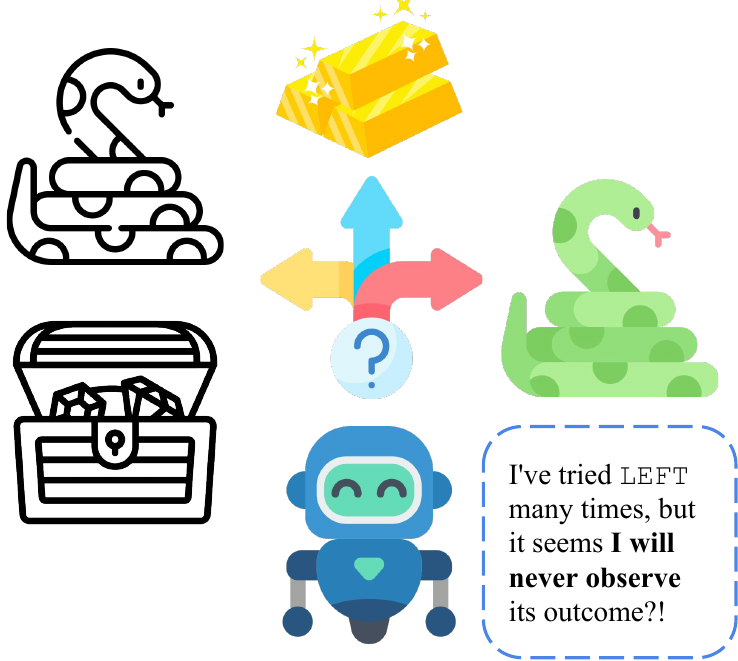}
        \caption{\texttt{LEFT}'s outcome cannot be observed and will always be unknown.}  
        \label{fig:dilemma}
    \end{subfigure}
    \hfill
    \begin{subfigure}[t]{0.3\linewidth}
        \centering
        \includegraphics[width=\linewidth]{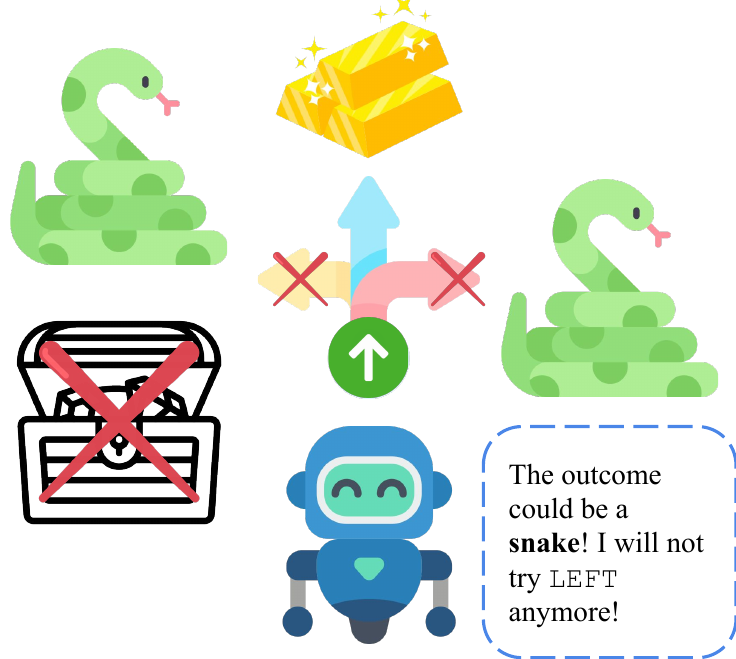}
        \caption{A pessimistic agent assumes the worst (snake) for \texttt{LEFT}.}
        \label{fig:cautious}
    \end{subfigure}
    \hfill
    \begin{subfigure}[t]{0.3\linewidth}
        \centering
        \includegraphics[width=\linewidth]{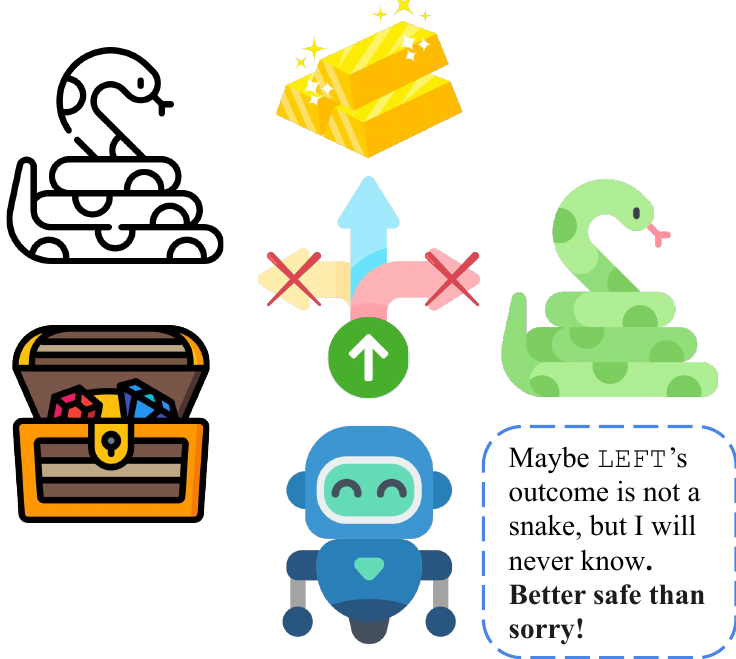}
        \caption{The agent would be pessimistic even if \texttt{LEFT}'s true outcome is not the snake.}  
        \label{fig:confident-cautious}
    \end{subfigure}
    \caption{\textbf{Example of a pessimistic agent in Mon-MDPs.} (a) The agent has to choose between \texttt{LEFT}, \texttt{UP}, and \texttt{RIGHT}. \texttt{RIGHT} leads to a snake, \texttt{UP} to gold bars, and \texttt{LEFT} to either a snake or a treasure chest (more valuable than gold bars), but the agent \emph{can never observe the result} of executing \texttt{LEFT}. (b) After sufficient attempts\footnotemark[5], the agent excludes \texttt{LEFT} because its outcome is unknown and the agent assumes the worst. (c) \texttt{LEFT} is ruled out, even though it could actually yield the treasure chest. However, since this cannot be known, acting pessimistically complies with minimax-optimality in \cref{eq:minimax}. In the end, the agent explores new actions but it is also wary because some actions may never yield a reward. Thus, after enough exploration, it assumes the worst if the action's outcome is still unknown.}
    \label{fig:treasure_hunter_robot}
\end{figure*}

\subsection{Monitored Markov Decision Processes}
\label{subsec:mon-mdps}
MDPs are represented by the tuple $\angle{\c{S}, \c{A}, r, p, \gamma}$, where $\c{S}$ is the finite state space, $\c{A}$ is the finite action space, $r: \c{S} \times \c{A} \to [\env{\rmin}, \env{\rmax}]$ is the mean reward function, $p: \c{S} \times \c{A} \to \Delta(\c{S})$\footnote{$\Delta(\c{X})$ denotes the set of distributions over the finite set $\c{X}$.} is the Markovian transition dynamics, and $\gamma \in [0, 1)$ is the discount factor describing the trade-off between immediate and future gains.
Mon-MDPs extend MDPs by introducing the \emph{monitor}, another entity that the agent interacts with and is also governed by Markovian transition dynamics. 
Intuitively, Mon-MDPs incorporate two MDPs --- one for the environment and one for the monitor --- and we differentiate quantities associated with each using superscripts ``e'' and ``m'', respectively.

In Mon-MDPs, the state and the action spaces comprise the environment and the monitor spaces, i.e., $\c{S} \coloneq \env{\c{S}} \times \mon{\c{S}}$ and $\c{A} \coloneq \env{\c{A}} \times \mon{\c{A}}$. At every timestep, the agent observes the state of both the environment and the monitor, and performs a joint action. 
The monitor also has Markovian dynamics, i.e., $\mon{p}: \mon{\c{S}} \times \mon{\c{A}} \times \env{\c{S}} \times \env{\c{A}} \to \Delta(\mon{\c{S}})$, and the joint transition dynamics is denoted by $p \coloneq \env{p} \otimes \mon{p}: \c{S} \times \c{A} \to \Delta(\c{S})$. Note that the \emph{monitor transition dynamics} depend on the \emph{environment state and action}, highlighting the interplay between the monitor and the environment.

Mon-MDPs also have two mean rewards, $r \coloneq \left(\env{r}, \mon{r}\right)$, where $\mon{r}: \mon{\c{S}} \times \mon{\c{A}} \to [\mon{\rmin}, \mon{\rmax}]$ is also bounded (To reduce the clutter only in the theoretical part of this work, assume $\env{\rmin} \coloneqq - \env{\rmax} = -1$  and  $\mon{\rmin} \coloneqq - \mon{\rmax} =-1$).
However, unlike classical MDPs, \emph{the environment rewards are not directly observable}. Instead, at a timestep $t$ in place of the (possibly stochastic) environment reward $\env{R}_{t + 1} \in \Reals$, the agent observes the \emph{proxy reward} $\rprox_{t + 1} \sim \mon{f}\left(\env{R}_{t + 1}, \mon{S}_t, \mon{A}_t\right)$, where $\mon{f}: \Reals \times \mon{\c{S}} \times \mon{\c{A}} \to \Reals \cup \{\bot\}$ is the monitor function and $\bot$ denotes an ``unobserved reward'', i.e., the agent does not receive any numerical reward\footnote[3]{Note that $\mon{f}$ could return any arbitrary real number, unrelated to the environment reward. To rule out pathological cases (e.g., the monitor function always returns 0), $\mon{f}$ is assumed to be truthful~\citep{parisi2024monitored}, i.e., the monitor either reveals the true environment reward ($\rprox_{t + 1} = \env{R}_{t + 1}$) or hides it ($\rprox_{t + 1} = \bot$).}.
Using the above notation, a Mon-MDP $M$ can be compactly denoted by the tuple $M = \angle{\c{S}, \c{A}, r, p, \mon{f}, \gamma}$. \cref{fig:mdp_vs_monmdp} illustrates the agent-environment-monitor interaction.

In Mon-MDPs, at timestep $t$ the agent executes the joint action $A_t \coloneqq (\env{A}_t, \mon{A}_t)$ at the joint state $S_t \coloneqq (\env{S}_t, \mon{S}_t)$. In turn, the environment and monitor states change and produce a joint reward $\left(\env{R}_{t + 1}, \mon{R}_{t + 1}\right)$, but the agent observes $\left(\rprox_{t + 1}, \mon{R}_{t + 1}\right)$. 
The agent's goal is to learn a policy $\pi: \c{S} \to \Delta(\c{A)}$ selecting joint actions to maximize the expected\footnote[4]{\cref{sec:obj} defines the reference measure for the expectation.} discounted return $\EV{\sum_{t=0}^\infty \gamma^{t} \left(\env{R}_{t + 1} + \mon{R}_{t + 1}\right)}{}$, \emph{even though the agent observes $\rprox_{t + 1}$ instead of} $\env{R}_{t + 1}$.
This is the crucial difference between MDPs and Mon-MDPs: the immediate environment reward $\env{R}_{t + 1}$ \emph{is always generated by the environment}, i.e., desired behavior is well-defined as the reward is sufficient to describe the agent's task~\citep{bowling2023settling}. Yet, the monitor may ``hide it'' from the agent, possibly even yielding ``unobservable reward'' $\rprox_{t + 1} = \bot$ at all times for some state-actions. 
For example, consider a task where a human supervisor (the monitor) gives the reward: if the supervisor leaves, the agent will not observe any reward anymore; yet, the task has not changed, i.e., the human --- \emph{if present} --- would still give the same rewards.
%
\footnotetext[5]{We define ``sufficient'' in \cref{thm:sample_cmplx}. Intuitively, the more confident the agent wants to be, the more it should try the action.}
\subsection{Learning Objective in Mon-MDPs}
\label{sec:obj}
In this section, we define the state-values and action-values for Mon-MDPs. Further, we define the learning objective as finding a minimax-optimal policy. Let $\mathbb{P}$ and $\mathbb{E}$ be the probability measure and expectations we get when a policy $\pi$ is run in $M$ starting from an initial state. Similar to MDPs, the state-value $V_M^\pi$ and action-value $Q_M^\pi$ denote the expected sum of discounted rewards, with an optimal policy $\pi^*$ maximizing them:
\begin{align}
    V_M^\pi(s) &\coloneqq \mathbb{E}\left[\sum_{k=t}^{\infty}\gamma^{k - t} R_{k +1} \middle\vert S_t=s\right], \nonumber    
    \\
    Q_M^\pi(s, a) & \coloneqq \mathbb{E} \left[\sum_{k=t}^{\infty}\gamma^{k - t} R_{k + 1} \middle\vert S_t=s, A_t=a \right], \nonumber 
    \\
     \underbrace{V_M^{\pi^*}}_{\eqqcolon V_M^*} &\in \sup_{\pi \in \Pi} V_M^\pi(s),  \qquad \forall s\in \c{S}, \label{eq:pi_opt}
\end{align}
where $R_{k + 1} \coloneqq \env{R}_{k + 1} + \mon{R}_{k + 1}$ is the immediate joint reward at timestep $k$ and $\Pi$ is the set of policies in $M$.
We stress once more that the agent cannot observe the immediate environment reward $\env{R}_{k + 1}$ directly and observes the immediate proxy reward $\rprox_{k + 1}$ for all $k \geq 0$, even though the environment still assigns rewards to the agent's actions.

\citet{parisi2024beyond} showed asymptotic convergence to an optimal policy with an \emph{ergodic} monitor function $\mon{f}$, i.e., for all environment state-action pairs $(\env{s}, \env{a})$, there exists at least a monitor state-action pair $(\mon{s}, \mon{a})$ such that the proxy reward is observed infinitely-often given infinite exploration. Formally, let $s \coloneqq (\env{s}, \mon{s})$ and $a \coloneqq (\env{a}, \mon{a})$, then
\begin{equation*}
    \mathbb{P}\left(\limsup_{t \to \infty} \rprox_{t + 1} \neq \bot \middle\vert S_t = s, A_t = a \right) = 1.
\end{equation*}
Intuitively, this means the agent will always be able to observe every environment reward (infinitely many times, given infinite exploration). 
However, if even one environment reward is \emph{never} observable, the Mon-MDP is \emph{unsolvable}, and convergence to an optimal policy cannot be guaranteed. Essentially, if the agent can never know that a specific state-action yields the highest (or lowest) environment reward, then it can never learn to visit (or avoid) it.
Nonetheless, we argue that assuming every environment reward is observable (sooner or later) is a very stringent condition, not suitable for real-world tasks --- reward instrumentation may have limited coverage, human supervisors may never be available in the evening, or training before deployment may not guarantee full state coverage.

To define the learning objective even in unsolvable Mon-MDPs, we follow~\citet[Appendix B.3]{parisi2024monitored}. First, let $[M]_{\mathds{I}}$ be the set of all Mon-MDPs the agent cannot distinguish based on the reward observability in $M$. That is, all Mon-MDPs with identical state and action spaces, transition dynamics and monitor function to $M$, but differing in their environment mean reward $\env{r}$ (for the full definition see \cref{appendix:distinguish}). If $M$ is solvable, all rewards can be observed and $[M]_{\mathds{I}} = \{M\}$, i.e., the set of indistinguishable Mon-MDPs to the agent is a singleton. Otherwise, from the agent's perspective, there are possibly infinitely many Mon-MDPs in $[M]_{\mathds{I}}$ because the underlying never-observable rewards' mean could be any real value within their bounded range. Second, let $M_\downarrow$ be the worst-case Mon-MDP, i.e., the one whose all never-observable rewards are $\env{\rmin}$: 
\begin{equation}
M_\downarrow \in \argmin_{M' \in [M]_{\mathds{I}}} \env{r}_{M'}(\env{s}, \env{a}), \quad \forall (\env{s}, \env{a})\in \env{\c{S}} \times \env{\c{A}},\label{eq:worst_monmdp}
\end{equation}
where $\env{r}_{M'}$ is the environment mean reward in $M'$. In words, $M_\downarrow$ is a Mon-MDP whose mean environment reward is minimized over all Mon-MDPs indistinguishable from $M$. 
Then, we define the \emph{minimax-optimal policy} of $M$ as the optimal policy of the worst-case Mon-MDP, i.e., 
\begin{equation}
 \underbrace{V_{M_\downarrow}^{\pi^*}}_{\eqqcolon V^*_\downarrow} \in \sup_{\pi \in \Pi} V_{M_\downarrow}^\pi(s), \quad\quad \forall s\in \c{S}, \label{eq:minimax}
\end{equation}
where $\Pi$ is the set of all policies in $M_\downarrow$. As noted earlier, if $M$ is solvable then $[M]_{\mathds{I}} = \{M\}$, and \cref{eq:minimax} is equivalent to \cref{eq:pi_opt}. Therefore, the minimax-optimal policy is simply the optimal policy. 
\cref{fig:treasure_hunter_robot} shows an unsolvable Mon-MDP and the minimax-optimal policy.
\section{Monitored Model-Based Interval Estimation}
We propose a novel model-based algorithm to exploit the structure of Mon-MDPs, show how to apply it on solvable and unsolvable Mon-MDPs, and provide sample complexity bounds.
As our algorithm builds upon MBIE and MBIE-EB~\cite{strehl2008analysis}, we first briefly review both.

\subsection{MBIE and MBIE-EB}
\label{sec:mbie_mbieeb}
MBIE is an algorithm for learning an optimal policy in MDPs with polynomial sample complexity. MBIE maintains confidence intervals on all unknown quantities (i.e., mean rewards and transition dynamics) and solves the set of corresponding MDPs to produce an optimistic value function. Greedy actions with respect to this value function direct the agent toward insufficiently visited state-action pairs to be certain whether they are part of the optimal policy or not. MBIE-EB is a simpler variant that constructs a single confidence interval around the action-values (instead of building confidence intervals around the mean rewards and transitions separately) with an exploration bonus to be optimistic with respect to the uncertain quantities.

Let $\estimate{R}$ and $\estimate{P}$ be maximum likelihood estimates of the MDP's unknown mean rewards and transition dynamics based on the agent's experience, and let $N(s,a)$ count the number of times action $a$ has been taken in state $s$. MBIE-EB constructs an optimistic MDP\footnote[6]{In this work, the denominator of bonuses starts at one; if zero, optimistic initialization is used unless otherwise stated.},
\begin{equation}
\model R(s,a) = \estimate R(s,a) + 
\underbrace{\frac{\beta}{\sqrt{N(s,a)}}}_{\text{bonus for $r, p$}},
\qquad
\model P = \estimate P,
\end{equation}
where $\beta$ is a parameter chosen to be sufficiently large. It solves the MDP (using value iteration) to find $\model Q$, the optimal action-value under the model, and acts greedily with respect to this function to gather more data to update its model.
\subsection{\thealgo}
\thealgo can be considered an extension of MBIE-EB to the Mon-MDPs with three key innovations. First, we adapt MBIE-EB to model each of the vital unknown components of the Mon-MDP (mean rewards and transition dynamics of both the environment and the monitor), each with their own exploration bonuses. Second, observing that the optimism for unobservable environment state-action pairs in an unsolvable Mon-MDP will never vanish --- the agent will try forever to observe rewards that are actually unobservable --- we further adapt the algorithm to make worst-case assumptions on all unobserved environment rewards. Unfortunately, this creates an additional problem: environment state-action pairs whose rewards are hard to observe may never be sufficiently tried because they are dissuaded by the pessimistic model. Third, to balance the optimism-driven exploration and the pessimism induced by the worst-case assumption, \thealgo interleaves a second MBIE-EB instance that ensures unobserved environment state-actions are sufficiently explored.

\textbf{First Innovation: Extend MBIE-EB to Mon-MDPs.}
Let $\env{\estimate{R}}, \mon{\estimate{R}}, \estimate{P}$ be maximum likelihood estimates of the environment mean reward, monitor mean reward, and the joint transition dynamics, respectively, all based on the agent's experience. Let $N(\mon{s}, \mon{a})$ count the number of times action $\mon{a}$ has been taken in $\mon{s}$, $N(s,a)$ count the same joint state-action pairs, and $N(\env{s}, \env{a})$ count environment state-action pairs, \emph{but only if the environment reward was observed.}
We then construct the following optimistic MDP using reward bonuses for the unknown estimated quantities,
\begin{align}
\model R_{\text{basic}}(s,a) =\,& 
\env{\estimate{R}}(\env{s},\env{a}) + 
\overbrace{\frac{\env{\beta}}{\sqrt{N(\env{s},\env{a})}}}^{\text{bonus for $\env{r}$}} + \nonumber \\
& \,\mon{\estimate{R}}(\mon{s},\mon{a}) + 
\underbrace{\frac{\mon{\beta}}{\sqrt{N(\mon{s},\mon{a})}}}_{\text{bonus for $\mon{r}$}} +
\underbrace{\frac{\beta}{\sqrt{N(s, a)}}}_{\text{bonus for $p$}}, \nonumber
\\
\model P =\,& \estimate P, \label{eq:RBasic}
\end{align}
where $\beta$, $\env{\beta}$, $\mon{\beta}$ are hyperparameters for the confidence level of our optimistic model. As with MBIE-EB, this optimistic model is solved to find $\model Q$ and actions are selected greedily. For solvable Mon-MDPs, MBIE-EB's theoretical results apply directly to the joint Mon-MDP, yielding a sample complexity bound. But, this algorithm fails to make any guarantees for unsolvable Mon-MDPs, where some environment rewards are never-observable. In such situations, $N(s^e, a^e)$ never grows for some state-actions, thus optimism will direct the agent to seek out these state-actions, for which it can never reduce its uncertainty. 

\textbf{Second Innovation: Pessimism Instead of Optimism.}
We fix this excessive optimism in \cref{eq:RBasic} by creating a new reward model that is pessimistic, rather than optimistic, about unobserved environment state-action rewards:
\begin{align}
\label{eq:r_opt}
&\model R_{\text{opt}}(s,a) = \,
\begin{cases}
\model R_{\text{basic}}(s,a), & \text{if $N(\env{s},\env{a}) > 0$}, \\
\model R_{\text{min}}(s,a), & \text{otherwise},
\end{cases}
\\
\intertext{where $\model R_{\text{min}}(s,a) =$} 
&\env{r}_{\min} + \mon{\estimate{R}}(\mon{s},\mon{a}) \, + 
\frac{\mon{\beta}}{\sqrt{N(\mon{s},\mon{a})}} +
\frac{\beta}{\sqrt{N(s, a)}}.
\end{align}
We call an episode where we take greedy actions according to $\Qopt$ an \emph{optimize} episode, as this ideally produces a minimax-optimal policy for all Mon-MDPs. The reader may have already realized this pessimism will introduce a new problem --- dissuading the agent from exploring to observe previously unobserved environment rewards. Instead, we aim to observe all rewards but not too frequently that we prevent the agent from following the minimax-optimal policy when the Mon-MDP is actually unsolvable.

\textbf{Third Innovation: Explore to Observe Rewards.}
We fix this now excessive pessimism by introducing a separate MBIE-EB-guided exploration aimed at discovering previously unobserved environment rewards. The following reward model does exactly that. Let $\mathds{1}\scriptstyle{\{N(\env{s}, \env{a}) = 0\}}$ be the indicator function returning one if $N(\env{s}, \env{a})$ is equal to zero and returns zero otherwise, then $\model R_{\text{obs}}(s,a) =$
\begin{equation}
\label{eq:r_obs} 
\text{KL-UCB}\left(0, N(s, a)\right) \mathds{1}\scriptstyle{\left\{N\left(\env{s}, \env{a}\right) = 0 \right\}} + \frac{\beta^{\text{obs}}}{\sqrt{N(s,a)}},
\end{equation}
Therefore, the KL-UCB~\cite{garivier2011kl, maillard2011finite} term is only included for environment state-action pairs whose rewards have not been observed. Let $d$ be the relative entropy between two Bernoulli distributions and $\beta^{\text{KL-UCB}}$ be a positive constant, then $\text{KL-UCB}(\estimate{\mu}, n)=$
\begin{align*}
    &\left. \max_{\mu}\left\{\mu \in [0, 1]: d(\estimate{\mu}, \mu) \leq \frac{\beta^{\text{KL-UCB}}}{n} \right\}\right\vert_{\estimate{\mu} = 0} = \\
    &  \max_{\mu}\left\{\mu \in [0, 1]: \ln{\left(\frac{1}{1 - \mu}\right)} \leq \frac{\beta^{\text{KL-UCB}}}{n} \right\}.
\end{align*}
We are using KL-UCB to estimate an upper-confidence bound (with parameter $\beta^{\text{KL-UCB}}$) on the probability of observing the environment reward from joint state-action $(s, a)$ given that we have tried it $N(s, a)$ times already and have not succeeded to observe the reward (zero as the first argument of KL-UCB in $\model R_{\text{obs}}$ indicates this lack of success). As we are constructing a upper confidence bound on a Bernoulli random variable (whether the reward is observed or not), KL-UCB is ideally suited and provides tight bounds. The result is an optimistic model for an MDP (built based on the joint Mon-MDP, where $\model R_{\text{obs}}$ is the mean reward) that rewards the agent for observing previously unobserved environment rewards. An episode where we take greedy actions with respect to $\Qobs$ we call an \emph{observe} episode.
If we have enough \emph{observe} episodes, we can guarantee that all observable environment rewards are observed with high probability. If we have enough \emph{optimize} episodes we can learn and follow the minimax-optimal policy. We balance between the two by switching the model we optimize according to
\begin{align}
\model R(s,a) & =
\left\{ \begin{array}{ll}
\model R_{\text{obs}}(s,a), & \text{if $\kappa \le \kappa^*(k)$}; \\
\model R_{\text{opt}}(s,a), & \text{otherwise};
\end{array} \right. 
\end{align}
where $\kappa^*$ is a sublinear function returning how many episodes of the $k$ total episodes should have been used to \emph{observe} and $\kappa$ is the number of episodes that have been used to \emph{observe}. The choice of $\kappa^*$ is a hyperparameter. 
\thealgo then constructs the $\left\{\model R(s,a), \model P(s,a)\right\}$ model and selects greedy actions with the respect to its optimal action-value $\model Q$. 
The choice to hold the policy fixed throughout the course of an episode is a matter of simplicity, giving easier analysis that \emph{observe} episodes will observe environment rewards, as well as computational convenience.
\subsection{Theoretical Guarantees}
One measure of RL algorithms' efficiency is their finite-time sample complexity, i.e., the number of timesteps (decision) required for the algorithm to find a near-desired policy with high probability~\citep{kakade2003sample, strehl2009pac, lattimore2012pac}. \thealgo has a polynomial sample complexity (with respect to the relevant parameters) even in unsolvable Mon-MDPs. There exists parameters where the algorithm guarantees with high probability ($1-\delta$) of being arbitrarily close ($\varepsilon$) to a minimax-optimal policy for all but a finite number of timesteps, which is a polynomial of $\frac{1}{\varepsilon}$, $\frac{1}{\delta}$, and other quantities characterizing the Mon-MDP. \cref{thm:sample_cmplx} establishes the sample complexity of \thealgo, the first sample complexity bound for Mon-MDPs. It is a unification of necessary new propositions that extend the analysis of MBIE-EB~\citep[Theorem 2]{strehl2008analysis} to Mon-MDPs.
\begin{theorem}
\label{thm:sample_cmplx} 
For any $\varepsilon, \delta > 0$, and Mon-MDP $M$ where $\rho$ is the minimum non-zero probability of observing the environment reward in $M$ and $H$ is the maximum episode length, there exists constants $m_1$, $m_2$, and $m_3$, where
\begin{align*}
        m_1 & = \widetilde{\bigO} \left(\frac{|\c{S}|}{\varepsilon^2(1 - \gamma)^4}\right), \\
        m_2 & = \widetilde{\bigO}\left(\frac{1}{\rho \varepsilon^2(1 - \gamma)^4} \right), \\
        m_3 &= \widetilde{\bigO}\left(\frac{\abs{\c{S}}^2\abs{\c{A}}}{\varepsilon^3(1 - \gamma)^5}\right),
\end{align*}
such that \thealgo with parameters,
\begin{align*}
    \beta &= \frac{2\gamma}{1 - \gamma}\sqrt{2\ln{\left(5\abs{\c{S}}\abs{\c{A}}m_2/\delta \right)}}, \\
    \mon{\beta} &= \sqrt{2\ln{\left(5\abs{\c{S}}\abs{\c{A}}m_2 / \delta \right)}}, \\
    \env{\beta} &= \sqrt{2\ln{\left(5\abs{\c{S}}\abs{\c{A}}m_2/\delta \right)}}, \\
    \beta^{\text{obs}} &= (1 - \gamma)^{-1} \sqrt{0.5\ln{\left(10\abs{\c{S}}\abs{\c{A}}m_1 / 3\delta \right)}}, \\ 
    \beta^{\text{KL-UCB}} &= \ln{\left(10\abs{\c{S}}\abs{\c{A}}m_1 / 3\delta \right)}, \\
    \kappa^*(k) &= m_3, \qquad \text{(constant function)}
\end{align*}
provides the following bounds for $M$.
Let $\pi_t$ denote \thealgo's policy and $S_t$ denote the state at time $t$. With probability at least $1 - \delta$, $V_\downarrow^{\pi_t}(S_t) \geq V_\downarrow^* - \varepsilon$ for all but $\widetilde{\bigO}\left(\dfrac{\abs{\c{S}}\abs{\c{A}}H}{\varepsilon^3 (1 - \gamma)^5\rho}\right)$ timesteps.
\end{theorem}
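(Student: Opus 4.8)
The plan is to adapt the MBIE-EB analysis of \citet{strehl2008analysis} by decomposing the argument along the two roles played by the two instances of MBIE-EB. The key conceptual move is to treat \emph{observe} episodes and \emph{optimize} episodes separately and to argue that, after a bounded number of each, the agent is simultaneously (i) done discovering all observable environment rewards and (ii) running a near-minimax-optimal policy on the worst-case Mon-MDP $M_\downarrow$. First I would set up high-probability events via union bounds: that every confidence interval built from $\env{\estimate R}, \mon{\estimate R}, \estimate P$ (with the stated $\beta, \env\beta, \mon\beta$) contains the true quantity, that the KL-UCB bounds with $\beta^{\text{KL-UCB}}$ hold, and that the sample-path / Azuma-Hoeffding concentration arguments of the MBIE-EB proof go through. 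Each such event should hold with probability $1-\delta/5$ (matching the $5|\c S||\c A|m_2/\delta$ and $10|\c S||\c A|m_1/3\delta$ terms in the parameters), so the final failure probability is at most $\delta$.

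The core of the argument is a \emph{known-state-action} bookkeeping of the kind used in PAC-MDP proofs. Define an environment state-action $(\env s,\env a)$ to be ``observation-known'' once either $N(\env s,\env a)>0$ or it has been tried at least $m_3$-many times in \emph{observe} episodes without a single observation. I would show: (1) With high probability, after at most $\widetilde\bigO(m_3|\c S||\c A|)$ \emph{observe}-episode decisions, every environment state-action is observation-known; for those with observation probability $\geq\rho$, the KL-UCB term ensures that if such a pair were still unobserved it would be prioritized, and by the KL-UCB confidence guarantee (applied with $n = m_3 = \widetilde\bigO(1/(\rho\varepsilon^2(1-\gamma)^4))$) it will have been observed at least once w.h.p.; pairs with observation probability $0$ simply get frozen out. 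This is where $m_2 = \widetilde\bigO(1/(\rho\varepsilon^2(1-\gamma)^4))$ and the factor $1/\rho$ in the final bound come from. (2) Once all observable rewards are observed, $\model R_{\text{opt}}$ coincides (on all reachable, observation-relevant state-actions) with an optimistic model for $M_\downarrow$: the never-observed pairs are pinned at $\env\rmin$, exactly the definition of $M_\downarrow$ in \cref{eq:worst_monmdp}, while the observed ones carry the standard MBIE-EB optimism. Hence the analysis of \citet[Theorem 2]{strehl2008analysis} applies verbatim to $M_\downarrow$, giving $V_\downarrow^{\pi_t}(S_t)\geq V_\downarrow^*-\varepsilon$ on all but $\widetilde\bigO(|\c S||\c A|H/(\varepsilon^3(1-\gamma)^5))$ of the \emph{optimize}-episode decisions; this is the source of $m_1$, $m_3$, and the $|\c S||\c A|H/(\varepsilon^3(1-\gamma)^5)$ term.

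To combine, I would note that since $\kappa^*(k)=m_3$ is constant, the total number of \emph{observe} episodes is at most $m_3$, so the extra decisions they contribute, $\widetilde\bigO(m_3 H)=\widetilde\bigO(H/(\rho\varepsilon^2(1-\gamma)^4))$, are absorbed into the stated bound $\widetilde\bigO(|\c S||\c A|H/(\varepsilon^3(1-\gamma)^5\rho))$. The remaining subtlety is the \emph{interleaving}: because the model switches between $\model R_{\text{obs}}$ and $\model R_{\text{opt}}$ at episode boundaries, the ``escape probability'' / mixing argument of MBIE-EB must be run per-episode with the policy held fixed (which the paper notes is exactly why the policy is frozen within an episode), and the generalized induced-inequality / simulation lemma must be stated for the Mon-MDP so that value differences decompose additively over the environment-reward bonus, monitor-reward bonus, and transition bonus in \cref{eq:RBasic}. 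I expect the main obstacle to be item (1): quantitatively turning the KL-UCB guarantee on the Bernoulli observation-event into the statement ``$m_3$ tries suffice to observe any pair with probability $\geq\rho$, w.h.p., even though the agent only visits it when the planner chooses to,'' which requires coupling the KL-UCB exploration bonus in $\model R_{\text{obs}}$ with a reachability/escape argument showing \emph{observe} episodes actually reach the targeted unobserved pairs often enough — essentially re-deriving a PAC-MDP guarantee for the auxiliary ``observation MDP'' and then transferring it back.
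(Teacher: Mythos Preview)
Your high-level decomposition into \emph{observe} and \emph{optimize} phases, together with the idea of setting up a family of high-probability confidence events and then invoking a known-state-action / escape-probability argument \`a la \citet{strehl2008analysis}, matches the paper's architecture. However, there is a genuine gap in where you locate the $\rho^{-1}$ dependence, and it propagates into an incorrect accounting of the constants.

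You claim that once every observable environment reward has been seen \emph{at least once}, ``the analysis of \citet[Theorem~2]{strehl2008analysis} applies verbatim to $M_\downarrow$,'' yielding an optimize-episode bound of $\widetilde\bigO(|\c S||\c A|H/(\varepsilon^3(1-\gamma)^5))$ with no $\rho$. This is not right: the environment-reward bonus in $\model R_{\text{basic}}$ is $\env\beta/\sqrt{N(\env s,\env a)}$, and $N(\env s,\env a)$ counts \emph{observations}, not visits. To drive this bonus below the $\tau$-accuracy needed by the simulation lemma you must have $N(\env s,\env a)=\Omega(1/\varepsilon^2(1-\gamma)^4)$ observations, which requires $\Omega(1/(\rho\,\varepsilon^2(1-\gamma)^4))$ \emph{visits} to $(s,a)$. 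That is precisely the definition of $m_2$, and it is why the paper's optimize-phase bound carries the $\rho^{-1}$ factor, not the observe phase. In the paper's argument the observe episodes have sample complexity $\widetilde\bigO(|\c S|^2|\c A|H/\varepsilon^3(1-\gamma)^5)$, independent of $\rho$; the final bound then absorbs this into $\widetilde\bigO(|\c S||\c A|H/(\rho\,\varepsilon^3(1-\gamma)^5))$ under the worst-case regime $\rho^{-1}\gtrsim |\c S|$.

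Relatedly, you have the constants tangled: $m_1$ (per-pair visits for the observe phase to learn the observability model) and $m_3=\kappa^*$ (total number of observe episodes) are $\rho$-free, while $m_2$ (per-pair visits for the optimize phase) is the one carrying $\rho^{-1}$. Your lines ``$n=m_3=\widetilde\bigO(1/(\rho\varepsilon^2(1-\gamma)^4))$'' and ``$m_3 H=\widetilde\bigO(H/(\rho\varepsilon^2(1-\gamma)^4))$'' substitute the order of $m_2$ for $m_3$. Fixing this is not cosmetic: with the correct roles, the observe phase is analyzed as an ordinary MBIE-EB instance on the auxiliary ``observability MDP'' (mean reward $b(s,a)$), while the $\rho^{-1}$ enters only when you bound how many optimize-phase visits are needed before $\env{\estimate R}$ is $\tau$-accurate. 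Your step (1) as written, aiming only at ``observed at least once,'' is therefore too weak to carry the rest of the argument.
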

The reader may refer to \cref{appendix:proof_sample_cmplx} for the proof. 

An interesting addition to the bound over MBIE-EB bound for classical MDPs is the dependence on the $\rho$'s inverse, which bounds how difficult it is to observe the observable environment rewards. If a Mon-MDP does reveal all rewards (it is solvable) but only does so with infinitesimal probability, an algorithm must be suboptimal for many more timesteps. In \cref{appendix:lwr_bnd}, in a simpler setting of a stochastic bandit problem with finitely-many arms and partially observable rewards by providing a lower bound, we show the dependence of \cref{thm:sample_cmplx}'s bound on $\rho^{-1}$ is essentially unimprovable.
\subsection{Practical Implementation}
The theoretically justified parameters for \thealgo present a couple of challenges in practice. First off, we rarely have a particular $\varepsilon$ and $\delta$ in mind, preferring algorithms that produce ever-improving approximations with ever-improving probability. Second, the bound, while polynomial in the relevant values, does not suggest practical parameters. The most problematic in this regard is the constant $\kappa^*$, which places all \emph{observe} episodes at the start of training. Third, solving a model exactly and from scratch each episode to compute $\model Q$ is computationally wasteful.

In practice, we slowly increase the confidence levels used in the bonuses over time. We follow the pattern of~\citet{lattimore2020bandit}, with the confidence growing slightly faster than logarithmically\footnote[7]{\label{fn:sub_gauss}
We replace $\beta$ with $\beta \sqrt{g(\ln N(s))}$, where $g(x)= 1 + x\ln^2(x)$ and $N(s)$ counts the number of visits to state $s$. This choice of $g$ is required as rewards and state-values are unlikely to follow a Gaussian distribution, but being bounded allows us to assume they are sub-Gaussian.
We similarly grow $\env\beta$, $\mon\beta$, and $\beta^{\text{obs}}$, and replace $\beta^{\text{KL-UCB}}$ with $\beta^{\text{KL-UCB}} g(\ln N(s))$.}.
The scale parameters $\beta$, $\mon\beta$, $\env\beta$, $\beta^{\text{obs}}$, $\beta^{\text{KL-UCB}}$ were tuned manually for each environment. 
For $\kappa^*$ we also grow it slowly over time allowing the agent to interleave \emph{observe} and \emph{optimize} episodes: $\kappa^*(k) = \log k$, where the log base was also tuned manually for each environment. 
Finally, rather than exactly solving the models every episode, we maintain two action-values: $\Qobs$ and $\Qopt$, both initialized optimistically. we do 50 steps of synchronous value iteration before every episode to improve $\Qopt$, and before every \emph{observe} episodes to improve $\Qobs$. 
\begin{figure*}[t]
\begin{minipage}[b]{0.2\textwidth}
\centering
    \begin{subfigure}[b]{\columnwidth}
        \centering
        \includegraphics[width=\columnwidth]{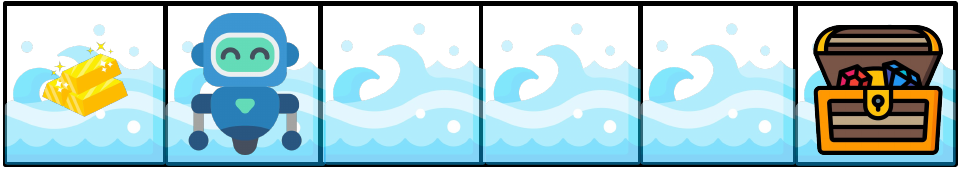}
        \caption{\label{fig:main_river_swim}River Swim}
    \end{subfigure}
    \\[1em]
    \begin{subfigure}[b]{\columnwidth}
        \centering
        \includegraphics[width=\columnwidth]{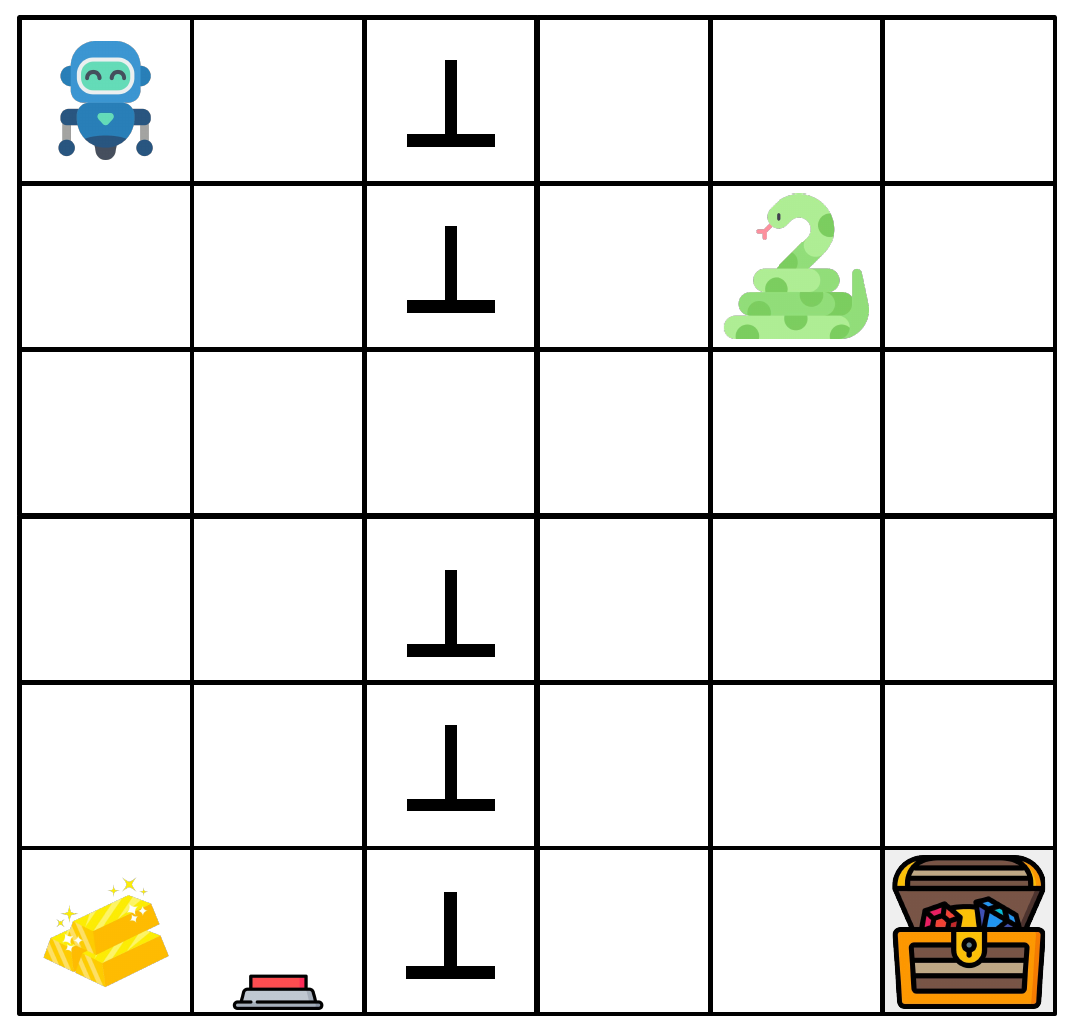}
        \caption{\label{fig:main_Bottleneck}Bottleneck}
    \end{subfigure}
\caption{\textbf{Environments}}
\label{fig:three}
\end{minipage}
\hfill
\begin{minipage}[b]{0.767\textwidth}
    \centering
    \includegraphics[width=0.84\linewidth]{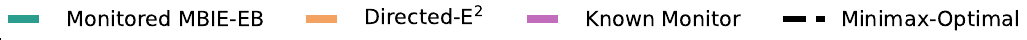}
    \\[4pt]
    \raisebox{55pt}{\rotatebox[origin=t]{90}{\fontfamily{cmss}\scriptsize{Discounted Test Return}}}
    \begin{subfigure}[b]{0.24\linewidth}
        \centering
        \includegraphics[width=\linewidth]{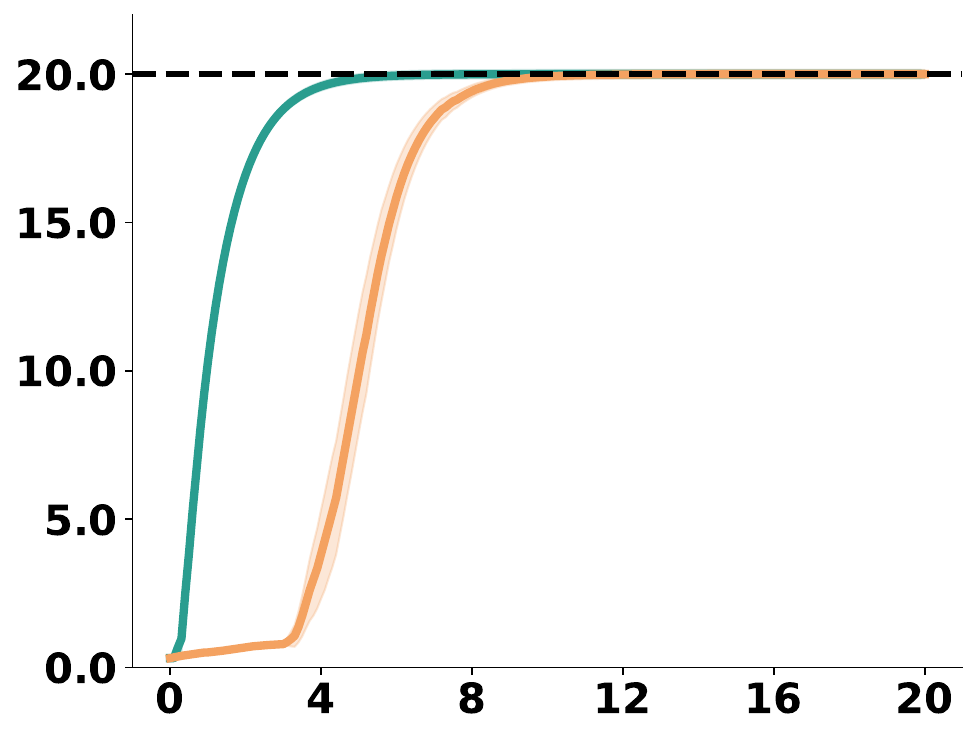}
        \\[-4pt]
        {\hspace*{1em}\fontfamily{cmss}\scriptsize{Training Steps ($\times 10^3$)}}
        \caption{\label{fig:river_return}River Swim}
    \end{subfigure} 
    \hfill
    \begin{subfigure}[b]{0.24\linewidth}
        \centering
        \includegraphics[width=\linewidth]{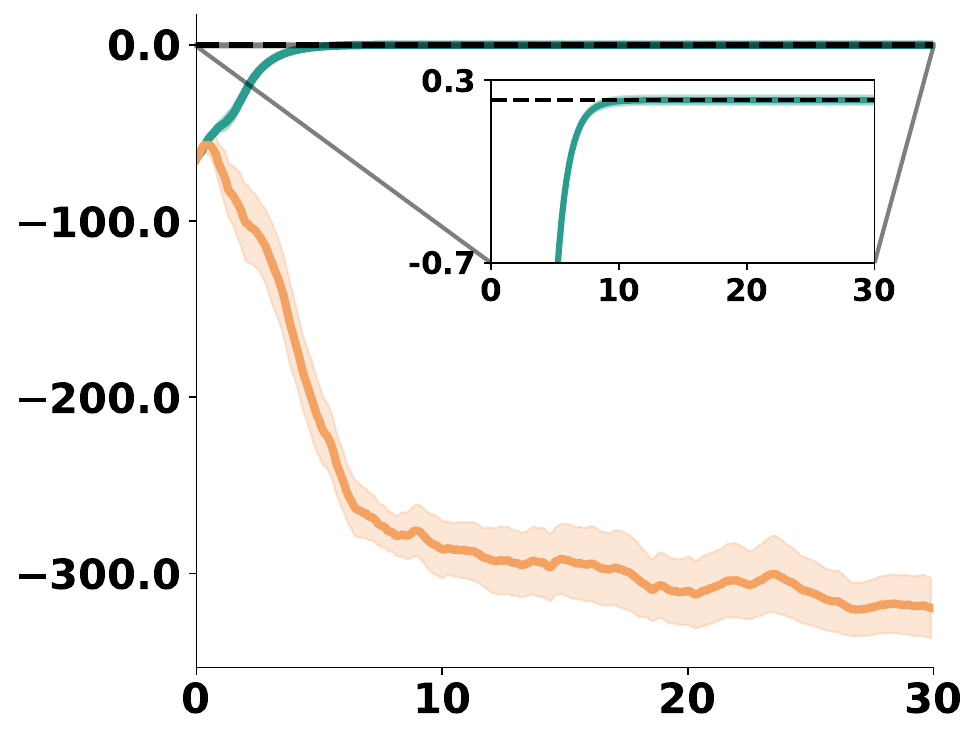}
        \\[-4pt]
        {\hspace*{1em}\fontfamily{cmss}\scriptsize{Training Steps ($\times 10^3$)}}
        \caption{\label{fig:bottleneck_100_return}Bottleneck (100\%)}
    \end{subfigure} 
    \hfill
        \begin{subfigure}[b]{0.24\textwidth}
        \centering
        \includegraphics[width=\linewidth]{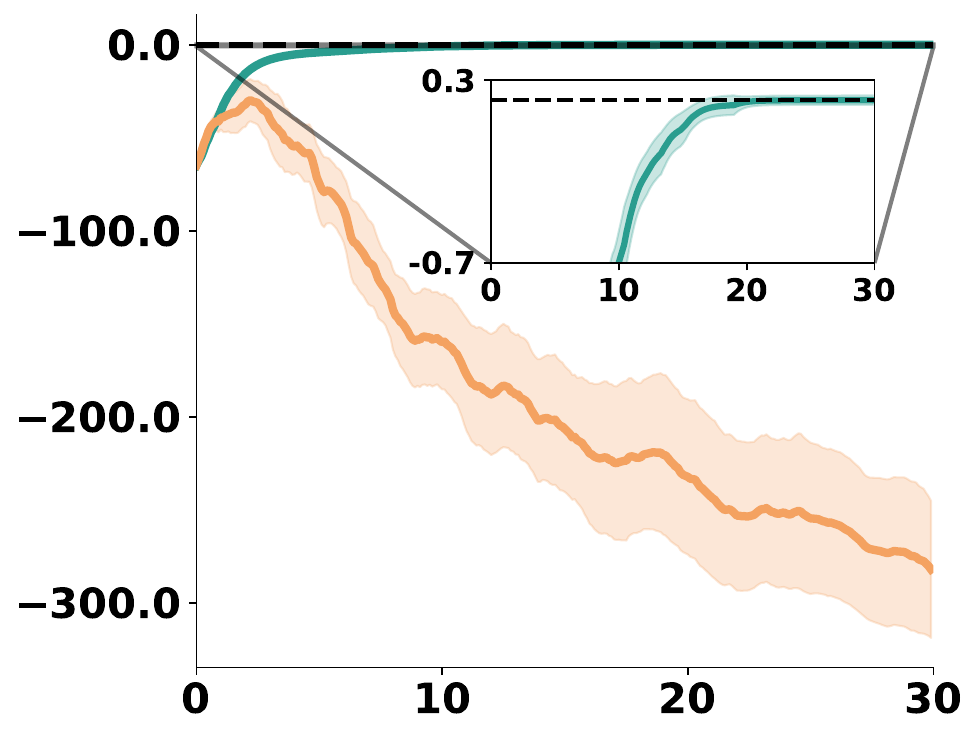}
        \\[-4pt]
        {\hspace*{1em}\fontfamily{cmss}\scriptsize{Training Steps ($\times 10^3$)}}
        \caption{\label{fig:bottleneck_5_return}Bottleneck (5\%)}
    \end{subfigure}
    \hfill
    \begin{subfigure}[b]{0.24\textwidth}
        \centering
        \includegraphics[width=\linewidth]{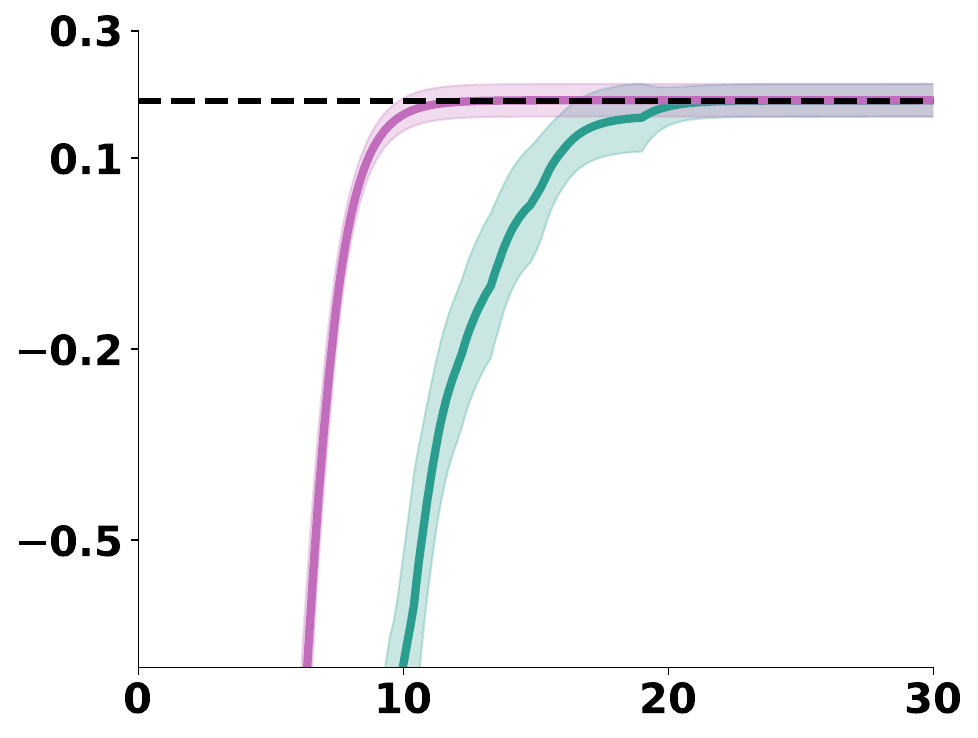}
        \\[-4pt]
        {\hspace*{1em}\fontfamily{cmss}\scriptsize{Training Steps ($\times 10^3$)}}
        \caption{\label{fig:bottleneck_5_return_zoom}Bottleneck (5\%)}
    \end{subfigure}

    \caption{\label{fig:four}\textbf{Discounted return at test time}, averaged over 30 seeds (shaded areas denote 95\% confidence intervals). 
    \thealgo (in green) outperforms Directed-E$^2$ (in orange) and always converges to the minimax-optimal policy (the dashed black line).
    (c) and (d) both show results in the Bottleneck with the 5\% Button Monitor, but with different axis ranges to highlight the improvement if \thealgo already knows details of the monitor (in purple).
    }
\end{minipage}
\end{figure*}
\section{Empirical Evaluation}
This paper began by detailing limitations in prior work not leveraging the Mon-MDP structure, the possibility of a known monitor, nor dealing with unsolvable Mon-MDPs. This section breaks down this claim into four research questions (RQ) to investigate if \thealgo can:
RQ1) Explore efficiently in hard-exploration tasks? RQ2) Act pessimistically when rewards are unobservable? RQ3) Identify and learn about difficult to observe rewards? RQ4) Take advantage of a known model of the monitor?

To directly address these questions, we first show results on two tasks with two monitors. Then, we show results on 48 benchmarks to strengthen our claims\footnote[8]{Code: \url{https://github.com/IRLL/Exploration-in-Mon-MDPs}.}. 

\subsection{Environment and Monitor Description}
\emph{River Swim} (Figure~\ref{fig:main_river_swim}) is a well-known difficult exploration task with two actions. Moving \aleft always succeeds, but moving \aright may not --- the river current may cause the agent to stay at the same location or even be pushed to the left. There is a goal state on the far right, where moving right yield a reward of 1. But, the leftmost tile yields 0.1 and it is easier to reach. Other states have zero rewards. Agents often struggle to find the optimal policy (always move \aright), and instead converge to always move \aleft.
In our experiments, we pair River Swim with the \emph{Full Monitor} where environment rewards are always freely observable, allowing us to focus on an algorithm's exploration ability.

\emph{Bottleneck} (Figure~\ref{fig:main_Bottleneck}) has five deterministic actions: \aleft, \aup, \aright, \adown, \astay, which move the agent around the grid. Episodes end when the agent executes \astay in either the gold bars state (with a reward of 0.1) or in the treasure chest state (with a reward of 1). Reaching the snake state yields -10, and other states yield zero. However, states denoted by $\bot$ have \emph{never-observable} rewards of -10, i.e., $\env{R}_{t + 1} = -10$ but $\rprox_{t + 1} = \bot$ \emph{at all times $t$}. 
In our experiments, we pair Bottleneck with the \emph{Button Monitor}, where the monitor state can be \son or \soff (initialized at random) and is switched if the agent executes \adown in the button state. 
When the monitor is \son, the agent receives $\mon{R}_{t + 1} = -0.2$ at every timestep $t$, but also observes the current environment reward (unless the agent is in a $\bot$ state). 
The minimax-optimal policy follows the shortest path to the treasure chest, while avoiding the snake and $\bot$ states, and turning the monitor \soff if it was \son at the beginning of the episode. 
To evaluate \thealgo when observability is stochastic, we consider two Button Monitors: one where the monitor works as intended and rewards are observable with 100\% probability if \son, and a second where the  rewards are observable only with 5\% probability if \son.
\begin{figure*}[t]
    \begin{minipage}[c]{0.41\textwidth}
    \raisebox{60pt}{\rotatebox[origin=t]{90}{\fontfamily{cmss}\scriptsize{Visitation Count}}}
    \begin{subfigure}[b]{0.47\textwidth}
        \centering
        \includegraphics[width=\linewidth]{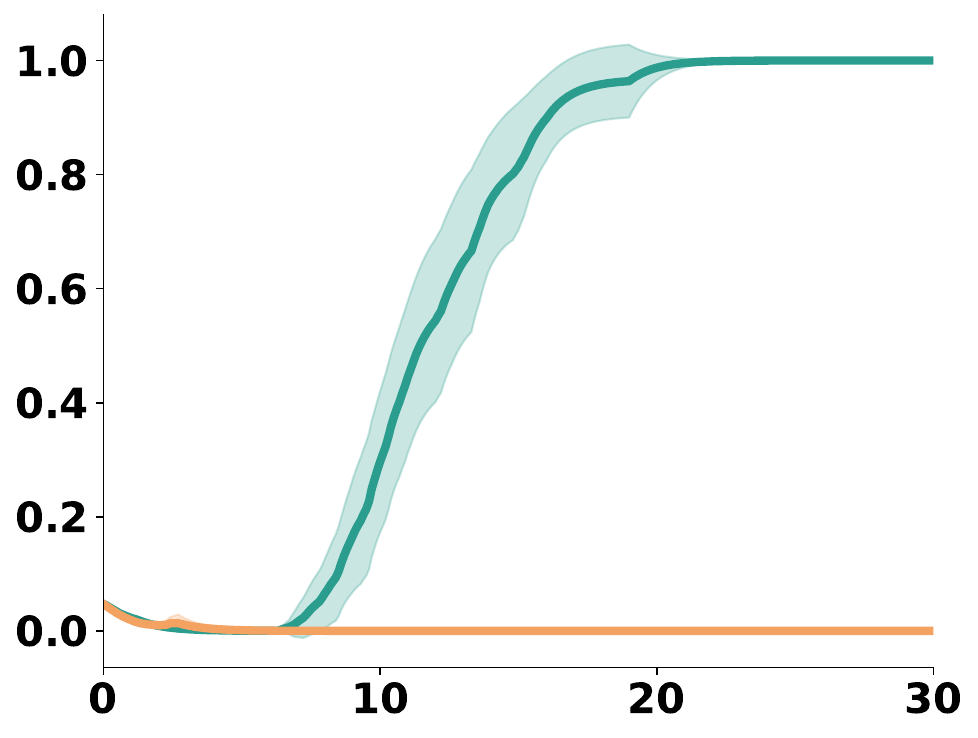}
        \\[-4pt]
        {\hspace*{1em}\fontfamily{cmss}\scriptsize{Training Steps ($\times 10^3$)}}
        \caption{\label{fig:goal_visits}Visits to the goal}
    \end{subfigure}
    \hfill
    \begin{subfigure}[b]{0.47\textwidth}
        \centering
        \includegraphics[width=\linewidth]{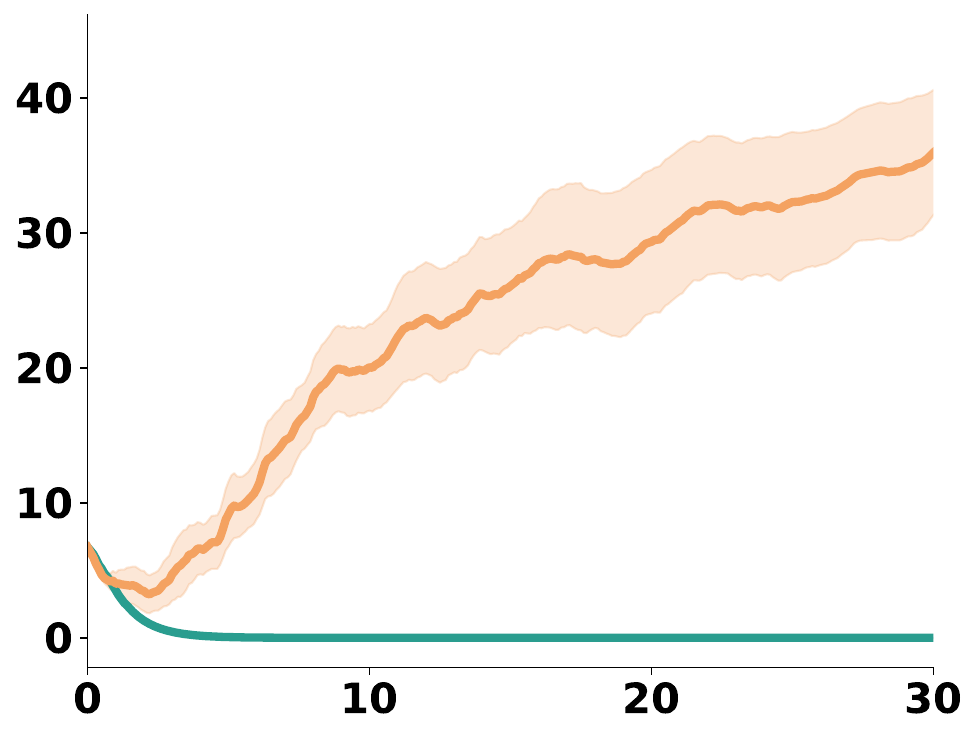}
        \\[-4pt]
        {\hspace*{1em}\fontfamily{cmss}\scriptsize{Training Steps ($\times 10^3$)}}
        \caption{\label{fig:bot_visits}Visits to $\bot$}
    \end{subfigure}
    \end{minipage}
    \hfill
    \begin{minipage}[c]{0.55\textwidth}
    {\centering
    \includegraphics[width=0.6\linewidth]{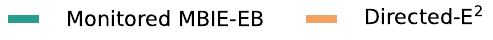}}
    \caption{\label{fig:visits}\textbf{Visits to important states at test time in the Bottleneck with 5\% Button Monitor}. Results are averaged over 30 trials, and shaded areas denote 95\% confidence interval. Directed-E$^2$ fails to focus on the goal and instead keeps visiting $\bot$ states, whereas \thealgo reduces its visitation frequency instead, ultimately visiting only the goal.
    }
    \end{minipage}
\end{figure*}
\subsection{Results}
We evaluate Monitored MBIE-EB compared to Directed Exploration-Exploitation (Directed-E$^2$)~\citep{parisi2024beyond}, which is currently the most performant algorithm in Mon-MDPs. The discount factor is fixed to 0.99. \cref{appendix:hyperparameters} contains the set of hyperparameters and \cref{appendix:empirical_details} contains the evaluation details (e.g., episodes lengths, etc). Results in \cref{fig:four,fig:visits} are at test time, i.e., when the agent follows the current greedy policy without exploring.

To answer RQ1, consider the results in Figure~\ref{fig:river_return}. \thealgo significantly outperforms Directed-E$^2$. This task is difficult for any $\varepsilon$-greedy exploration strategy (such as the Directed-E$^2$'s) and highlights the first innovation: taking a model-based approach in Mon-MDPs (i.e., extending MBIE-EB) leads to more efficient exploration.

To answer RQ2, consider Figure~\ref{fig:bottleneck_100_return}. States marked with $\bot$ are never observable by the agent, regardless of the monitor state. Because the minimum mean reward in this task is $\rmin = -10$, the minimax-optimal policy is to avoid states marked by $\bot$ while reaching the goal state. \thealgo is able to find this minimax-optimal policy, whereas Directed-E$^2$ does not because it does not learn to avoid unobservable rewards\footnote[9]{\label{fn:DE2-initialize}Directed-E$^2$ describes initializing its reward model randomly, relying on the Mon-MDP being solvable, independent of the initialization. For unsolvable Mon-MDPs  this is not true and Directed-E$^2$ depends significantly on initialization. In fact, while not noted by~\citet{parisi2024beyond}, pessimistic initialization with Directed-E$^2$ results in asymptotic convergence for unsolvable Mon-MDPs.}. This result highlights the impact of the second innovation: unsolvable Mon-MDPS require pessimism when the reward cannot be observed.

To answer RQ3, consider Figure~\ref{fig:bottleneck_5_return}, where the Button Monitor provides a reward only 5\% of the time when \son (and 0\% of the time when \soff). Despite how difficult it is to observe rewards, \thealgo is able to learn the minimax-optimal policy. This shows that \thealgo is still appropriately pessimistic, successfully avoiding $\bot$ states and the snake, and reaching the goal state. Since rewards are only visible one out of twenty times (when the monitor is \son), learning is much slower than in Figure~\ref{fig:bottleneck_100_return}, matching the appearance of $\rho^{-1}$ in Theorem~\ref{thm:sample_cmplx}'s bound. This result also shows the impact of the third innovation: it is important to explore just enough to guarantee the agent will learn about observable rewards, but no more.

To answer RQ4, now consider the performance of Known Monitor in Figure~\ref{fig:bottleneck_5_return_zoom}, showing the performance of \thealgo when provided the model of the Button Monitor 5\%. Results show that its convergence speed increases significantly, as \thealgo takes (on average) 30\% fewer steps to find the minimax-optimal policy.
This feature of \thealgo is particularly important in settings where the agent has already learned about the monitor previously or the practitioner provides the agent with an accurate model of the monitor. The agent, then, needs only to learn about the environment, and does not need to explore the monitor component of the Mon-MDP. 
\begin{figure*}[t]
\centering
    \includegraphics[width=0.6\linewidth]{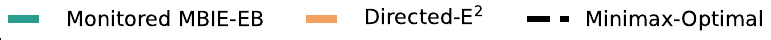}
    \\[4pt]
\raisebox{20pt}{\rotatebox[origin=t]{90}{\fontfamily{cmss}\scriptsize{Empty}}}
    \hfill
    \begin{subfigure}[b]{0.158\linewidth}
        \centering
        \raisebox{5pt}{\rotatebox[origin=t]{0}{\fontfamily{cmss}\scriptsize{MDP}}}
        \\
        \includegraphics[width=\linewidth]{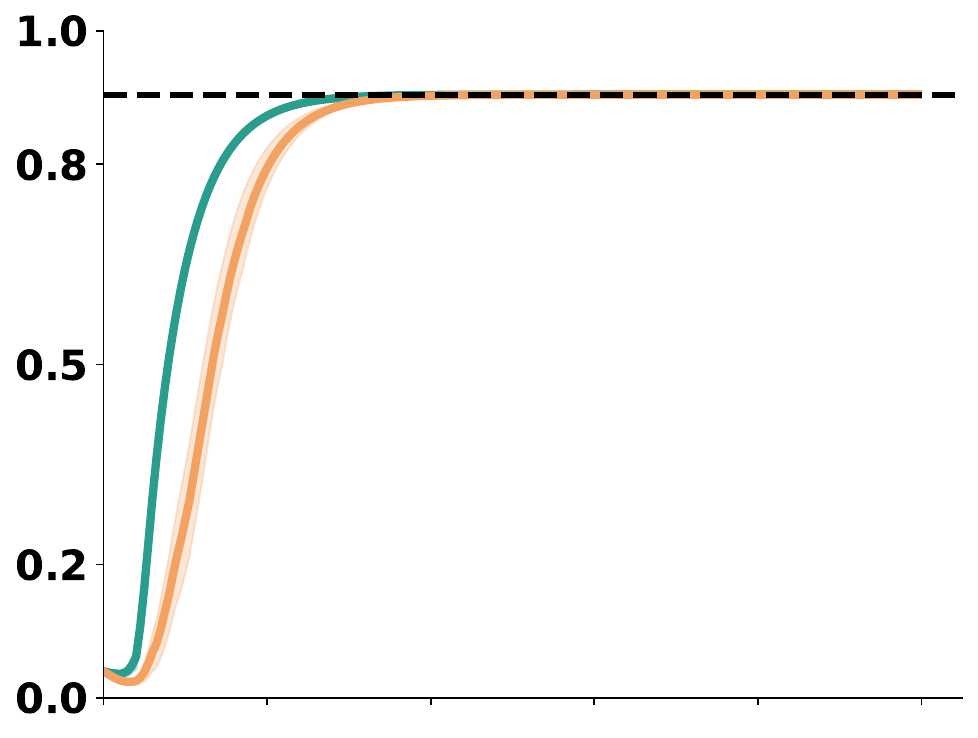}
    \end{subfigure} 
    \hfill
        \begin{subfigure}[b]{0.158\linewidth}
        \centering
        \raisebox{5pt}{\rotatebox[origin=t]{0}{\fontfamily{cmss}\scriptsize{Semi-Random}}}
        \\
        \includegraphics[width=\linewidth]{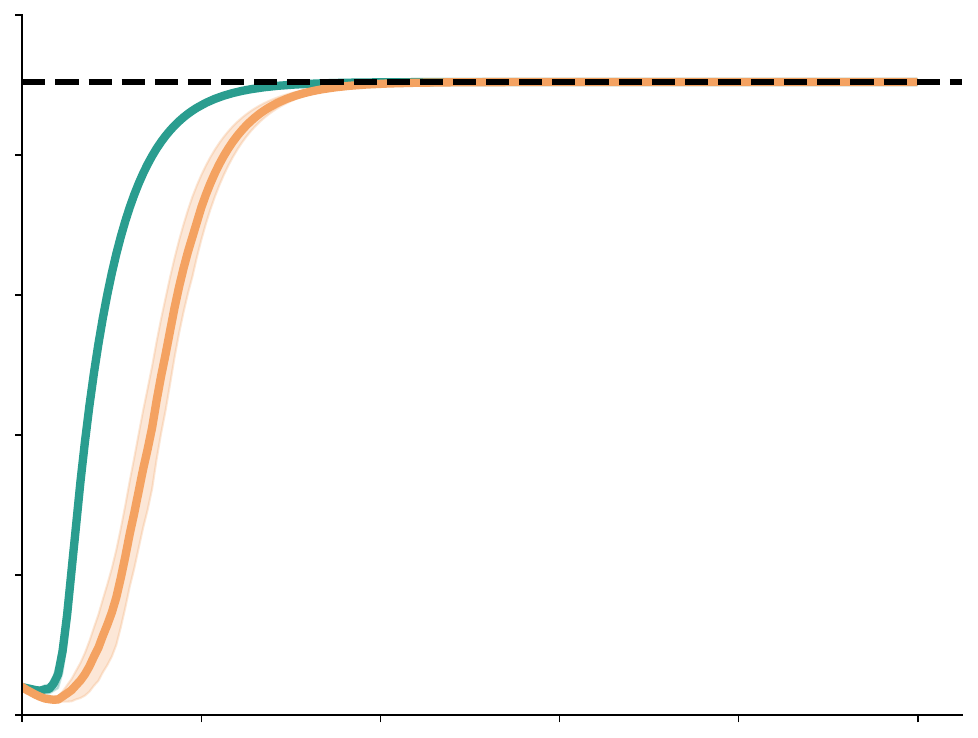}
    \end{subfigure} 
    \hfill
    \begin{subfigure}[b]{0.158\textwidth}
        \centering
        \raisebox{5pt}{\rotatebox[origin=t]{0}{\fontfamily{cmss}\scriptsize{Ask}}}
        \\
        \includegraphics[width=\linewidth]{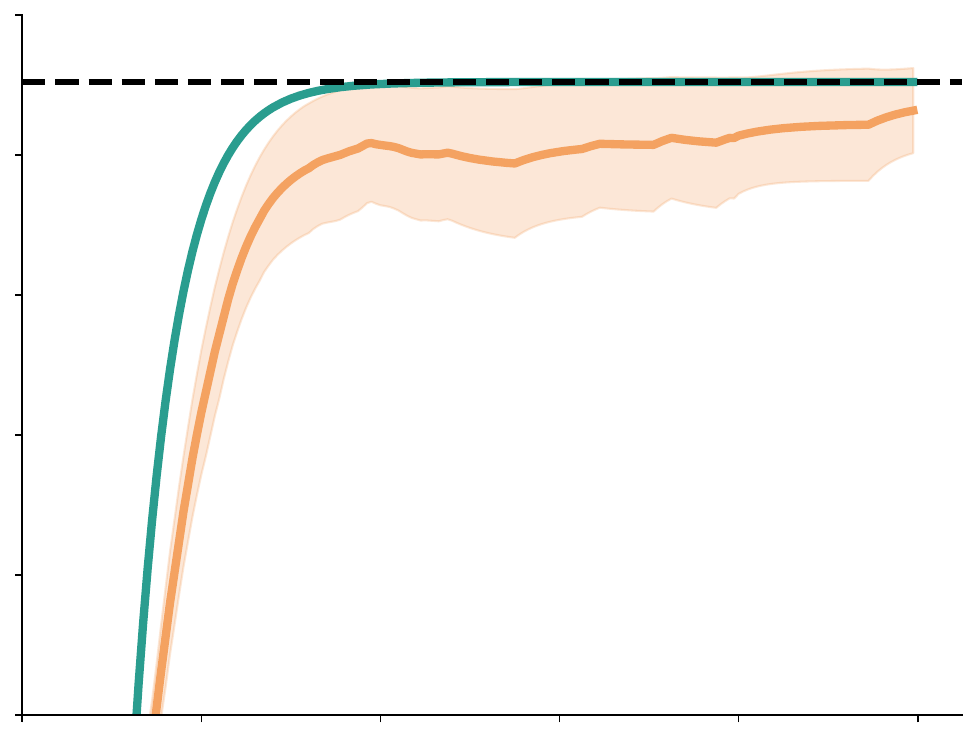}
    \end{subfigure} 
    \hfill
        \begin{subfigure}[b]{0.158\textwidth}
        \centering
        \raisebox{5pt}{\rotatebox[origin=t]{0}{\fontfamily{cmss}\scriptsize{Button}}}
        \\
        \includegraphics[width=\linewidth]{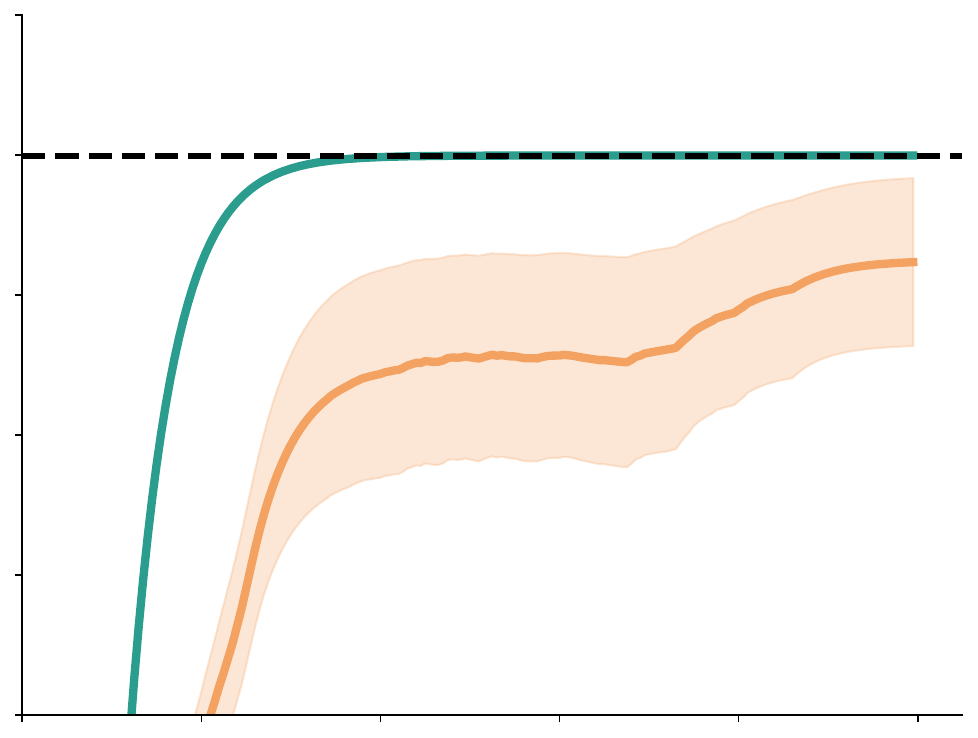}
    \end{subfigure} 
    \hfill
    \begin{subfigure}[b]{0.158\textwidth}
        \centering
        \raisebox{5pt}{\rotatebox[origin=t]{0}{\fontfamily{cmss}\scriptsize{$N$-Supporters}}}
        \\
        \includegraphics[width=\linewidth]{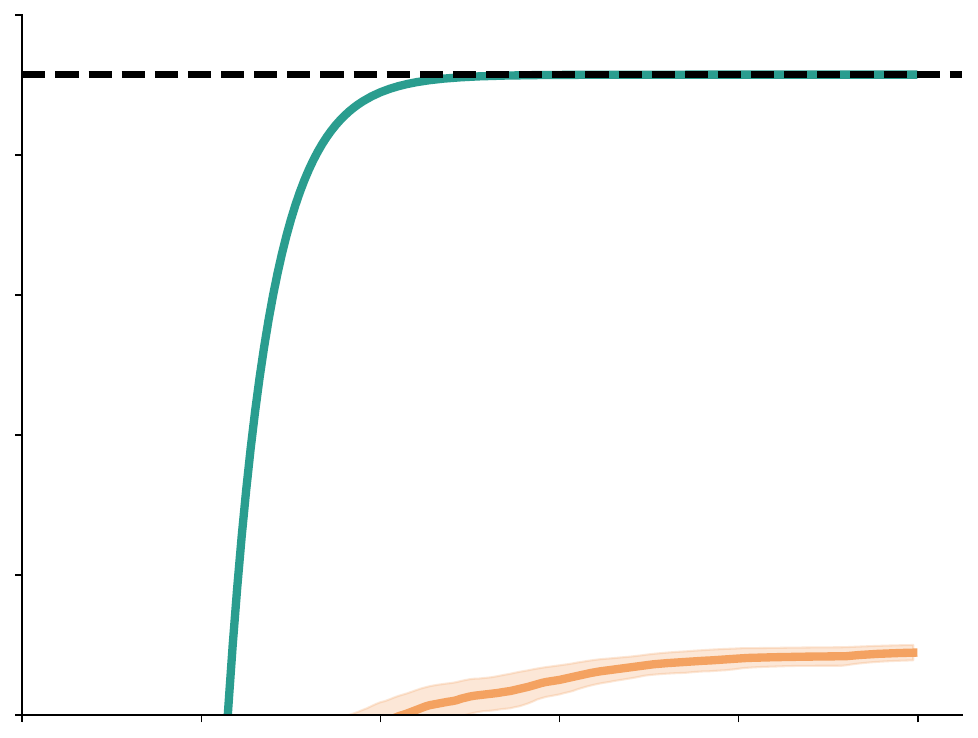}
    \end{subfigure} 
    \hfill
    \begin{subfigure}[b]{0.158\textwidth}
        \centering
        \raisebox{5pt}{\rotatebox[origin=t]{0}{\fontfamily{cmss}\scriptsize{Level Up}}}
        \\
        \includegraphics[width=\linewidth]{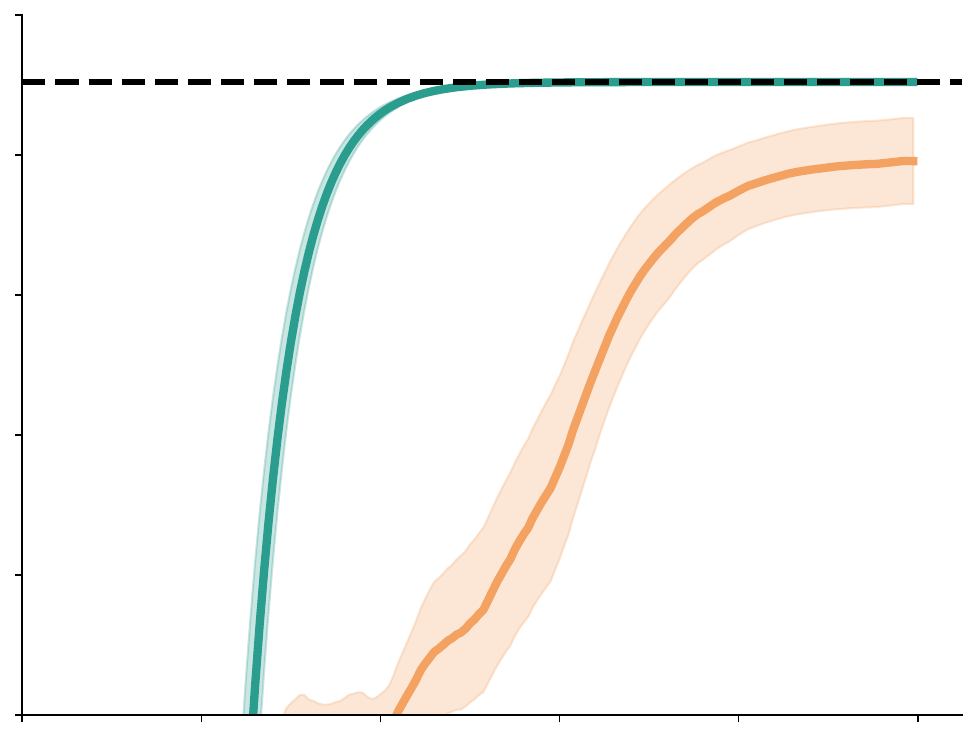}
    \end{subfigure} 
    \\
    \raisebox{20pt}{\rotatebox[origin=t]{90}{\fontfamily{cmss}\scriptsize{Hazard}}}
    \hfill
    \includegraphics[width=0.159\linewidth]{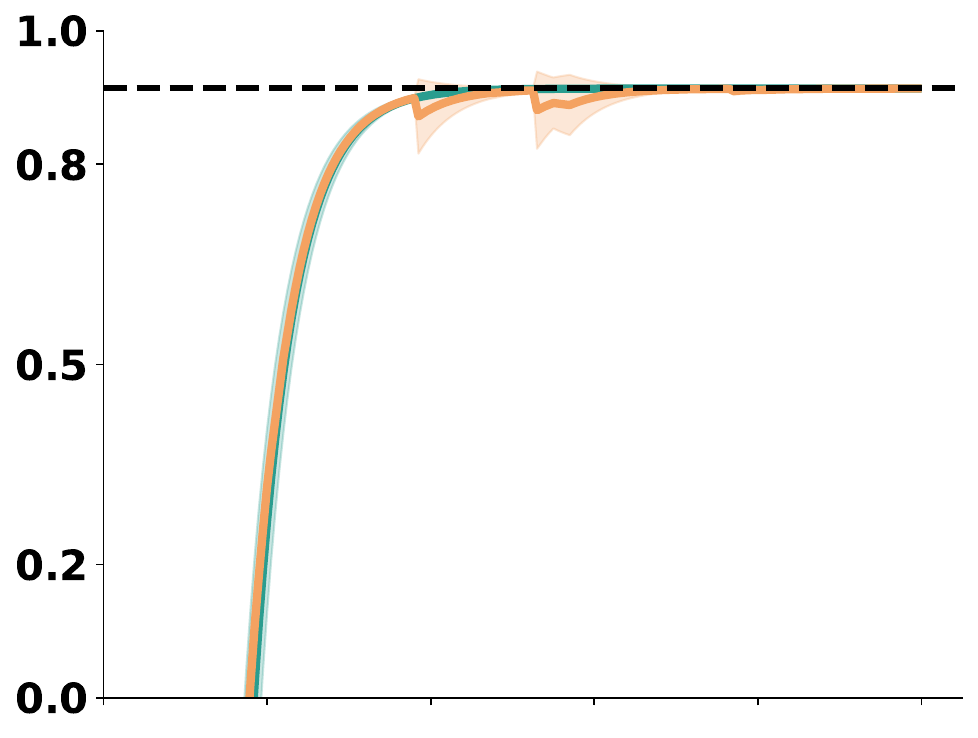}
    \hfill
     \includegraphics[width=0.159\linewidth]{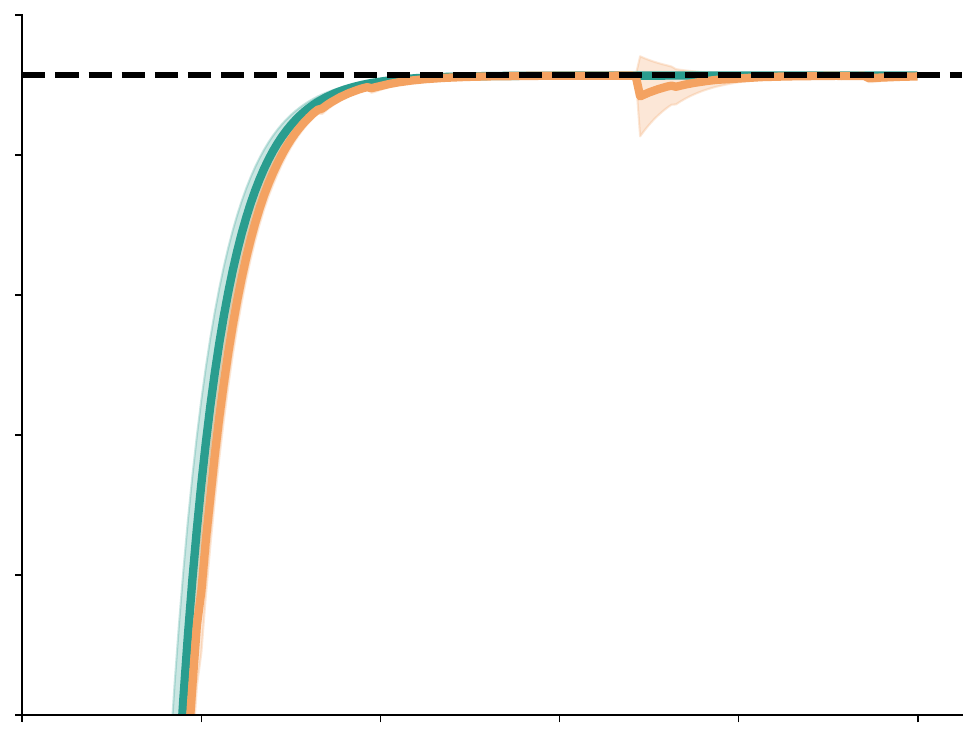}
    \hfill
    \includegraphics[width=0.159\linewidth]{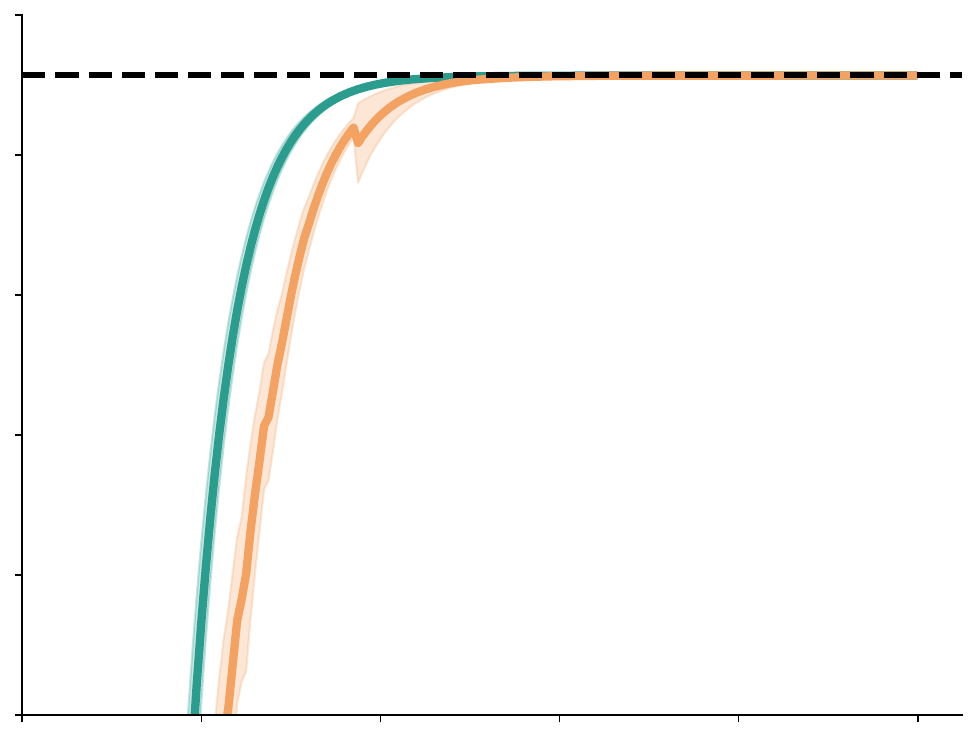}
    \hfill
     \includegraphics[width=0.159\linewidth]{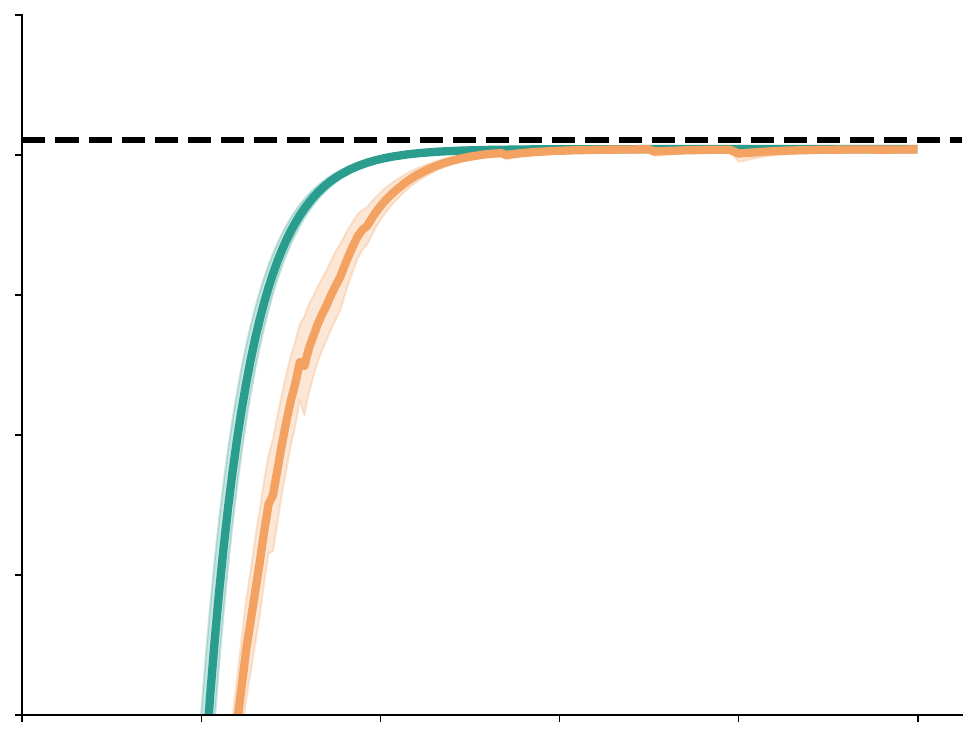} 
    \hfill
    \includegraphics[width=0.159\linewidth]{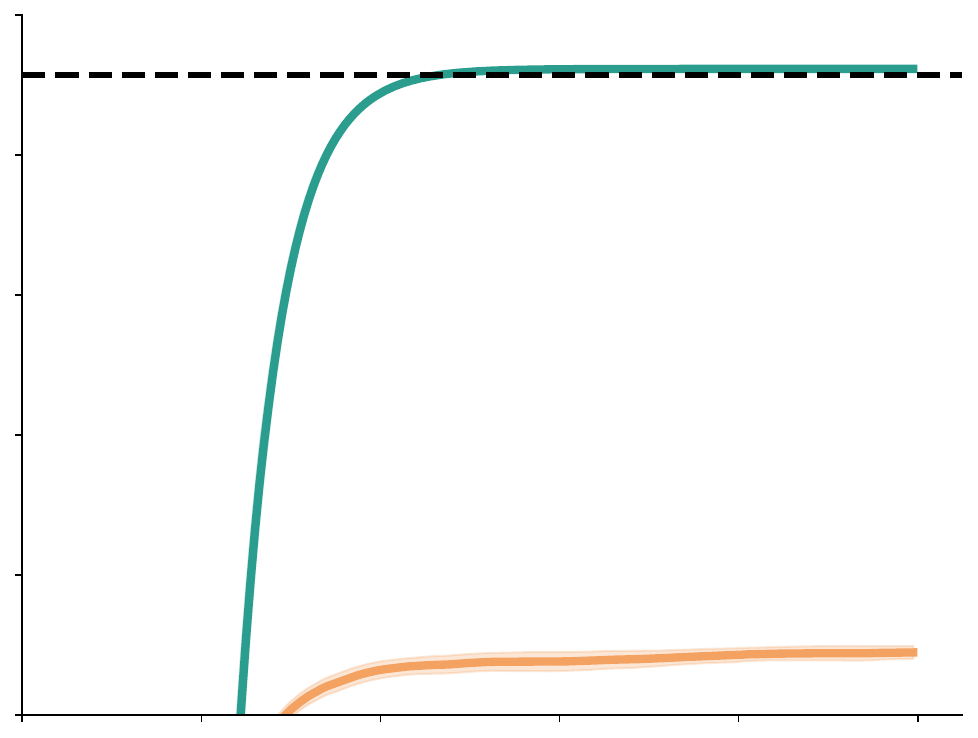}
    \hfill
    \includegraphics[width=0.159\linewidth]{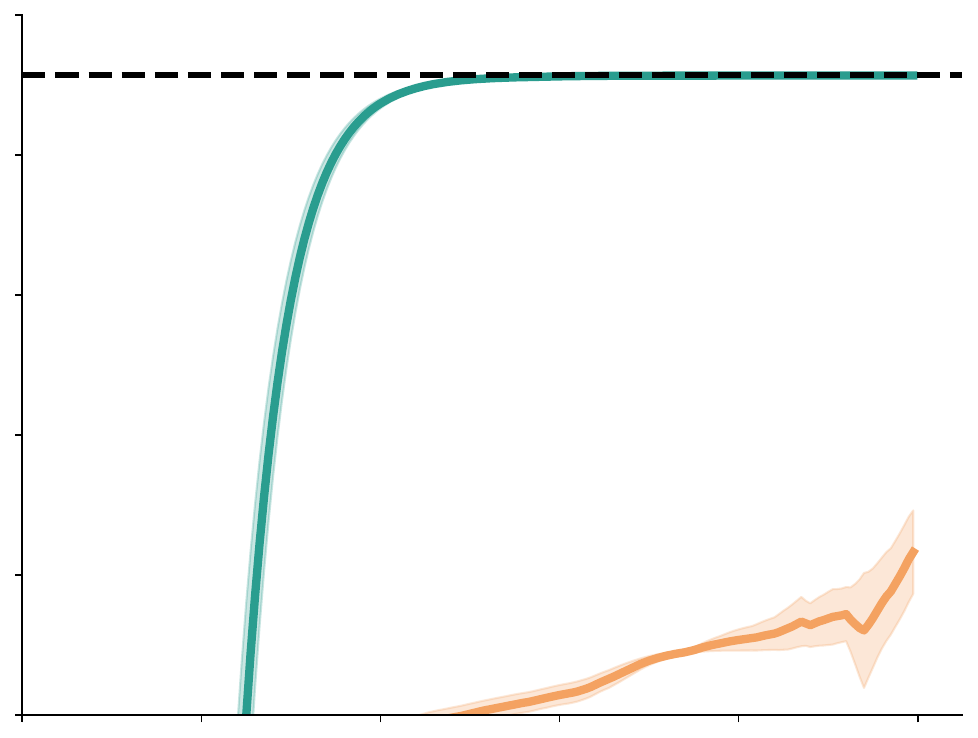}
    \\
    \raisebox{20pt}{\rotatebox[origin=t]{90}{\fontfamily{cmss}\scriptsize{One-Way}}}
    \hfill
    \includegraphics[width=0.158\linewidth]{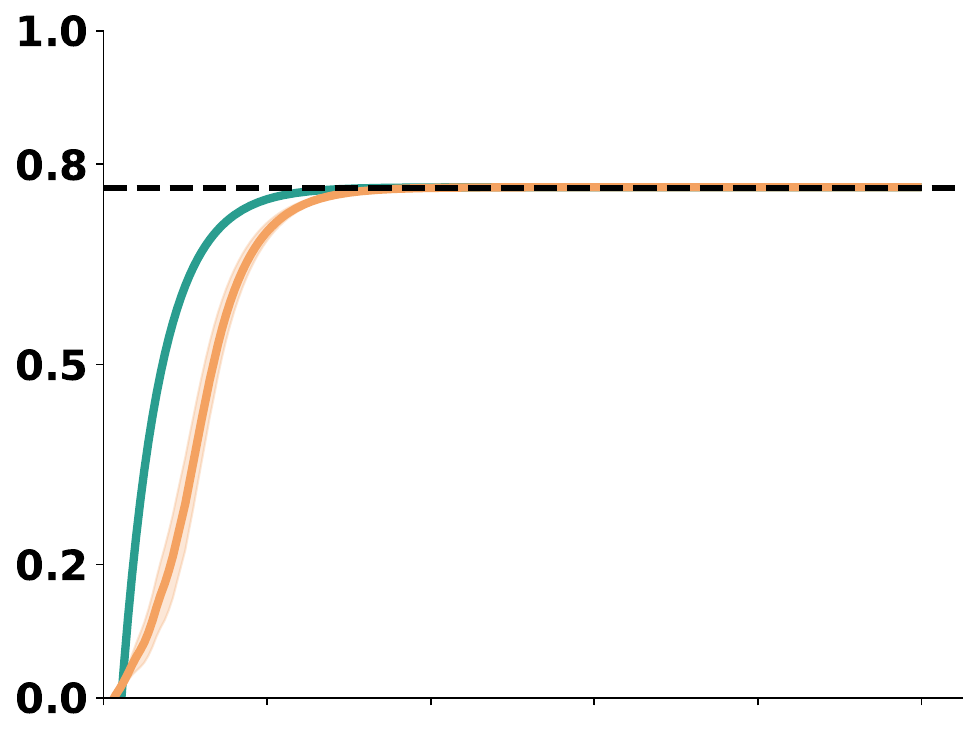}
    \hfill
     \includegraphics[width=0.158\linewidth]{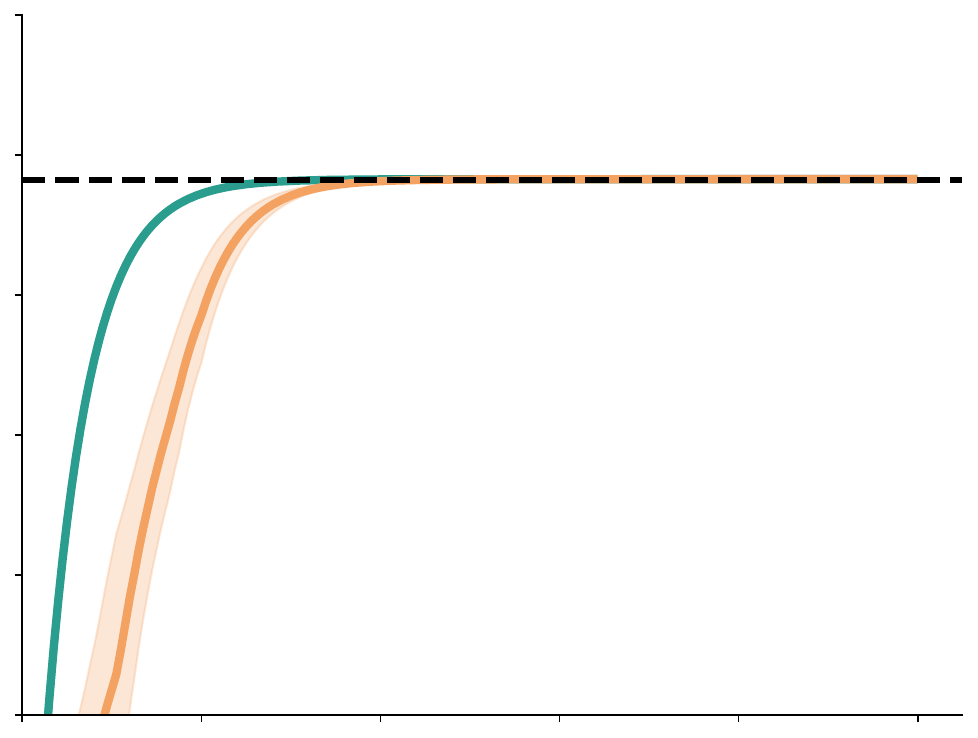}
    \hfill
    \includegraphics[width=0.158\linewidth]{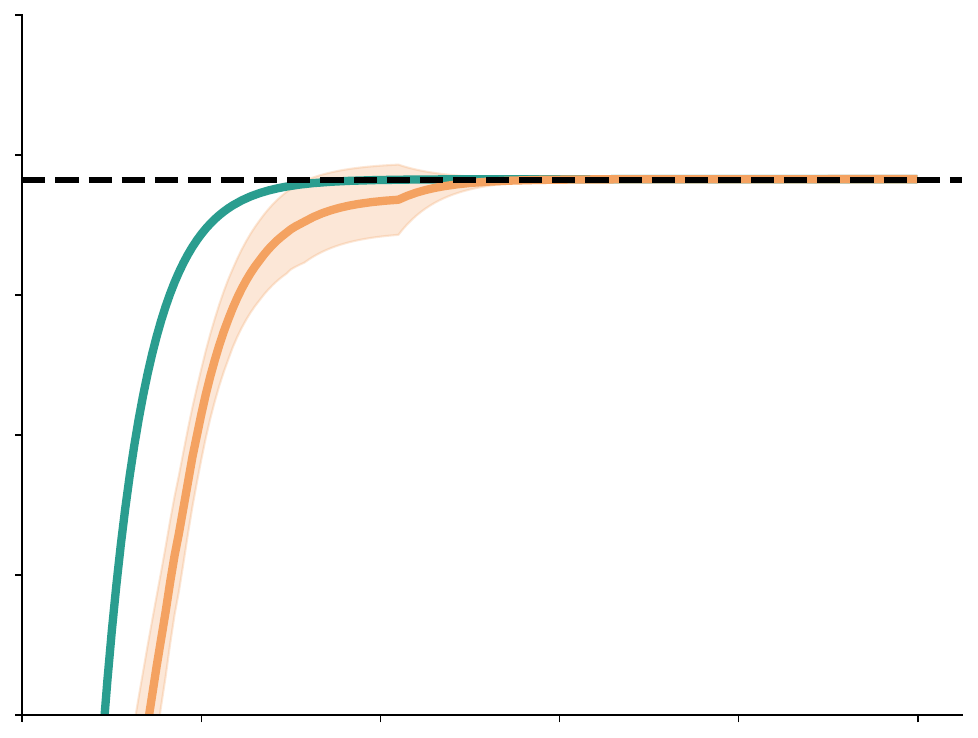}
    \hfill
     \includegraphics[width=0.158\linewidth]{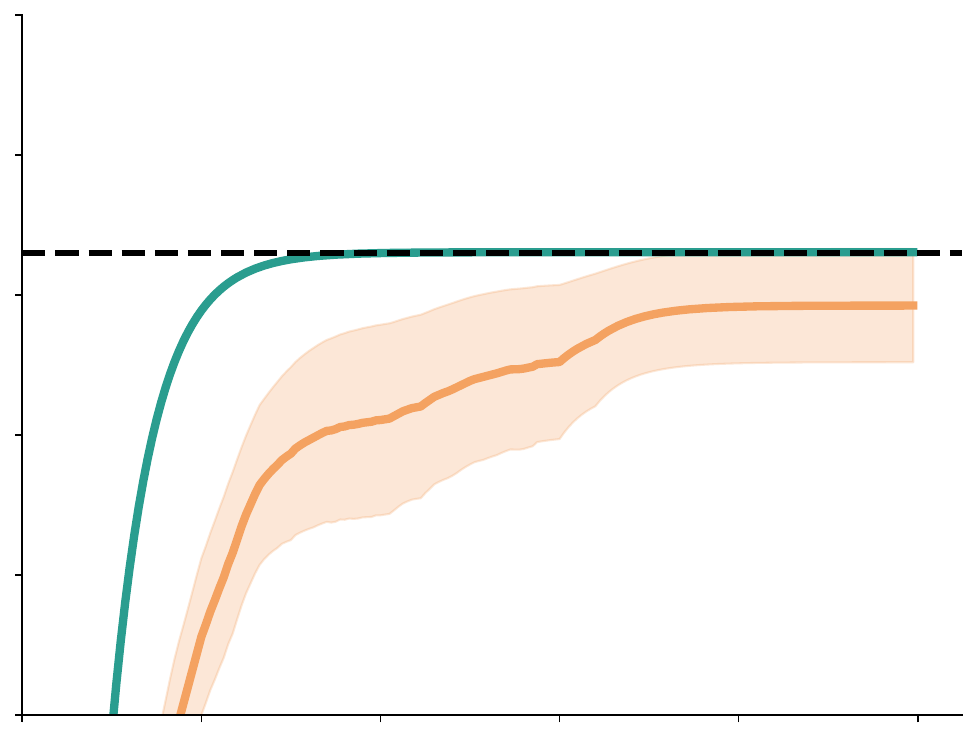} 
    \hfill
    \includegraphics[width=0.158\linewidth]{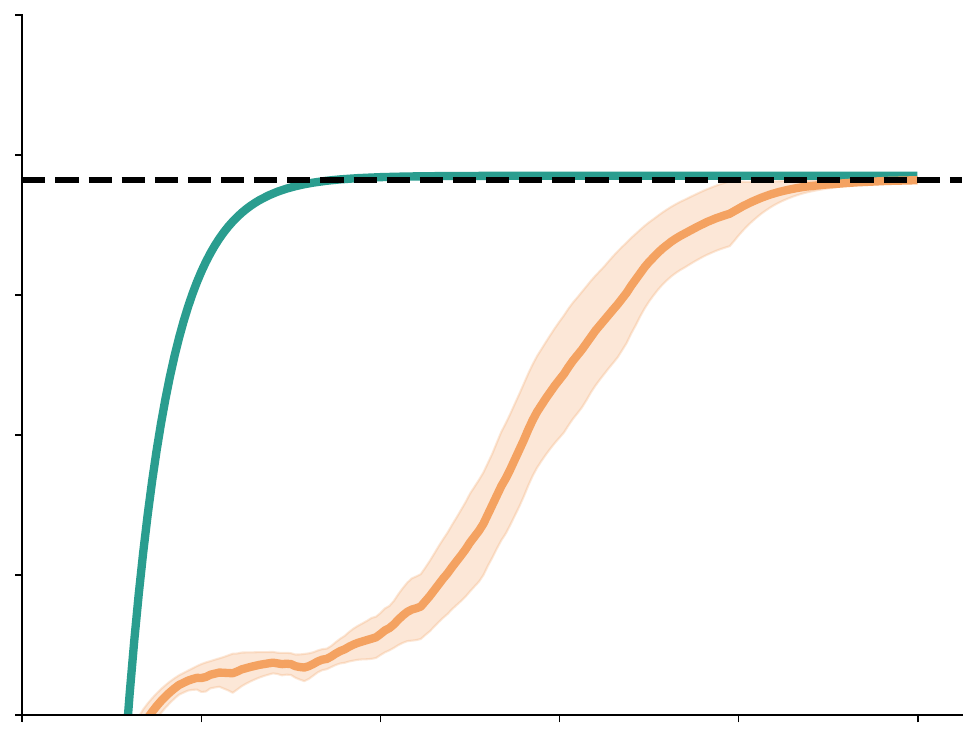}
    \hfill
    \includegraphics[width=0.158\linewidth]{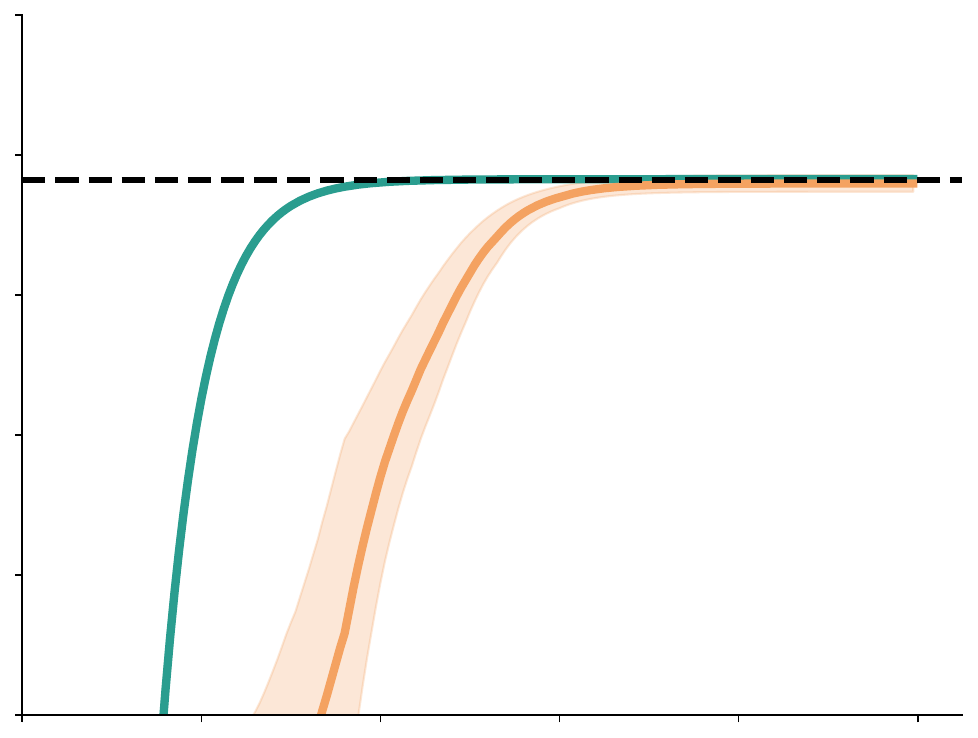}
    \\
    \raisebox{20pt}{\rotatebox[origin=t]{90}{\fontfamily{cmss}\scriptsize{River Swim}}}
    \hfill
    \includegraphics[width=0.159\linewidth]{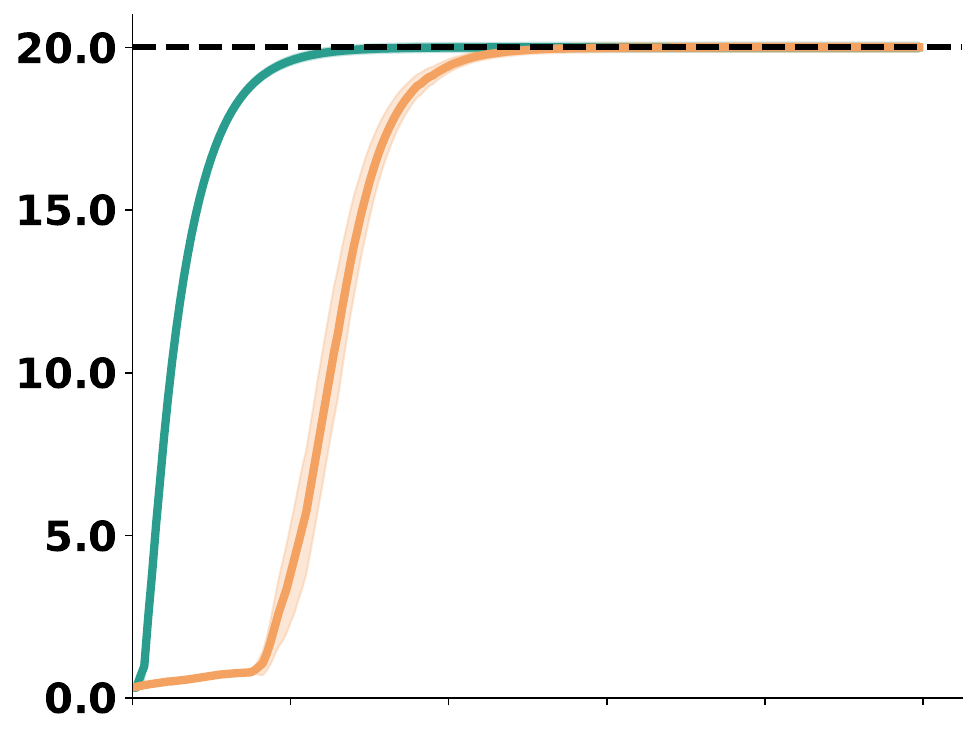}
    \hfill
     \includegraphics[width=0.159\linewidth]{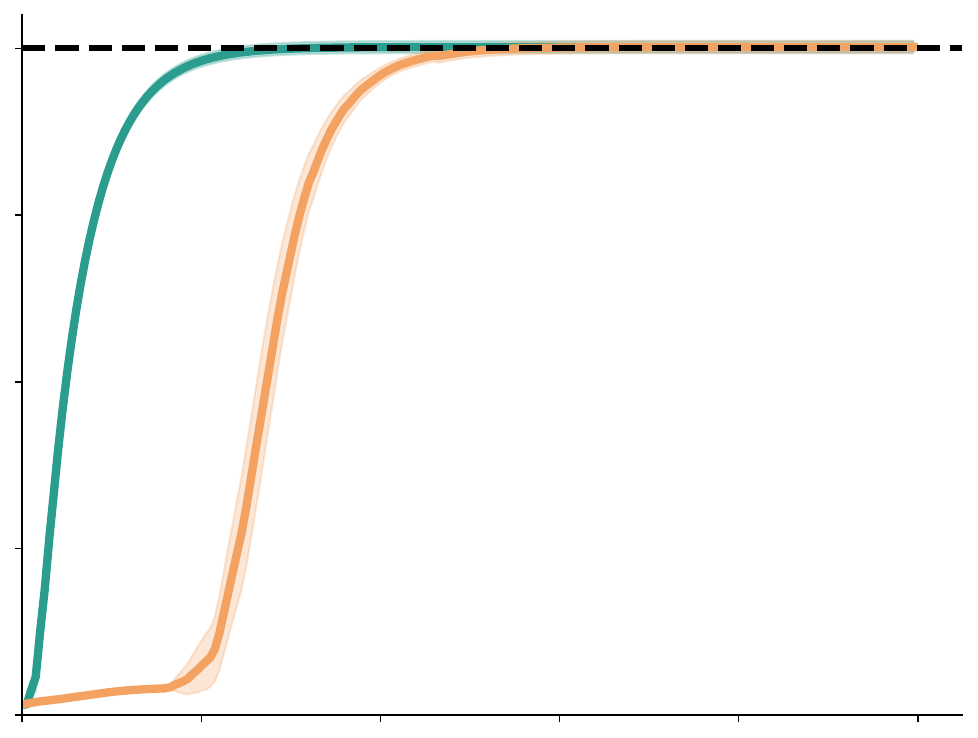}
    \hfill
    \includegraphics[width=0.159\linewidth]{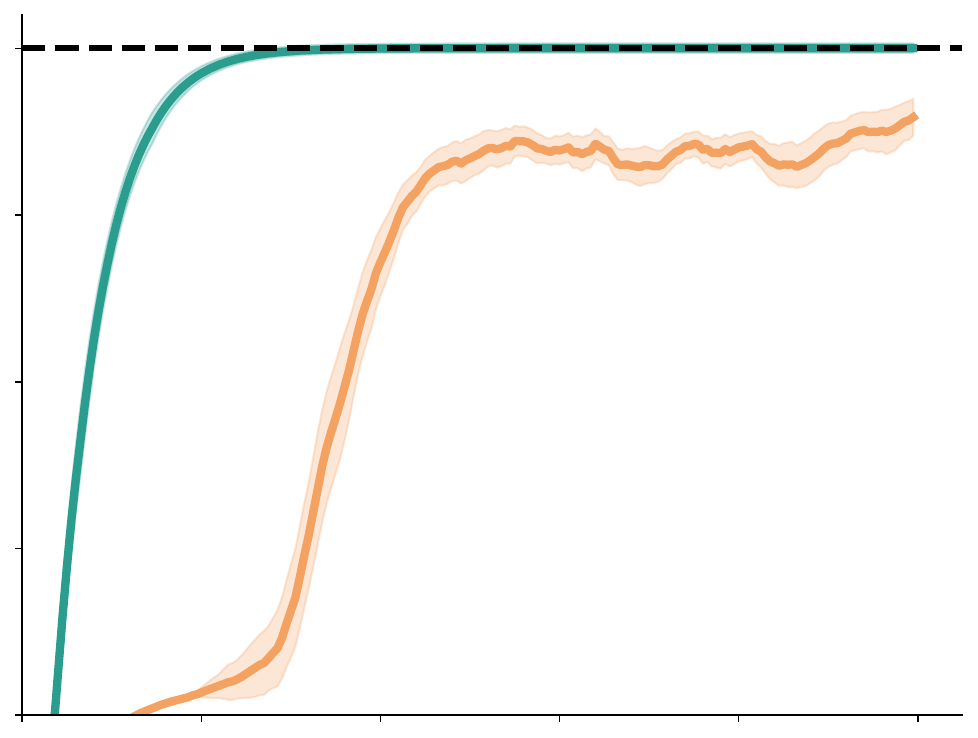}
    \hfill
     \includegraphics[width=0.159\linewidth]{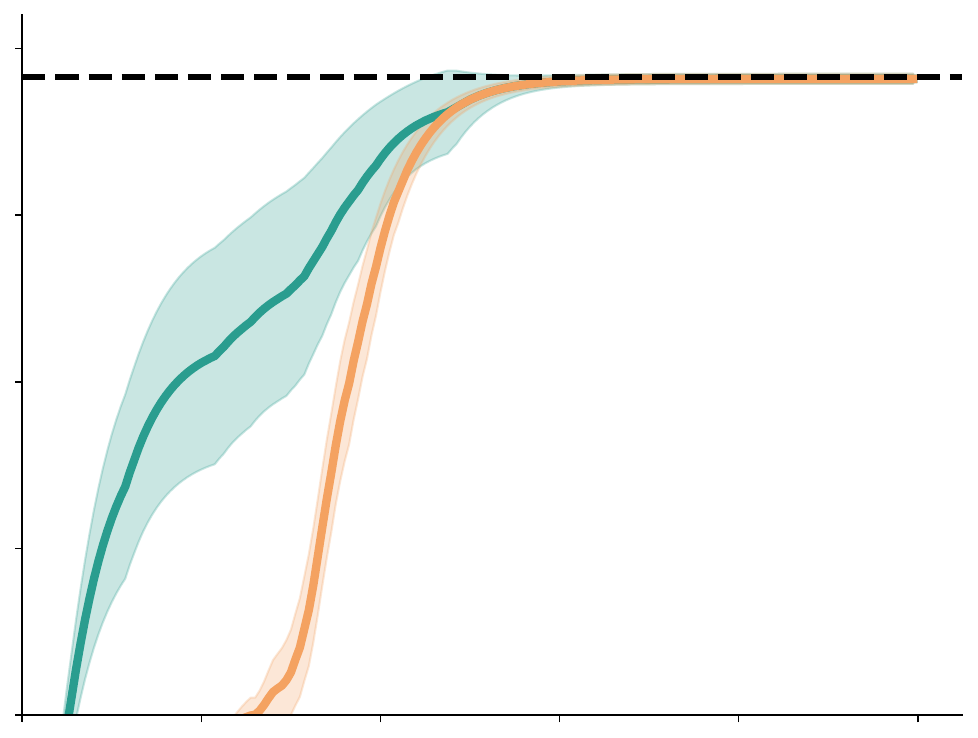} 
    \hfill
    \includegraphics[width=0.159\linewidth]{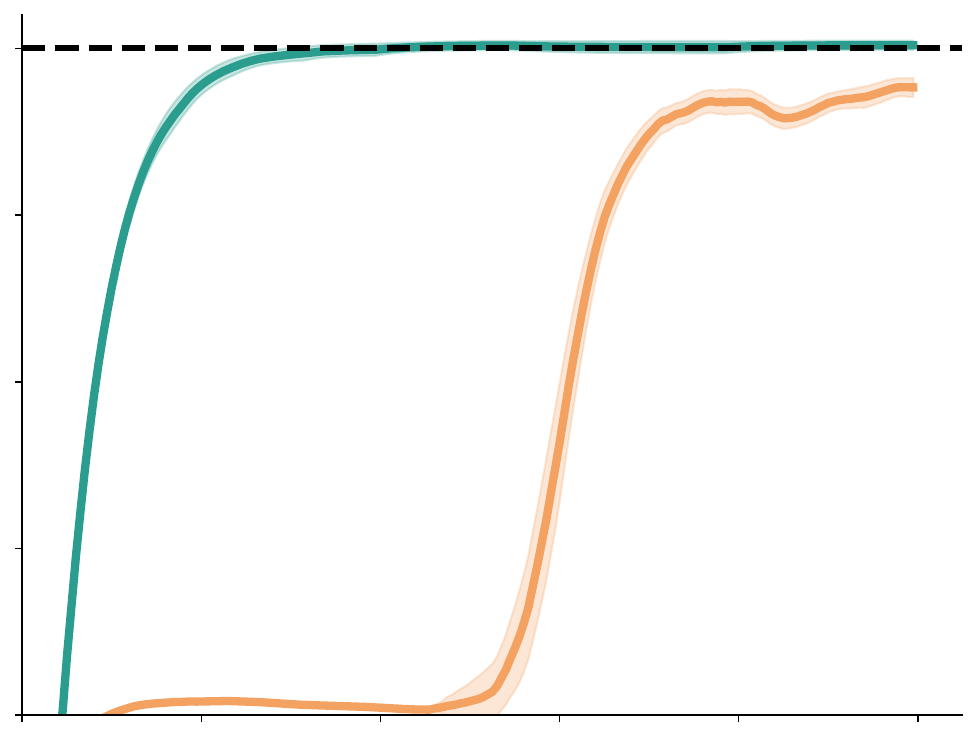}
    \hfill
    \includegraphics[width=0.159\linewidth]{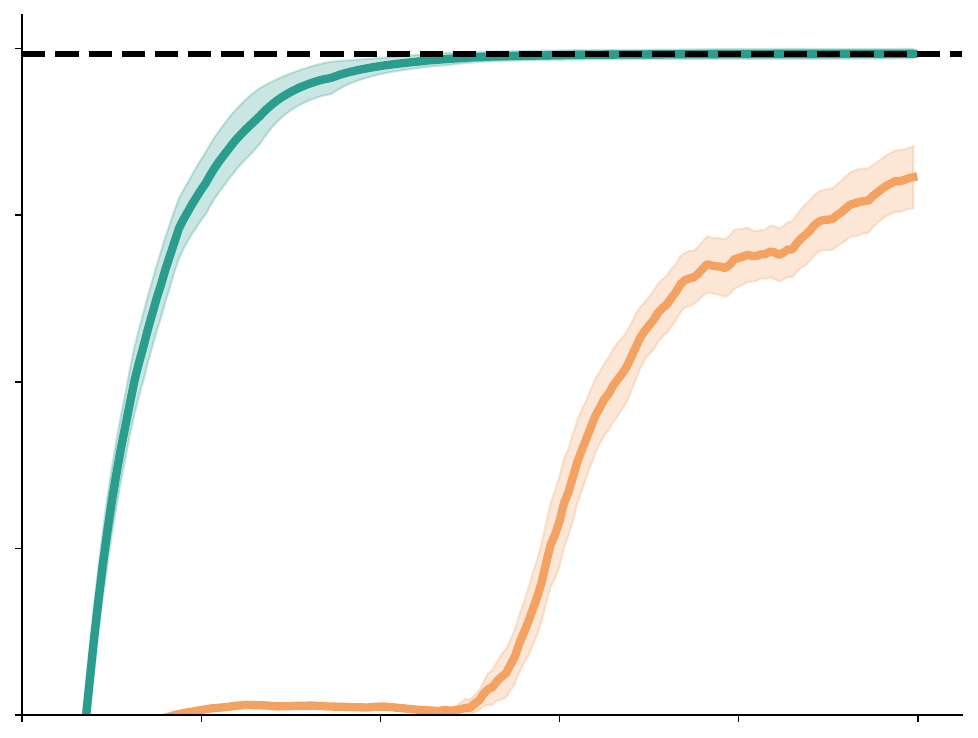}
    \\
    \raisebox{20pt}{\rotatebox[origin=t]{90}{\fontfamily{cmss}\scriptsize{Loop}}}
    \hfill
    \includegraphics[width=0.159\linewidth]{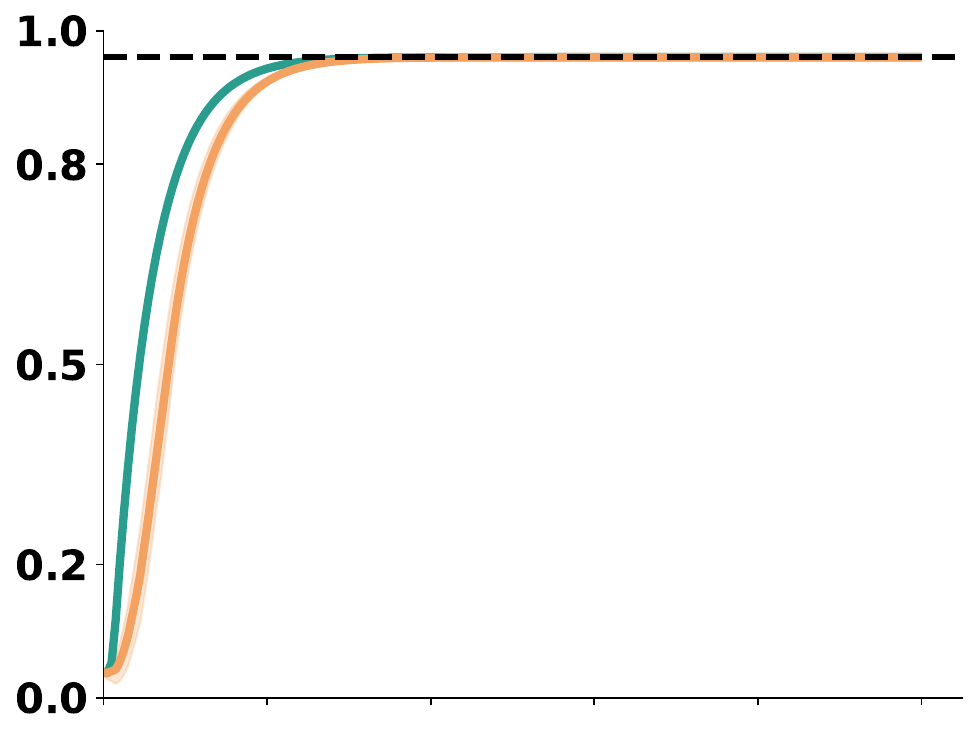}
    \hfill
     \includegraphics[width=0.159\linewidth]{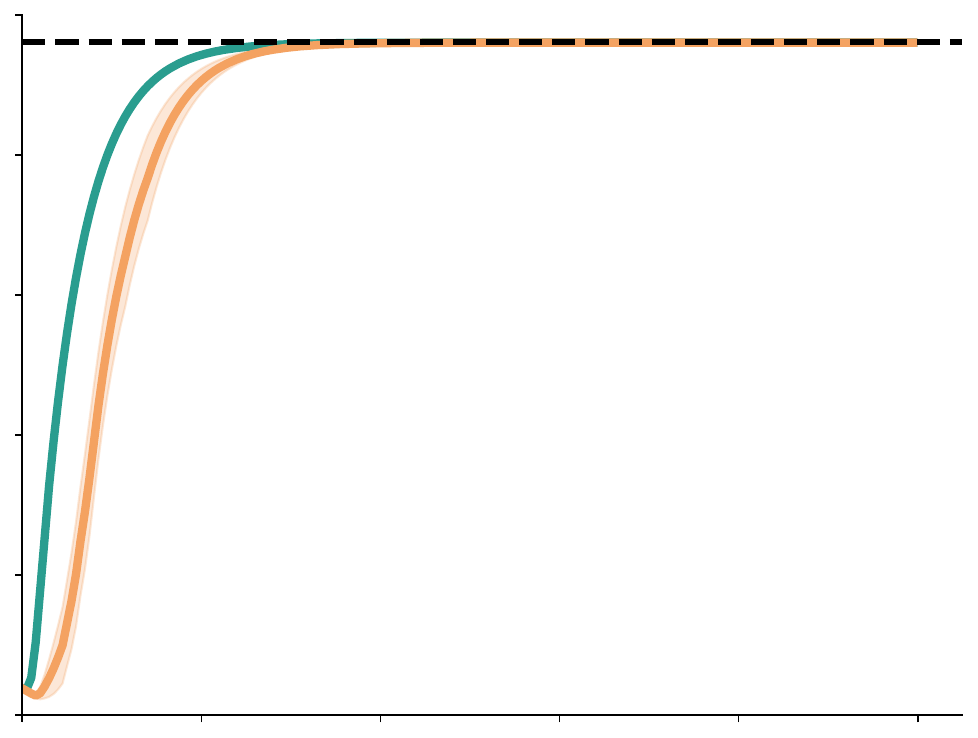}
    \hfill
    \includegraphics[width=0.159\linewidth]{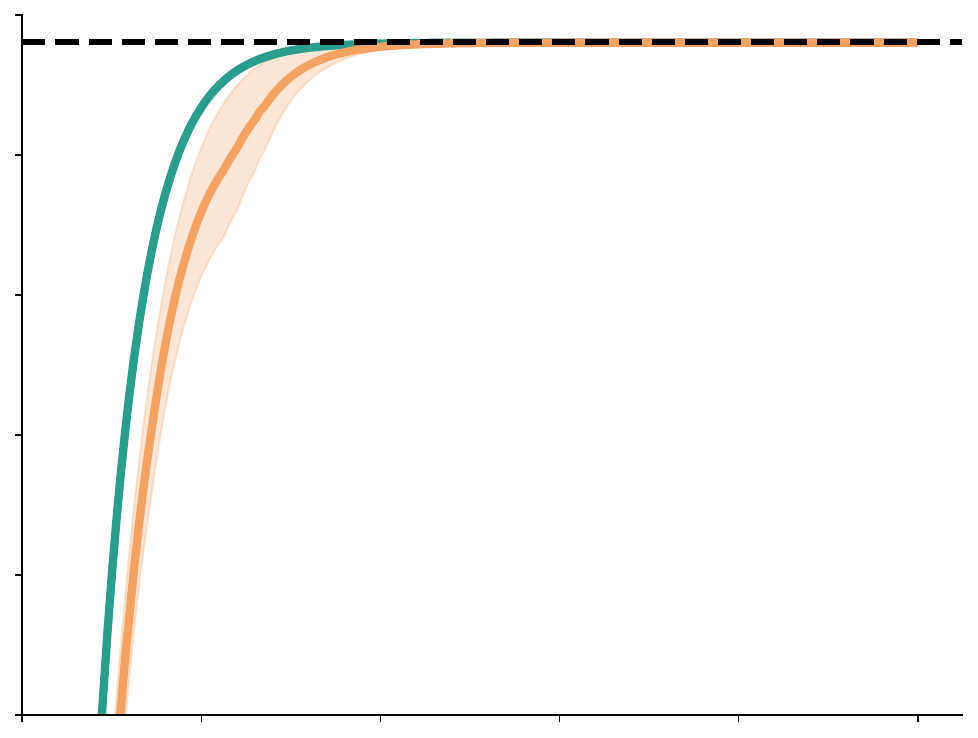}
    \hfill
     \includegraphics[width=0.159\linewidth]{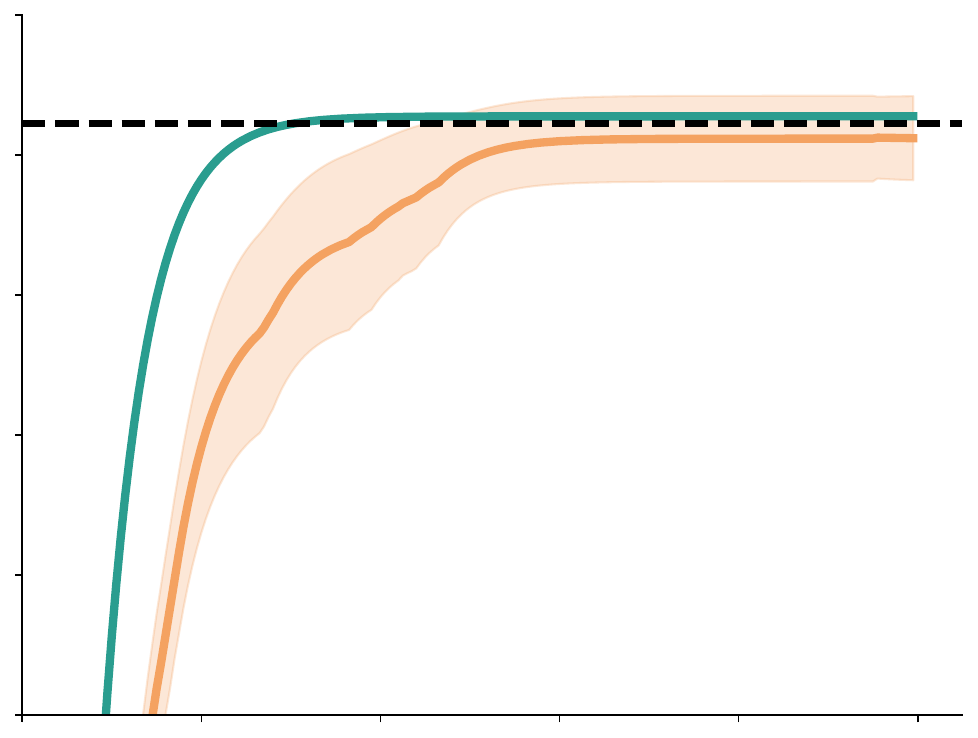} 
    \hfill
    \includegraphics[width=0.159\linewidth]{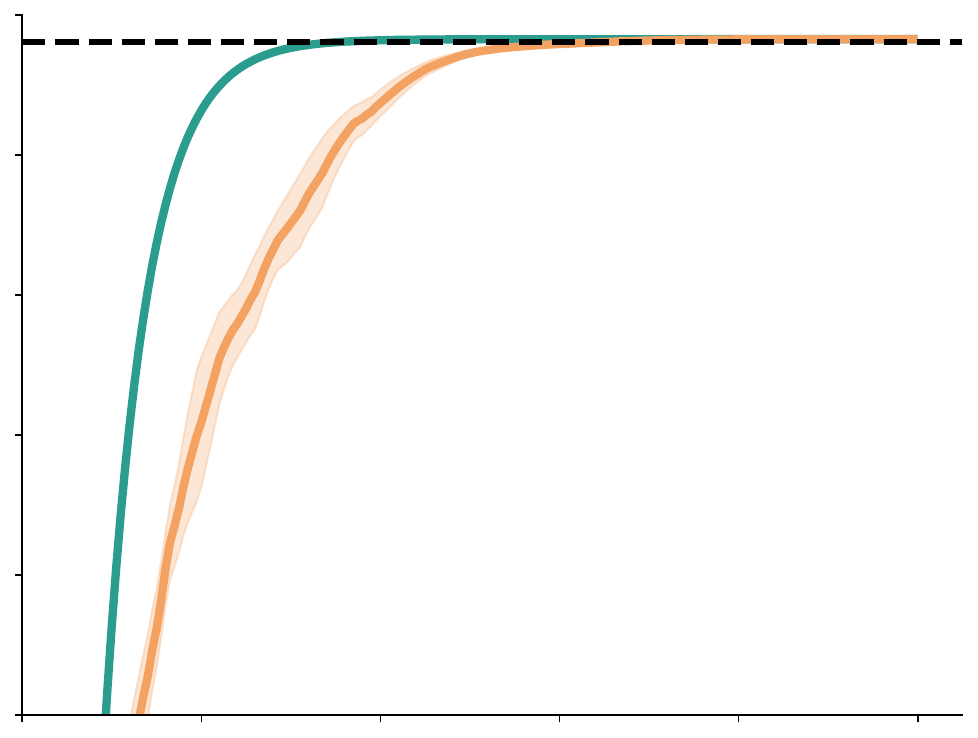}
    \hfill
    \includegraphics[width=0.159\linewidth]{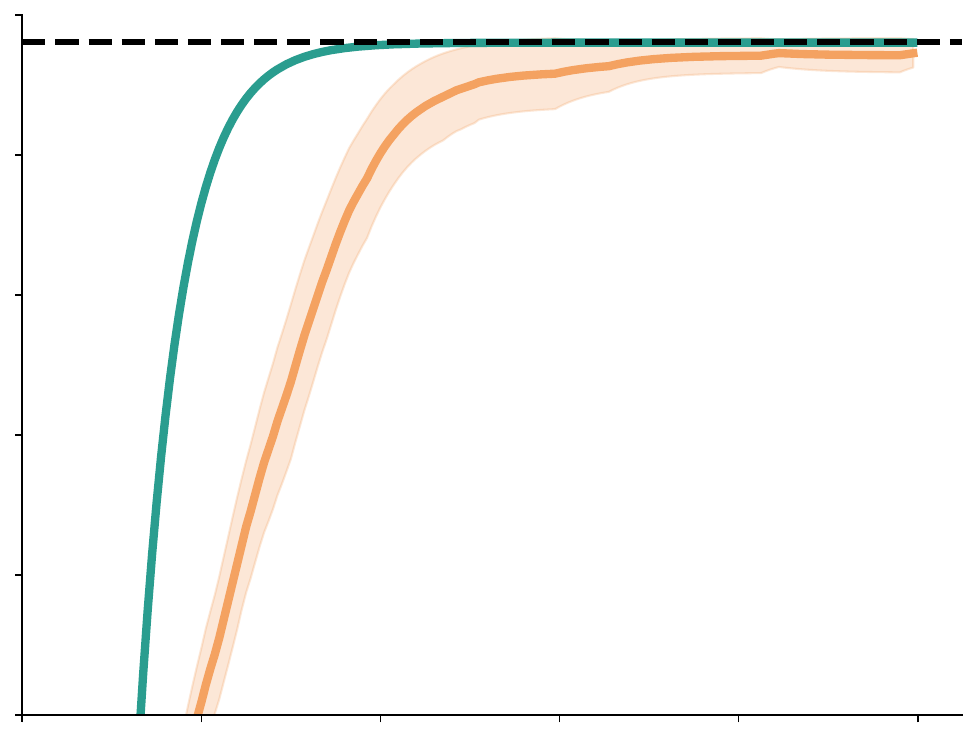}
    \\
    \raisebox{20pt}{\rotatebox[origin=t]{90}{\fontfamily{cmss}\scriptsize{Corridor}}}
    \hfill
    \includegraphics[width=0.159\linewidth]{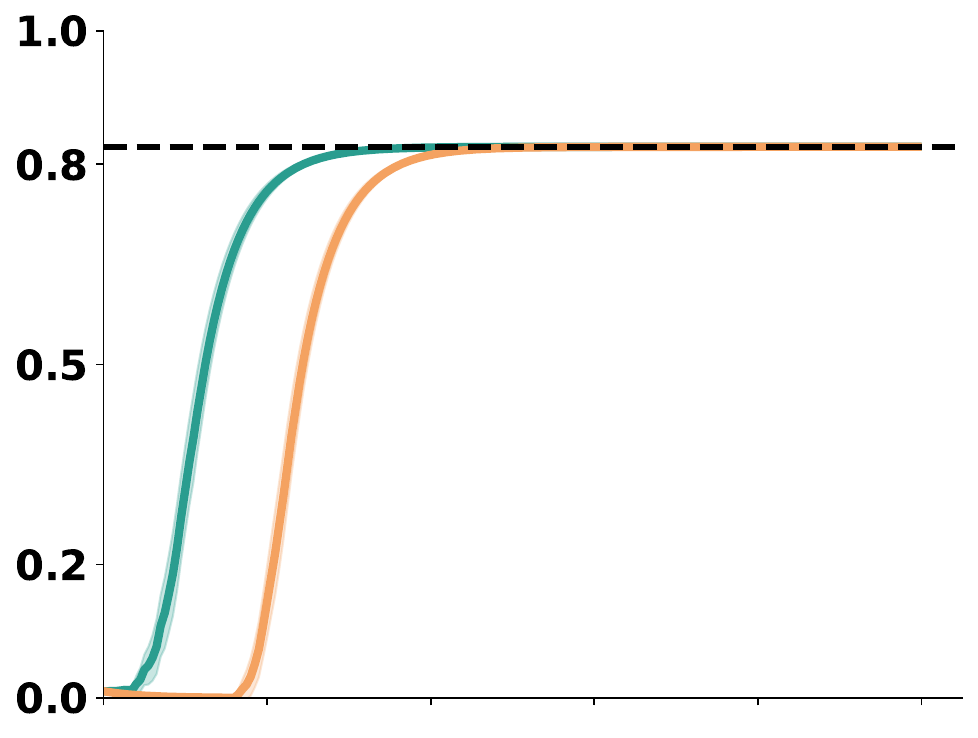}
    \hfill
     \includegraphics[width=0.159\linewidth]{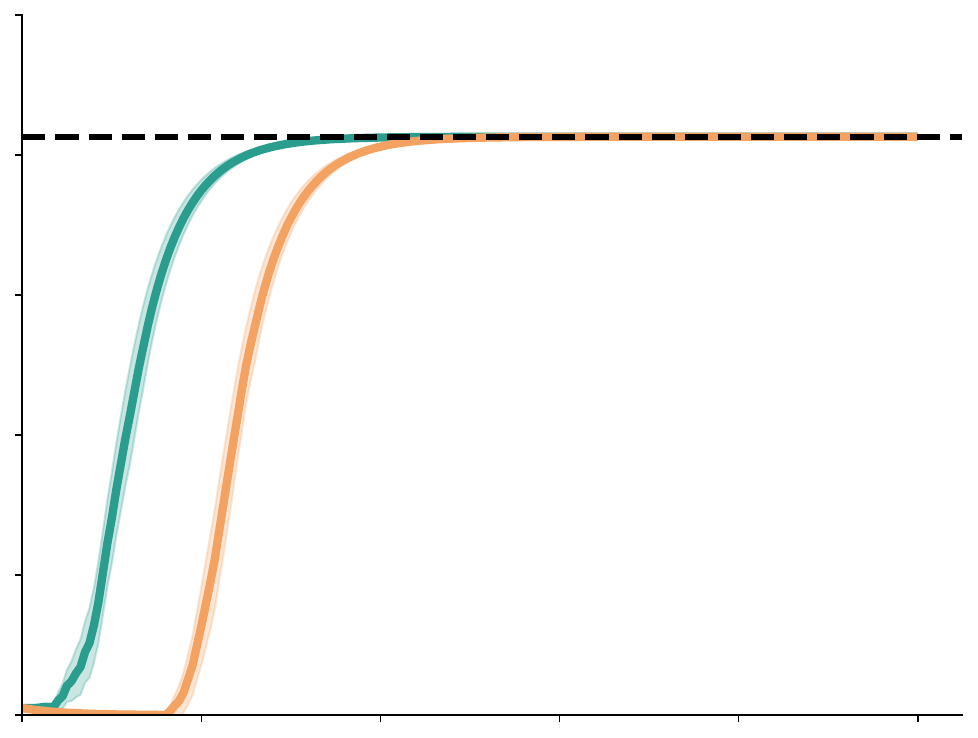}
    \hfill
    \includegraphics[width=0.159\linewidth]{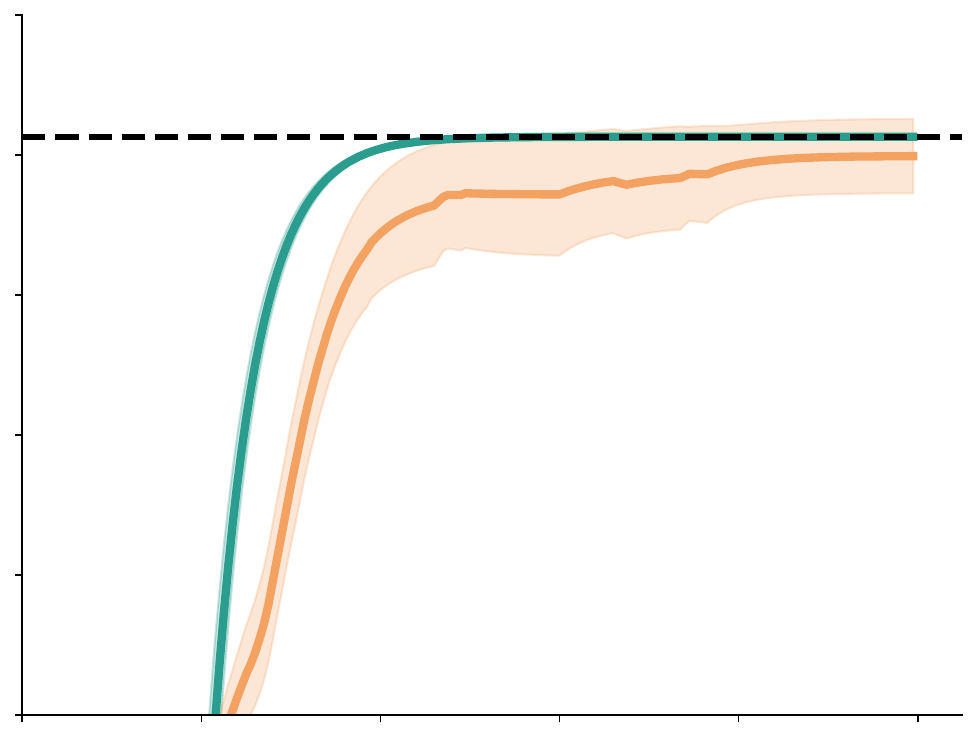}
    \hfill
     \includegraphics[width=0.159\linewidth]{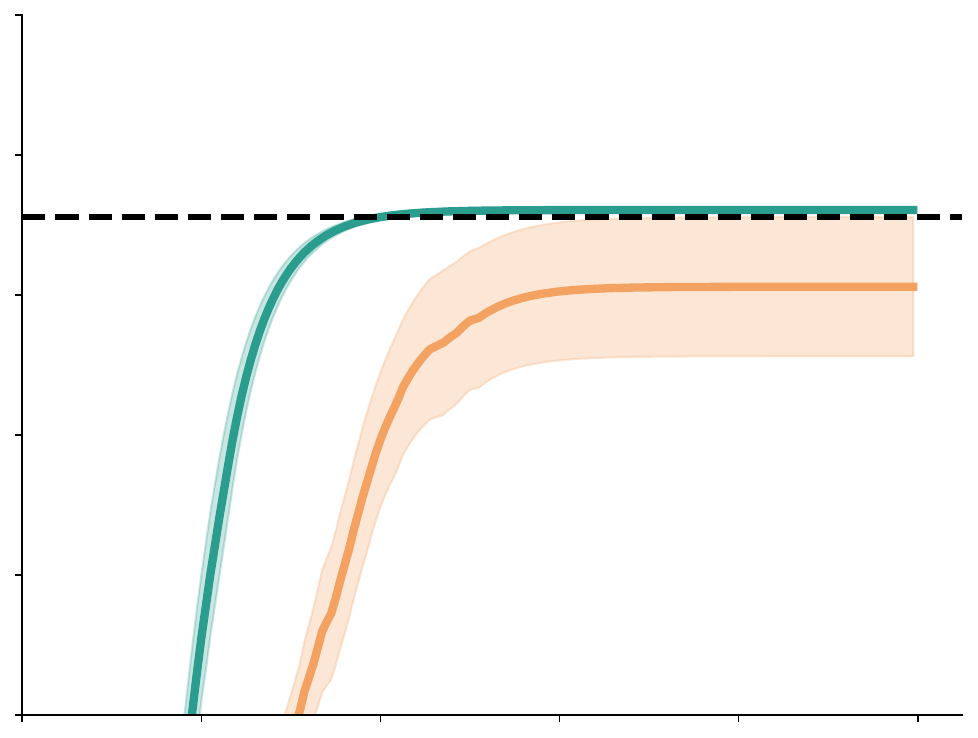} 
    \hfill
    \includegraphics[width=0.159\linewidth]{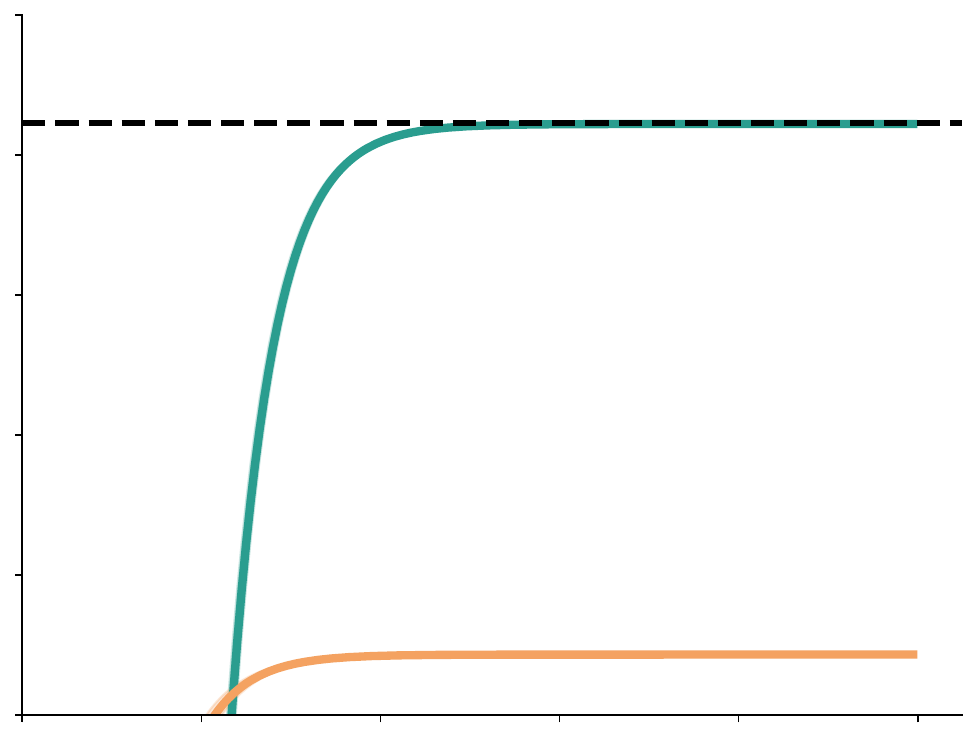}
    \hfill
    \includegraphics[width=0.159\linewidth]{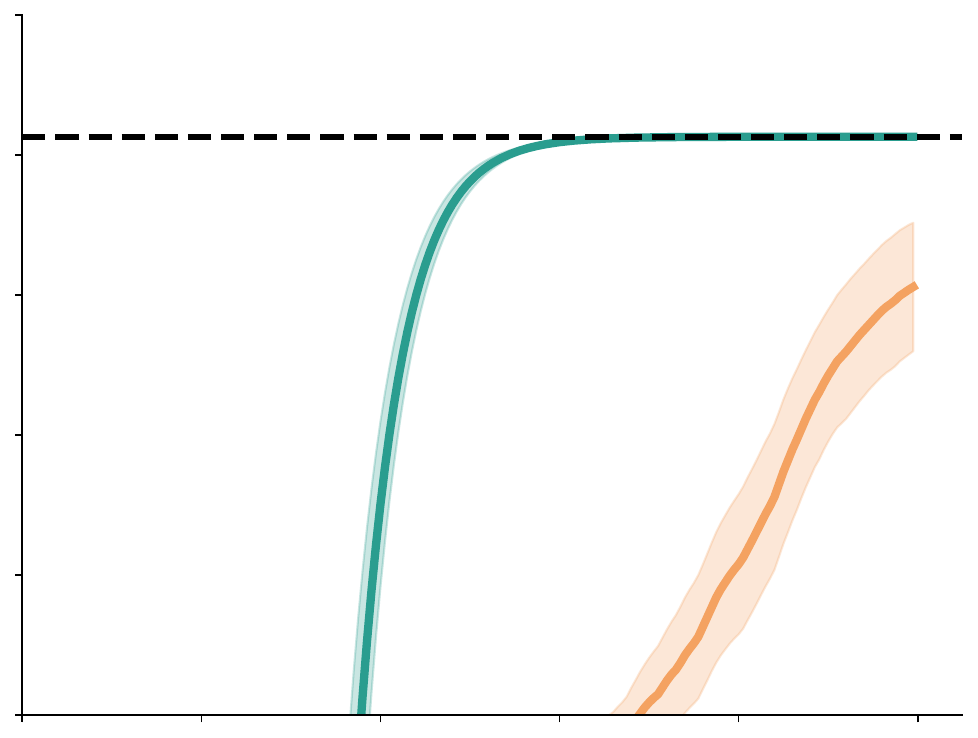}
    \\
    \raisebox{20pt}{\rotatebox[origin=t]{90}{\fontfamily{cmss}\scriptsize{Two-Room-3x5}}}
    \hfill
    \includegraphics[width=0.159\linewidth]{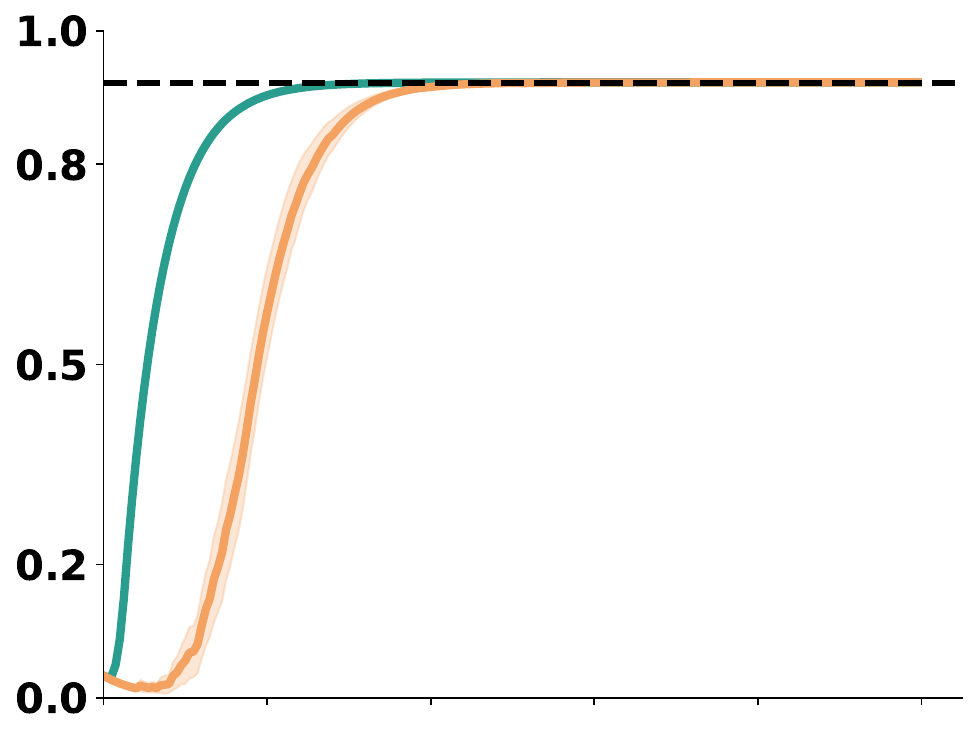}
    \hfill
     \includegraphics[width=0.159\linewidth]{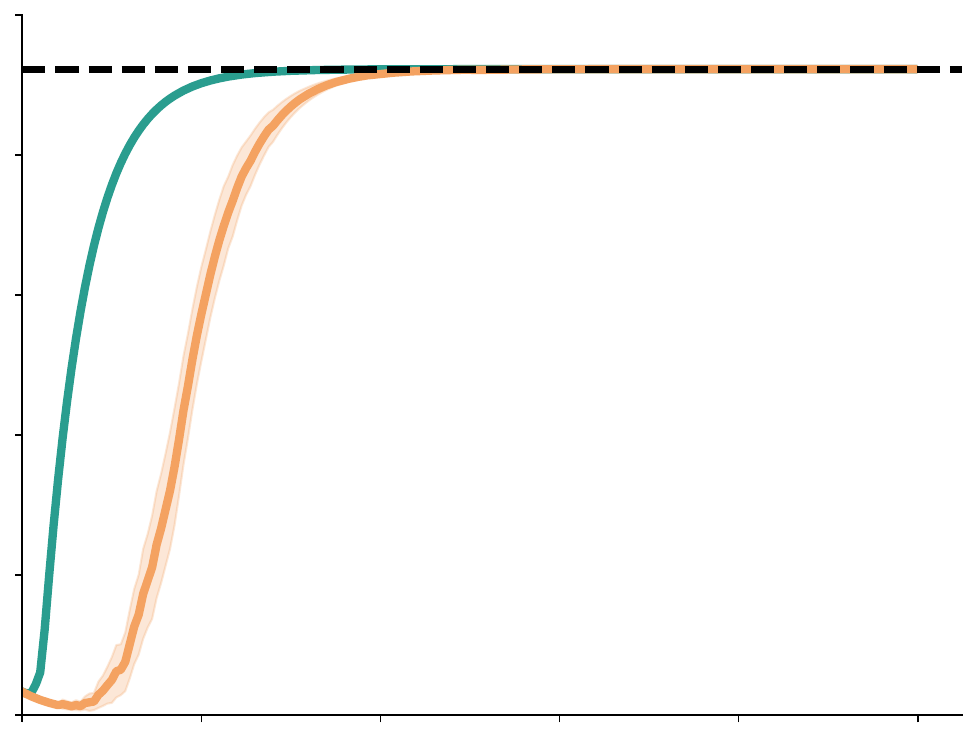}
    \hfill
    \includegraphics[width=0.159\linewidth]{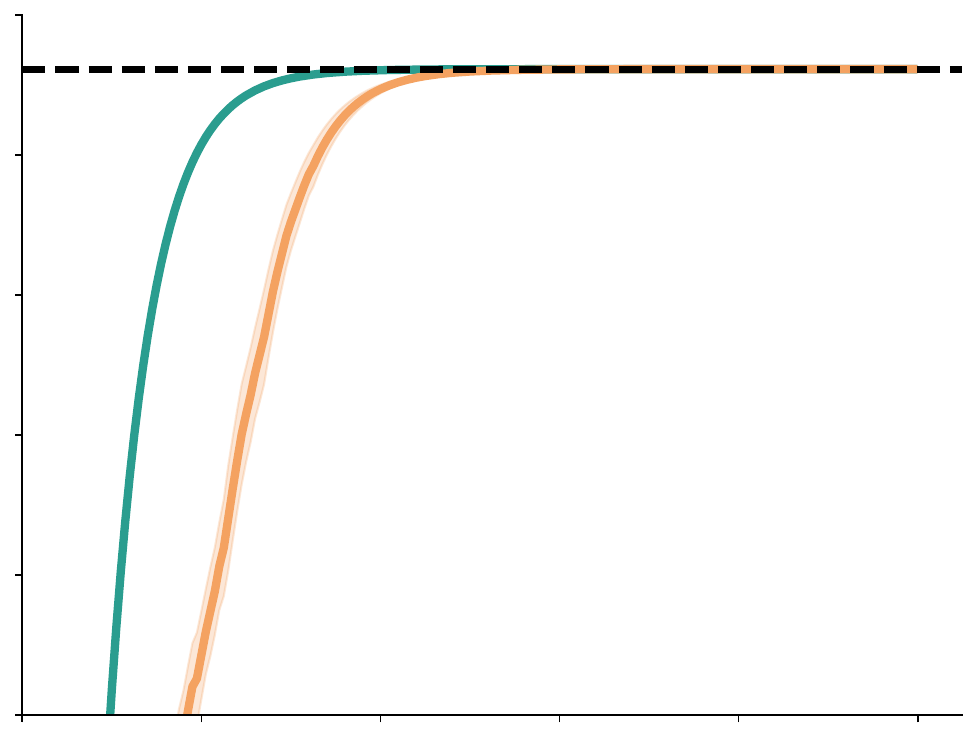}
    \hfill
     \includegraphics[width=0.159\linewidth]{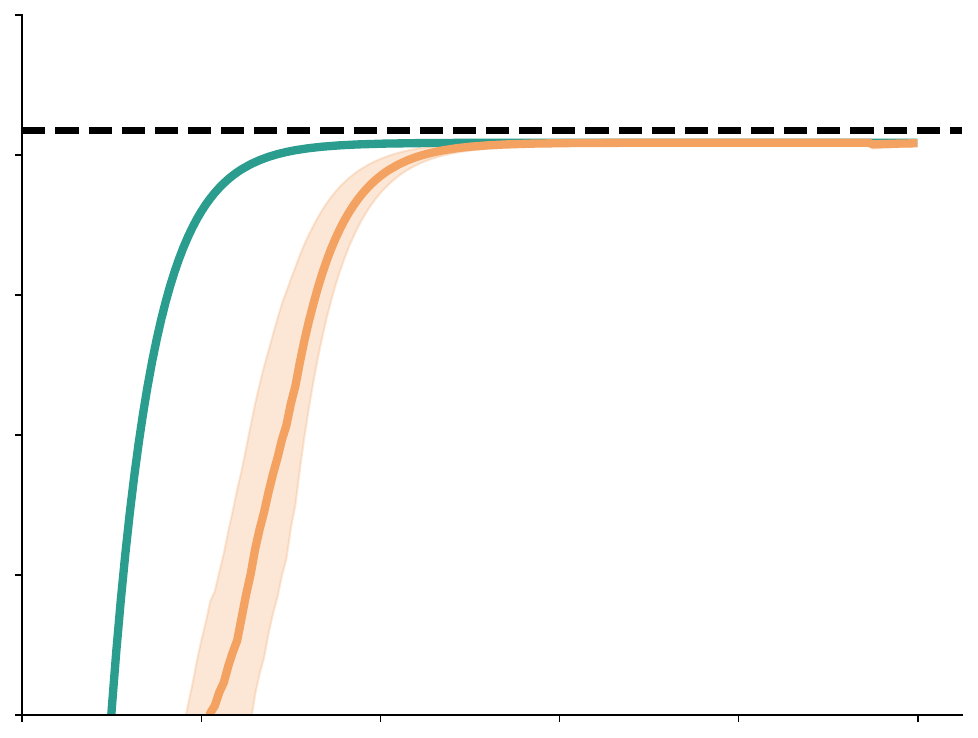} 
    \hfill
    \includegraphics[width=0.159\linewidth]{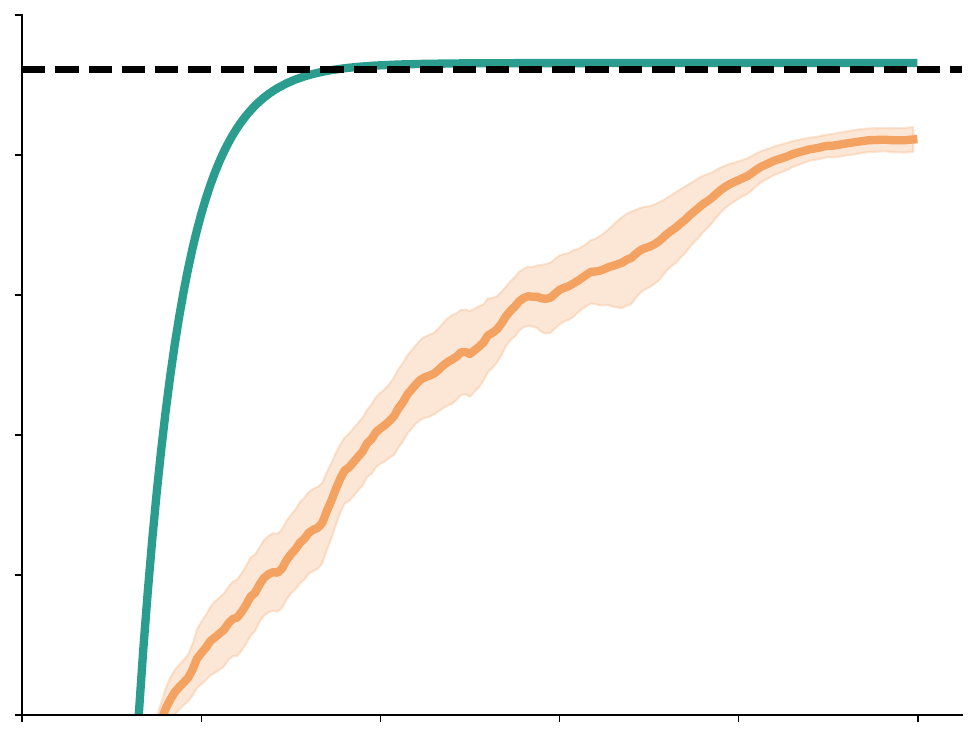}
    \hfill
    \includegraphics[width=0.159\linewidth]{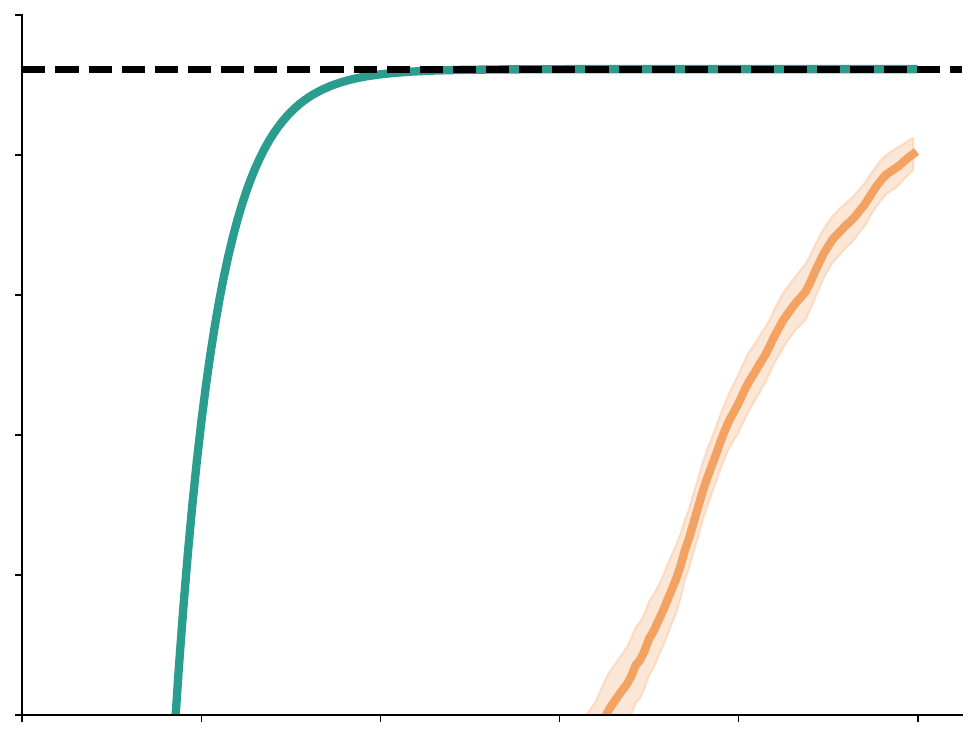}
    \\
    \raisebox{20pt}{\rotatebox[origin=t]{90}{\fontfamily{cmss}\scriptsize{Two-Room-2x11}}}
    \hfill
    \includegraphics[width=0.159\linewidth]{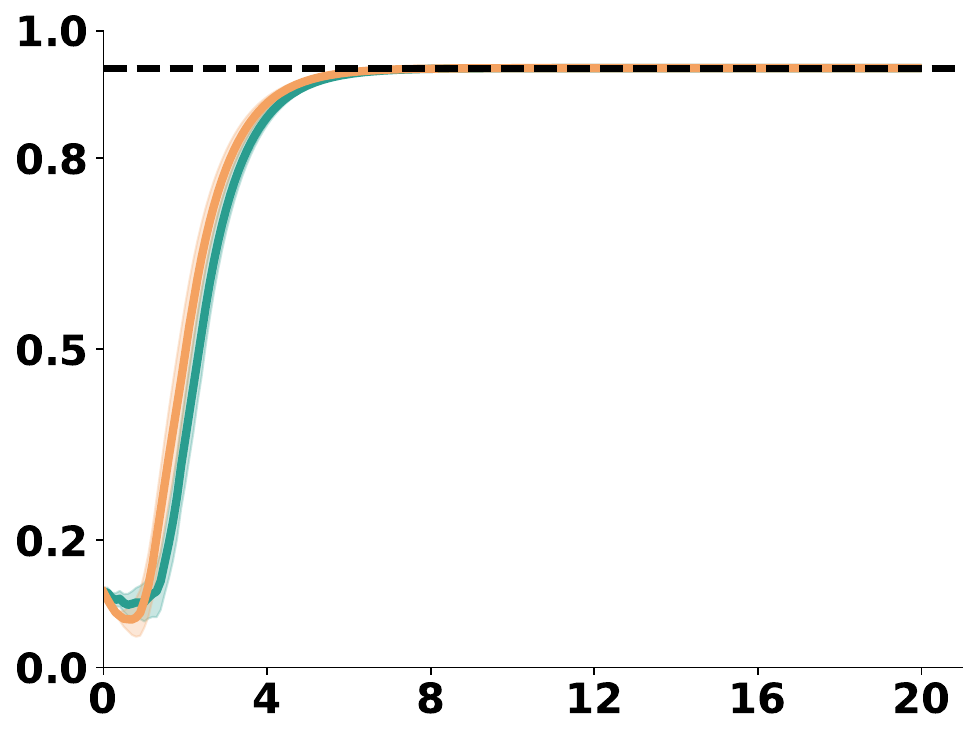}
    \hfill
     \includegraphics[width=0.159\linewidth]{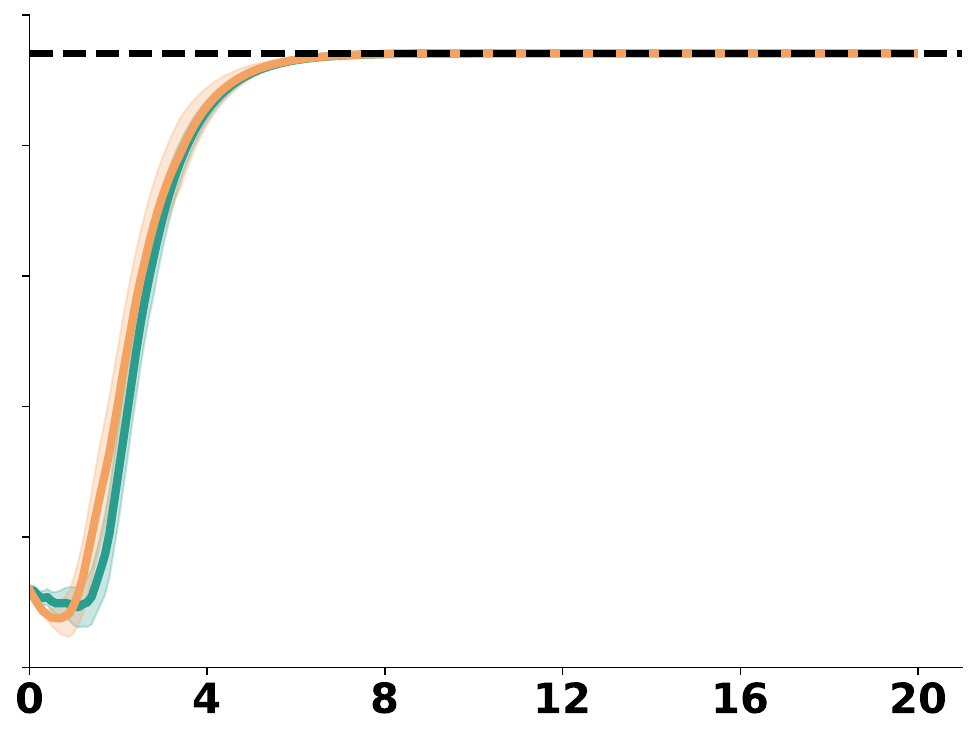}
    \hfill
    \includegraphics[width=0.159\linewidth]{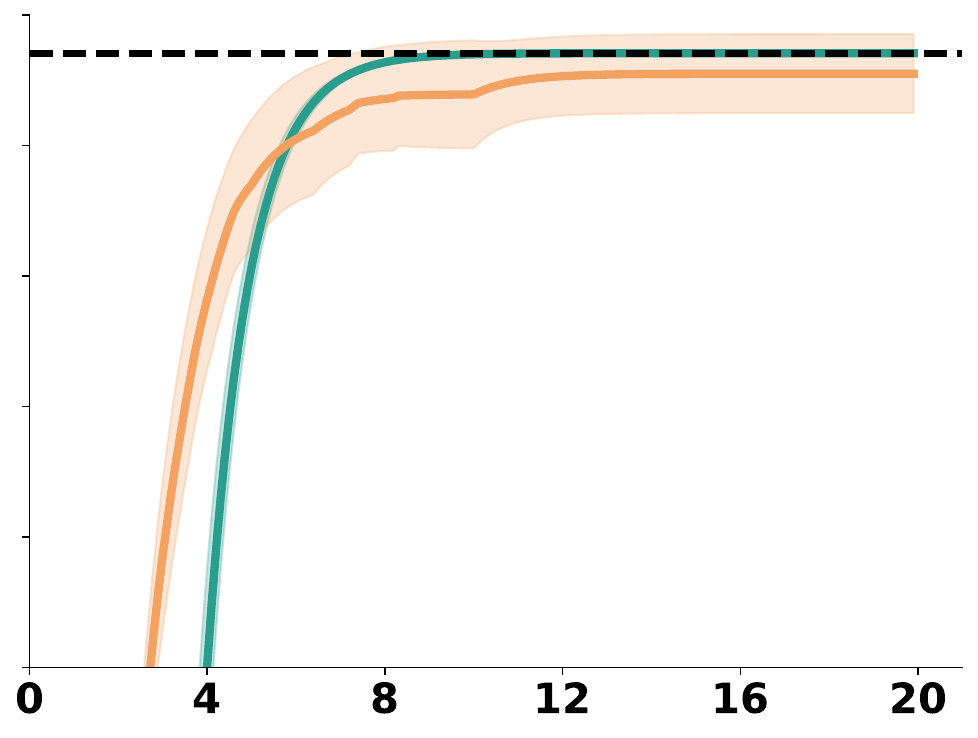}
    \hfill
     \includegraphics[width=0.159\linewidth]{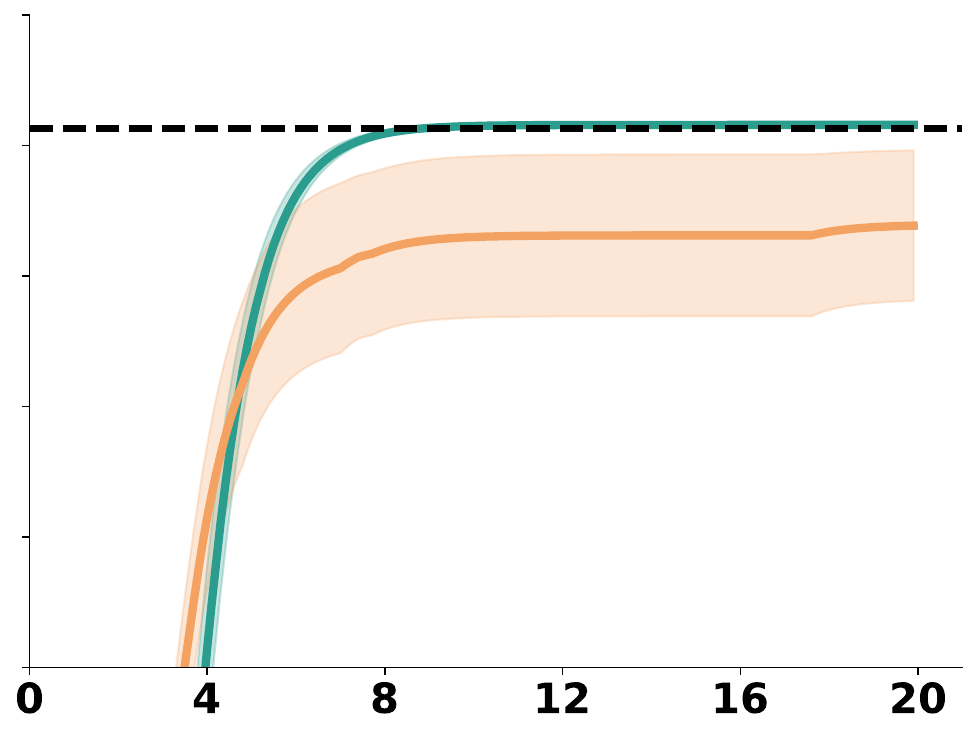} 
    \hfill
    \includegraphics[width=0.159\linewidth]{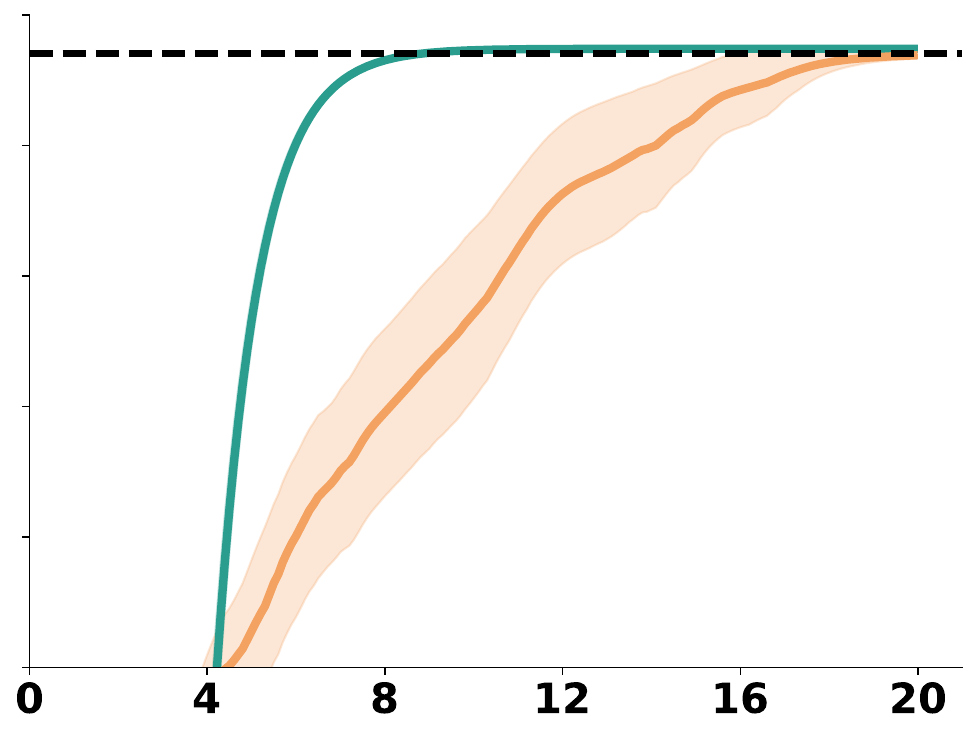}
    \hfill
    \includegraphics[width=0.159\linewidth]{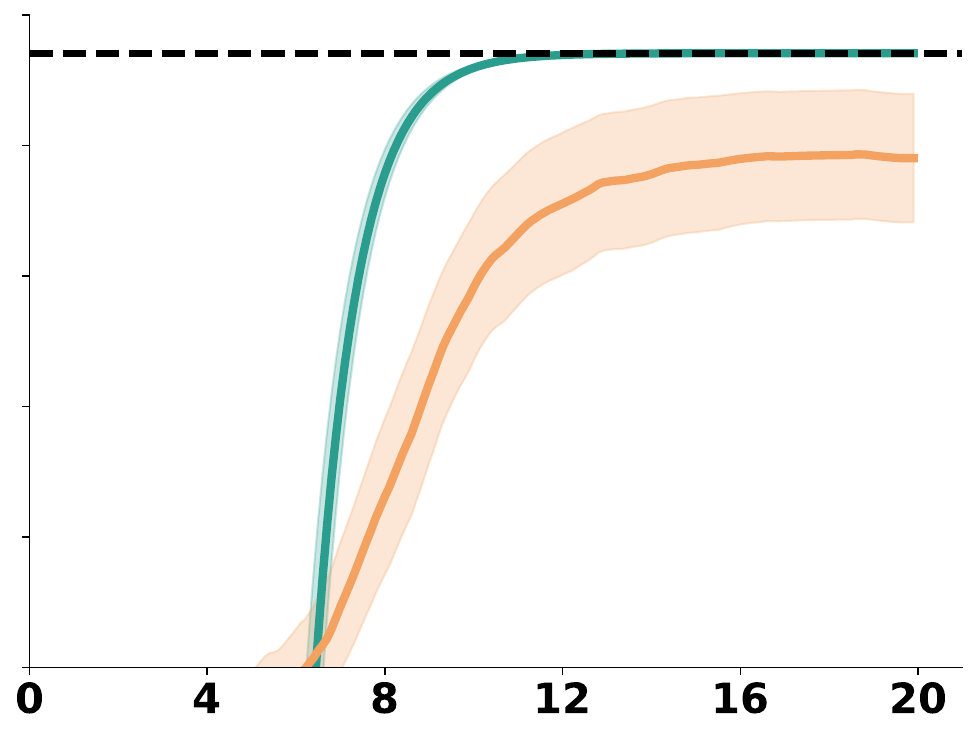}
    \\[-1.5pt]
    {\fontfamily{cmss}\scriptsize{Training Steps ($\times 10^3$)}}
\caption{\textbf{Performance on 48 benchmark environments} from \citet{parisi2024beyond}. \thealgo outperforms Directed-E$^2$ in 43 of them and performs on par in the remainder five. Details of all 48 benchmarks are in \cref{appendix:empirical_details}.}
\label{fig:48-benchmarks}
\end{figure*}

To better understand the above results, Figure~\ref{fig:visits} shows how many times the agent visits the goal state and $\bot$ states per testing episode. 
Both algorithms initially visit the goal state (Figure~\ref{fig:goal_visits}) during random exploration (i.e., when executing the policy after 0 timesteps of training). \thealgo appropriately explores for some training episodes (recall that rewards are only observed in \son and even then only 5\% of the time), and then learns to always go to the goal. Both also initially visit $\bot$ states (Figure~\ref{fig:bot_visits}). However, while \thealgo learns to be appropriately pessimistic over time and avoids them, Directed-E$^2$ never updates its (random) initial estimate of the value of $\bot$ states and incorrectly believes they should continue to be visited. This also explains why Directed-E$^2$ performs even worse in Figure~\ref{fig:bottleneck_5_return}.

Finally, Figure~\ref{fig:48-benchmarks} presents results comparing \thealgo across all of the domains and monitor benchmarks from~\citet{parisi2024beyond}. In these 48 benchmarks, \thealgo significantly outperforms Directed-E$^2$ in all but five of them, where they perform similarly.
\section{Discussion}
There are a number of limitations to our approach suggesting directions for future improvements. 
First, Mon-MDPs contain an exploration-exploitation dilemma, but with an added twist --- the agent needs to treat never observed rewards pessimistically to achieve a minimax-optimality; however, it should continue exploring those states to get more confident about their unobservability. Much like early algorithms for the exploration-exploitation dilemma in MDPs~\citep{kearns2002near}, our approach separately optimizes a model for exploring and one for seeking minimax-optimality. A more elegant approach is to simultaneously optimize for both. Second, our approach uses explicit counts to drive its exploration, which limits it to enumerable Mon-MDPs. Adapting psuedocount-based methods~\citep{bellemare2016unifying, martin2017count, tang2017exploration, machado2020count} helps making \thealgo more applicable to large or continuous spaces. Finally, the decision of when to stop trying to observe rewards and instead optimize is essentially an optimal stopping time problem~\citep{lattimore2020bandit}, and there are considerable innovations that could improve the bounds along with empirical performance.

\section{Conclusion}
We introduced \thealgo for Mon-MDPs that addresses many of the previous work's shortcomings. It gives the first finite-time sample complexity for Mon-MDPs, while being applicable to both solvable and unsolvable Mon-MDPs, for which it is also the first. Furthermore, it both exploits the Mon-MDP's structure and leverage the monitor process's knowledge, if available. These features are not just theoretical, as we see these innovations resulting in empirical improvements in Mon-MDP benchmarks, outperforming the previous best learning algorithm.

\clearpage
\section*{Acknowledgments}
The authors are grateful to Antonie Bodley for enhancing the text's clarity, and the anonymous reviewers that provided valuable feedback. Part of this work has taken place in the Intelligent Robot Learning (IRL) Lab at the University of Alberta, which is supported in part by research grants from Alberta Innovates; Alberta Machine Intelligence Institute (Amii); a Canada CIFAR AI Chair, Amii; Digital Research Alliance of Canada; Mitacs; and the National Science and Engineering Research Council (NSERC).

\section*{Impact Statement}
This paper presents work whose goal is to advance the fundamental understanding of reinforcement learning. 
Our work is mostly theoretical and experiments are conducted on simple environments that do not involve human participants or concerning datasets, and it is not tied to specific real-world applications. We believe that our contribution has little to no potential for harmful impact. 
\balance 
\bibliographystyle{icml2025}
\bibliography{utils/my_bibs}
\newpage
\appendix
\onecolumn
\begin{appendix}
    In this appendix we provide the proof of the sample complexity bound of \thealgo, the description of how the set of indistinguishable Mon-MDP $[M]_{\mathds{I}}$ for a truthful Mon-MDP is defined, the proof of the tightness of the \thealgo's sample complexity on $\rho^{-1}$, additional details about \thealgo and its pseudocode, the details of experiments (including description of the environments and monitors, details of the hyperparameters, an outline of Directed-E$^2$, and left-out implementation details), and ablations on the significant components of \thealgo. 
    %
    %
    \section{Table of Notation}
\begin{tabular}{ l l } 
 \hline
 \textbf{Symbol} & \textbf{Explanation} \\
 \hline
 $\Delta(\c{X})$ & $\Delta(\c{X})$ denotes the set of distributions over the finite set $\c{X}$  \\ 
 $M$ & A  Mon-MDP  \\ 
 $\mathbb{P}$ & The probability measure  \\ 
 $\mathbb{E}$ & The expectation with respect to $\mathbb{P}$  \\ 
 $\c{S}$ & State space (environment-only in MDPs, joint environment-monitor in Mon-MDPs)  \\ 
 $\c{A}$ & Action space (environment-only in MDPs, joint environment-monitor in Mon-MDPs)  \\ 
 $r(s, a)$ & Mean reward of taking action $a$ at state $s$\\ 
 $p$ & Transition dynamics in MDPs / Joint transition dynamics in Mon-MDPs  \\ 
 $\estimate{P}$ & Maximum likelihood estimate of $p$  \\
 $\model{P}$ & Optimistic variant of $p$  \\
 $\gamma$ & Discount factor \\
 $\env{\rmax}$ & Maximum mean environment reward \\
 $\env{\rmin}$ & $-\env{\rmax}$ \\
 $\env{\c{S}}$ & Environment state space  \\ 
 $\env{\c{A}}$ & Environment action space  \\ 
 $\env{R}_{t + 1}$ & Immediate environment reward for timestep $t$\\
 $\env{\widehat{R}}_{t + 1}$ & Immediate environment proxy reward for timestep $t$ \\
$\env{\estimate{R}}$ & Maximum likelihood estimate of $\env{r}$  \\
$\env{\model{R}}$ & Optimistic variant of $\env{r}$  \\
$\env{p}$ & Environment transition dynamics  \\ 
 $\mon{\c{S}}$ & Monitor state space  \\ 
  $\mon{p}$ & Monitor transition dynamics  \\ 
 $\mon{f}$ & Monitor function  \\
 $\mon{\rmax}$ & Maximum expected monitor reward\\
 $\mon{\rmin}$ & $-\mon{\rmax}$ \\
 $N(s, a)$ & Number of times  $a$ has been taken at $s$ \\
 $N(\mon{s}, \mon{a})$ &  Number of times $\mon{a}$ has been taken at $\mon{s}$ \\
$N(\env{s}, \env{a})$ & Number of times $\env{a}$ has been taken at $\env{s}$ and the environment reward observed \\
 $\kappa$& Number of observe episodes \\
 $\kappa^*(k)$ & Desired number of observe episodes through episode $k$ \\
 $\rho$ & The minimum non-zero probability of observing the environment reward\\
 \hline
\end{tabular}
    \section{Methodological Details and Results}
\label{appendix:empirical_details}
Throughout the paper, our baseline is Directed-E$^2$ as the precursor of our work. \citet{parisi2024beyond} showed the superior Directed-E$^2$'s performance in Mon-MDPs against many well-established algorithms. Directed-E$^2$ uses two action-values, one as the ordinary action-values that uses a reward model in place of the environment reward to update task-respecting action-values (this sets the agent free from the partial observability of the environment reward once the reward model is learned). The Second action-value denoted by $\Psi$, dubbed as visitation-values, tries to maximize the successor representations. Directed-E$^2$ uses visitation-values to visit every joint state-action pairs and it tries to maintain the visitation of each pair in a comparable range. In the limit of infinite exploration, Directed-E$^2$ becomes greedy with respect to the task-respecting action-values for maximizing the expected sum of discounted rewards. 

Evaluation is based on discounted test return, averaged over 30 seeds with 95\% confidence intervals. Every 100 steps, learning is paused, the agent is tested for 100 episodes, and the mean return is recorded
\begin{figure}[tbh]
        \centering
        \begin{subfigure}[b]{0.32\linewidth}
            \centering
            \caption*{\textbf{Empty}}
            \includegraphics[width=\linewidth]{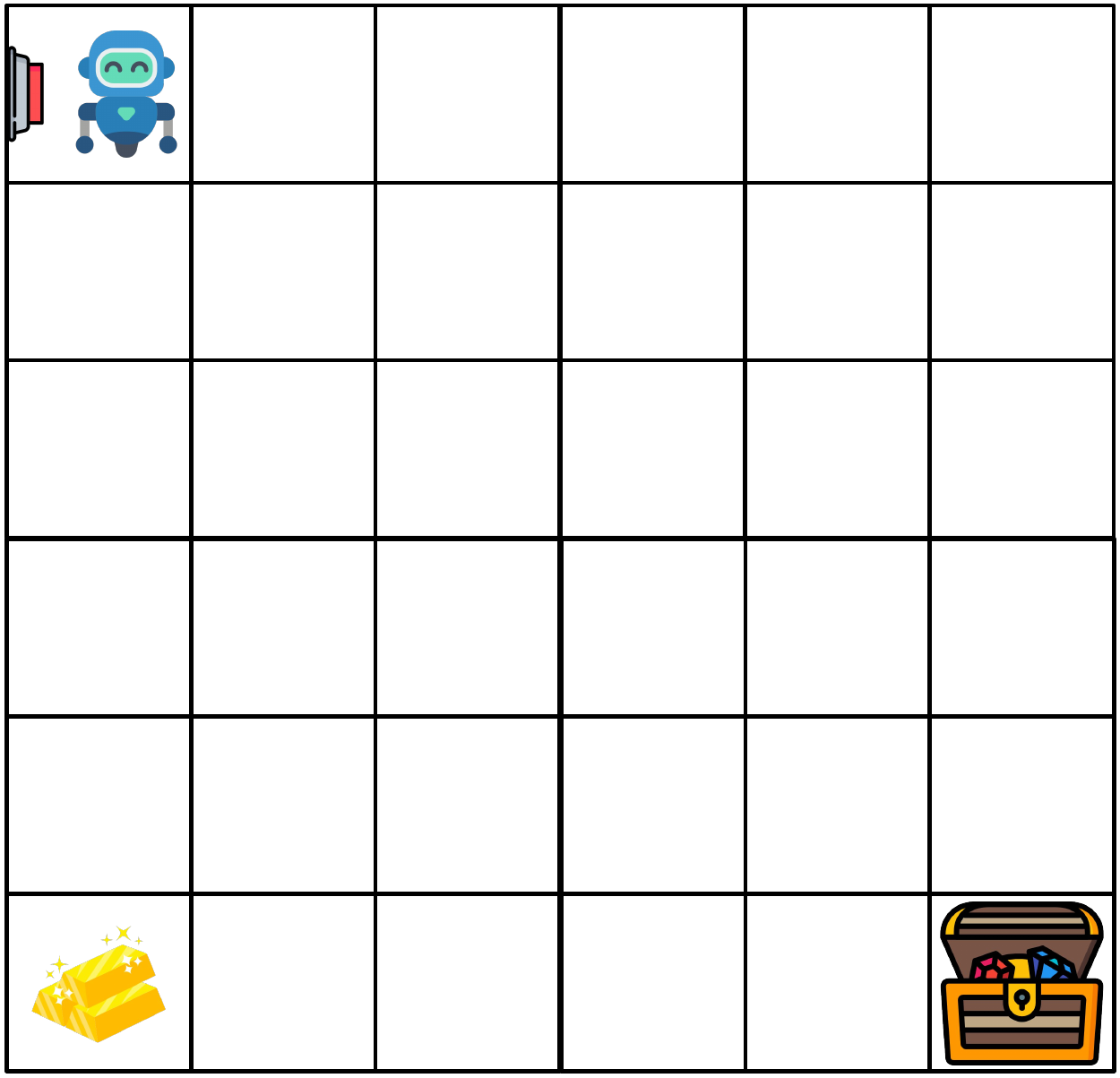}
        \end{subfigure}
        \hfill
        \begin{subfigure}[b]{0.32\linewidth}
            \centering
            \caption*{\textbf{Hazard}}
            \includegraphics[width=\linewidth]{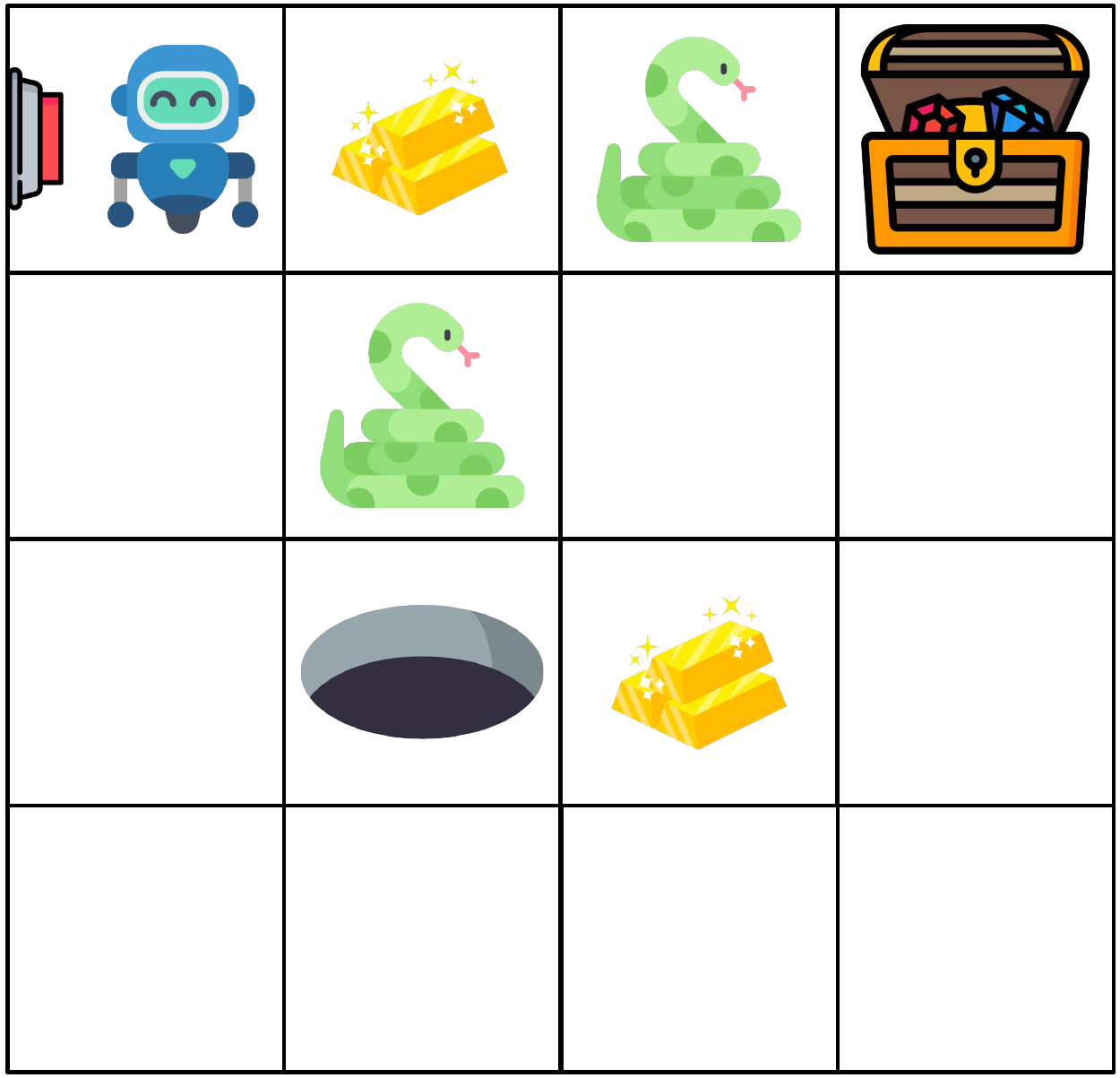}
        \end{subfigure}
        \hfill
        \begin{subfigure}[b]{0.32\linewidth}
            \centering
            \caption*{\textbf{Bottleneck}}
            \includegraphics[width=\linewidth]{imgs/envs/Bottleneck.pdf}
        \end{subfigure}
        \\[5pt]
        \begin{subfigure}[b]{0.32\linewidth}
            \centering
            \caption*{\textbf{Loop}}
            \includegraphics[width=\linewidth]{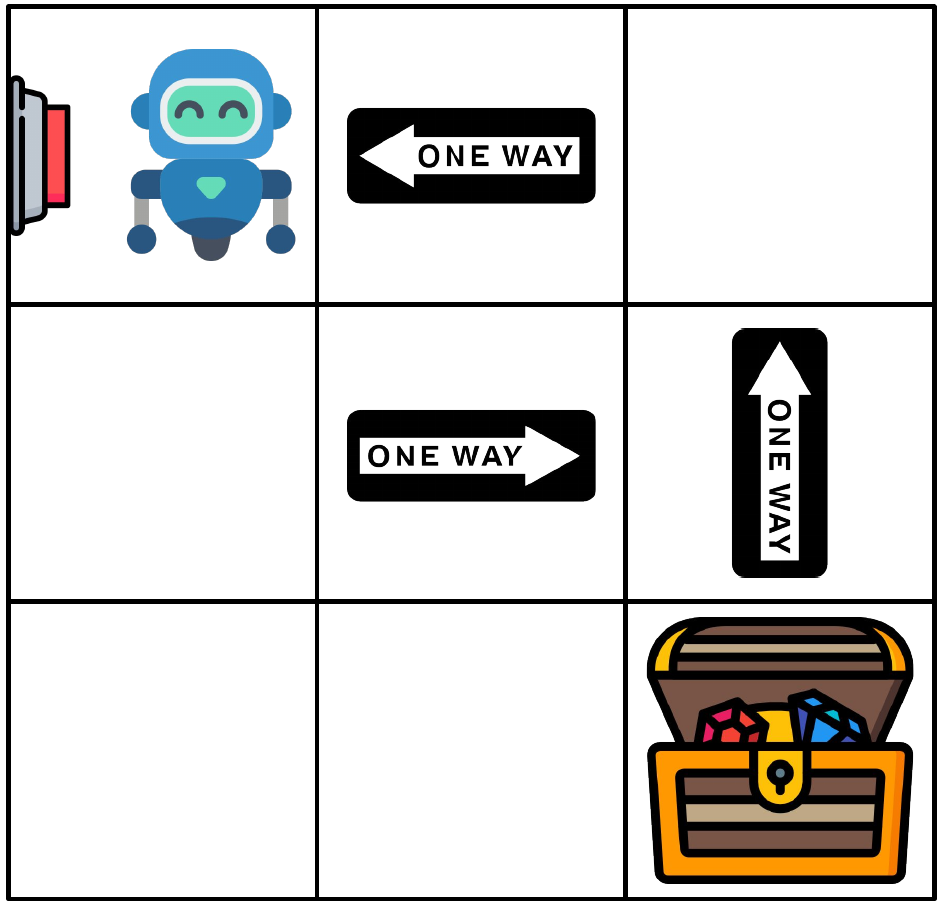}
        \end{subfigure}
        \hfill
        \begin{minipage}[b]{0.32\linewidth}
        \begin{subfigure}[b]{\linewidth}
                \centering
                \caption*{\textbf{River Swim}}
                \includegraphics[width=\linewidth]{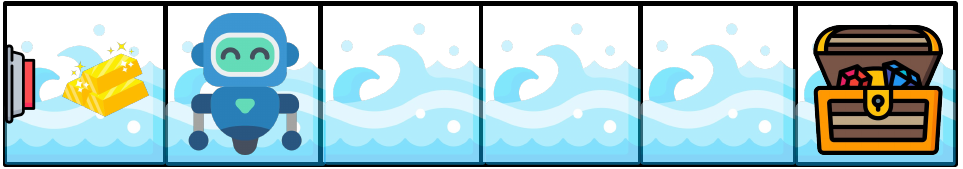}
            \end{subfigure}
        \\
        \begin{subfigure}[b]{\linewidth}
            \centering
            \caption*{\textbf{One-Way}}
            \includegraphics[width=\linewidth]{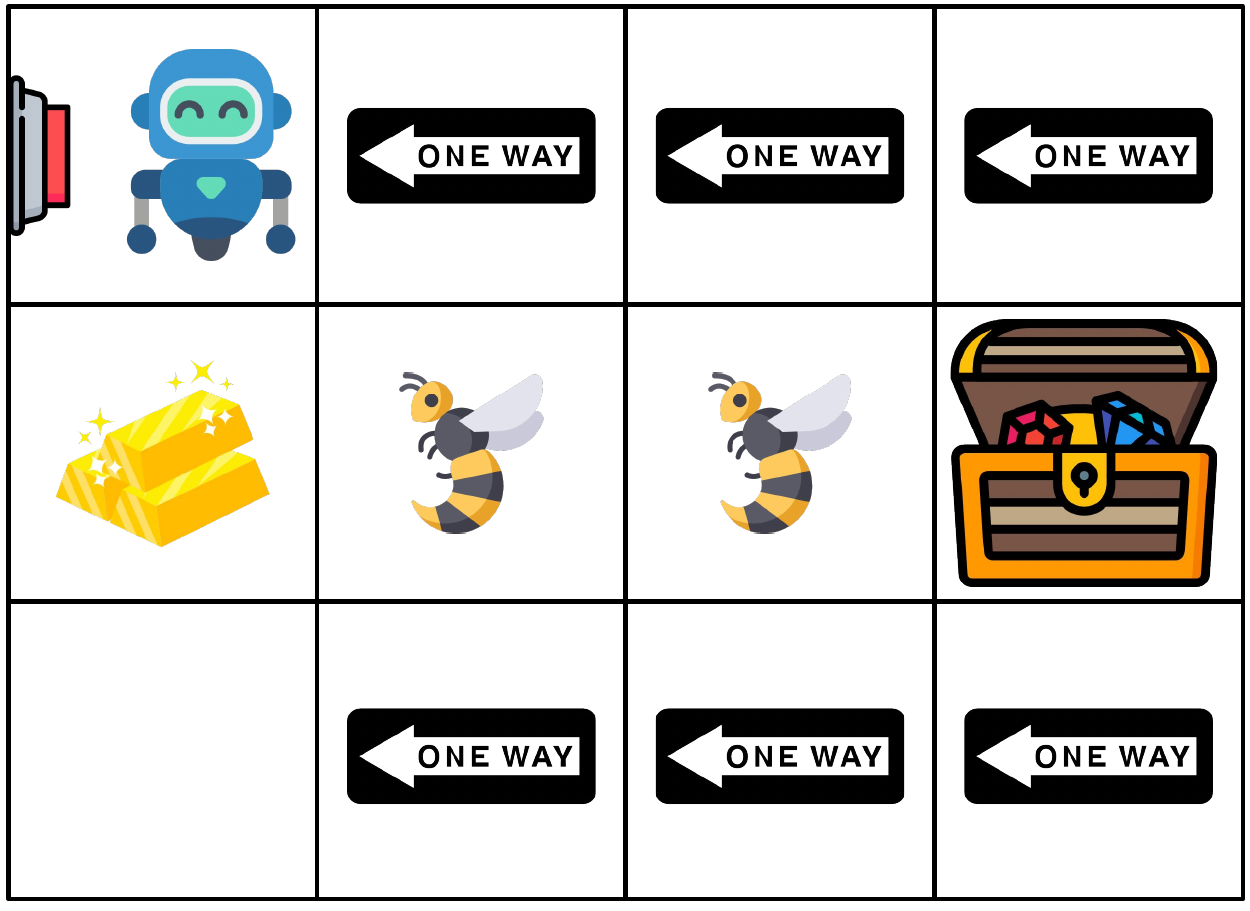}
        \end{subfigure}
        \end{minipage}
        \hfill
        \begin{minipage}[b]{0.32\linewidth}
        \begin{subfigure}[b]{\linewidth}
            \centering
            \caption*{\textbf{Two-Room-2x11}}
            \includegraphics[width=\linewidth]{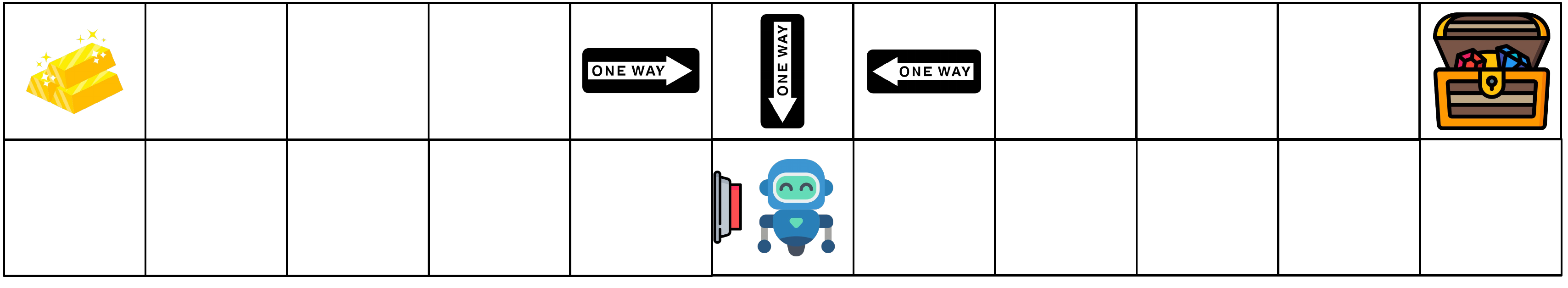}
         \end{subfigure}
        \\
        \begin{subfigure}[b]{\linewidth}
            \centering
            \caption*{\textbf{Corridor}}
            \includegraphics[width=\linewidth]{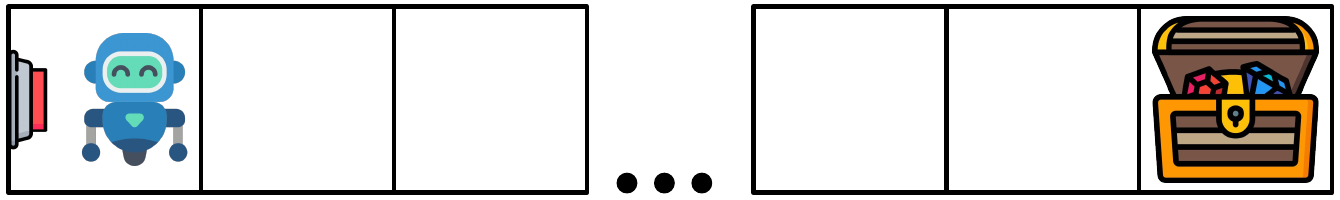}
         \end{subfigure}
        \\
         \begin{subfigure}[b]{\linewidth}
            \centering
            \caption*{\textbf{Two-Room-3x5}}
            \includegraphics[width=\linewidth]{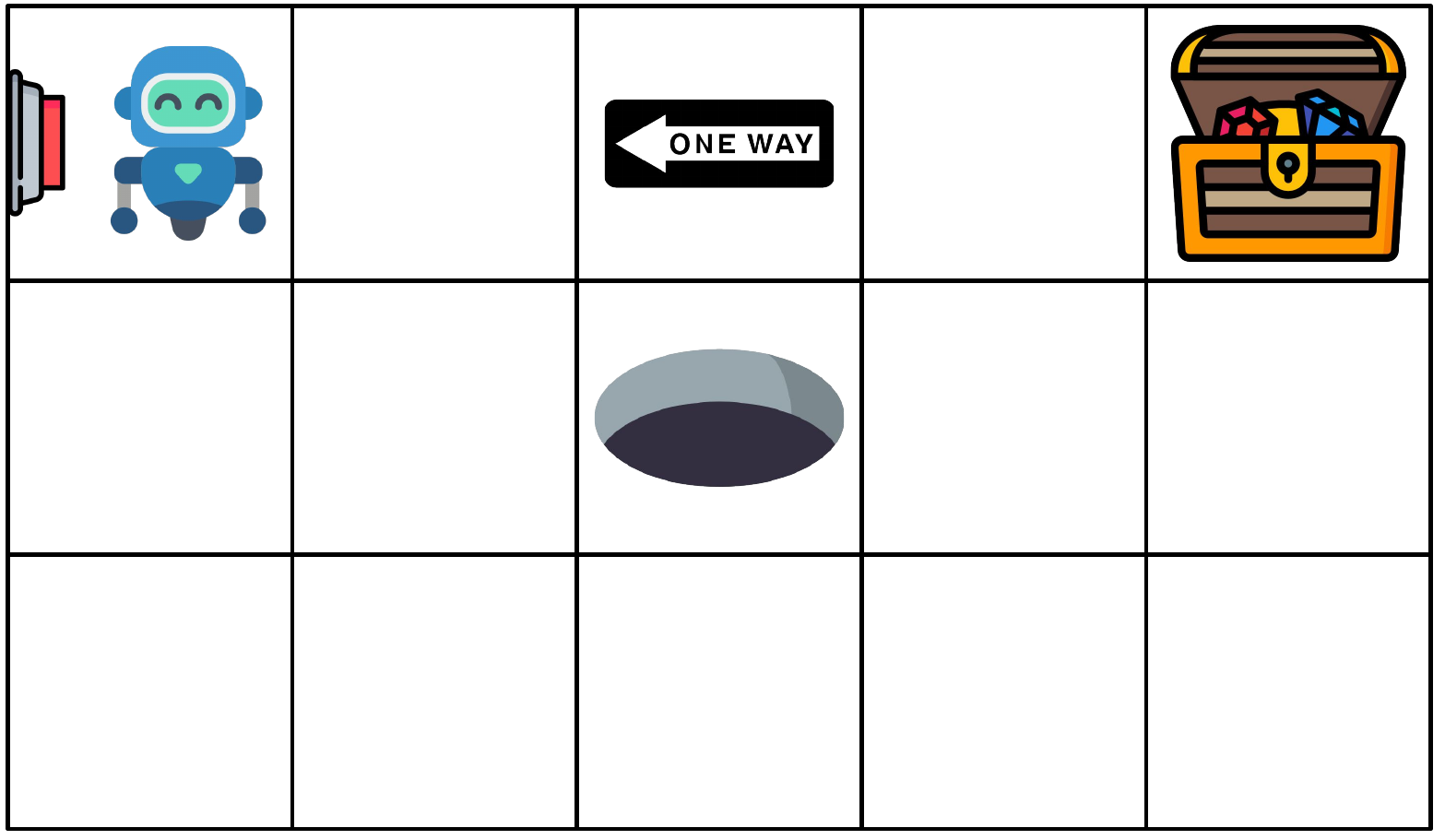}
        \end{subfigure}
        \end{minipage}
    \caption{\textbf{Full set of environments}. Except Bottleneck, all environments are borrowed from \citet{parisi2024beyond}. Snakes and Wasps should be avoided. The goal is to \texttt{STAY} at the treasure chest. Gold bars are distractors. The agent gets stuck in the holes unless randomly gets pulled out. One-ways transition the agent in their own direction regardless of the action.}
    \label{fig:envs}
    \vspace{-15pt}
    \end{figure}
\begin{figure}[tbh]
    \centering
    \includegraphics[width=\linewidth]{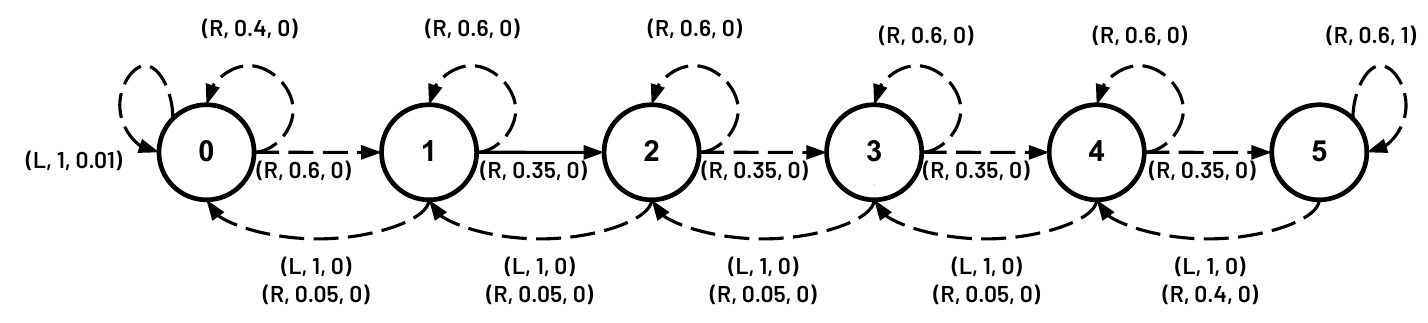}
    \caption{\textbf{Dynamics of River Swim}. Each tuple represents (action, transition probability, reward).}
    \label{fig:river_swim_dynam}
    \vspace{-10pt}
\end{figure}
\subsection{Full Environments' Details}
\label{appendix:env_details}
The environments comprise \textbf{Empty}, \textbf{Hazard}, \textbf{Bottleneck}, \textbf{Loop}, \textbf{River Swim}, \textbf{One-Way}, \textbf{Corridor}, \textbf{Two-Room-3x5} and \textbf{Two-Room-2x11}. They are shown in Figure~\ref{fig:envs}. In all of them (except River Swim) the agent, represented by the robot, has 5 actions including 4 cardinal movement \{\texttt{LEFT, DOWN, RIGHT, UP}\} and an extra \texttt{STAY} action. The goal is getting to the treasure chest as fast as possible and \texttt{STAY}ing there to get a reward of +1, which also terminates the episode. The agent should not \texttt{STAY} at gold bars as they are distractors because they would yield a reward of 0.1 and terminate the episode. The agent should avoid Snakes as any action leading to them yield a reward of -10. Cells with a one-way sign transition the agent only to their unique direction. If the agent stumbles on a hole, it would spend the whole episode in it, unless with 10\% chance its action gets to be effective and the agent gets transitioned. When a button monitor is configured on top of the environment, the location of the button is figuratively indicated by a push button. The button is pushed if the agent bumps itself to it. The episode's time limit in River Swim, corridor and Two-Room-2x11 is 200 steps, and in other environments is 50 steps. In River Swim the agent has two actions \texttt{L} $\equiv$ \texttt{LEFT} and \texttt{R} $\equiv$ \texttt{RIGHT}, and there is no termination except the episode's time limit. In this environment gold bars have a reward of 0.01. \cref{fig:river_swim_dynam} shows the dynamics of River Swim.
\subsection{Full Monitors' Details}
\label{appendix:monitor_details}
Monitors used in the experiments are: \textbf{Full (MDP)}, \textbf{Semi-Random}, \textbf{Full-Random}, \textbf{Ask}, \textbf{Button}, \textbf{$N$-Supporters}, \textbf{$N$-Experts} and \textbf{Level-Up}. For any of the monitors, except \textbf{Full-Monitor}, if a cell in the environment is marked with $\bot$, then under no circumstances and at no timestep, the monitor would reveal the environment reward to agent for the action that led the agent to that cell. For \emph{the rest} of the environment state-action pairs the behavior of monitors at timestep $t \geq 0$, by letting $X_t \sim \c{U}[0, 1]$, where $\c{U}$ is the uniform distribution and $\rho \in [0, 1]$, is as follows :
\begin{itemize}[leftmargin=*]
    \item \textbf{Full}. This corresponds to the MDP setting. Monitor shows the environment reward for all environment state-action pairs. State space and action state space of the monitor are singletons and the monitor reward is zero:
    \begin{align*}
        \mon{\c{S}} \coloneq \{\texttt{ON}\}, && \mon{\c{A}} \coloneq \{\texttt{NO-OP}\}, && \mon{S}_{t + 1} \coloneq \texttt{ON},
        && \mon{R}_{t + 1} \coloneq 0, && \env{\widehat{R}}_{t + 1} \coloneq \env{R}_{t + 1}.
    \end{align*}
    \item \textbf{Semi-Random}. It is similar to Full-Monitor except that non-zero environment rewards get hidden with 50\% chance:
    \begin{align*}
        &\mon{\c{S}} \coloneq \{\texttt{ON}\}, && \mon{\c{A}} \coloneq \{\texttt{NO-OP}\}, && \mon{S}_{t + 1} \coloneq \texttt{ON,} 
        && \mon{R}_{t + 1} \coloneq 0.
        && \env{\widehat{R}}_{t + 1} \coloneq 
        \begin{cases}
        \env{R}_{t + 1}, & \textbf{if } \env{R}_{t + 1} = 0; \\
        \env{R}_{t + 1}, & \textbf{else if } X_t \leq 0.5 \\
        \bot, & \text{Otherwise}.
        \end{cases}
    \end{align*}
    \item \textbf{Full-Random}. It is similar to Semi-Random except \emph{any} environment reward gets hidden with a predefined chance $1 - \rho$:
    \begin{align*}
        \mon{\c{S}} \coloneq \{\texttt{ON}\}, && \mon{\c{A}} \coloneq \{\texttt{NO-OP}\}, && \mon{S}_{t + 1} \coloneq \texttt{ON},
        && \mon{R}_{t + 1} \coloneq 0, && \env{\widehat{R}}_{t + 1} \coloneq 
        \begin{cases}
        \env{R}_{t + 1} & \textbf{if },  X_t \leq \rho; \\
        \bot, & \text{Otherwise}.
        \end{cases}
    \end{align*}
    \item \textbf{Ask}. The monitor state space is a singleton but its action space has two elements: \{\texttt{ASK}, \texttt{NO-OP}\}. The agent gets to see the environment reward with probability $\rho$ if it \texttt{ASK}s. Upon \texttt{ASK}ing the agent pays -0.2 as the monitor reward:
    \begin{align*}
        \mon{\c{S}} & \coloneq \{\texttt{ON}\}, \qquad \mon{\c{A}} \coloneq \{\texttt{ASK, NO-OP}\}, \qquad \mon{S}_{t + 1} \coloneqq \texttt{ON}, \\
        \env{\widehat{R}}_{t + 1} & \coloneq 
        \begin{cases}
        \env{R}_{t + 1}, & \textbf{if } X_t \leq \rho \textbf{ and } \mon{A}_t = \texttt{ASK};\\
        \bot ,& \text{Otherwise};
        \end{cases}
        \qquad \mon{R}_{t + 1} \coloneq 
        \begin{cases}
        -0.2, & \textbf{if } \mon{A}_t = \texttt{ASK}; \\
        0, & \text{Otherwise}.
        \end{cases}
    \end{align*}
    \item \textbf{Button}. The state space is $\{\texttt{ON, OFF}\}$. The action space is a singleton. The agent sees the environment reward with probability $\rho$ as long as the monitor is \texttt{ON}, while paying -0.2. The state is flipped if the agent bumps itself to the button:
    \begin{align*}
        \mon{\c{S}} & \coloneq \{\texttt{OFF, ON}\}, \qquad
        \mon{\c{A}} \coloneq \{\texttt{NO-OP}\}, \qquad
        \env{\widehat{R}}_{t + 1} \coloneq 
        \begin{cases}
        \env{R}_{t + 1}, & \textbf{if } X_t \leq \rho \textbf{ and } \mon{S}_t = \texttt{ON}; \\
        \bot, & \text{Otherwise};
        \end{cases}
        \\
        \mon{S}_{t + 1} & \coloneq 
        \begin{cases}
            \texttt{ON}, & \textbf{if } \mon{S}_t = \texttt{OFF} \textbf{ and } \env{S}_t = \texttt{"BUTTON-CELL"} \textbf{ and } \env{A}_t = \texttt{"BUMP-INTO-BUTTON"}; \\
            \texttt{OFF}, & \textbf{if } \mon{S}_t = \texttt{ON} \textbf{ and } \env{S}_t = \texttt{"BUTTON-CELL"} \textbf{ and } \env{A}_t = \texttt{"BUMP-INTO-BUTTON"}; \\
            \mon{S}_t, & \text{Otherwise};
        \end{cases} \\
        \mon{S}_{0} & \coloneqq  \text{Random uniform from } \mon{\c{S}}, \qquad
        \mon{R}_{t + 1} \coloneq 
        \begin{cases}
        -0.2, & \textbf{if } \mon{S}_t = \texttt{ON}; \\
        0, & \text{Otherwise}.
        \end{cases}
    \end{align*}
    \item \textbf{$N$-Supporters}. The monitor state space comprises $N$ states that each represents the presence of a supporter. The action space also comprises $N$ actions. At each timestep one of the supporters is randomly present and if the agent could choose the action that matches the index of the present supporter, then the agent gets to see the environment reward with probability $\rho$. Upon observing the environment reward agent pays a penalty of $-0.2$ as the monitor reward. However, if the agent chooses a wrong supporter, then it will be rewarded $0.001$ (as distraction) as the monitor reward:
    \begin{align*}
        \mon{\c{S}} & \coloneq \{0, \cdots, N - 1\}, 
        \qquad \mon{\c{A}} \coloneq \{0, \cdots, N - 1\},
        \qquad \mon{S}_{t + 1} \coloneq \text{Random uniform from } \mon{\c{S}},\\
        \env{\widehat{R}}_{t + 1} & \coloneq 
        \begin{cases}
        \env{R}_{t + 1}, & \textbf{if } X_t \leq \rho \textbf{ and } \mon{S}_t = \mon{A}_t; \\
        \bot, & \text{Otherwise};
        \end{cases}
        \qquad \mon{R}_{t + 1} \coloneq 
        \begin{cases}
        -0.2, &  \mon{S}_t = \mon{A}_t; \\
        0.001, & \text{Otherwise}.
        \end{cases}
    \end{align*}
    \citet{parisi2024beyond} considered this monitor as challenging, due to its bigger spaces than the rest of the monitors, for algorithms that use the successor representations for exploration. However, the encouraging nature of the monitor regarding the agent's mistakes makes the monitor easy for reward-respecting algorithms, e.g., \thealgo.
    \item \textbf{$N$-Experts}. Similar to $N$-Supporter the state space has $N$ states, each corresponding to the presence of one of the $N$ experts. However, getting experts' advice is costly, hence the action space has $N + 1$ action where the last action corresponds to not pinging any experts and is cost-free. At each timestep, one of the experts is randomly present and if the agent selects the action that matches the present expert's index, the agent gets to see the environment reward with probability $\rho$. Upon observing the environment reward agent pays a penalty of $-0.2$ as the monitor reward. However, if the agent chooses a wrong expert it will be penalized by $-0.001$ as the monitor reward. Since the last action does not inquire any of the experts its monitor reward is $0$:
    \begin{align*}
        \mon{\c{S}} & \coloneq \{0, \cdots, N - 1\}, 
        \qquad \mon{\c{A}} \coloneq \{0, \cdots, N\},
        \qquad \mon{S}_{t + 1} \coloneq \text{Random uniform from } \mon{\c{S}},\\
        \env{\widehat{R}}_{t + 1} & \coloneq 
        \begin{cases}
        \env{R}_{t + 1}, & \textbf{if } X_t \leq \rho \textbf{ and } \mon{S}_t = \mon{A}_t; \\
        \bot, & \text{Otherwise};
        \end{cases}
        \qquad \mon{R}_{t + 1} \coloneq 
        \begin{cases}
        -0.2, &  \textbf{if } \mon{S}_t = \mon{A}_t; \\
        0, &  \textbf{else if } \mon{A}_t = N; \\
        -0.001, & \text{Otherwise}.
        \end{cases}
    \end{align*}
    \item \textbf{Level-Up}. This monitor tries to test the agent's capabilities of performing deep exploration~\citep{osband2019deep} in the monitor spaces. The state space has $N$ states corresponding to $N$ levels. The action space has $N + 1$ actions. The initial state of the monitor is 0 and if at each timestep the agent selects the action that matches the state of the monitor, the state increases by one. If the agent selects the wrong action the state is reset back to 0. The agent only gets to observe the environment reward with probability $\rho$ if it gets the state of the monitor to the max level. The agent is penalized with $-0.2$ as the monitor reward every time it does not select the last action which does nothing and keeps the state as is:
    \begin{align*}
        \mon{\c{S}} & \coloneq \{0, \cdots, N - 1\}, 
        \qquad \mon{\c{A}} \coloneq \{0, \cdots, N- 1, \texttt{NO-OP}\},
        \qquad \env{\widehat{R}}_{t + 1} \coloneq 
        \begin{cases}
        \env{R}_{t + 1}, & \textbf{if } X_t \leq \rho \textbf{ and } \mon{S}_t = N - 1; \\
        \bot, & \text{Otherwise};
        \end{cases} \\
        \mon{R}_{t + 1} & \coloneq 
        \begin{cases}
        0, &  \textbf{if } \mon{A}_t = \texttt{NO-OP}; \\
        -0.2, & \text{Otherwise};
        \end{cases}
        \qquad \mon{S}_{t + 1} \coloneq
        \begin{cases}
            \mon{S}_t, & \textbf{if } \mon{A}_t = \texttt{NO-OP}; \\
            \max\left\{ \mon{S}_t + 1, N - 1 \right\}, & \textbf{else if } \mon{S}_t = \mon{A}_t; \\
            0, & \text{Otherwise}.
        \end{cases}
    \end{align*}
\end{itemize}

    %
    %
    \subsection{When There Are Never-Observable Rewards}
\label{appendix:never_obsrv}
Mon-MDPs designed by \citet{parisi2024beyond} do not have non-ergodic monitors, which give rise to unsolvable Mon-MDPs. Hence, we introduce \textbf{Bottleneck} to investigate the performance of \thealgo compared to Directed-E$^2$ in unsolvable Mon-MDPs. In Bottleneck the underlying reward of cells marked by $\bot$ is the same as the snake (-10). In these experiments, we use Full-Random monitor since it is more stochastic than Semi-Random to increase the difficulty. Results are shown in \cref{fig:bottleneck_unsolv}. The location of the button is chosen to make agents perform deep exploration because changing the state of the button requires a long sequence of costly actions, which in turn yields the highest return. Therefore, the range of returns obtained with Button monitor is naturally lower than the rest of the Mon-MDPs.

As noted in \cref{fn:DE2-initialize}, Directed-E$^2$'s performance in unsolvable Mon-MDPs depends critically on its reward model initialization. In order to see minimax-optimal performance from that algorithm, we need to initialize it pessimistically. As shown in \cref{fig:bottleneck_100_return,fig:bottleneck_5_return}, using the recommended random initialization saw essentially no learning in these domains. The algorithm would believe the never-observable rewards were at their initialized value, and so seek them out rather than treat their value pessimistically. Also, one of the weaknesses of Directed-E$^2$ is its explicit dependence on the size of the state-action space during its exploration phase, i.e., Directed-E$^2$ tries to visit every joint state-action pair infinitely often without paying attention to their importance on maximizing the return. As a result, as the state-action space gets larger, the performance of Directed-E$^2$ deteriorates. To highlight the Directed-E$^2$'s weakness in larger spaces, we use $N$-Experts monitor as an extension of Ask monitor; Ask is the special case when $N = 1$. Figure~\ref{fig:bottleneck_unsolv} shows Directed-E$^2$'s performance is hindered when the agent faces $N$-Experts compared to Ask, while \thealgo suffers to a lesser degree.
\begin{figure}[bth]
    \centering
    \includegraphics[width=0.55\linewidth]{imgs/results/legend.pdf}
    \\[3pt]
\raisebox{30pt}{\rotatebox[origin=t]{90}{\fontfamily{cmss}\scriptsize{Bottleneck}}}
    \hfill
    \begin{subfigure}[b]{0.16\linewidth}
        \centering
        \raisebox{5pt}{\rotatebox[origin=t]{0}{\fontfamily{cmss}\scriptsize{Full-Random}}}
        \\
        \includegraphics[width=\linewidth]{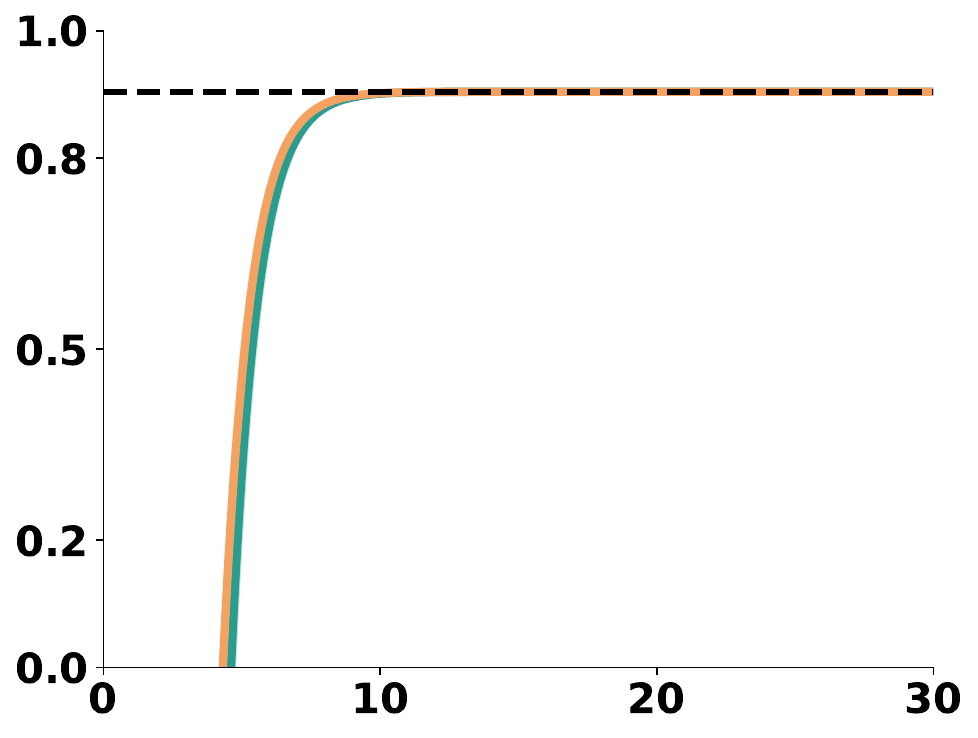}
    \end{subfigure} 
    \hfill
        \begin{subfigure}[b]{0.16\linewidth}
        \centering
        \raisebox{5pt}{\rotatebox[origin=t]{0}{\fontfamily{cmss}\scriptsize{Ask}}}
        \\
        \includegraphics[width=\linewidth]{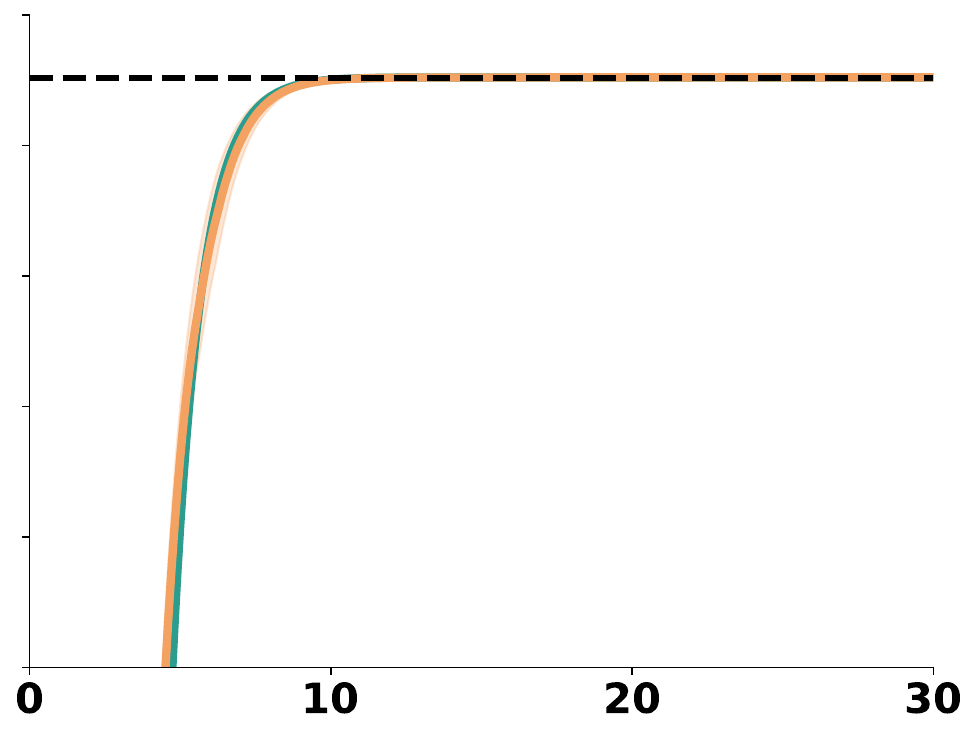}
    \end{subfigure} 
    \hfill
        \begin{subfigure}[b]{0.16\textwidth}
        \centering
        \raisebox{5pt}{\rotatebox[origin=t]{0}{\fontfamily{cmss}\scriptsize{$N$-Supporters}}}
        \\
        \includegraphics[width=\linewidth]{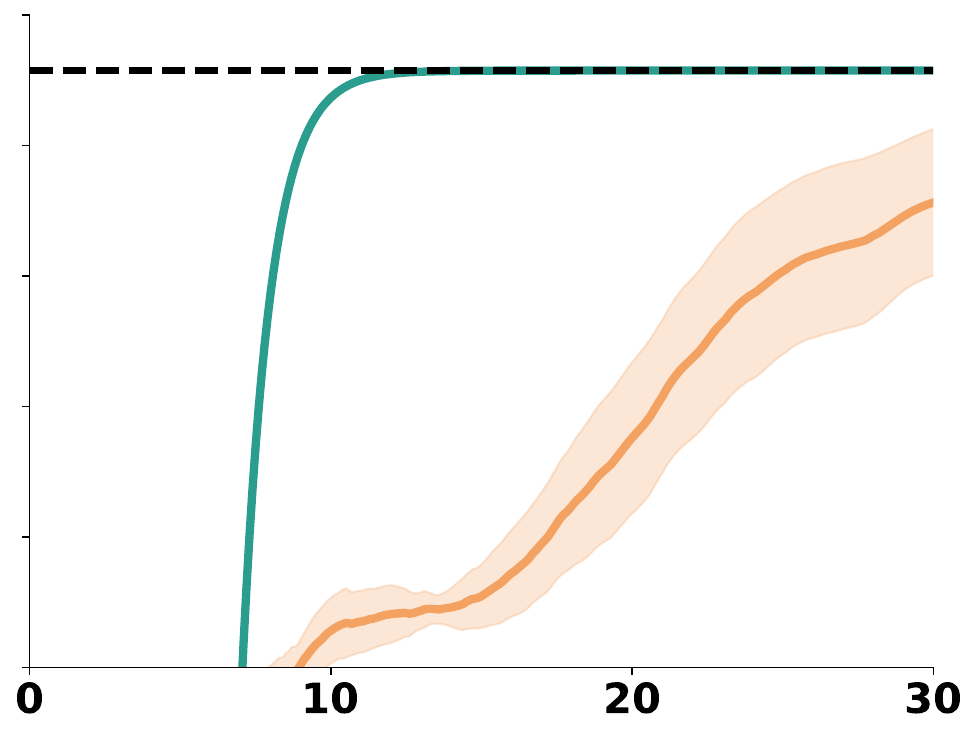}
    \end{subfigure} 
    \hfill
    \begin{subfigure}[b]{0.16\textwidth}
        \centering
        \raisebox{5pt}{\rotatebox[origin=t]{0}{\fontfamily{cmss}\scriptsize{$N$-Experts}}}
        \\
        \includegraphics[width=\linewidth]{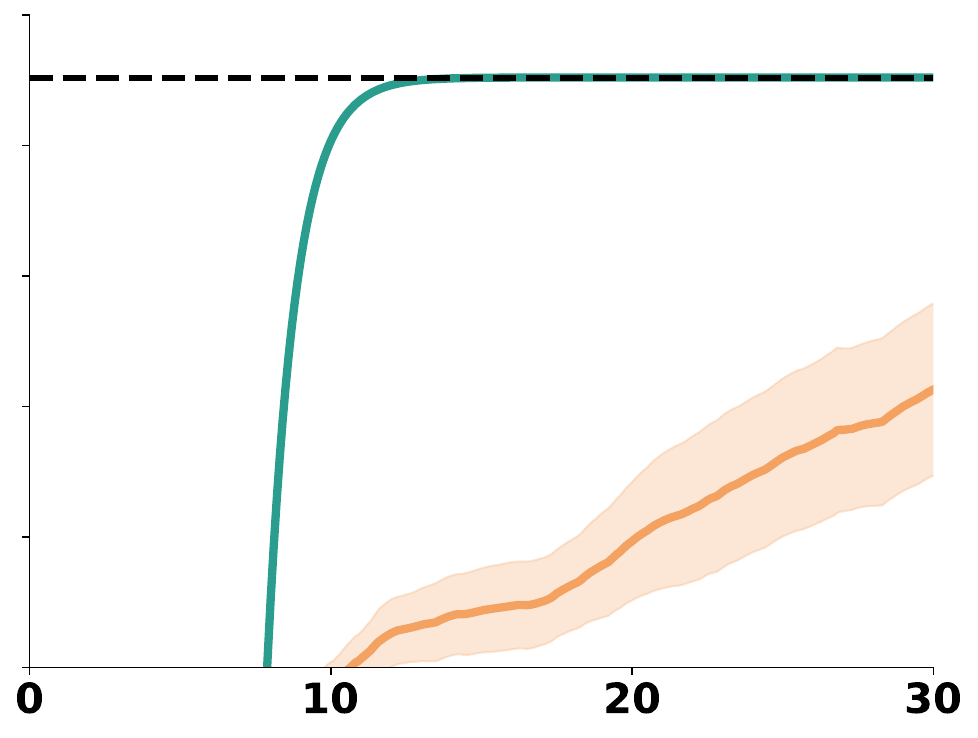}
    \end{subfigure} 
    \hfill
    \begin{subfigure}[b]{0.16\textwidth}
        \centering
        \raisebox{5pt}{\rotatebox[origin=t]{0}{\fontfamily{cmss}\scriptsize{Level Up}}}
        \\
        \includegraphics[width=\linewidth]{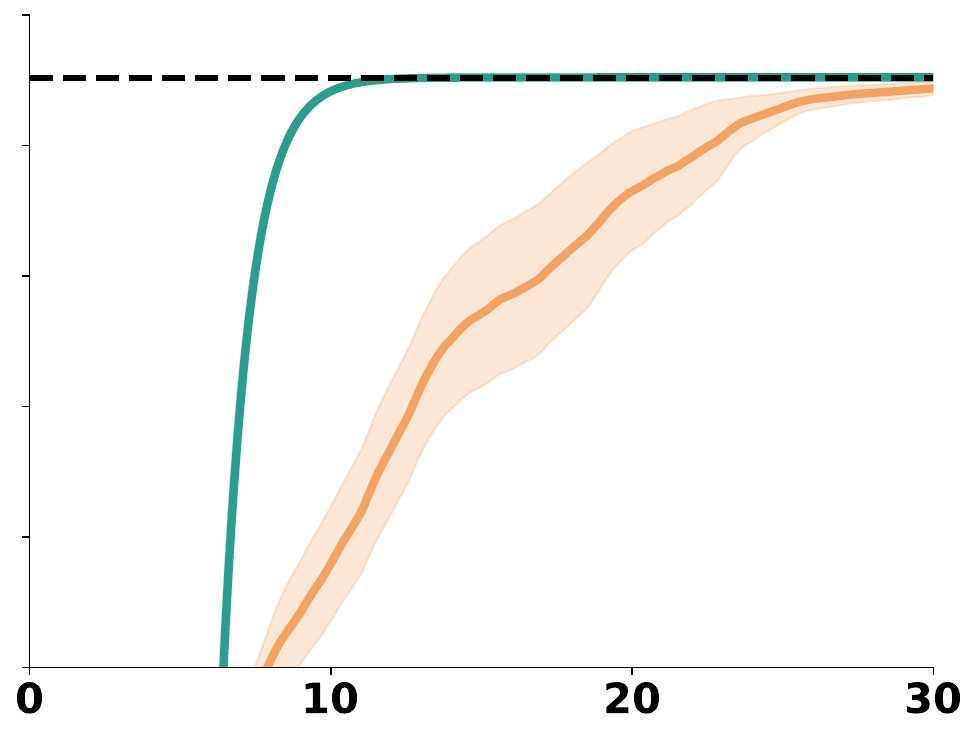}
    \end{subfigure} 
    \hfill
    \begin{subfigure}[b]{0.16\textwidth}
        \centering
        \raisebox{5pt}{\rotatebox[origin=t]{0}{\fontfamily{cmss}\scriptsize{Button}}}
        \\[-1.5pt]
        \includegraphics[width=\linewidth]{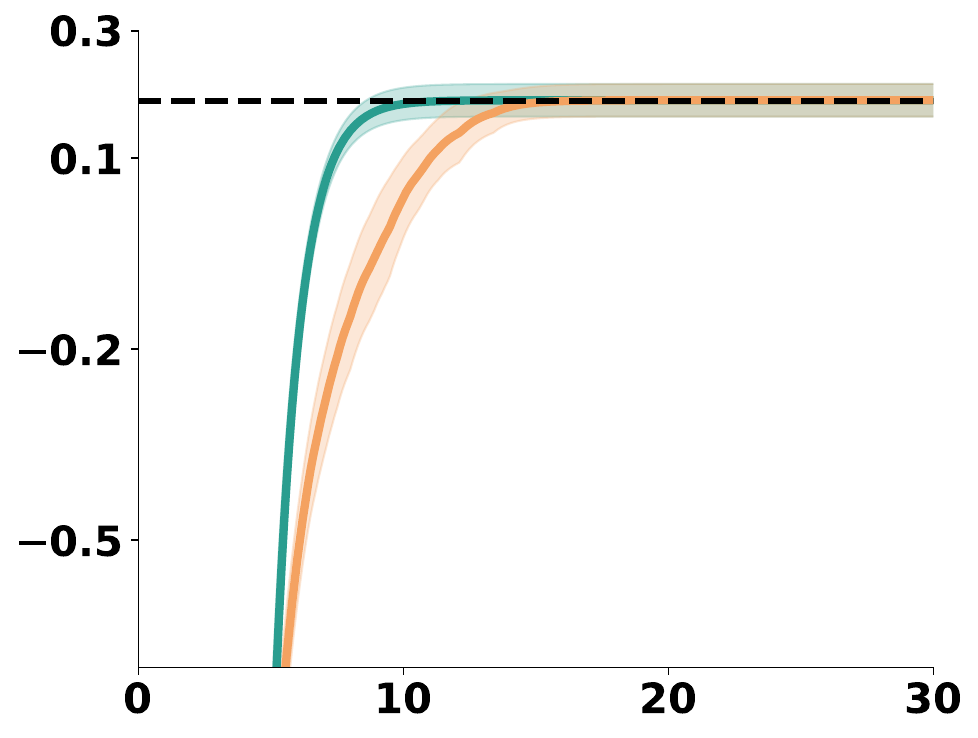}
    \end{subfigure}
    \\[-1.5pt]
    {\fontfamily{cmss}\scriptsize{Training Steps ($\times 10^3$)}}
    \caption{\textbf{\thealgo outperforms Directed-E$^2$ on Bottleneck with a non-ergodic monitor}. Even though Directed-E$^2$'s reward model was initialized pessimistically to achieve asymptotic minimax-optimality, its dependence on the state and action spaces' size makes it struggle more than \thealgo on $N$-Supporters, $N$-Experts and Level Up.}
    \label{fig:bottleneck_unsolv}
    \vspace{-15pt}
\end{figure}
%
%
%
    %
    \subsection{When There Are Stochastically-Observable Rewards}
\label{appendix:stochas_obsrv}
In all the previous experiments, had agent done the action that would have revealed the environment reward, e.g., \texttt{ASK}ing in Ask monitor, by paying the cost it would have observed the reward with 100\% certainty. But, even if the probability is not 100\% and yet bigger than zero, upon \emph{enough} attempts to observe the reward and paying the cost, \emph{even if a portion of the attemps are fruitless}, it is still possible to observe and learn the environment mean reward. In the \cref{fig:bottleneck_stoch_unobserv}'s experiments, in addition to have environment state-action pairs that their reward is permanently unobservable, we make the monitor stochastic for other pairs, i.e., even if the agent pays the cost, it would only observe the reward only with probability $\rho$. \cref{fig:bottleneck_stoch_unobserv} illustrates that albeit the challenge of having stochastically and also permanently unobservable rewards, \thealgo has not become prematurely pessimistic about the rewards that can be effectively observed, even when the probability goes down as low as 5\%, and still \thealgo outperforms Directed-E$^2$.
\begin{figure}[tbh]
    \centering
    \includegraphics[width=0.55\linewidth]{imgs/results/legend.pdf}
    \\[3pt]
    \begin{subfigure}{\linewidth}
    \raisebox{20pt}{\rotatebox[origin=t]{90}{\fontfamily{cmss}\scriptsize{(80\%)}}}
    \hfill
    \begin{subfigure}[b]{0.155\linewidth}
        \centering
        \raisebox{5pt}{\rotatebox[origin=t]{0}{\fontfamily{cmss}\scriptsize{Full-Random}}}
        \\
        \includegraphics[width=\linewidth]{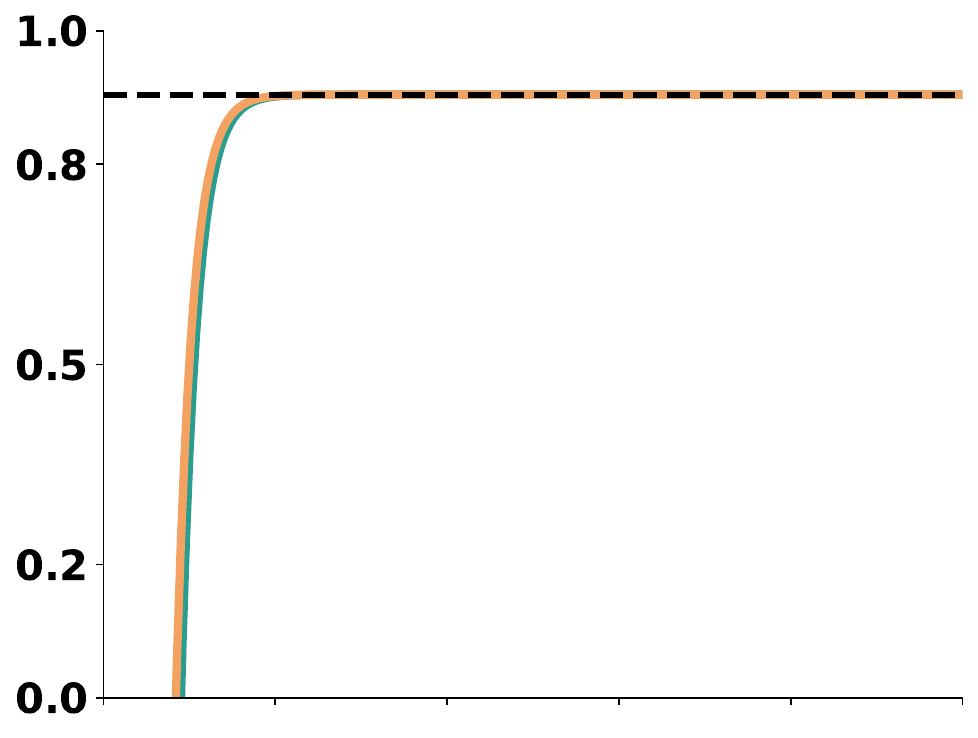}
    \end{subfigure} 
    \hfill
        \begin{subfigure}[b]{0.155\linewidth}
        \centering
        \raisebox{5pt}{\rotatebox[origin=t]{0}{\fontfamily{cmss}\scriptsize{Ask}}}
        \\
        \includegraphics[width=\linewidth]{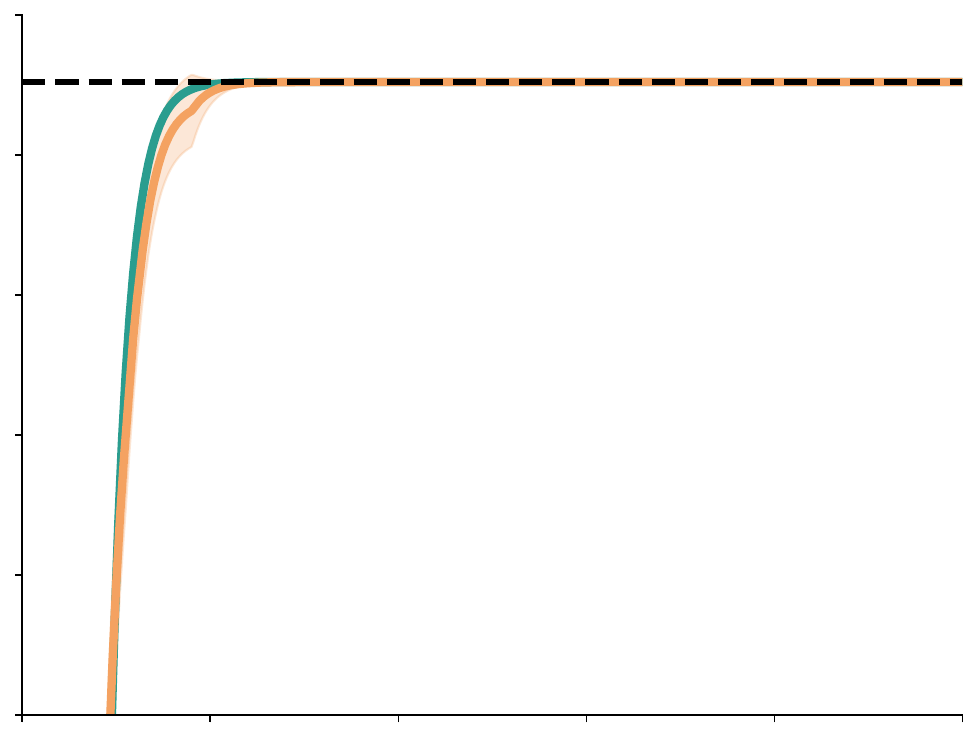}
    \end{subfigure} 
    \hfill
        \begin{subfigure}[b]{0.155\textwidth}
        \centering
        \raisebox{5pt}{\rotatebox[origin=t]{0}{\fontfamily{cmss}\scriptsize{$N$-Supporters}}}
        \\
        \includegraphics[width=\linewidth]{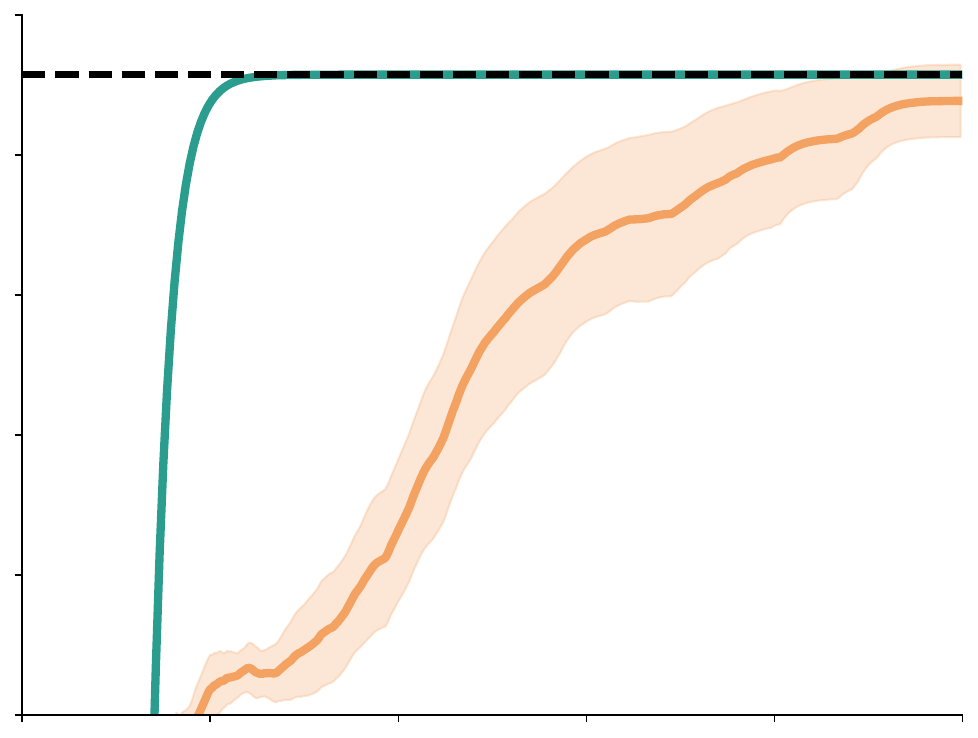}
    \end{subfigure} 
    \hfill
    \begin{subfigure}[b]{0.155\textwidth}
        \centering
        \raisebox{5pt}{\rotatebox[origin=t]{0}{\fontfamily{cmss}\scriptsize{$N$-Experts}}}
        \\
        \includegraphics[width=\linewidth]{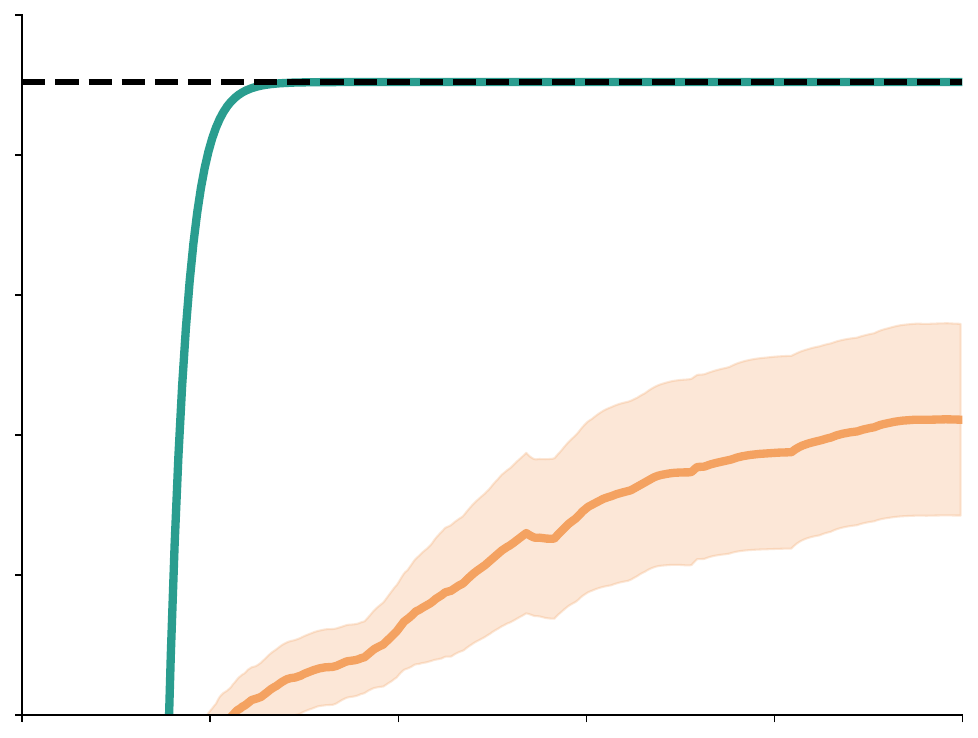}
    \end{subfigure} 
    \hfill
    \begin{subfigure}[b]{0.155\textwidth}
        \centering
        \raisebox{5pt}{\rotatebox[origin=t]{0}{\fontfamily{cmss}\scriptsize{Level Up}}}
        \\
        \includegraphics[width=\linewidth]{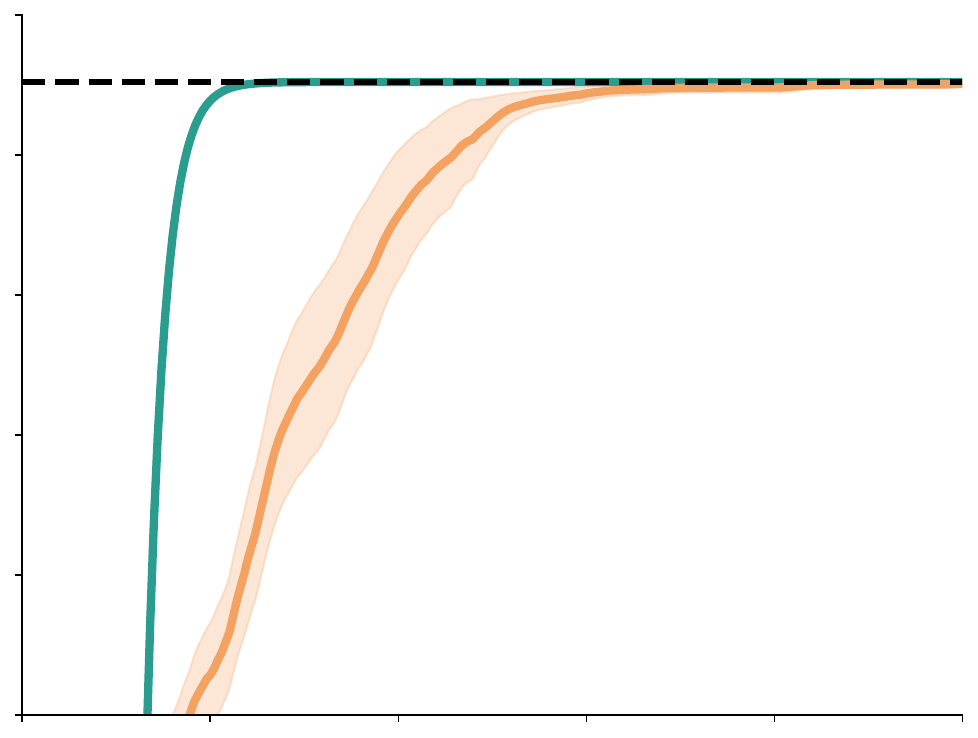}
    \end{subfigure} 
    \hfill
    \begin{subfigure}[b]{0.155\textwidth}
        \centering
        \raisebox{5pt}{\rotatebox[origin=t]{0}{\fontfamily{cmss}\scriptsize{Button}}}
        \\
        \includegraphics[width=\linewidth]{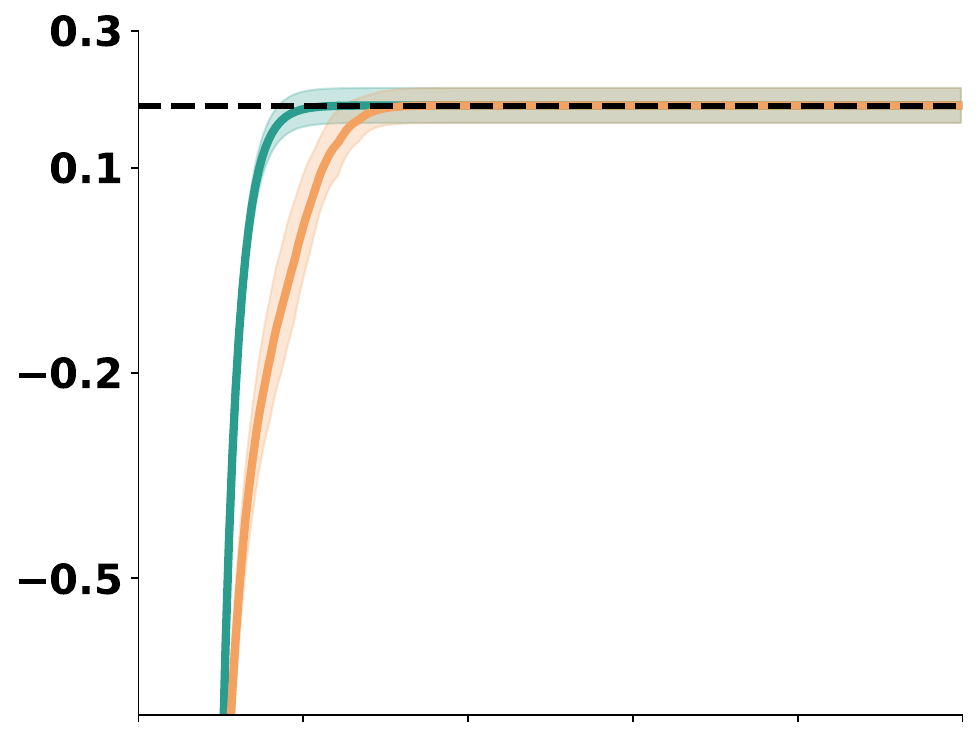}
    \end{subfigure} 
    \\
    \raisebox{20pt}{\rotatebox[origin=t]{90}{\fontfamily{cmss}\scriptsize{(20\%)}}}
    \hfill
        \includegraphics[width=0.155\linewidth]{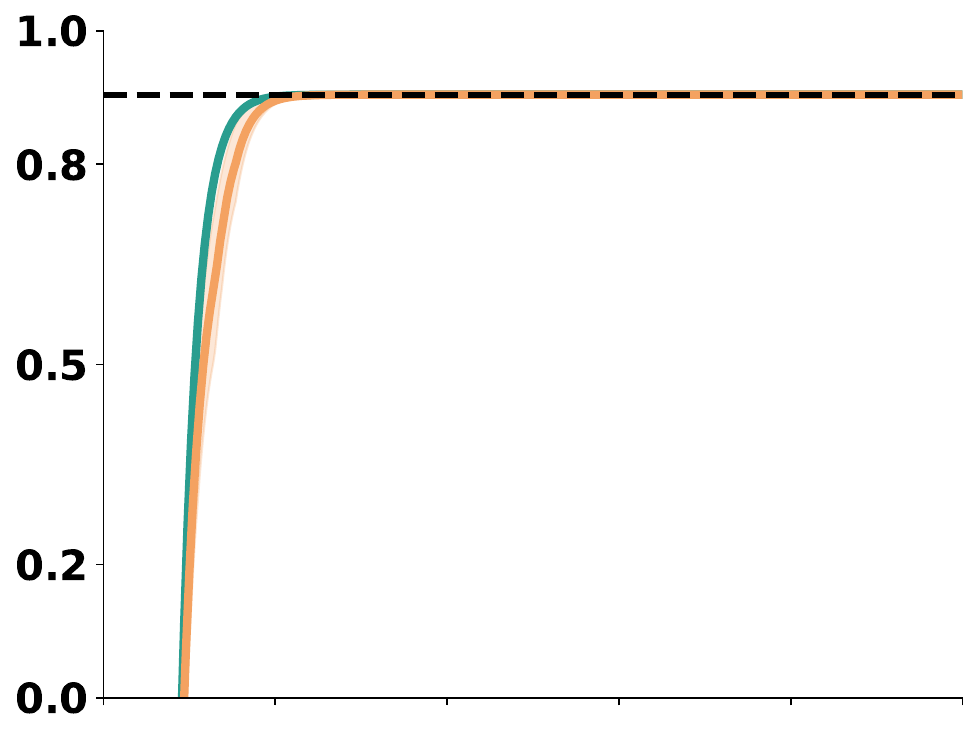}
    \hfill
    \includegraphics[width=0.155\linewidth]{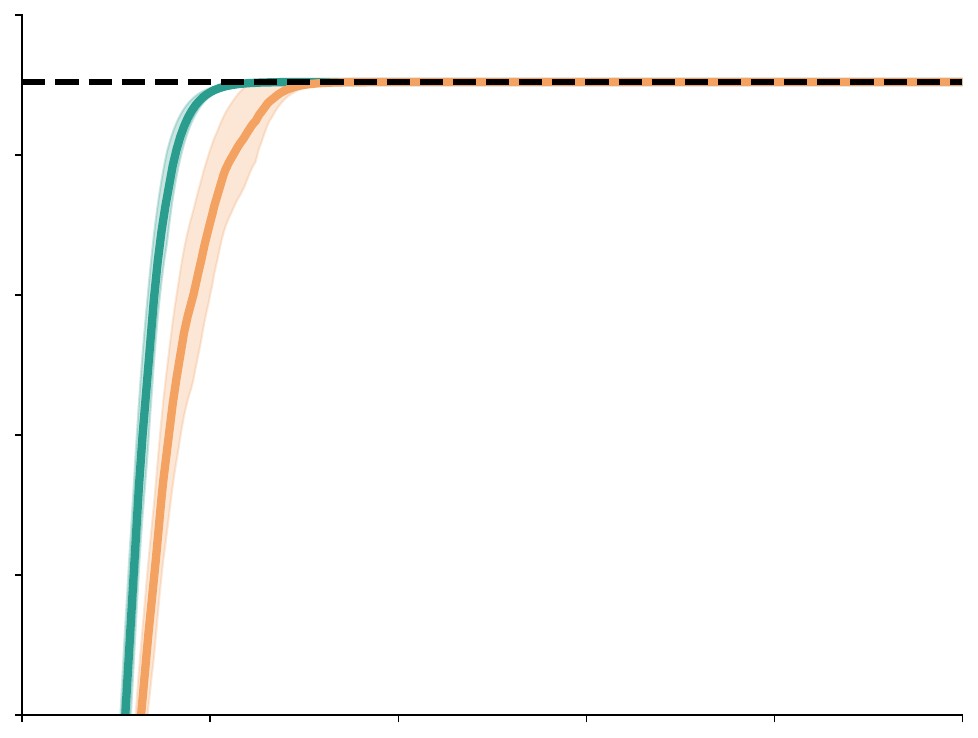}
    \hfill
        \includegraphics[width=0.155\linewidth]{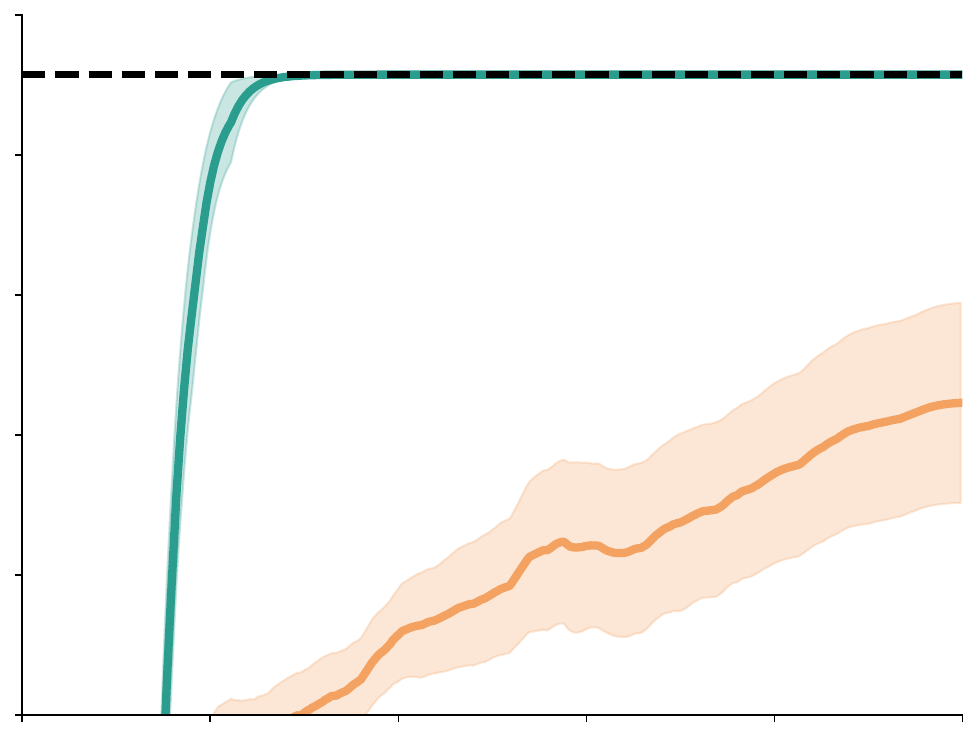}
    \hfill
        \includegraphics[width=0.155\linewidth]{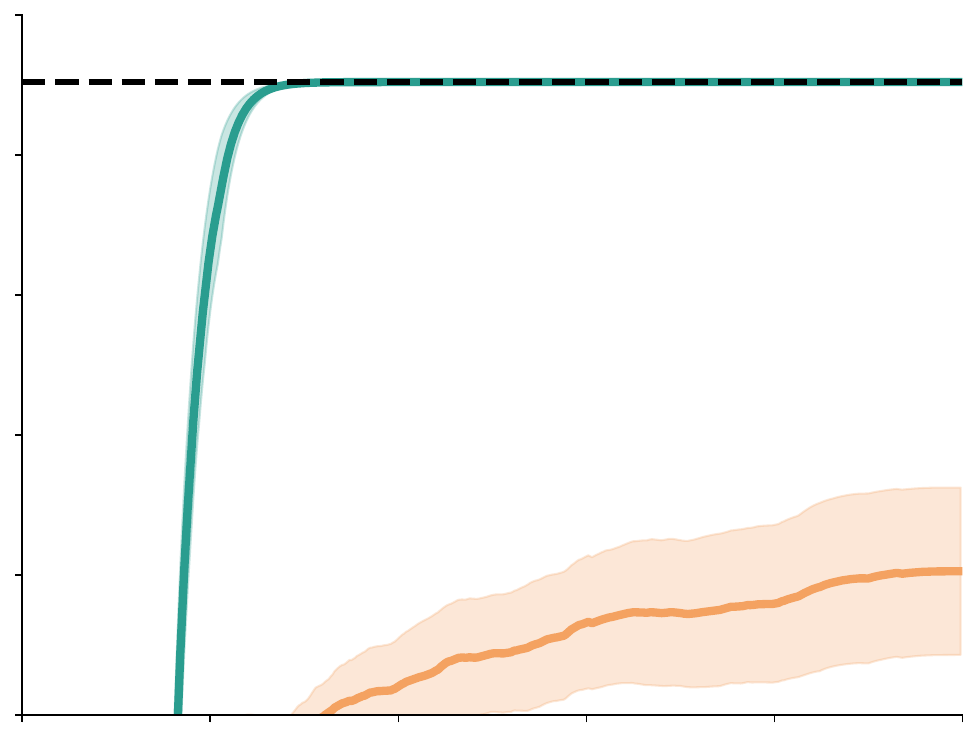}
    \hfill
        \includegraphics[width=0.155\linewidth]{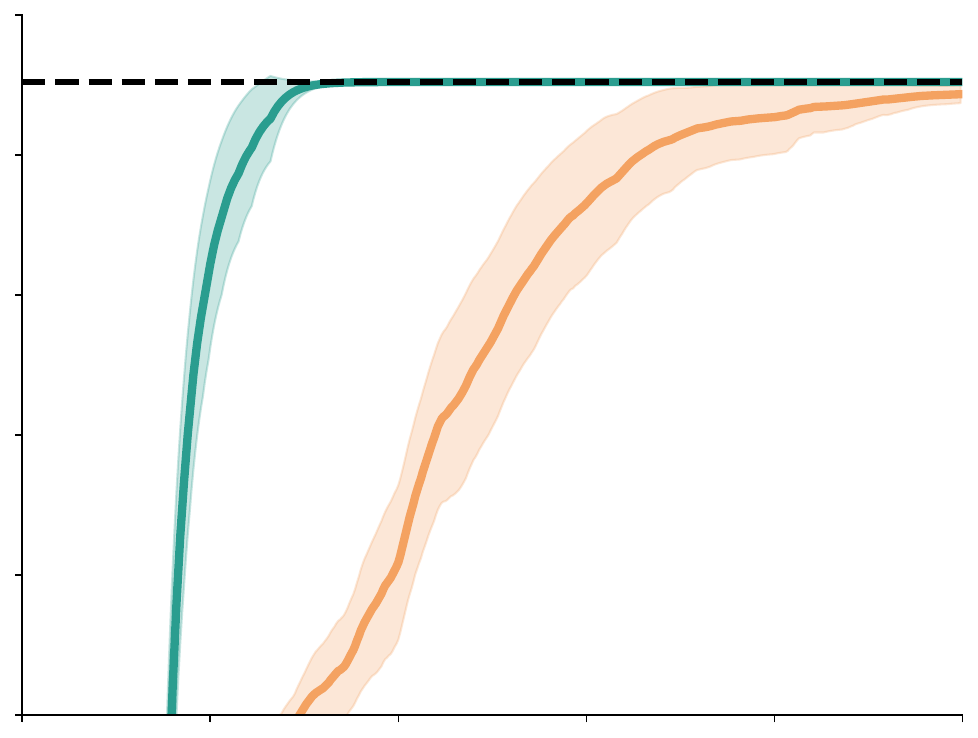}
    \hfill
        \includegraphics[width=0.155\linewidth]{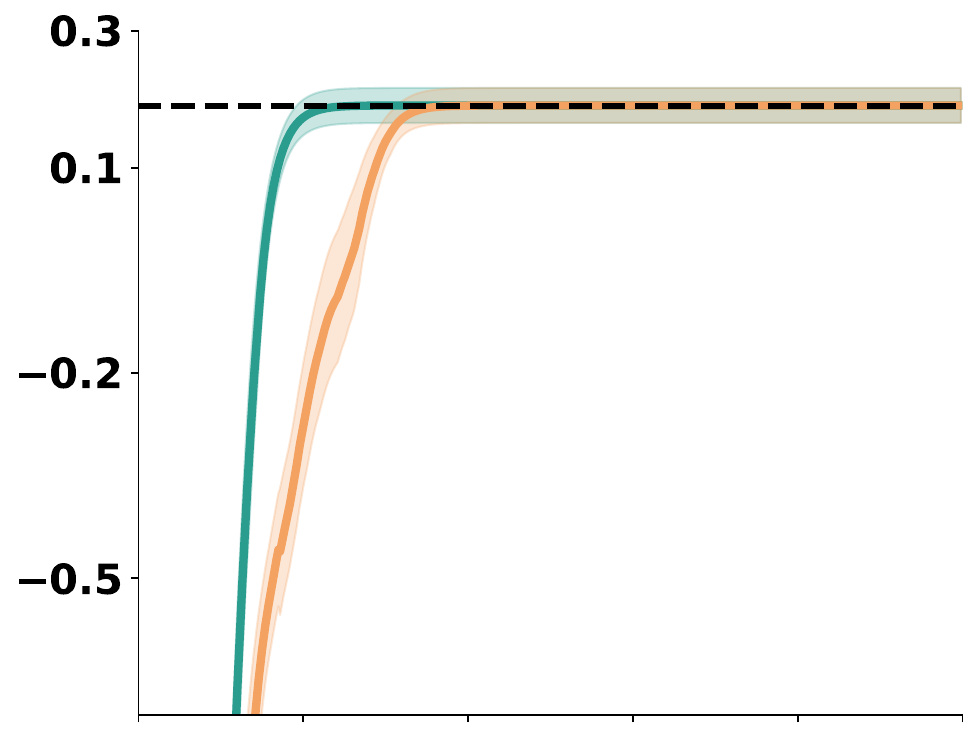}
    \\
    \raisebox{20pt}{\rotatebox[origin=t]{90}{\fontfamily{cmss}\scriptsize{(5\%)}}}
    \hfill
        \includegraphics[width=0.155\linewidth]{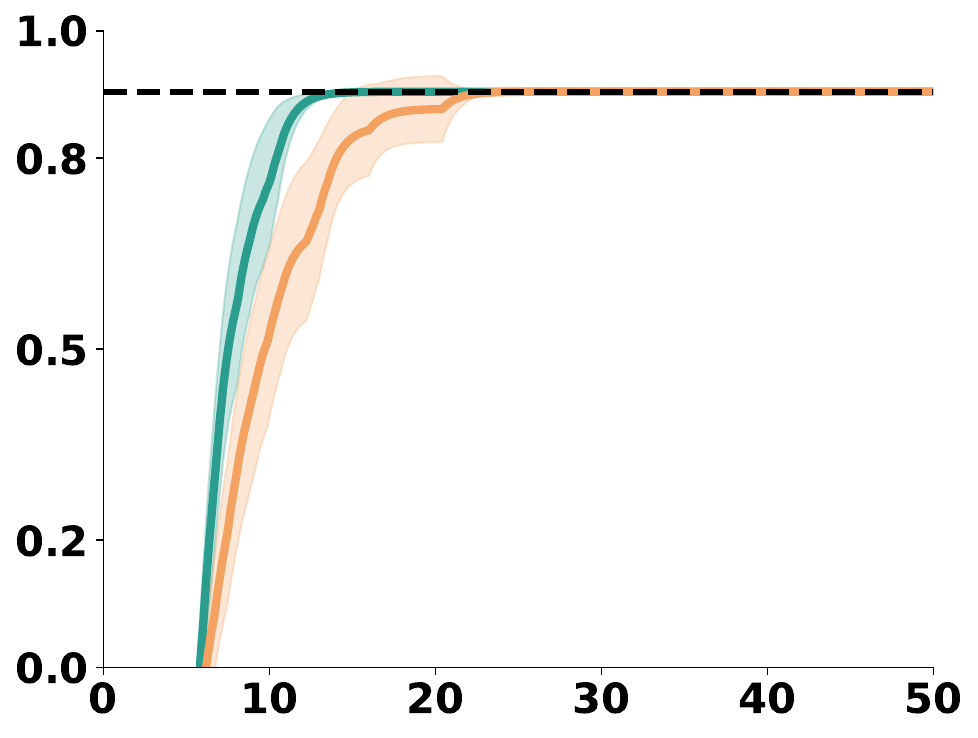}
    \hfill
        \includegraphics[width=0.155\linewidth]{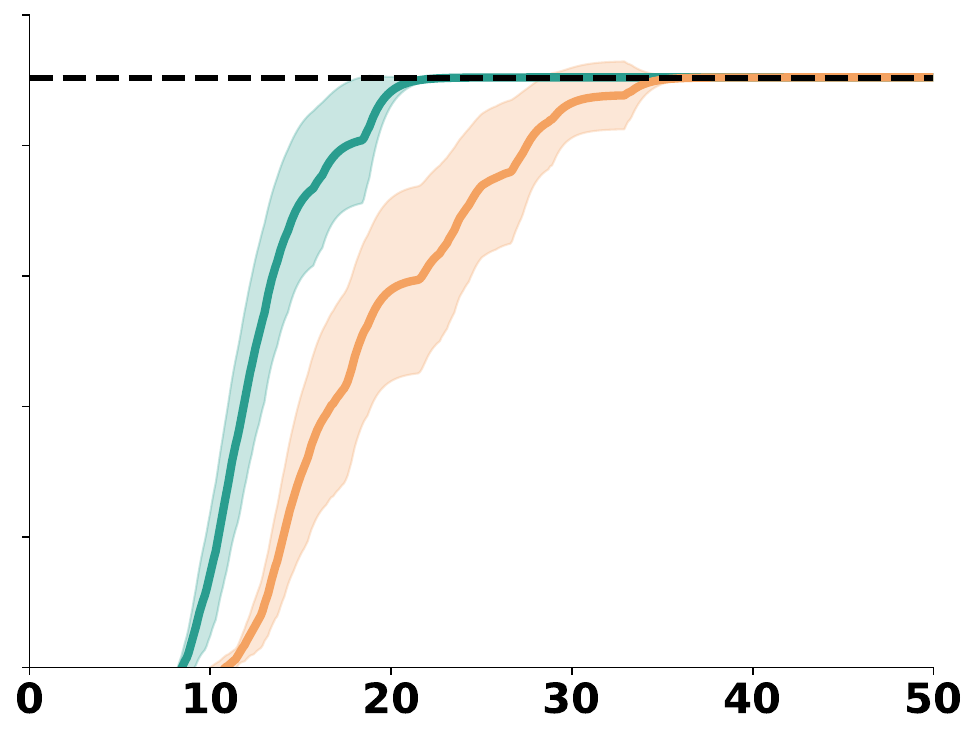}
    \hfill
        \includegraphics[width=0.155\linewidth]{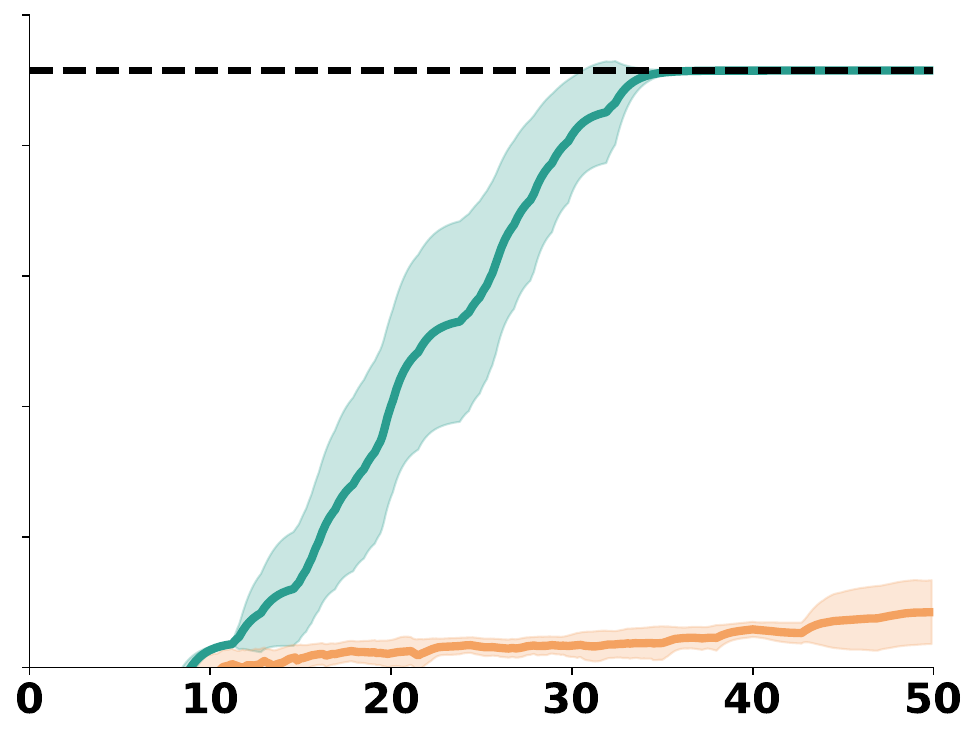}
    \hfill
        \includegraphics[width=0.155\linewidth]{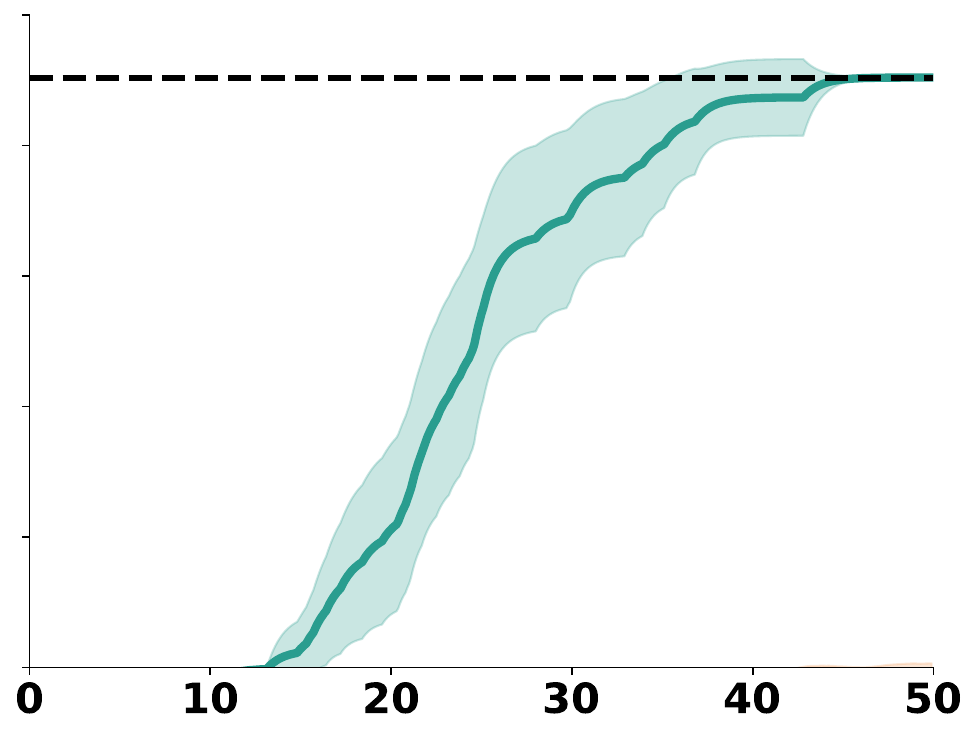}
    \hfill
        \includegraphics[width=0.155\linewidth]{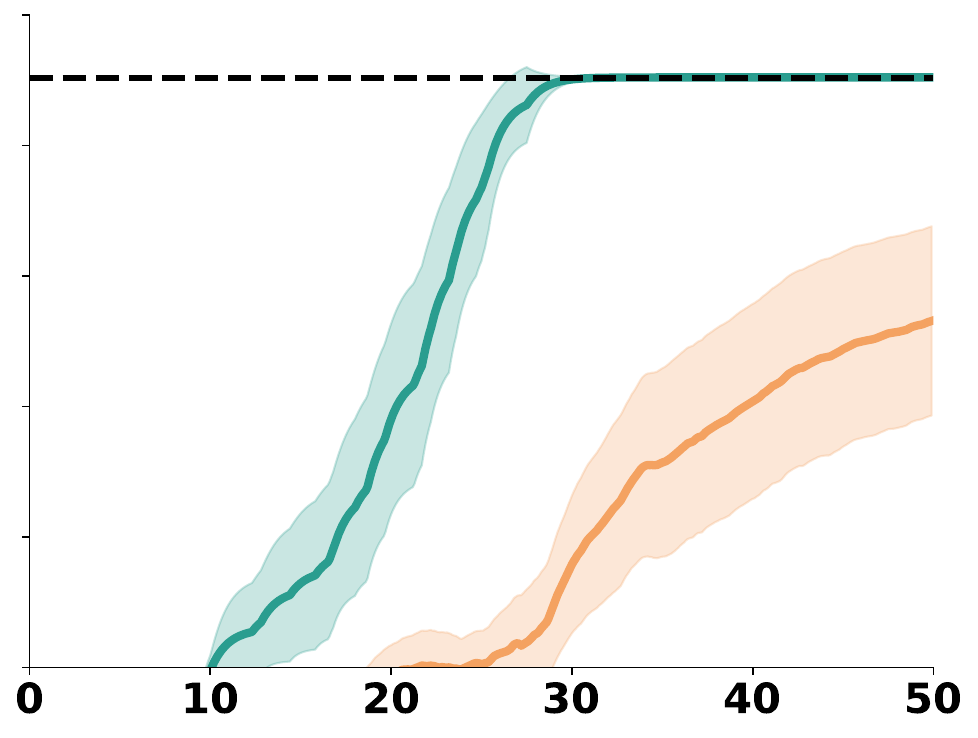}
    \hfill
        \includegraphics[width=0.155\linewidth]{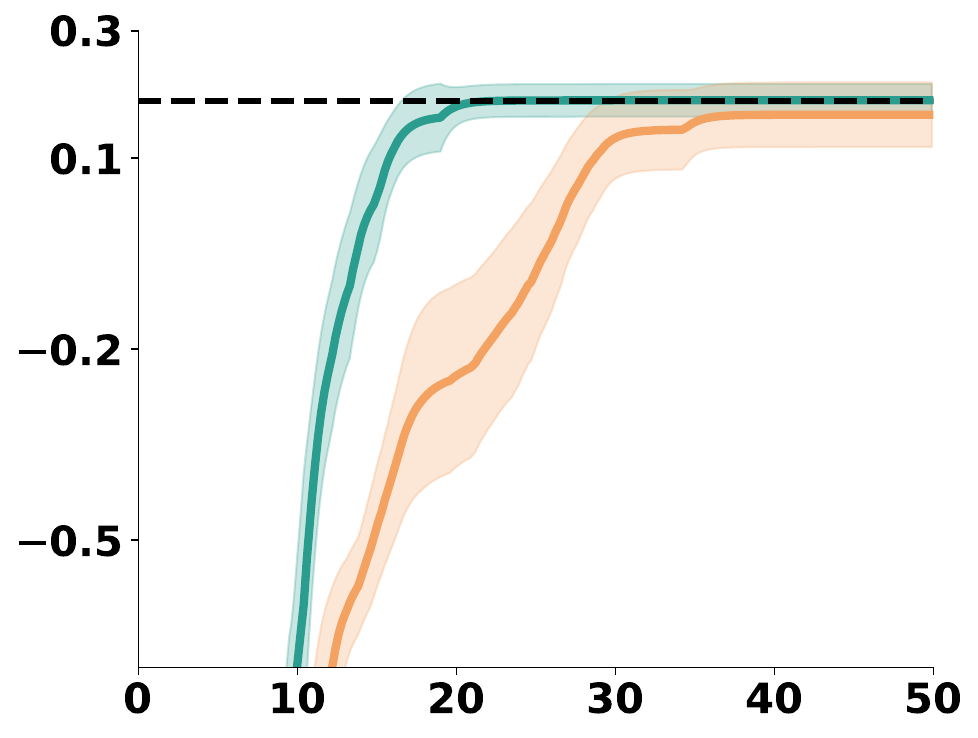}
    \\
    \centering
    {\fontfamily{cmss}\scriptsize{{Training Steps ($\times 10^3$)}}}
    \end{subfigure}

\caption{\textbf{\thealgo outperforms Directed-E$^2$ on Bottleneck even if environment rewards are stochastically observable}. The plots empirically show the effect of $\rho^{-1}$ in the \thealgo's sample complexity stated in \cref{thm:sample_cmplx}. For a fixed environment and monitor, as the probability of observing the reward decreases, more samples are required to find a minimax-optimal policy. The plots also indicates that although the sample complexity of Directed-E$^2$ has never been given theoretically, but it must more severely depend on $\rho^{-1}$ than \thealgo's.}
\label{fig:bottleneck_stoch_unobserv}
\vspace{-15pt}
\end{figure}
%
%
%
    %
    \subsection{When The Monitor is Known}
\label{appendix:known_monitor}
In this section, we verify knowing the monitor speeds up the \thealgo's learning. We empirically demonstrate knowing the monitor is an advantage \thealgo can benefit from, while it is not readily possible in a model-free algorithm such as Directed-E$^2$. So far, we have shown the superior performance of \thealgo against Directed-E$^2$, now we investigate how much of the learning's difficulty in Mon-MDPs comes from the monitor being unknown. The unknown quantities of the monitor are $\mon{r}, \mon{p}$, and $\mon{f}$, hence in the \cref{fig:known_monitor}'s experiments we make all of them known to the agent in advance. The only remaining unknowns are $\env{r}$ and $\env{p}$. Hence, we replace \cref{eq:r_obs} with
\begin{equation*}
    \model R_{\text{obs}}(s,a) = \mathbb{P}\left(\widehat{R}_{t + 1} \neq \bot \middle\vert S_t = s, A_t = a\right) + \beta \sqrt{\frac{g(N_v(\env{s}))}{N_v(\env{s}, \env{a})}},
\end{equation*}
where $N_v(\env{s}, \env{a})$ counts the number of times $\env{s}, \env{a}$ has been visited, $N_v(\env{s}) = \sum_aN_v(\env{s}, \env{a})$ and $g$ is defined in \cref{fn:sub_gauss}. The \emph{intuition} behind the bonus $\beta \sqrt{\frac{g(N_v(\env{s}))}{N_v(\env{s}, \env{a})}}$ is that $p = \env{p} \otimes \mon{p}$ and we only need to account for the uncertainty stemming from not knowing $\env{p}$. Note  since the monitor is known there is no need to use KL-UCB, as the agent already knows which environment rewards are observable (with what probability) and which ones are not. Similarly, we replace \cref{eq:RBasic} with
\begin{equation*}
\model R_{\text{basic}}(s,a) =
\env{\estimate{R}}(\env{s},\env{a}) + \env{\beta} \sqrt{\frac{g(N(\env{s}))}{N(\env{s}, \env{a})}} + \mon{\estimate{R}}(\mon{s},\mon{a}) + \beta \sqrt{\frac{g(N_v(\env{s}))}{N_v(\env{s}, \env{a})}},
\end{equation*}
where the bonus $\env{\beta} \sqrt{\frac{g(N(\env{s}))}{N(\env{s}, \env{a})}}$ is due to the environment mean reward, and $\beta \sqrt{\frac{g(N_v(\env{s}))}{N_v(\env{s}, \env{a})}}$ accounts for the fact that $\env{\estimate{p}}$ only gets more accurate by visiting insufficiently visited environment state-actions. \cref{fig:known_monitor} shows the prior knowledge of the monitor boosts the speed of Monitored MBIE-EB's learning and make it robust even in the low probability regimes.
\begin{figure}[tbh]
    \centering
    \includegraphics[width=0.55\linewidth]{imgs/results/all_legend.pdf}
    \\[3pt]
    \begin{subfigure}{\linewidth}
    \raisebox{20pt}{\rotatebox[origin=t]{90}{\fontfamily{cmss}\scriptsize{(80\%)}}}
    \hfill
    \begin{subfigure}[b]{0.158\linewidth}
        \centering
        \raisebox{5pt}{\rotatebox[origin=t]{0}{\fontfamily{cmss}\scriptsize{Full-Random}}}
        \\
        \includegraphics[width=\linewidth]{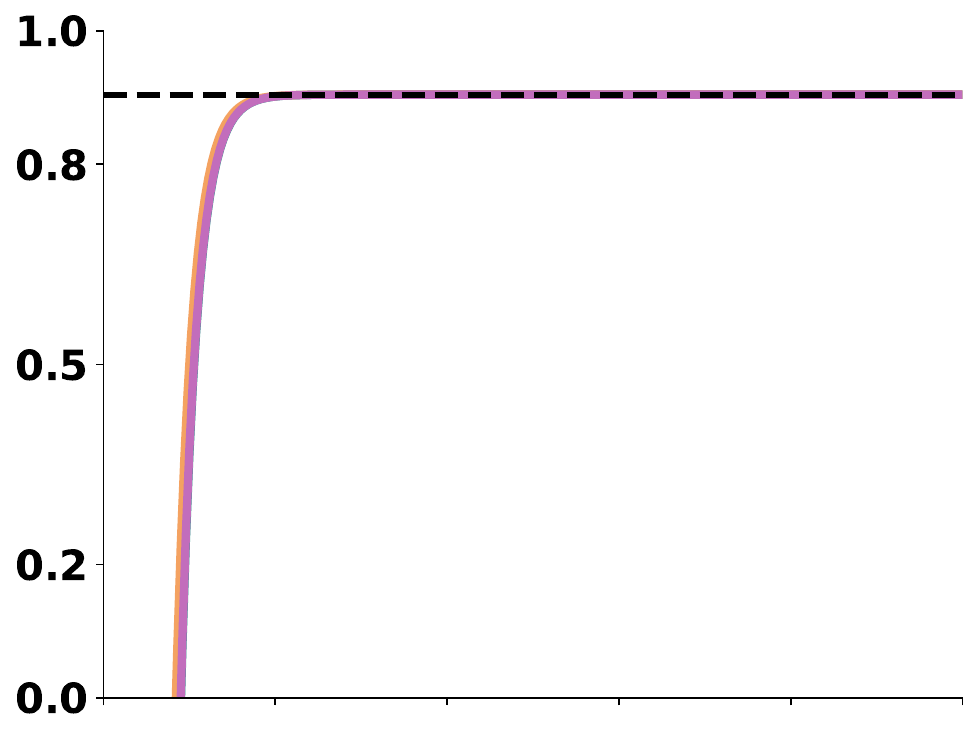}
    \end{subfigure} 
    \hfill
        \begin{subfigure}[b]{0.158\linewidth}
        \centering
        \raisebox{5pt}{\rotatebox[origin=t]{0}{\fontfamily{cmss}\scriptsize{Ask}}}
        \\
        \includegraphics[width=\linewidth]{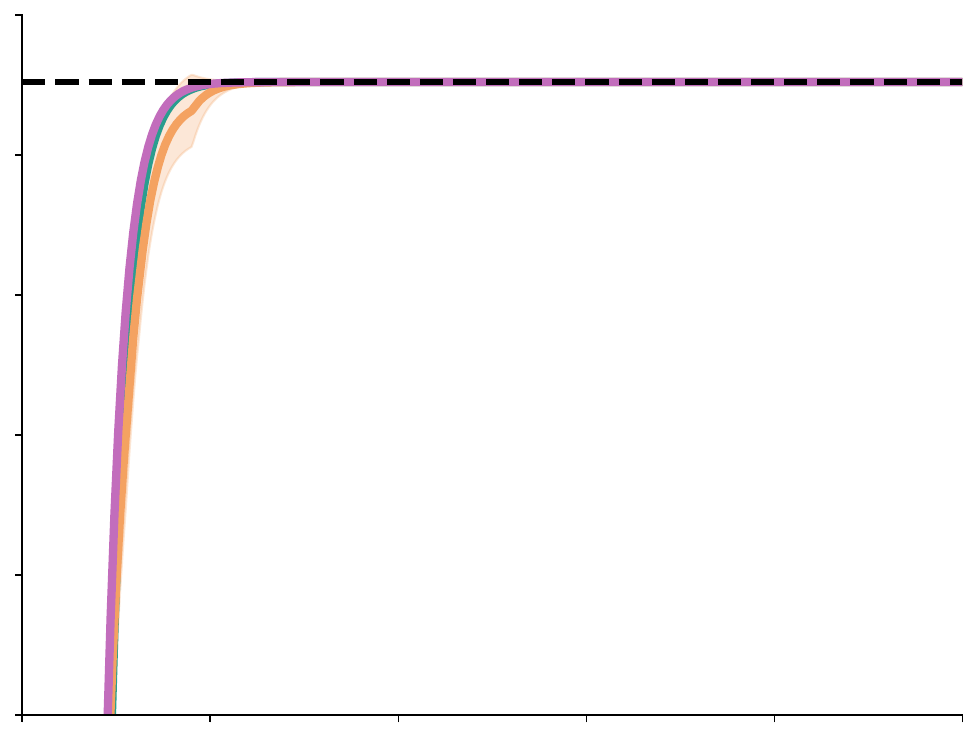}
    \end{subfigure} 
    \hfill
        \begin{subfigure}[b]{0.158\textwidth}
        \centering
        \raisebox{5pt}{\rotatebox[origin=t]{0}{\fontfamily{cmss}\scriptsize{$N$-Supporters}}}
        \\
        \includegraphics[width=\linewidth]{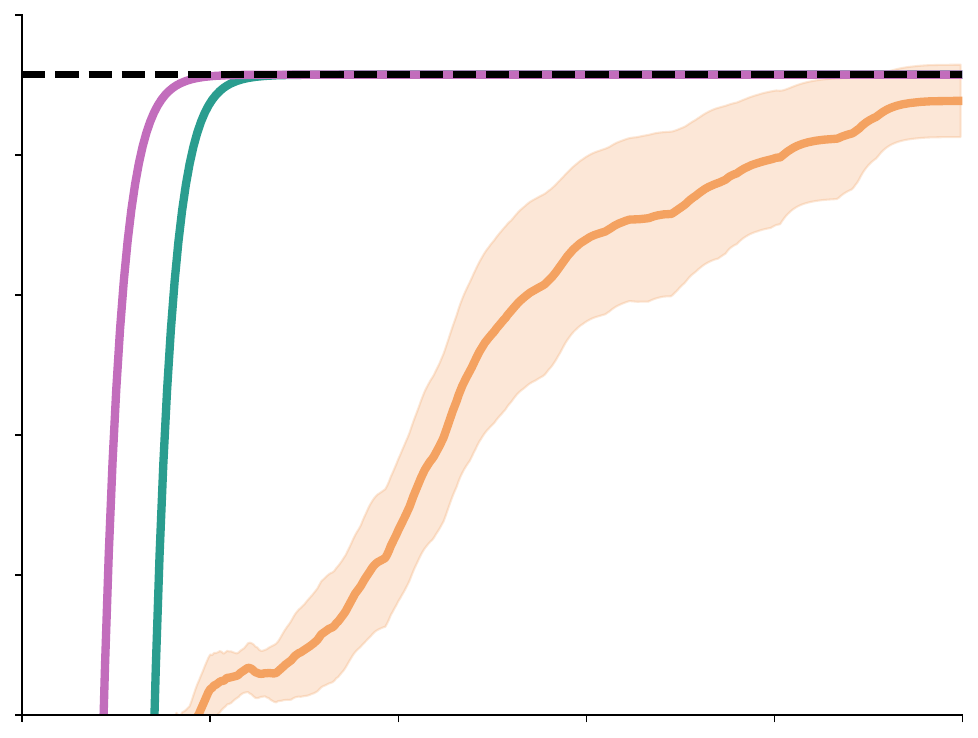}
    \end{subfigure} 
    \hfill
    \begin{subfigure}[b]{0.158\textwidth}
        \centering
        \raisebox{5pt}{\rotatebox[origin=t]{0}{\fontfamily{cmss}\scriptsize{$N$-Experts}}}
        \\
        \includegraphics[width=\linewidth]{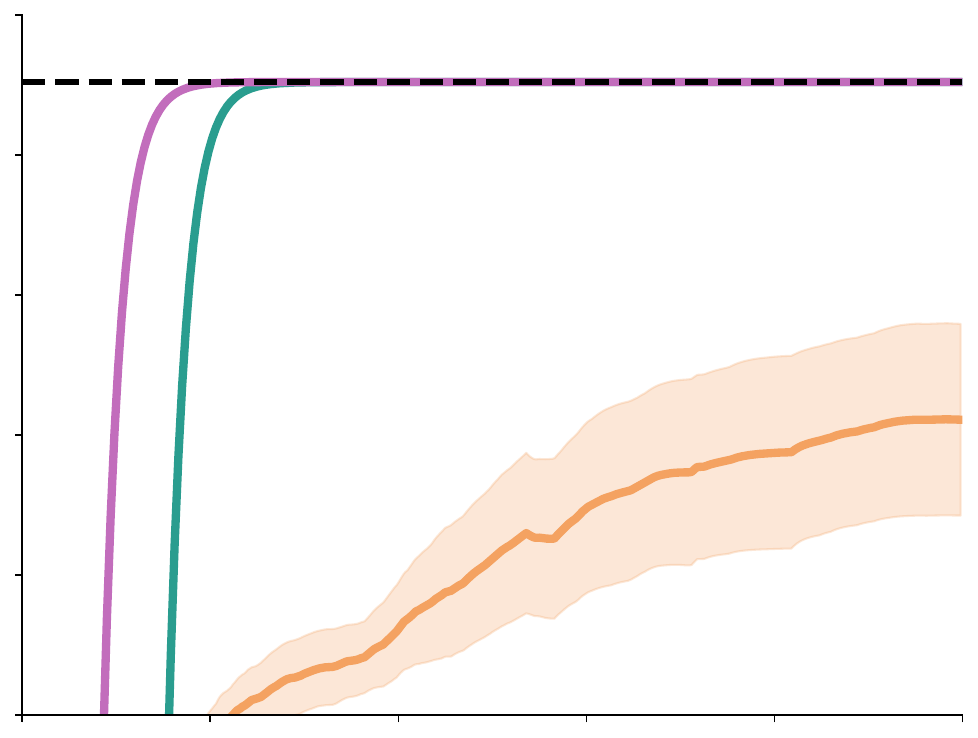}
    \end{subfigure} 
    \hfill
    \begin{subfigure}[b]{0.158\textwidth}
        \centering
        \raisebox{5pt}{\rotatebox[origin=t]{0}{\fontfamily{cmss}\scriptsize{Level Up}}}
        \\
        \includegraphics[width=\linewidth]{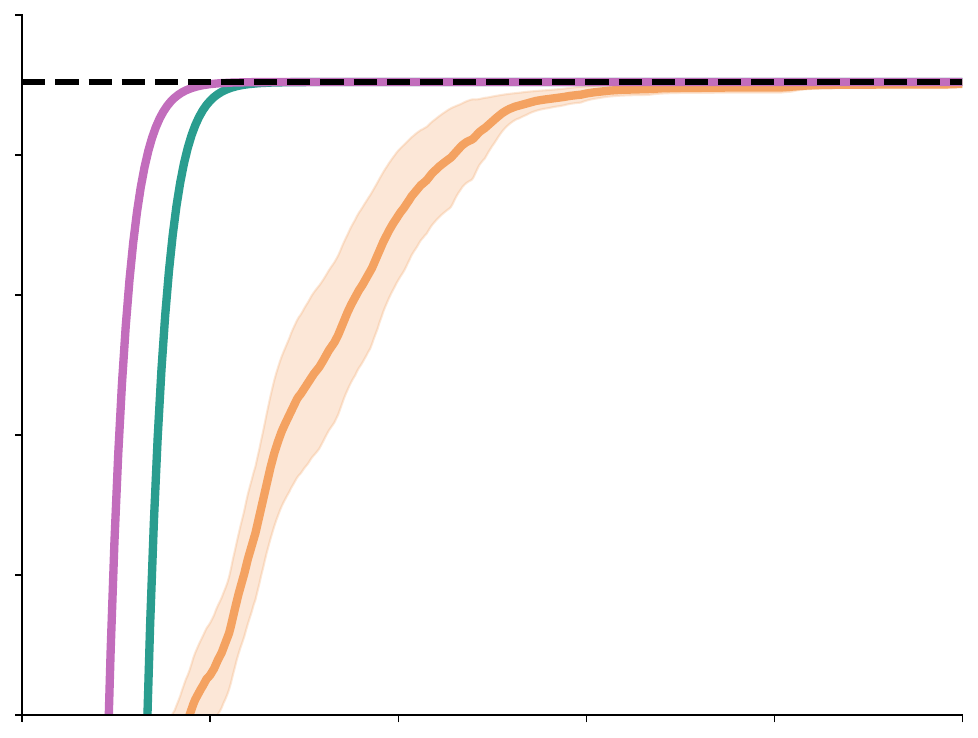}
    \end{subfigure} 
    \hfill
    \begin{subfigure}[b]{0.158\textwidth}
        \centering
        \raisebox{5pt}{\rotatebox[origin=t]{0}{\fontfamily{cmss}\scriptsize{Button}}}
        \\
        \includegraphics[width=\linewidth]{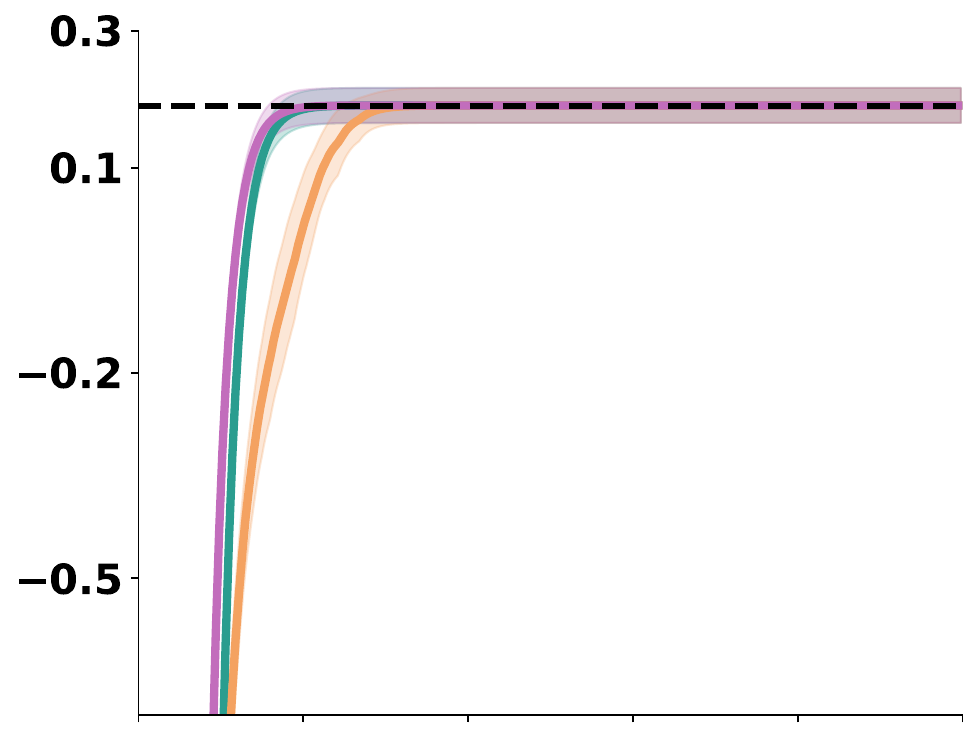}
    \end{subfigure} 
    \\
    \raisebox{20pt}{\rotatebox[origin=t]{90}{\fontfamily{cmss}\scriptsize{(20\%)}}}
    \hfill
        \includegraphics[width=0.158\linewidth]{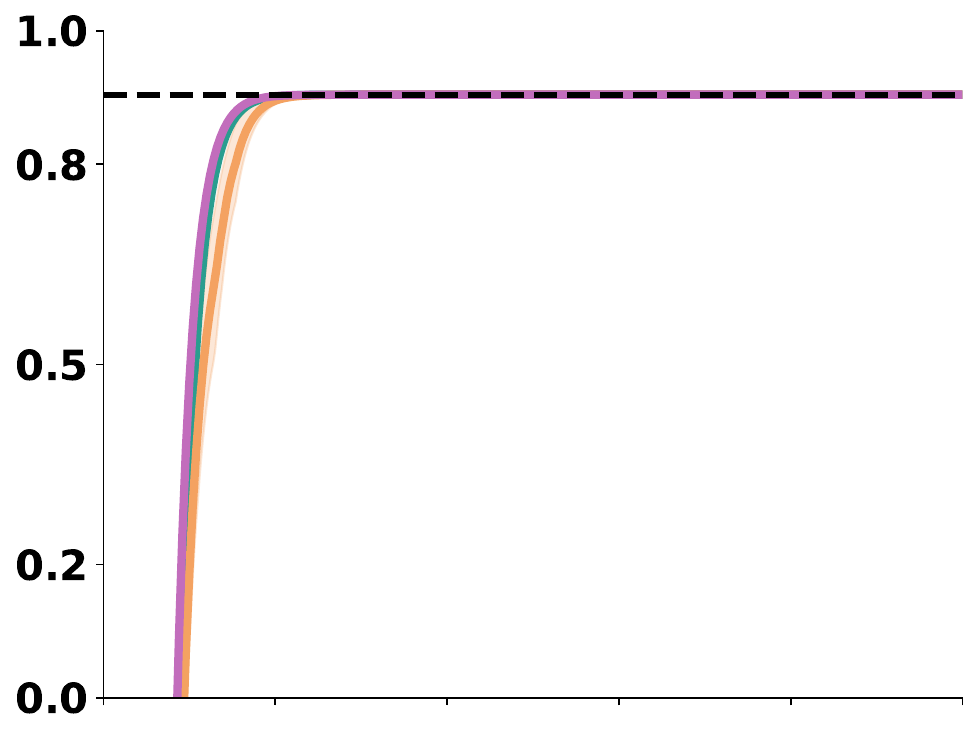}
    \hfill
        \includegraphics[width=0.158\linewidth]{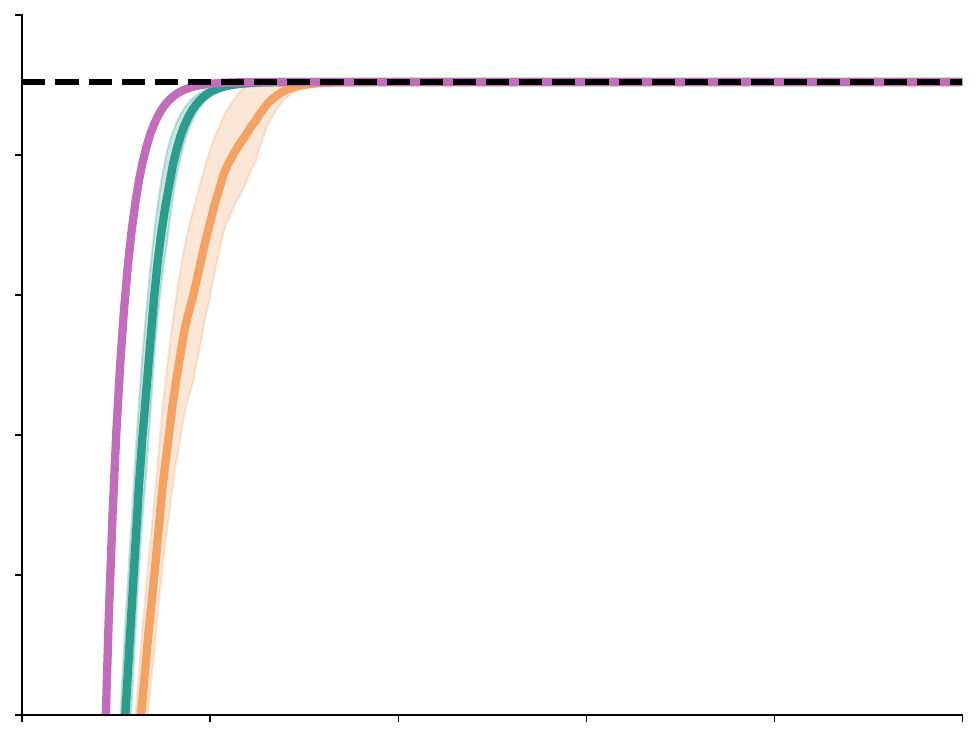}
    \hfill
        \includegraphics[width=0.158\linewidth]{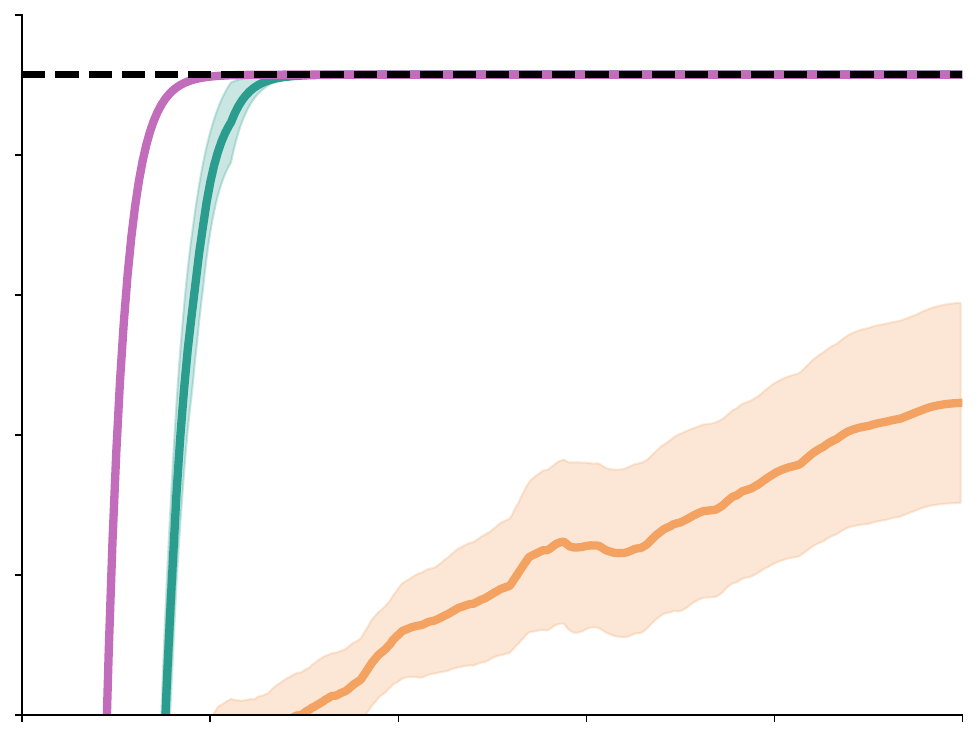}
    \hfill
        \includegraphics[width=0.158\linewidth]{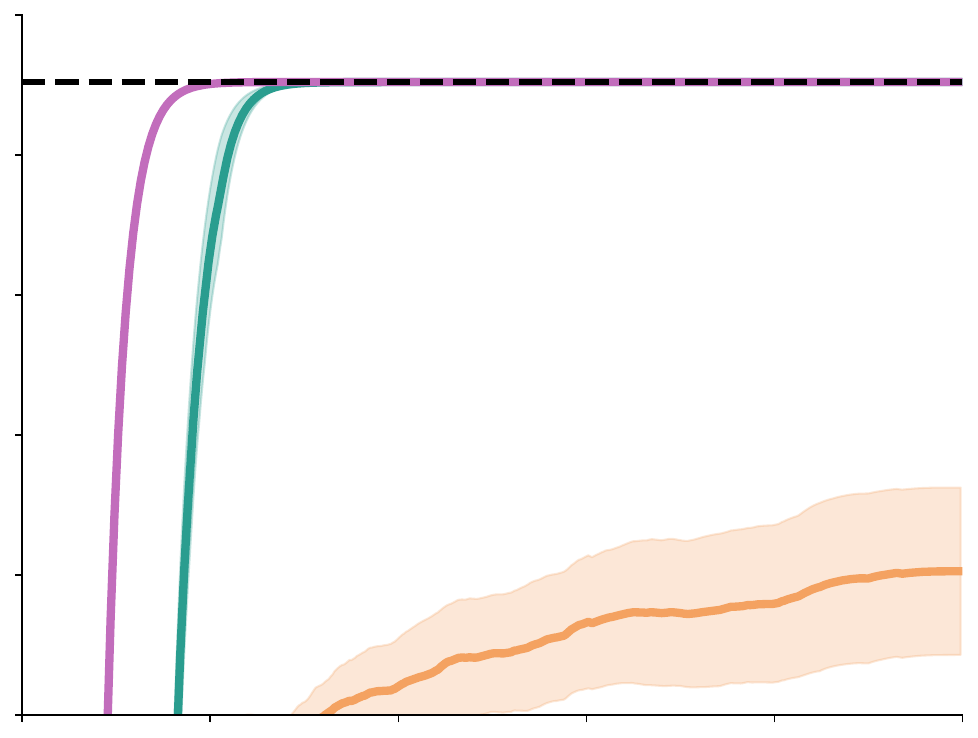}
    \hfill
        \includegraphics[width=0.158\linewidth]{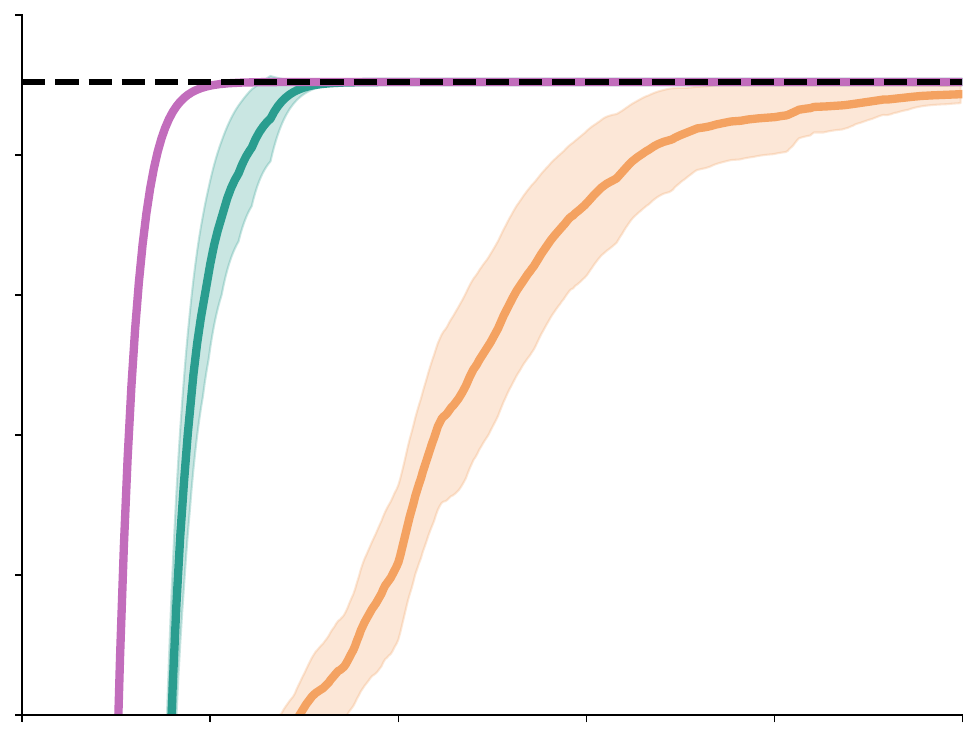}
    \hfill
        \includegraphics[width=0.158\linewidth]{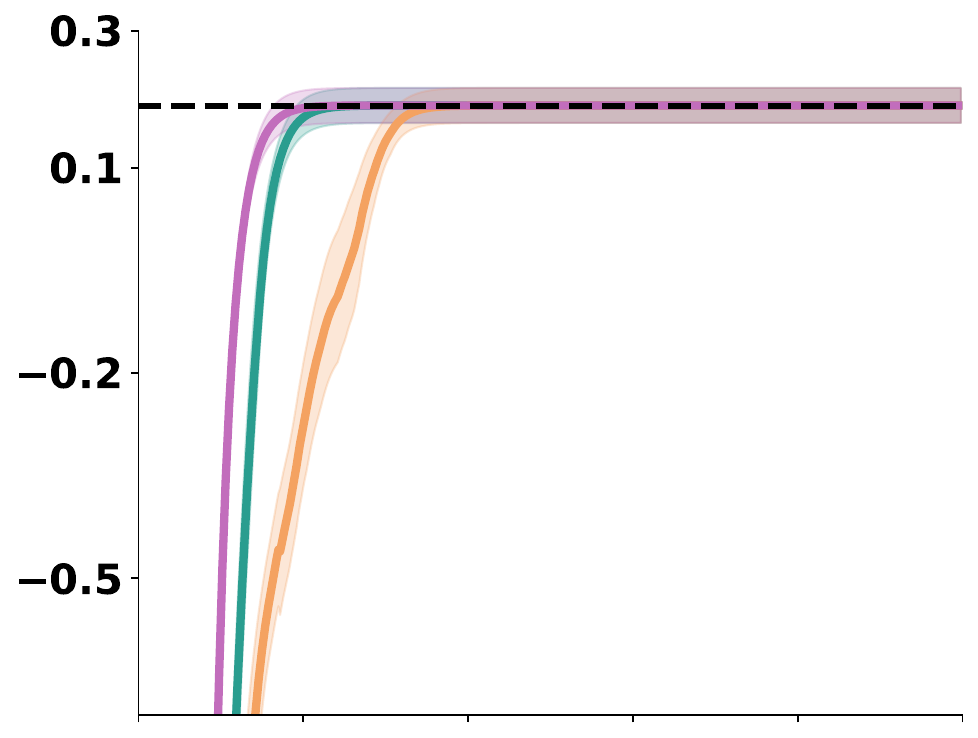}
    \\
    \raisebox{20pt}{\rotatebox[origin=t]{90}{\fontfamily{cmss}\scriptsize{(5\%)}}}
    \hfill
        \includegraphics[width=0.158\linewidth]{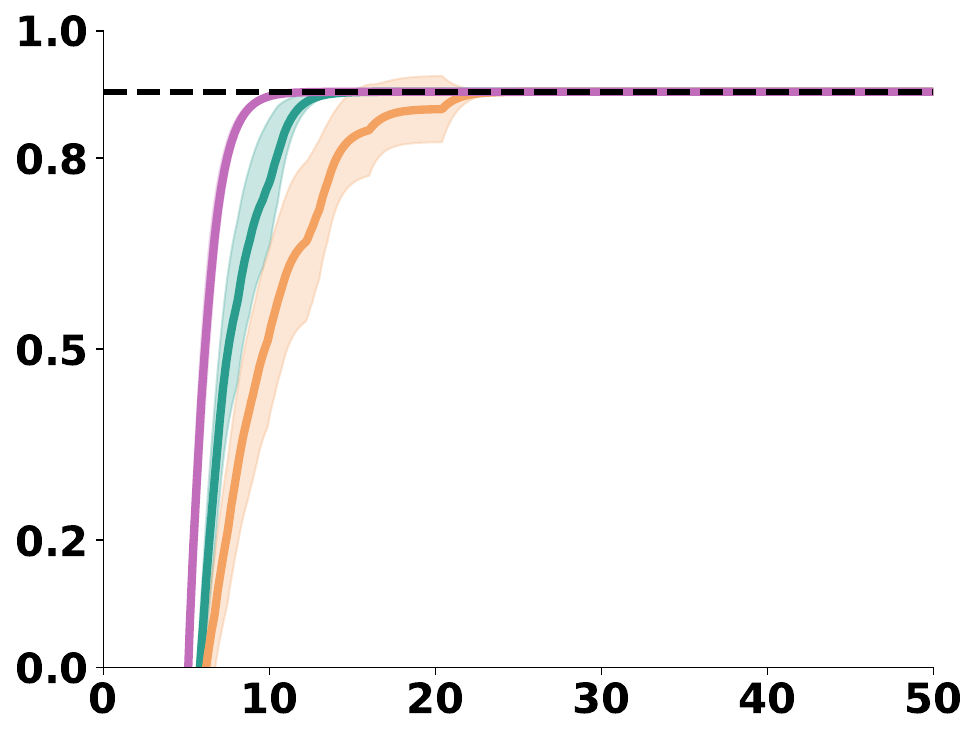}
    \hfill
        \includegraphics[width=0.158\linewidth]{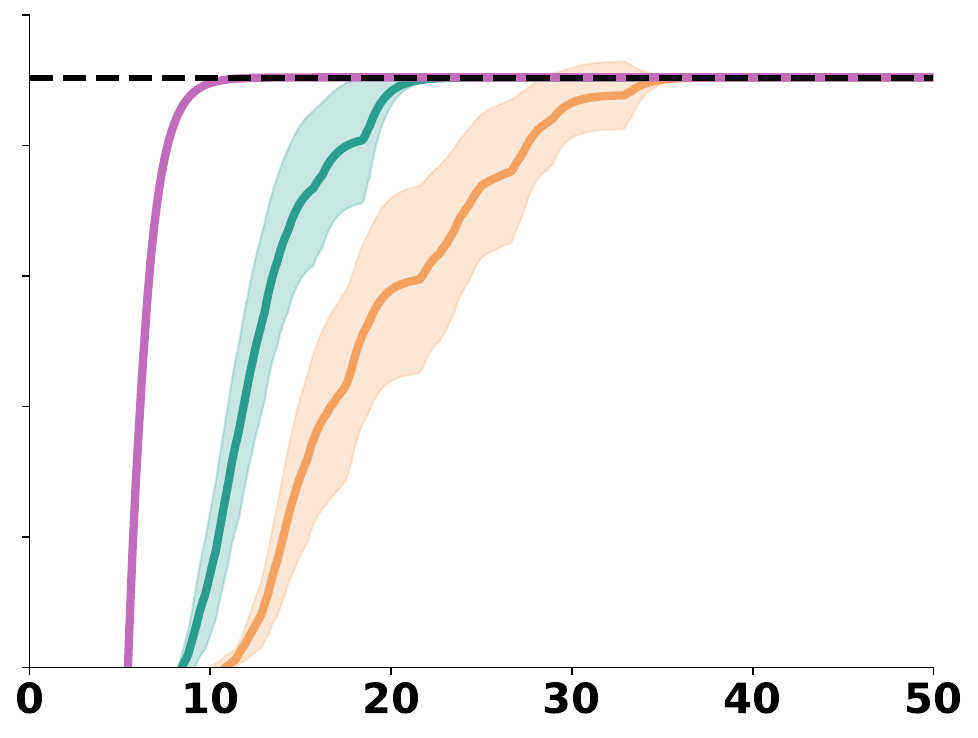}
    \hfill
        \includegraphics[width=0.158\linewidth]{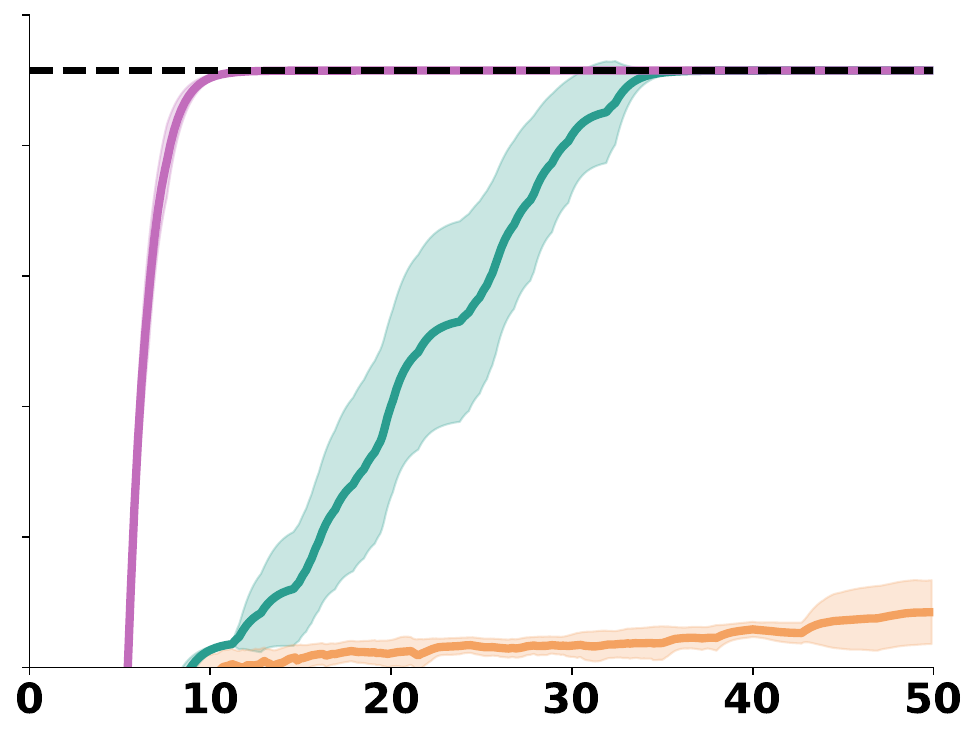}
    \hfill
        \includegraphics[width=0.158\linewidth]{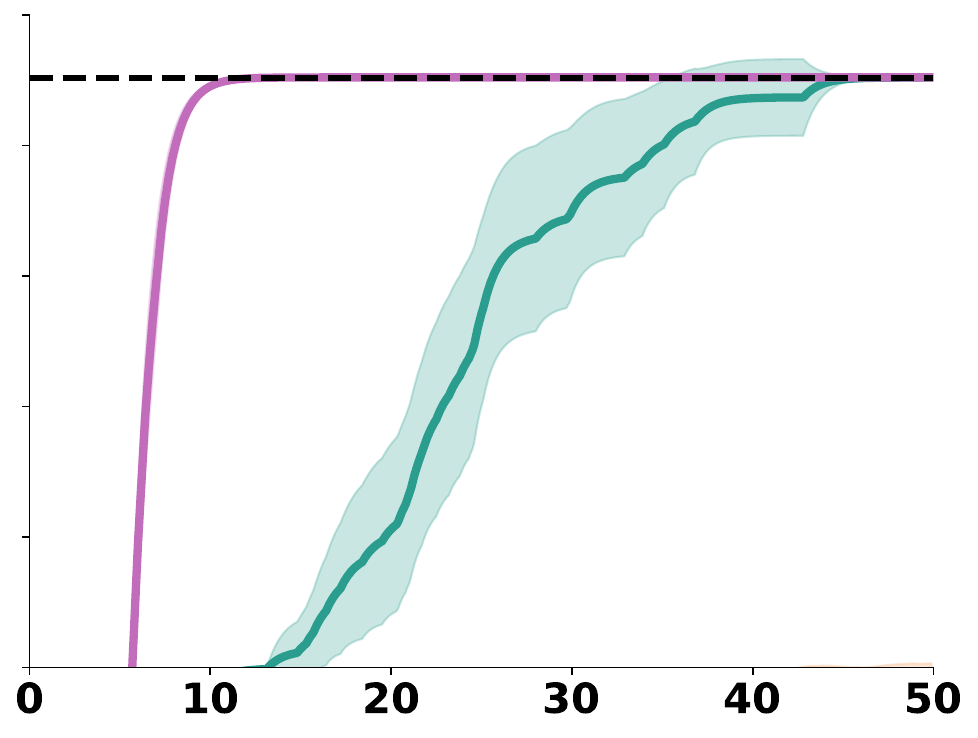}
    \hfill
        \includegraphics[width=0.158\linewidth]{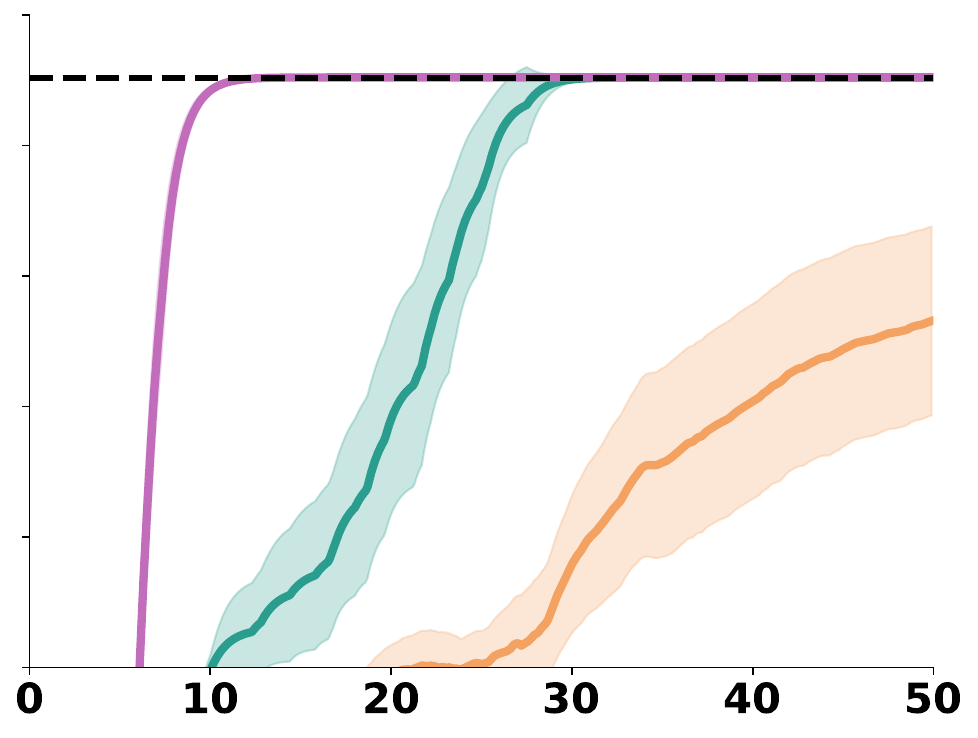}
    \hfill
        \includegraphics[width=0.158\linewidth]{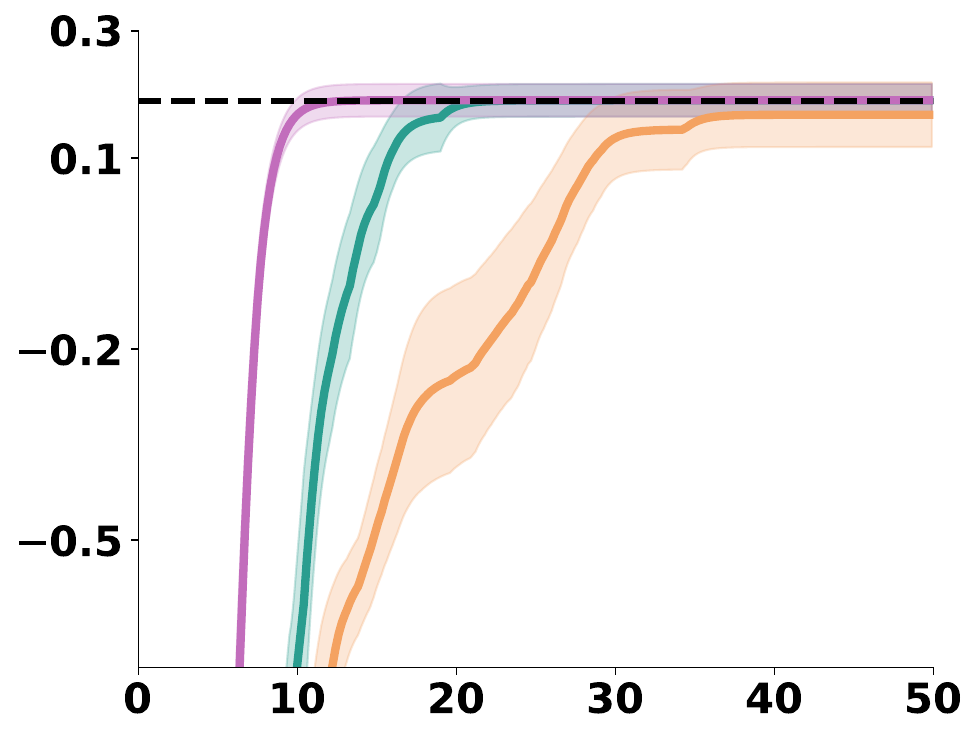}
    \\
    \centering
    {\fontfamily{cmss}\scriptsize{Training Steps ($\times 10^3$)}}
    \end{subfigure}
    \caption{\textbf{Knowing the monitoring process considerably accelerates learning in Mon-MDPs}. The similar learning speed in Ask and $N$-Experts show that the knowledge of the monitor make \thealgo robust against the size of the monitor spaces. Also, the similarity of learning speed for a fixed environment and monitor across experiments with high and low observability chance shows that the given knowledge of the monitor help the agent focus its exploration on state-actions that their environment rewards is observable even if the probability is low.}
    \label{fig:known_monitor}
    \vspace{-15pt}
\end{figure}
%
%
%
    %
    \section{Ablation Studies}
\label{appendix:ablation}
In this section, we show that our innovations are crucial to extend MBIE-EB to Mon-MDPs. We show that without all our proposed innovations, there exists at least one setting that the resulting algorithm without the full innovations fails.
\begin{figure}[tbh]
\begin{subfigure}{\linewidth}
    \centering
    \includegraphics[width=0.55\linewidth]{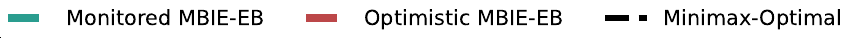}
    \\[3pt]
\raisebox{20pt}{\rotatebox[origin=t]{90}{\fontfamily{cmss}\scriptsize{Bottleneck}}}
    \hfill
    \begin{subfigure}[b]{0.15\linewidth}
        \centering
        \raisebox{5pt}{\rotatebox[origin=t]{0}{\fontfamily{cmss}\scriptsize{Full-Random}}}
        \\
        \includegraphics[width=\linewidth]{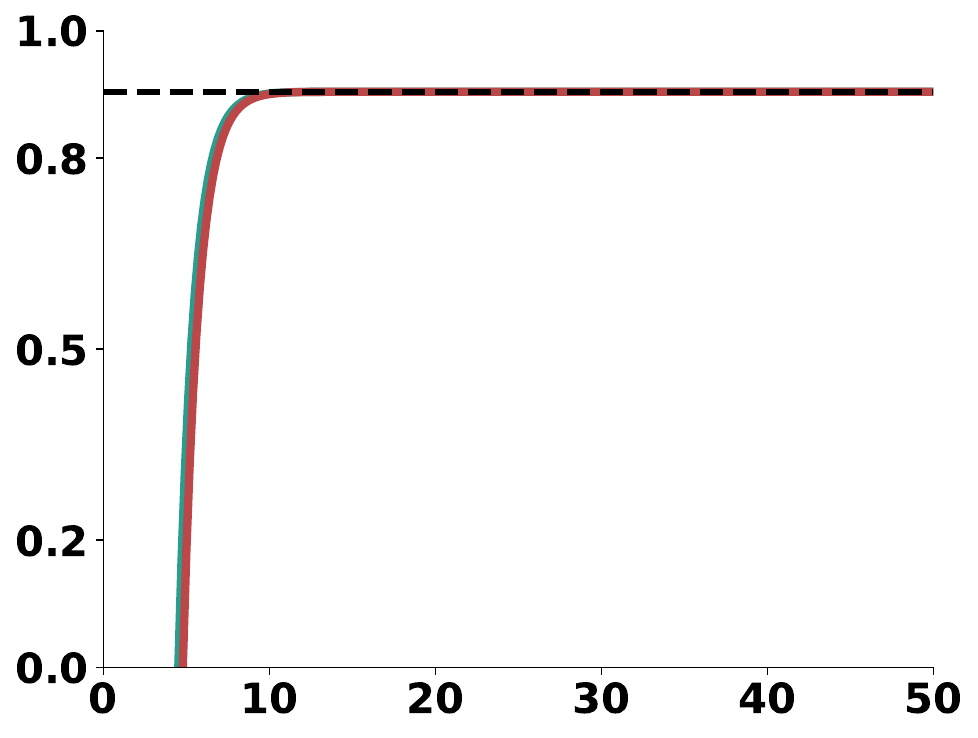}
    \end{subfigure} 
    \hfill
        \begin{subfigure}[b]{0.15\linewidth}
        \centering
        \raisebox{5pt}{\rotatebox[origin=t]{0}{\fontfamily{cmss}\scriptsize{Ask}}}
        \\
        \includegraphics[width=\linewidth]{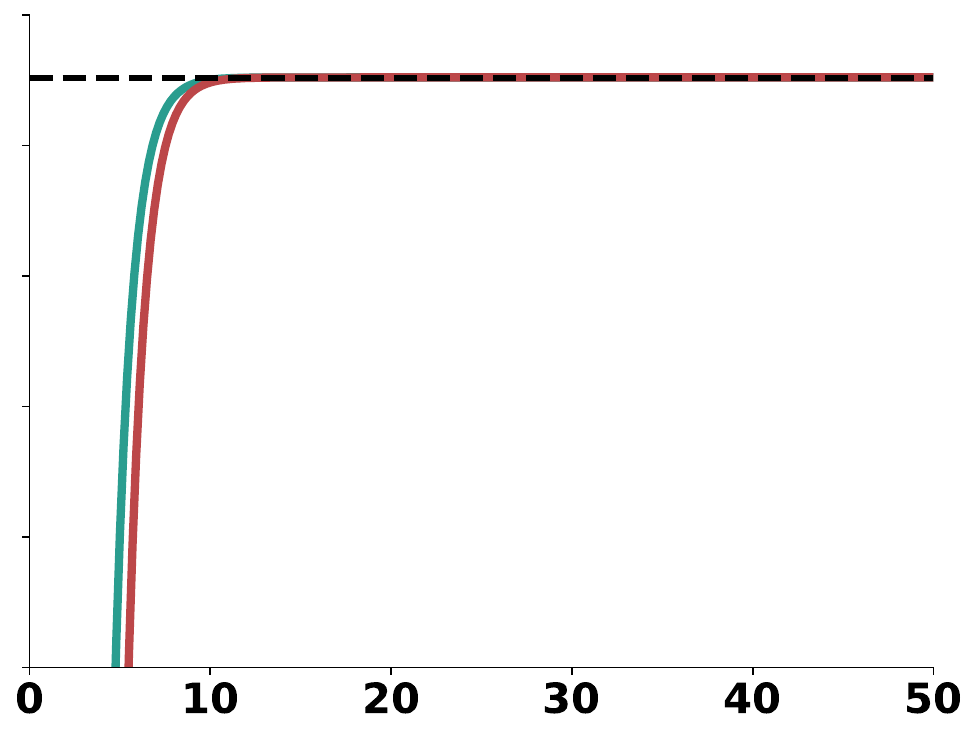}
    \end{subfigure} 
    \hfill
        \begin{subfigure}[b]{0.15\textwidth}
        \centering
        \raisebox{5pt}{\rotatebox[origin=t]{0}{\fontfamily{cmss}\scriptsize{$N-$Supporters}}}
        \\
        \includegraphics[width=\linewidth]{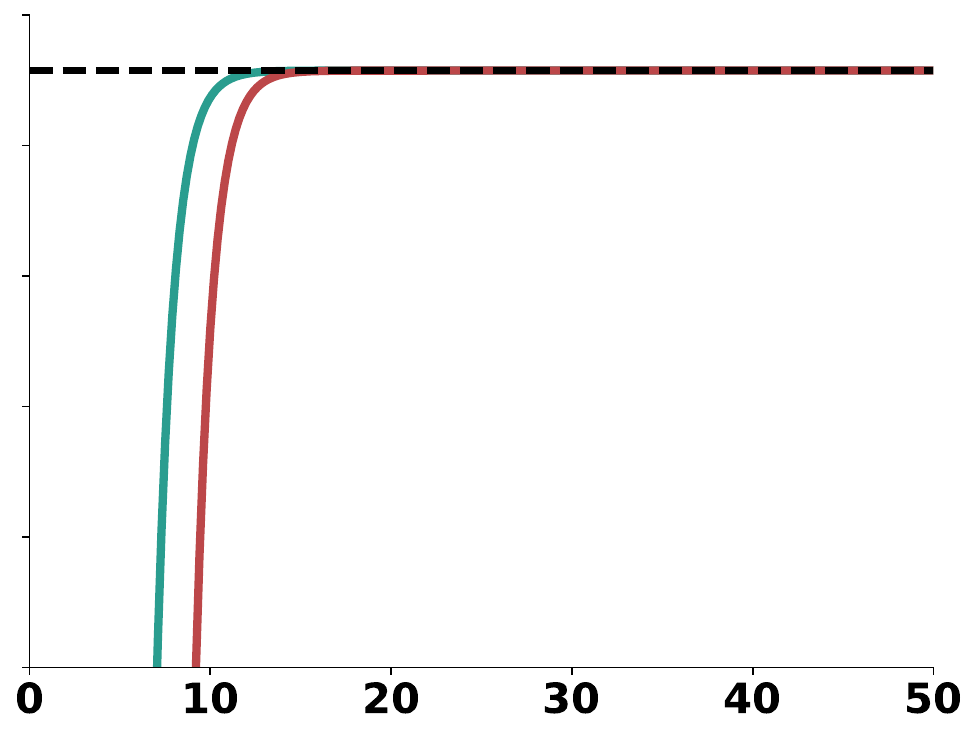}
    \end{subfigure} 
    \hfill
    \begin{subfigure}[b]{0.15\textwidth}
        \centering
        \raisebox{5pt}{\rotatebox[origin=t]{0}{\fontfamily{cmss}\scriptsize{$N-$Experts}}}
        \\
        \includegraphics[width=\linewidth]{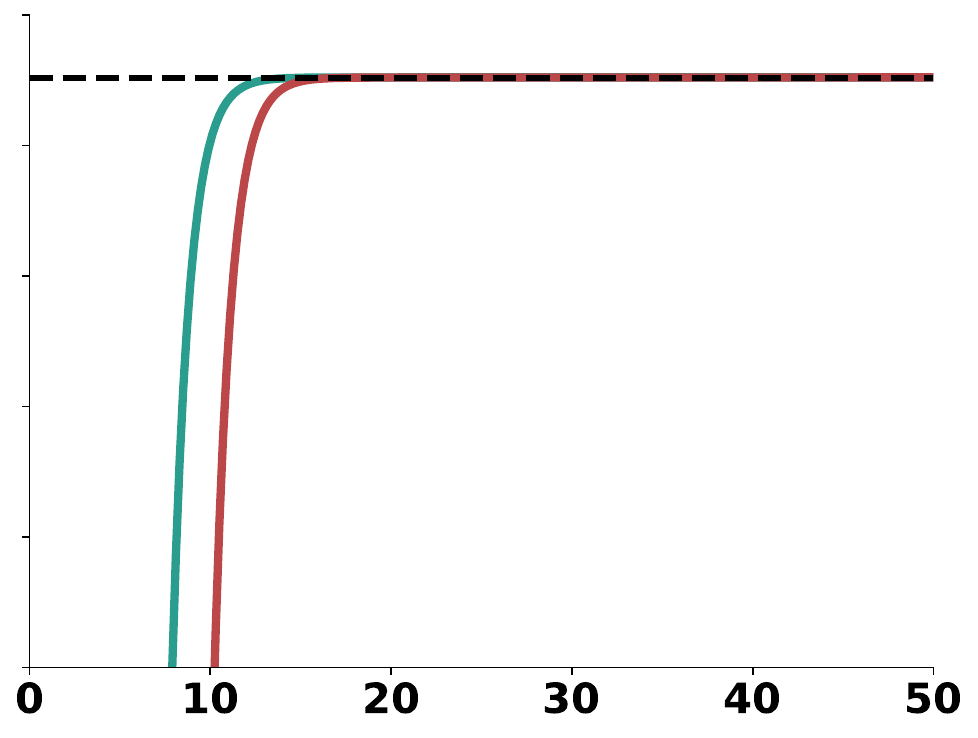}
    \end{subfigure} 
    \hfill
    \begin{subfigure}[b]{0.15\textwidth}
        \centering
        \raisebox{5pt}{\rotatebox[origin=t]{0}{\fontfamily{cmss}\scriptsize{Level Up}}}
        \\
        \includegraphics[width=\linewidth]{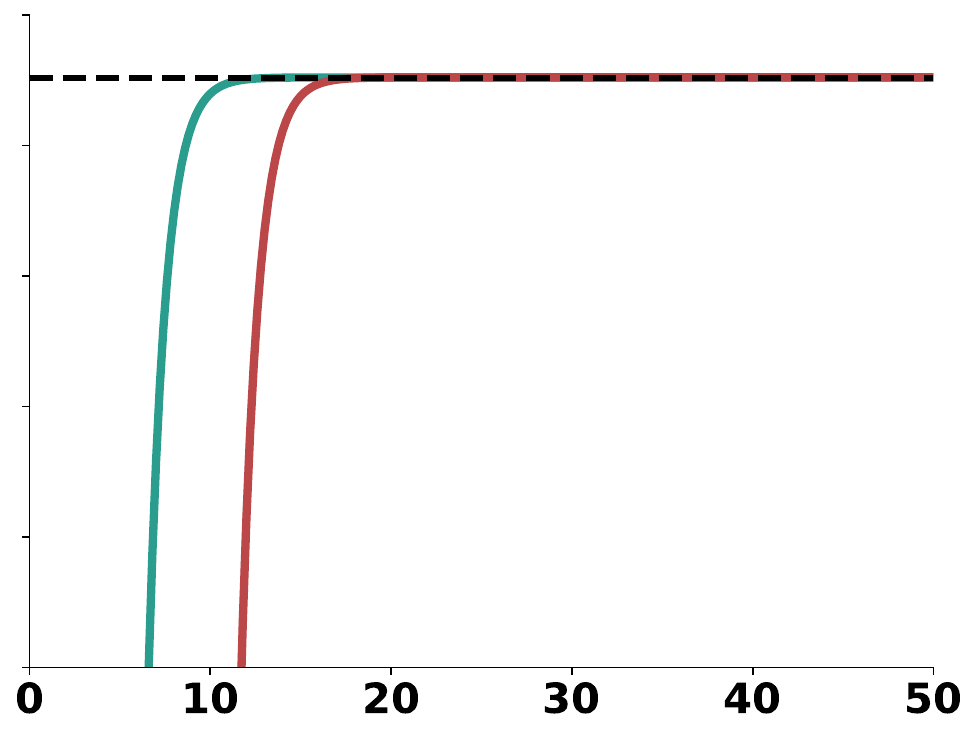}
    \end{subfigure} 
    \hfill
    \begin{subfigure}[b]{0.15\textwidth}
        \centering
        \raisebox{5pt}{\rotatebox[origin=t]{0}{\fontfamily{cmss}\scriptsize{Button}}}
        \\[-1.5pt]
        \includegraphics[width=\linewidth]{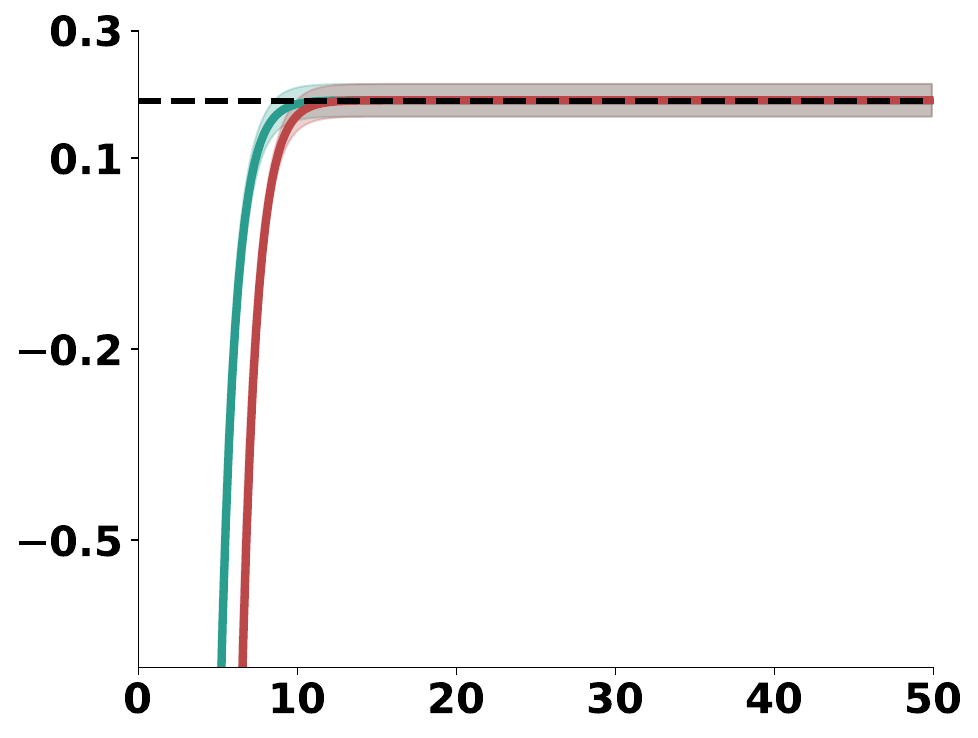}
    \end{subfigure}
    \\[-1.5pt]
    {\fontfamily{cmss}\scriptsize{Training Steps ($\times 10^3$)}}
    \caption{\textbf{Comparison between \thealgo and MBIE-EB in solvable Bottleneck}. When all the rewards are observable, MBIE-EB's optimism is effective to learn all the unknown quantities. MBIE-EB matches the performance of Monitored MBIE-EB.}
    \label{fig:ablation_solv}
\end{subfigure}
\\[5pt]
\begin{subfigure}{\linewidth}
    \centering
\raisebox{30pt}{\rotatebox[origin=t]{90}{\fontfamily{cmss}\scriptsize{(100\%)}}}
    \hfill
    \begin{subfigure}[b]{0.155\linewidth}
        \centering
        \raisebox{5pt}{\rotatebox[origin=t]{0}{\fontfamily{cmss}\scriptsize{Full-Random}}}
        \\
        \includegraphics[width=\linewidth]{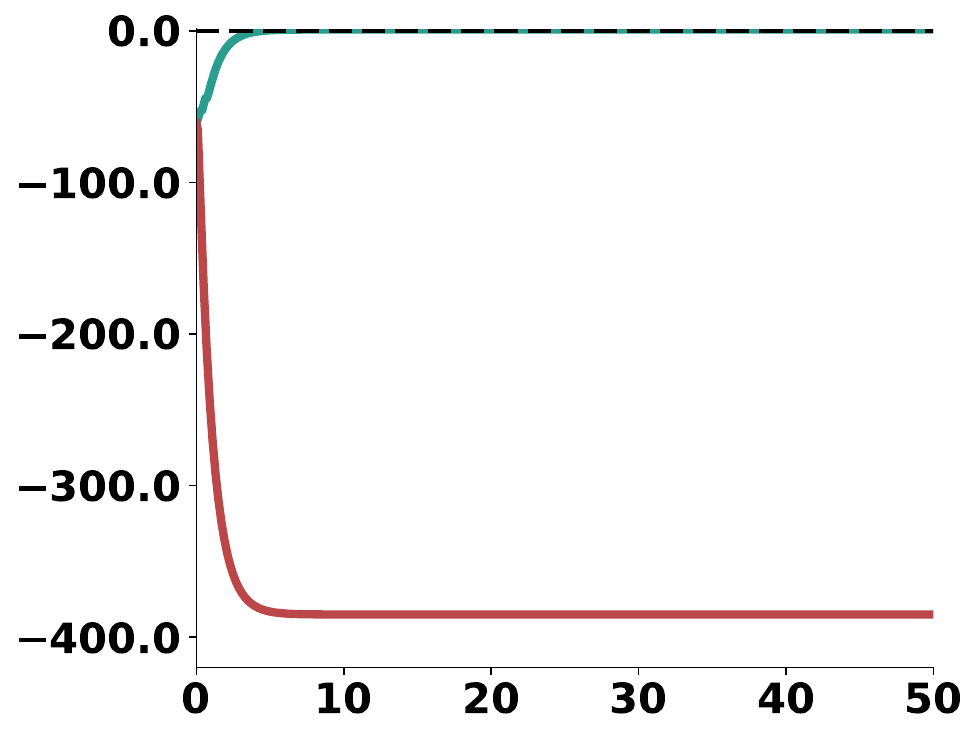}
    \end{subfigure} 
    \hfill
        \begin{subfigure}[b]{0.155\linewidth}
        \centering
        \raisebox{5pt}{\rotatebox[origin=t]{0}{\fontfamily{cmss}\scriptsize{Ask}}}
        \\
        \includegraphics[width=\linewidth]{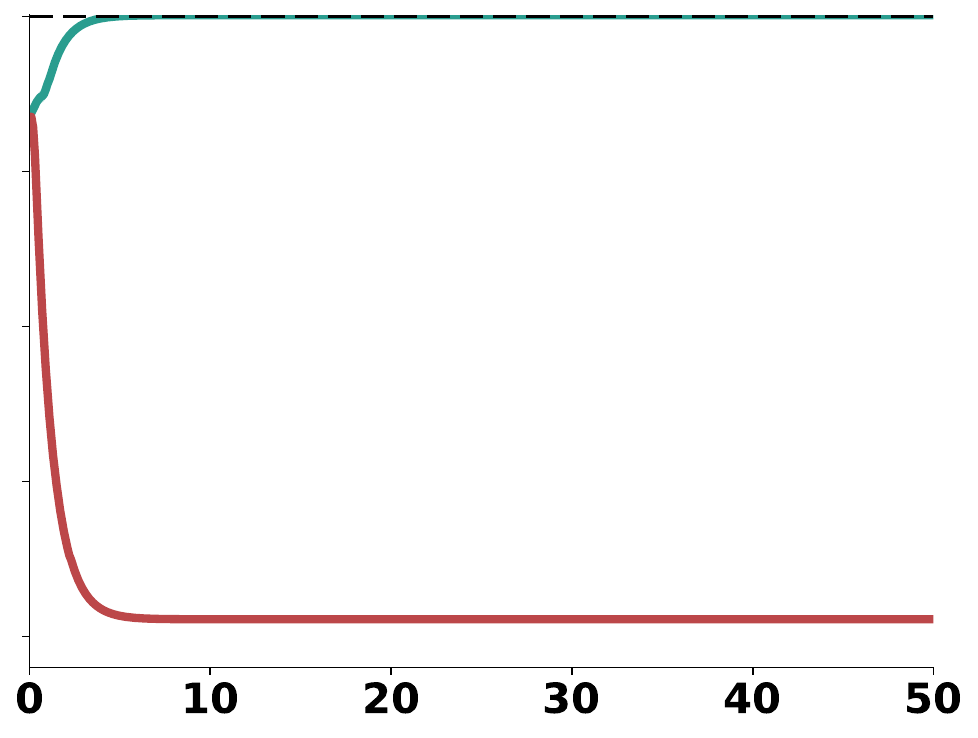}
    \end{subfigure} 
    \hfill
        \begin{subfigure}[b]{0.155\textwidth}
        \centering
        \raisebox{5pt}{\rotatebox[origin=t]{0}{\fontfamily{cmss}\scriptsize{$N-$Supporters}}}
        \\
        \includegraphics[width=\linewidth]{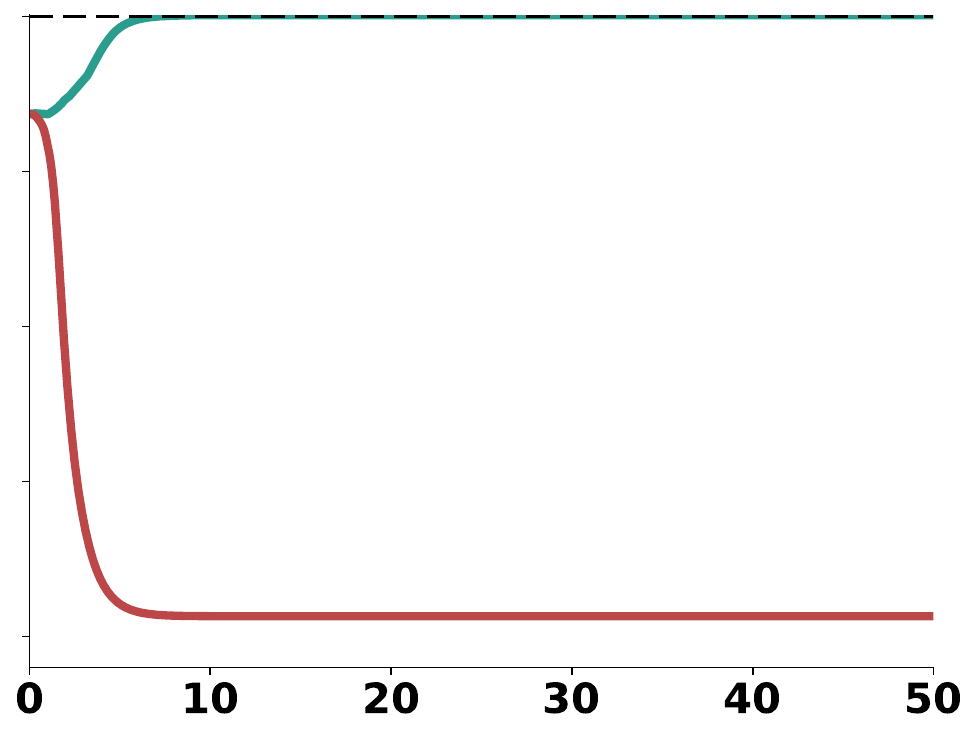}
    \end{subfigure} 
    \hfill
    \begin{subfigure}[b]{0.155\textwidth}
        \centering
        \raisebox{5pt}{\rotatebox[origin=t]{0}{\fontfamily{cmss}\scriptsize{$N-$Experts}}}
        \\
        \includegraphics[width=\linewidth]{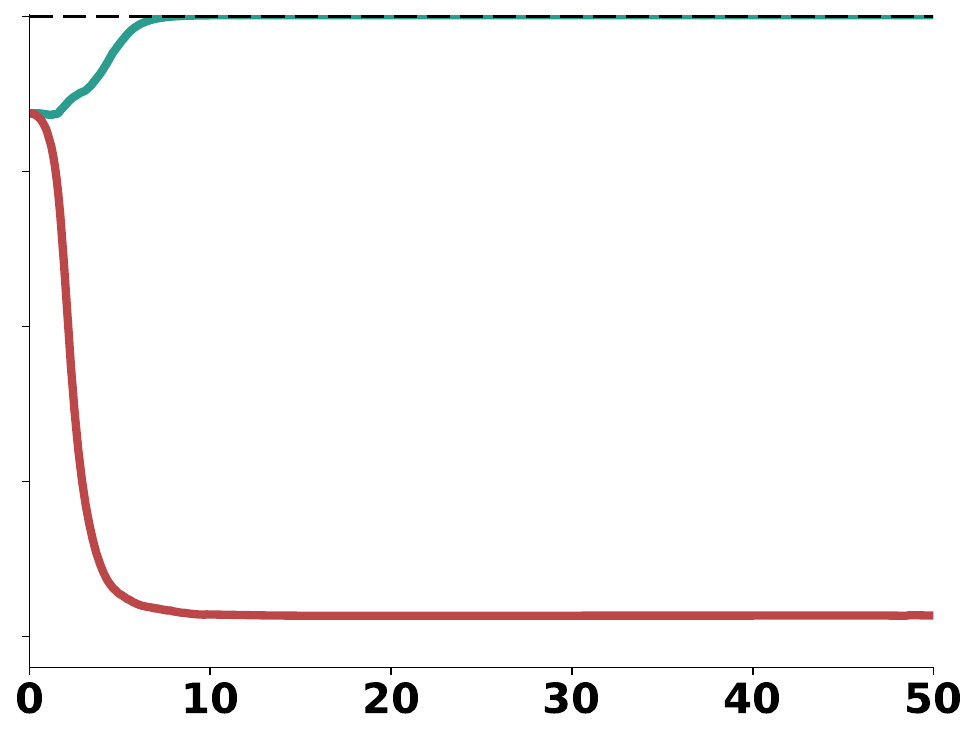}
    \end{subfigure} 
    \hfill
    \begin{subfigure}[b]{0.155\textwidth}
        \centering
        \raisebox{5pt}{\rotatebox[origin=t]{0}{\fontfamily{cmss}\scriptsize{Level Up}}}
        \\
        \includegraphics[width=\linewidth]{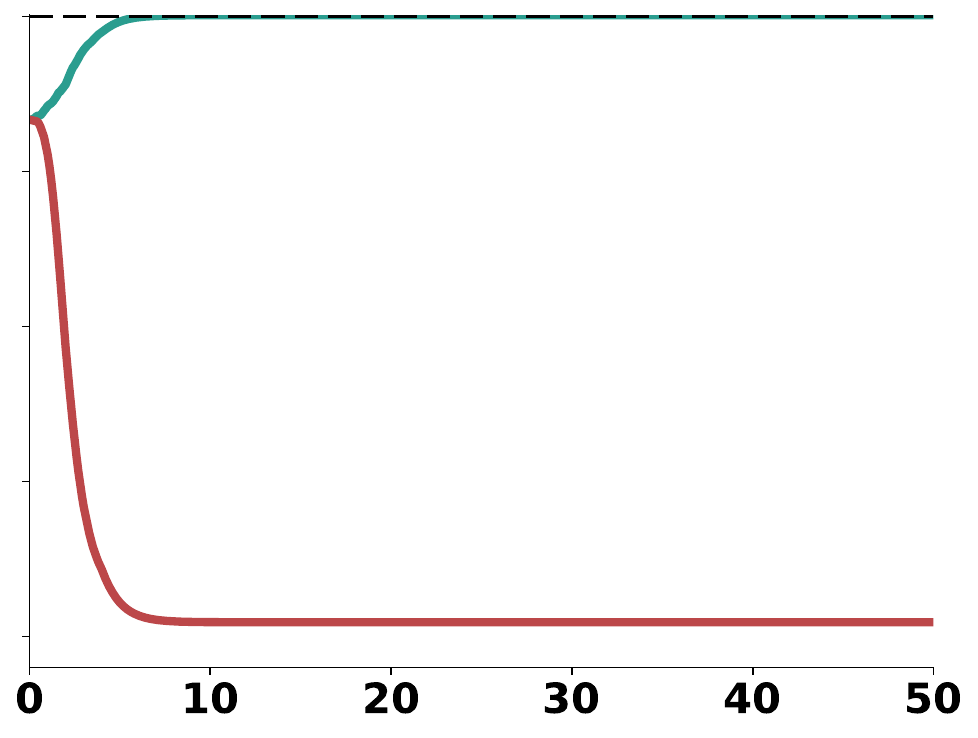}
    \end{subfigure} 
    \hfill
    \begin{subfigure}[b]{0.155\textwidth}
        \centering
        \raisebox{5pt}{\rotatebox[origin=t]{0}{\fontfamily{cmss}\scriptsize{Button}}}
        \\[-1.5pt]
        \includegraphics[width=\linewidth]{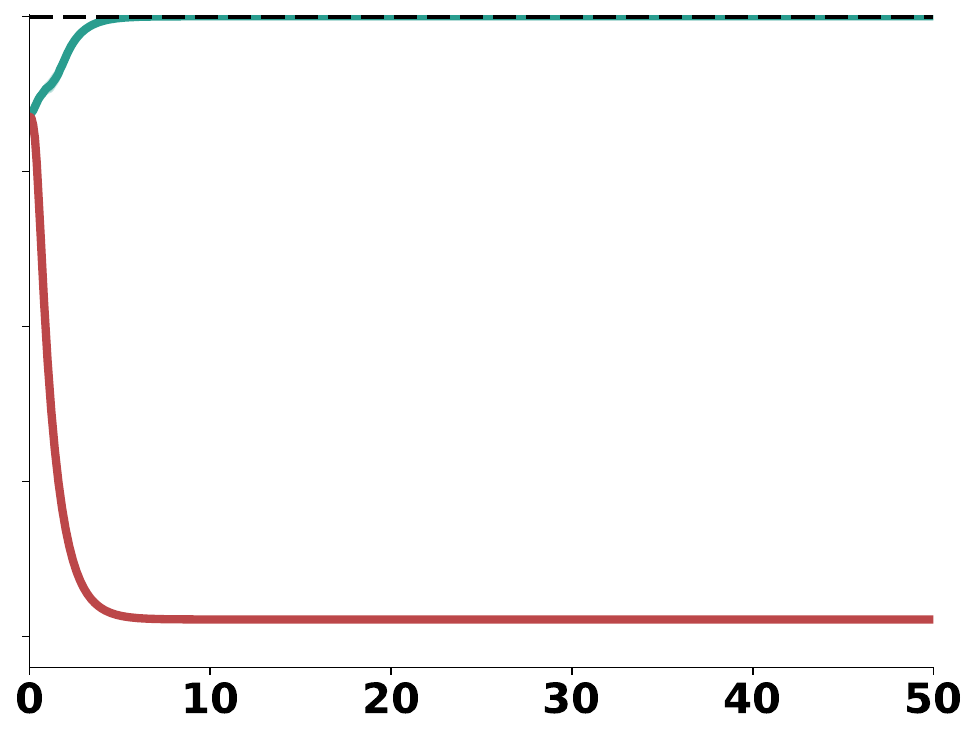}
    \end{subfigure}
    \\[-1.5pt]
    {\fontfamily{cmss}\scriptsize{Training Steps ($\times 10^3$)}}
    \caption{\textbf{Comparison between \thealgo and MBIE-EB in unsolvable Bottleneck}. While \thealgo is pessimistic with respect to unobservable rewards, MBIE-EB remains mistakenly optimistic. MBIE-EB's optimism leads to endless visits to $\bot$ cells.}
    \label{fig:ablation_never}
\end{subfigure}
\caption{\textbf{Verifying the importance of the second innovation: pessimism instead of optimism.}}
\vspace{-15pt}
\end{figure}
\begin{figure}[H]
\begin{subfigure}{\linewidth}
    \centering
    \includegraphics[width=0.55\linewidth]{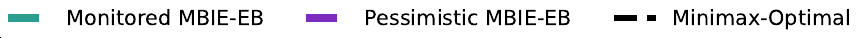}
    \\[3pt]
\raisebox{30pt}{\rotatebox[origin=t]{90}{\fontfamily{cmss}\scriptsize{Bottleneck}}}
    \hfill
    \begin{subfigure}[b]{0.15\linewidth}
        \centering
        \raisebox{5pt}{\rotatebox[origin=t]{0}{\fontfamily{cmss}\scriptsize{Full-Random}}}
        \\
        \includegraphics[width=\linewidth]{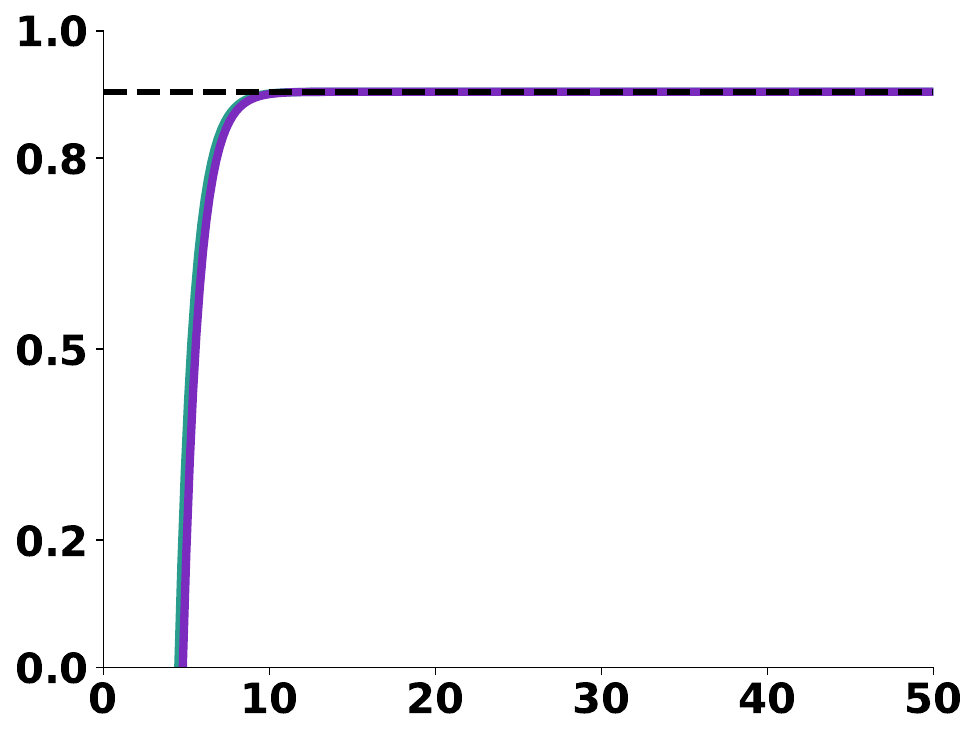}
    \end{subfigure} 
    \hfill
        \begin{subfigure}[b]{0.15\linewidth}
        \centering
        \raisebox{5pt}{\rotatebox[origin=t]{0}{\fontfamily{cmss}\scriptsize{Ask}}}
        \\
        \includegraphics[width=\linewidth]{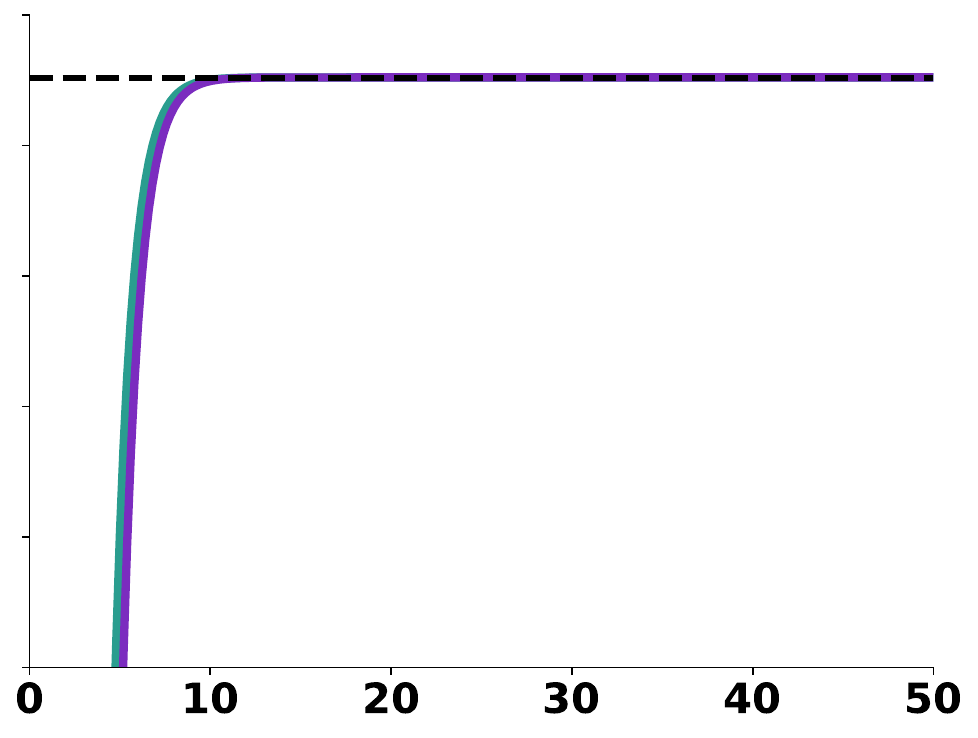}
    \end{subfigure} 
    \hfill
        \begin{subfigure}[b]{0.15\textwidth}
        \centering
        \raisebox{5pt}{\rotatebox[origin=t]{0}{\fontfamily{cmss}\scriptsize{$N-$Supporters}}}
        \\
        \includegraphics[width=\linewidth]{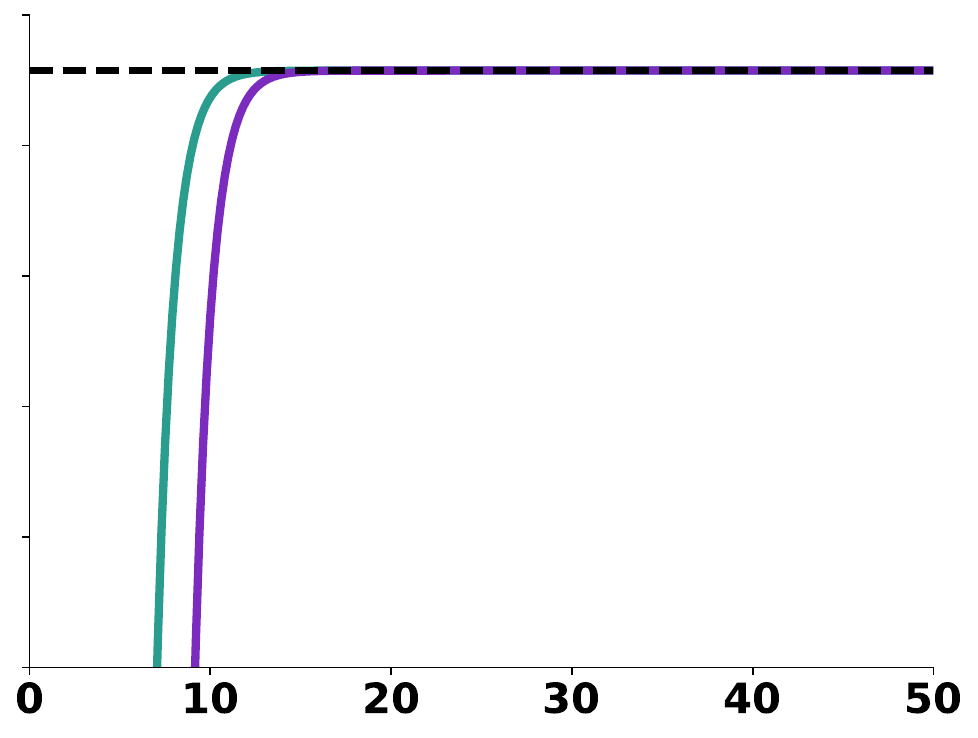}
    \end{subfigure} 
    \hfill
    \begin{subfigure}[b]{0.15\textwidth}
        \centering
        \raisebox{5pt}{\rotatebox[origin=t]{0}{\fontfamily{cmss}\scriptsize{$N-$Experts}}}
        \\
        \includegraphics[width=\linewidth]{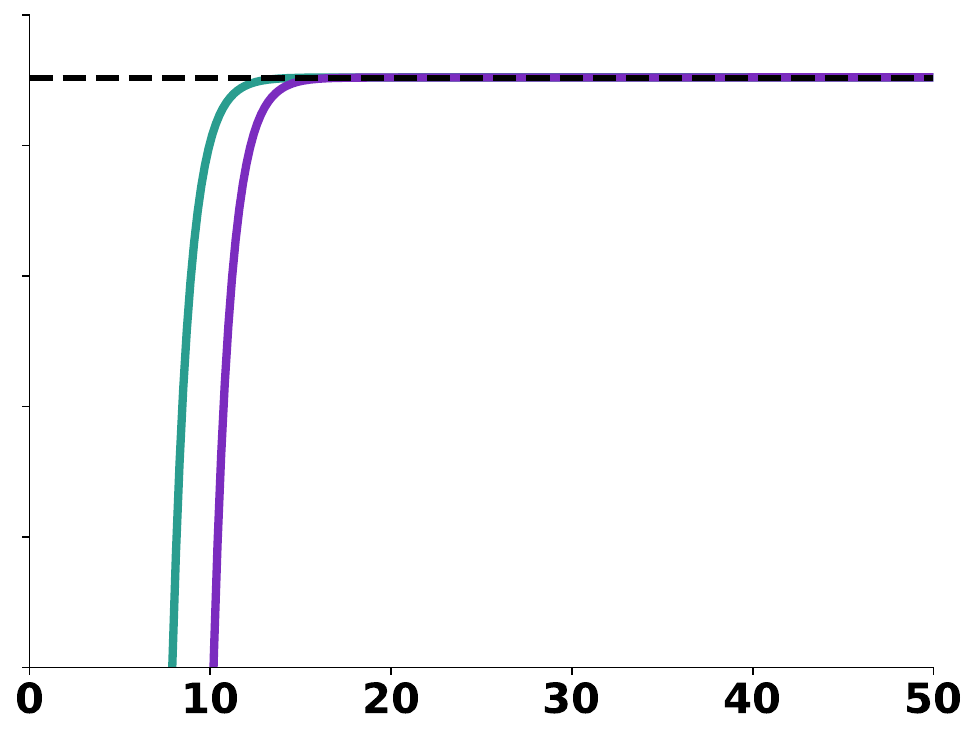}
    \end{subfigure} 
    \hfill
    \begin{subfigure}[b]{0.15\textwidth}
        \centering
        \raisebox{5pt}{\rotatebox[origin=t]{0}{\fontfamily{cmss}\scriptsize{Level Up}}}
        \\
        \includegraphics[width=\linewidth]{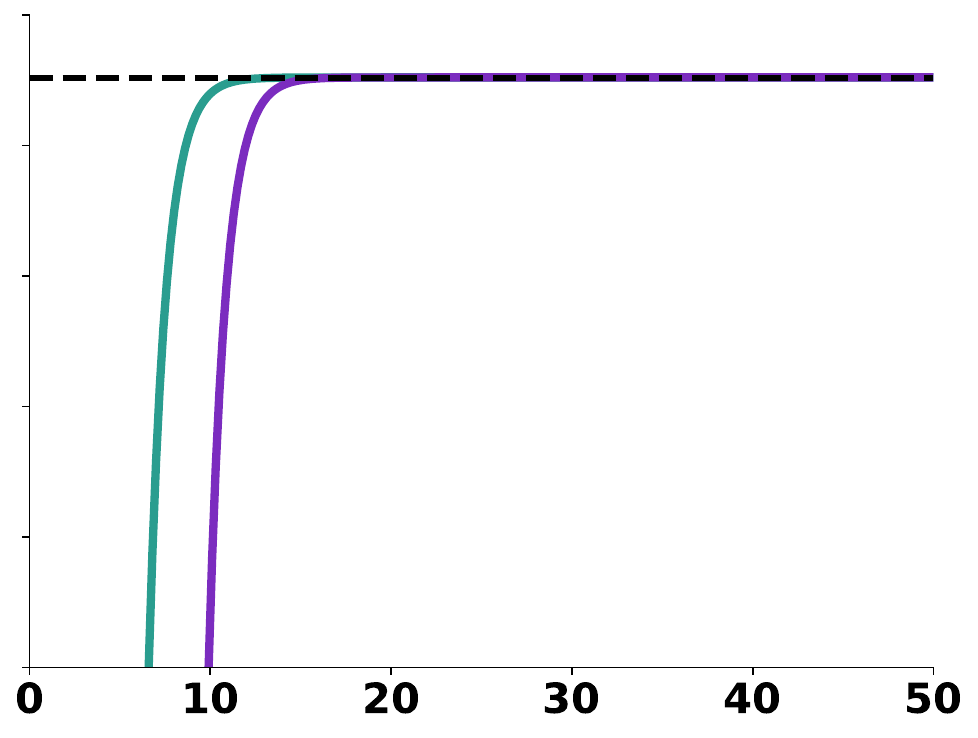}
    \end{subfigure} 
    \hfill
    \begin{subfigure}[b]{0.15\textwidth}
        \centering
        \raisebox{5pt}{\rotatebox[origin=t]{0}{\fontfamily{cmss}\scriptsize{Button}}}
        \\[-1.5pt]
        \includegraphics[width=\linewidth]{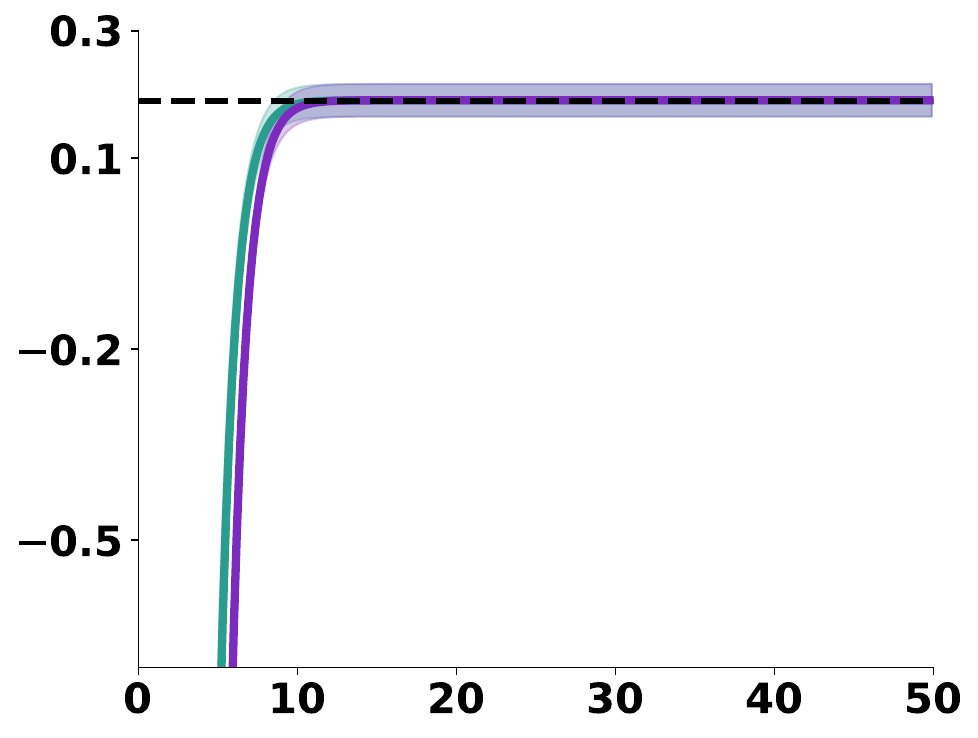}
    \end{subfigure}
    \\[-1.5pt]
    {\fontfamily{cmss}\scriptsize{Training Steps ($\times 10^3$)}}
    \caption{\textbf{Comparison between \thealgo and pessimistic MBIE-EB in solvable Bottleneck}. When all the rewards are deterministically observable, pessimistic MBIE-EB is effective because a single visit to every state-action is sufficient to conclude that the reward is observable. Therefore, pessimistic MBIE-EB matches the performance of Monitored MBIE-EB.}
    \label{fig:pess_mbie_eb_solv}
\end{subfigure}
\\[5pt]
\begin{subfigure}{\linewidth}
    \begin{subfigure}{\linewidth}
    \raisebox{20pt}{\rotatebox[origin=t]{90}{\fontfamily{cmss}\scriptsize{(100\%)}}}
    \hfill
    \begin{subfigure}[b]{0.155\linewidth}
        \centering
        \raisebox{5pt}{\rotatebox[origin=t]{0}{\fontfamily{cmss}\scriptsize{Full-Random}}}
        \\
        \includegraphics[width=\linewidth]{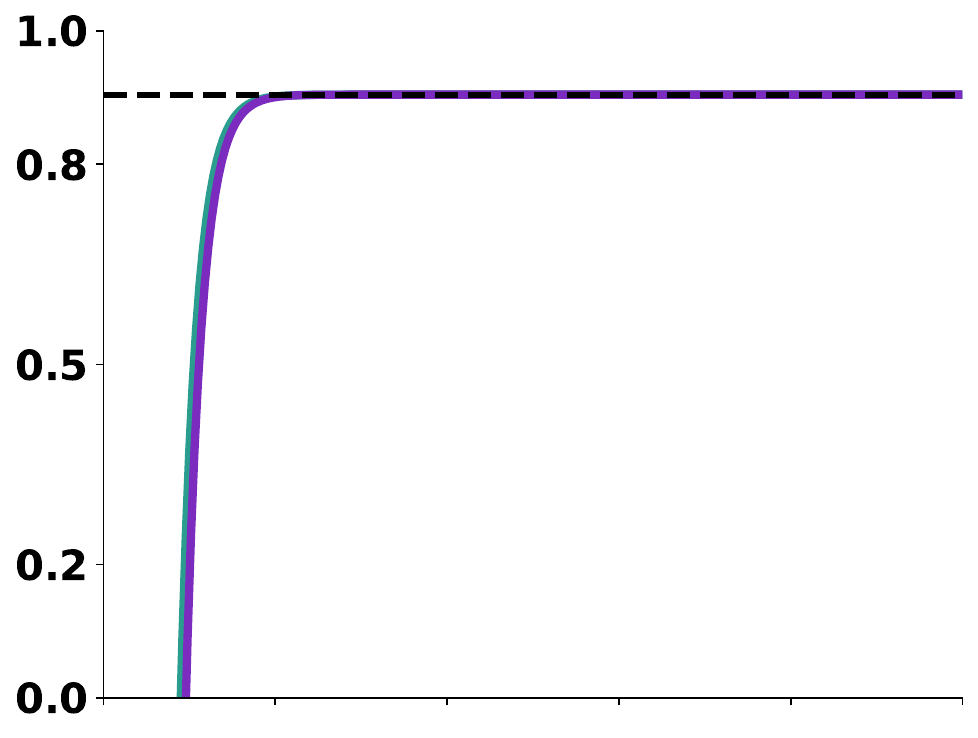}
    \end{subfigure} 
    \hfill
        \begin{subfigure}[b]{0.155\linewidth}
        \centering
        \raisebox{5pt}{\rotatebox[origin=t]{0}{\fontfamily{cmss}\scriptsize{Ask}}}
        \\
        \includegraphics[width=\linewidth]{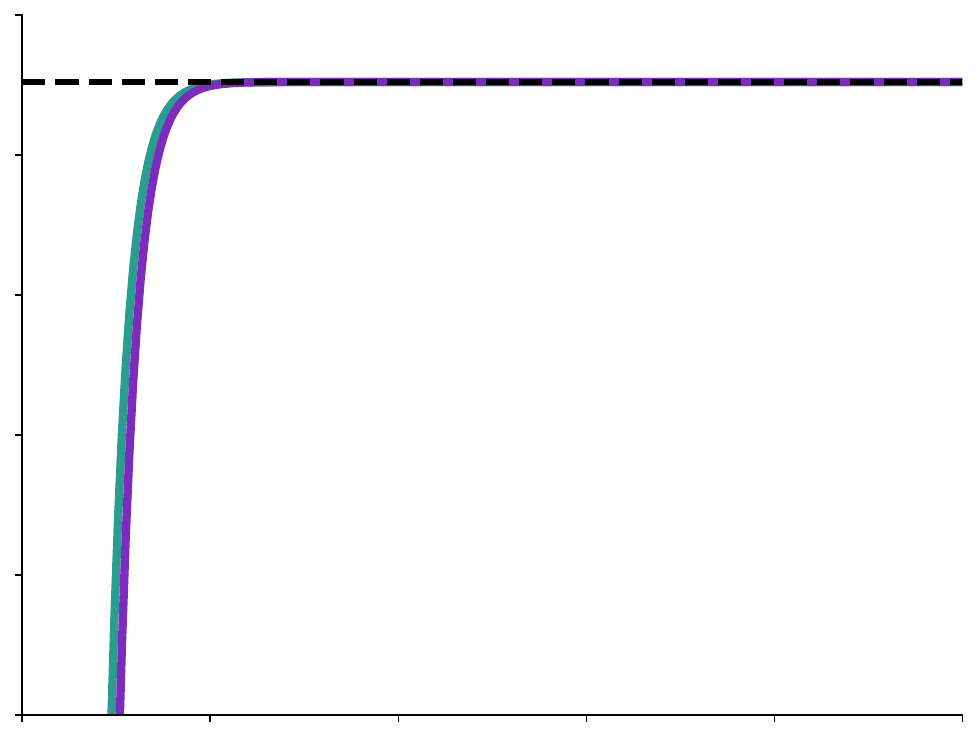}
    \end{subfigure} 
    \hfill
        \begin{subfigure}[b]{0.155\textwidth}
        \centering
        \raisebox{5pt}{\rotatebox[origin=t]{0}{\fontfamily{cmss}\scriptsize{$N-$Supporters}}}
        \\
        \includegraphics[width=\linewidth]{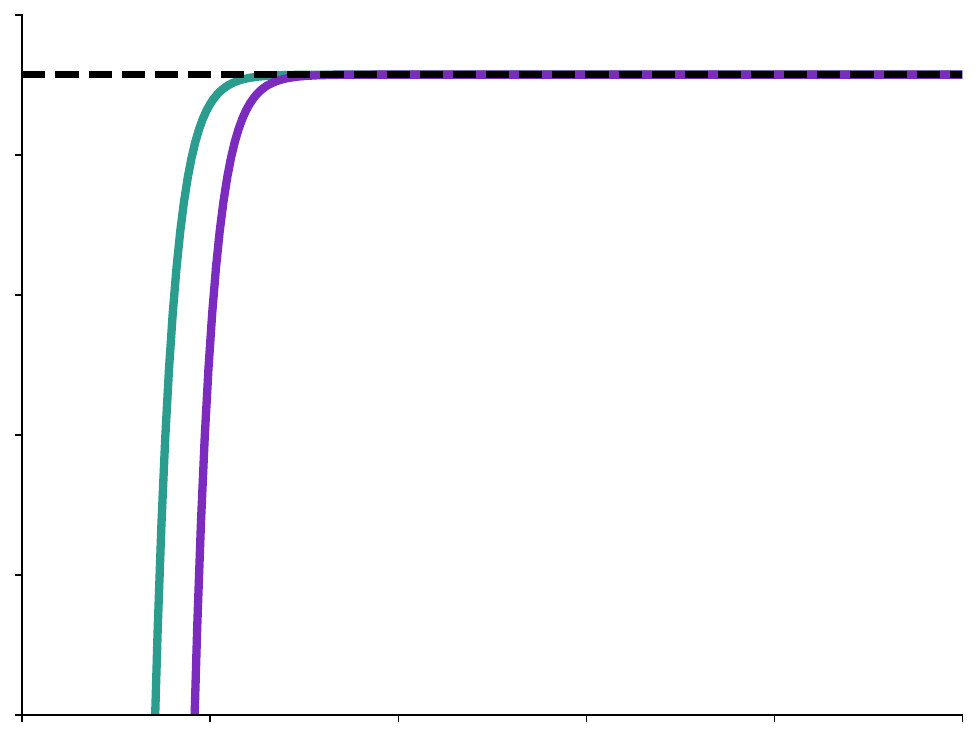}
    \end{subfigure} 
    \hfill
    \begin{subfigure}[b]{0.155\textwidth}
        \centering
        \raisebox{5pt}{\rotatebox[origin=t]{0}{\fontfamily{cmss}\scriptsize{$N-$Experts}}}
        \\
        \includegraphics[width=\linewidth]{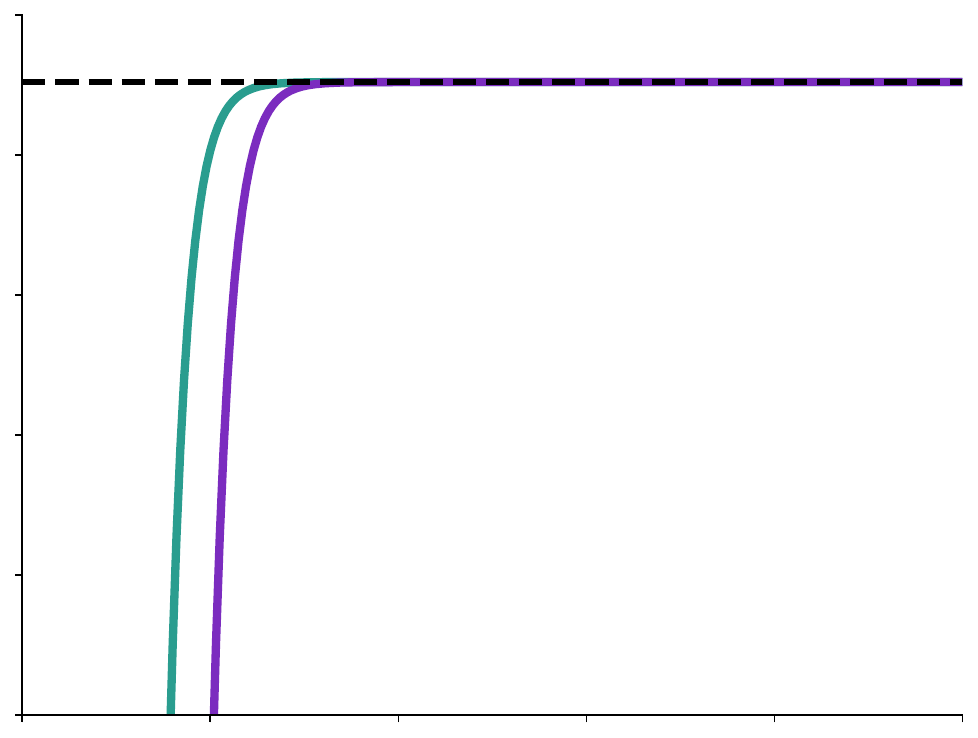}
    \end{subfigure} 
    \hfill
    \begin{subfigure}[b]{0.155\textwidth}
        \centering
        \raisebox{5pt}{\rotatebox[origin=t]{0}{\fontfamily{cmss}\scriptsize{Level Up}}}
        \\
        \includegraphics[width=\linewidth]{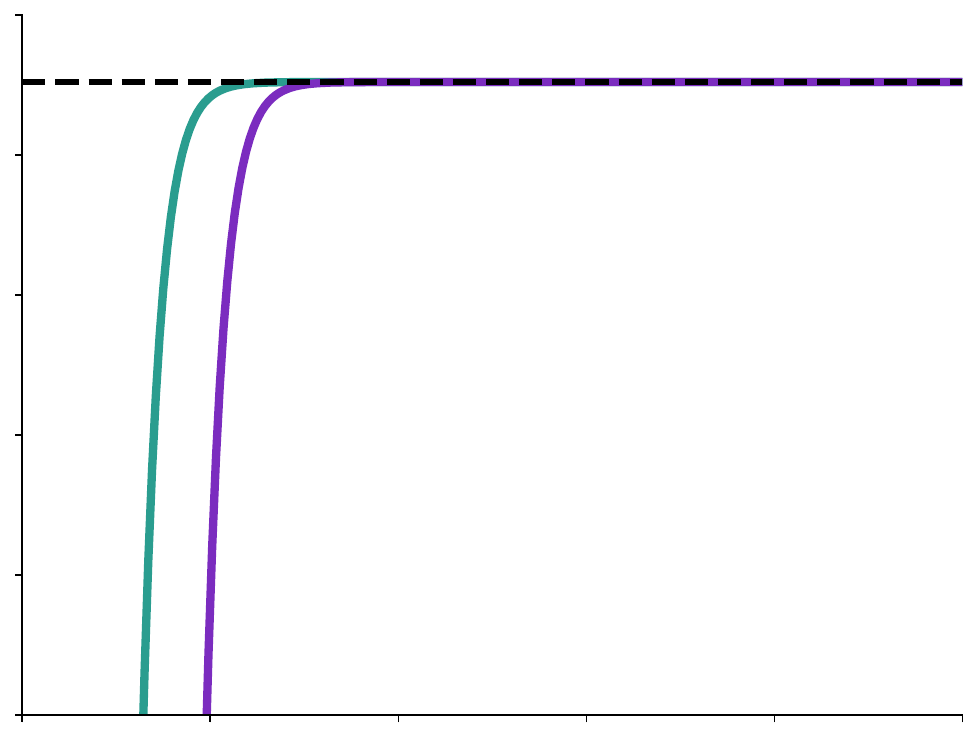}
    \end{subfigure} 
    \hfill
    \begin{subfigure}[b]{0.155\textwidth}
        \centering
        \raisebox{5pt}{\rotatebox[origin=t]{0}{\fontfamily{cmss}\scriptsize{Button}}}
        \\
        \includegraphics[width=\linewidth]{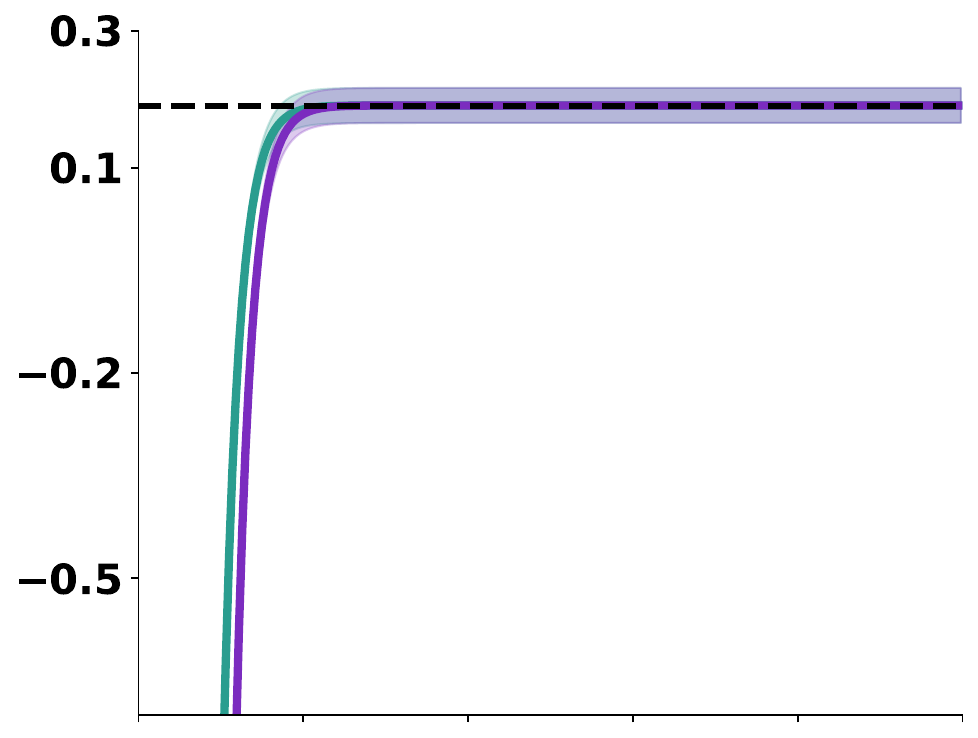}
    \end{subfigure} 
    \\
    \raisebox{20pt}{\rotatebox[origin=t]{90}{\fontfamily{cmss}\scriptsize{(5\%)}}}
    \hfill
        \includegraphics[width=0.155\linewidth]{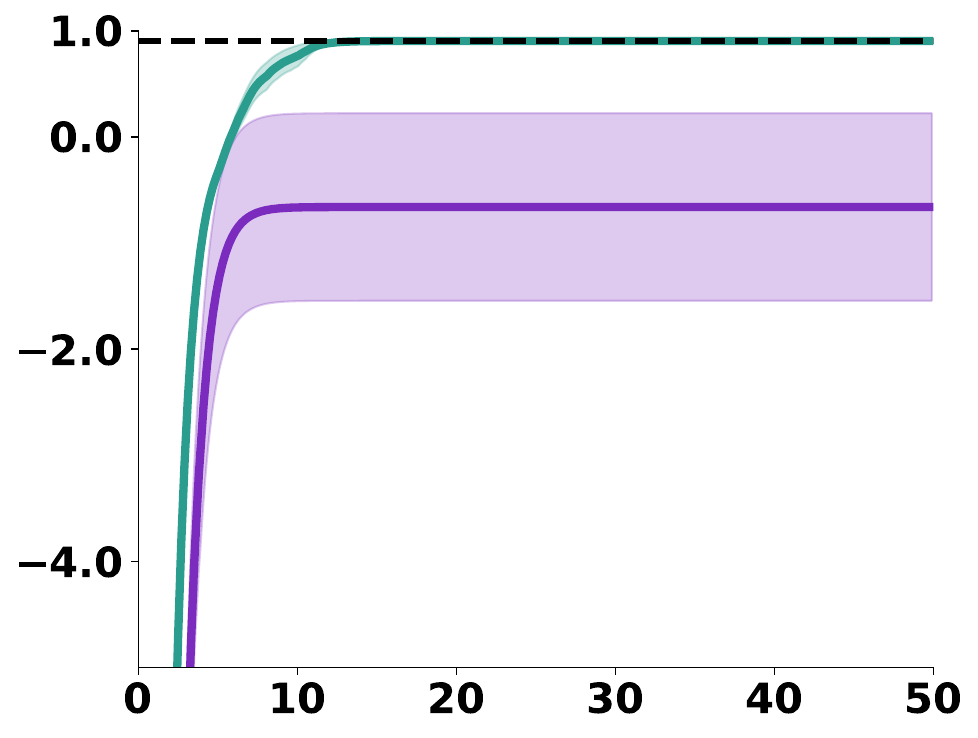}
    \hfill
        \includegraphics[width=0.155\linewidth]{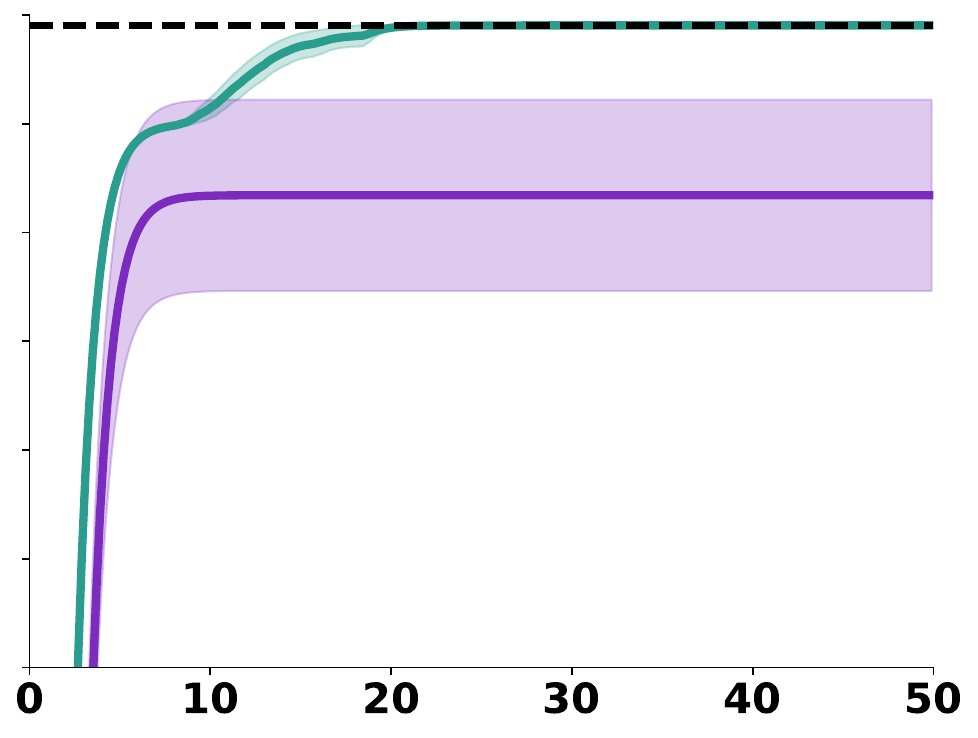}
    \hfill
        \includegraphics[width=0.155\linewidth]{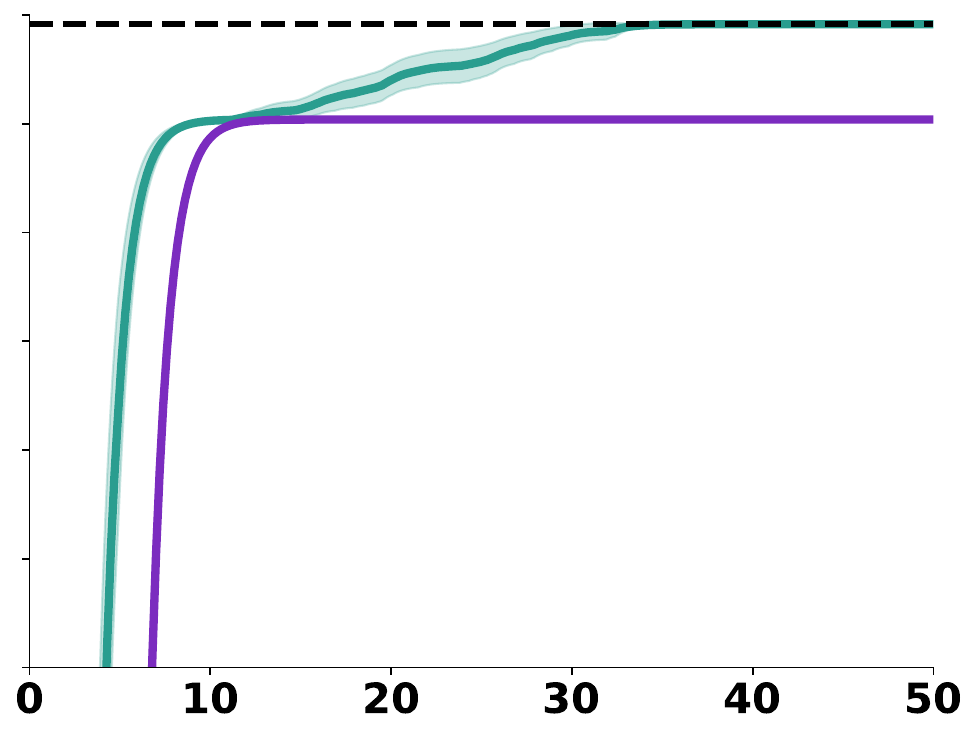}
    \hfill
        \includegraphics[width=0.155\linewidth]{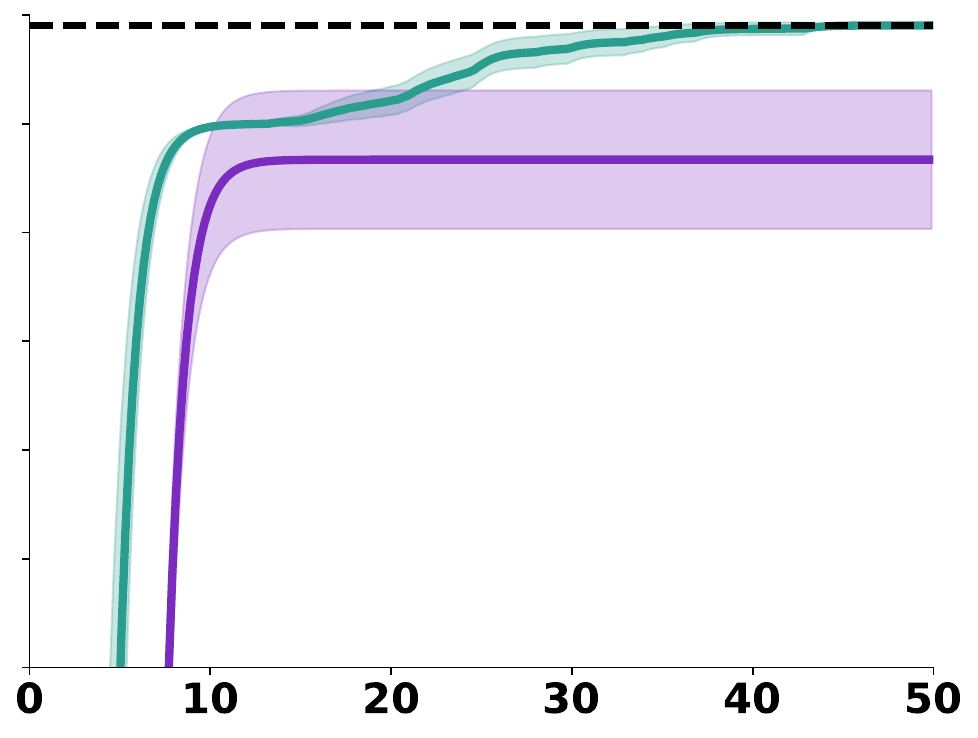}
    \hfill
        \includegraphics[width=0.155\linewidth]{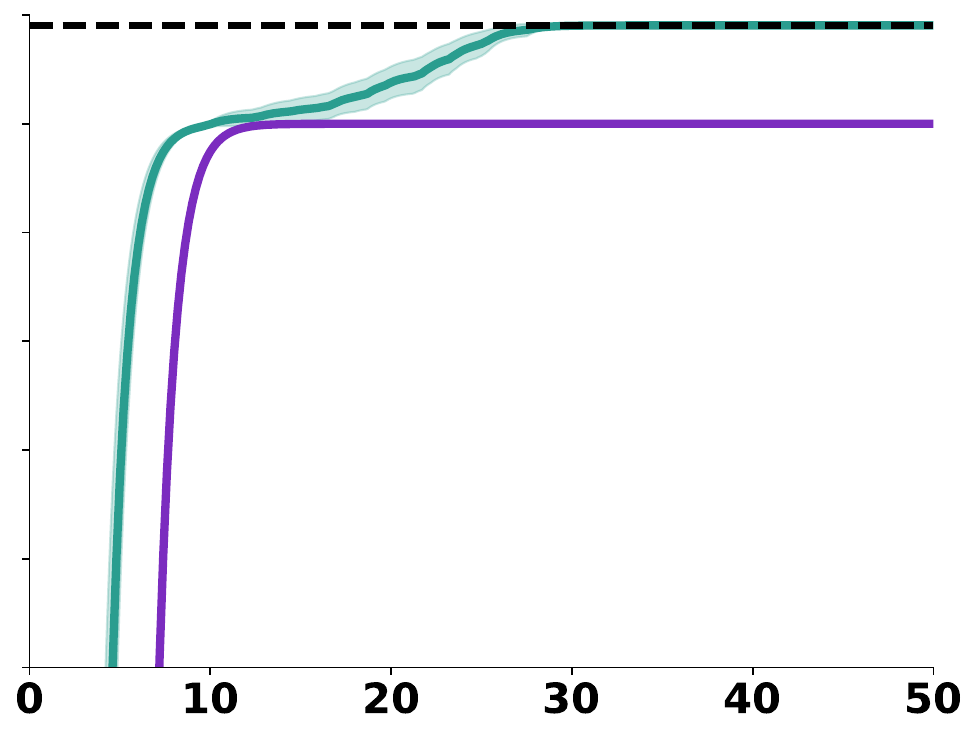}
    \hfill
        \includegraphics[width=0.155\linewidth]{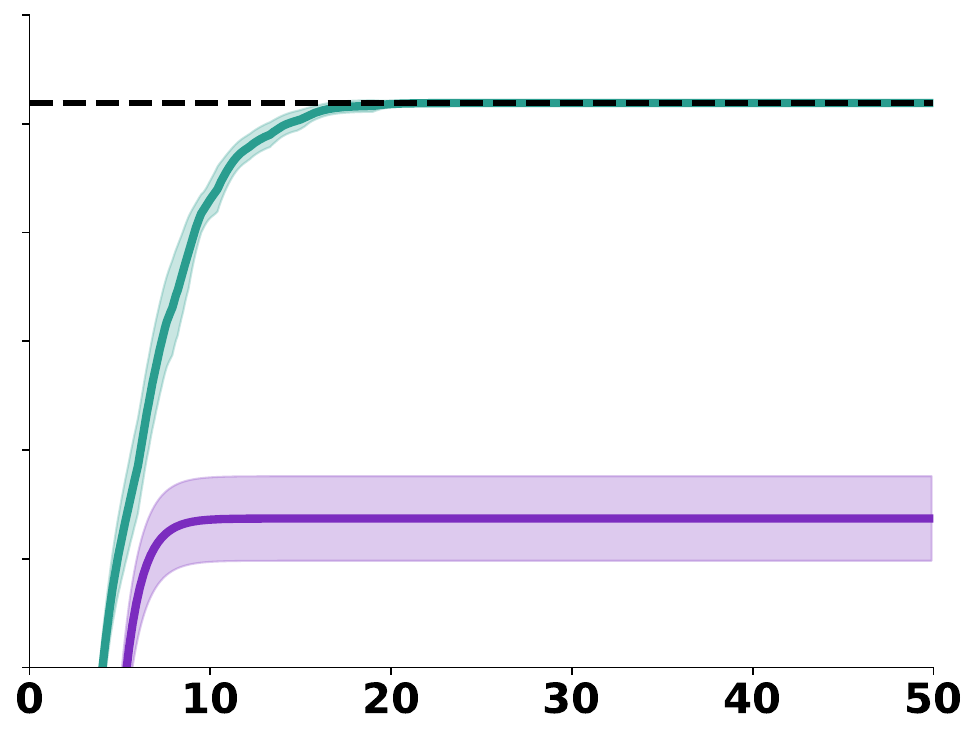}
    \\
    \centering
    {\fontfamily{cmss}\scriptsize{{Training Steps ($\times 10^3$)}}}
    \end{subfigure}
    \caption{\textbf{Comparison between \thealgo and pessimistic MBIE-EB in unsolvable Bottleneck}. With deterministic observability, pessimistic MBIE-EB is effective in finding the minimax-optimality. But with stochastic observability, pessimistic MBIE-EB due to premature pessimism with respect to state-actions that their reward can be observed upon enough exploration fails. On the other hand, due to exploring to observe the reward, \thealgo is robust against the stochasticity in the observability of the rewards.}
    \label{fig:pess_mbie_eb_unsolv}
\end{subfigure}
\caption{\textbf{Verifying the importance of the third innovation: explore to observe rewards.}}
\vspace{-12pt}
\end{figure}
\subsection{Importance of the Second Innovation: Pessimism Instead of Optimism}
\label{appendix:mbie_eb_exten}
MBIE-EB uses the optimistic initial action-values for state-actions that their counts are zero. That is, in Mon-MDPs for any joint state-action pairs $(s, a) \equiv (\env{s}, \mon{s}, \env{a}, \mon{a})$ that any of $N(\env{s}, \env{a}), N(\mon{s}, \mon{a})$, or $N(s, a)$ is zero, $Q(s, a)$ would be shortcut to an optimistic value. This approach contrasts with the \cref{eq:r_opt}'s pessimism, when $N(\env{s}, \env{a})$ is zero. Therefore, first we show that on Bottleneck environment, when the reward of all $\bot$ cells is observable, MBIE-EB is effective because upon enough visitation resulting from the optimism, the underlying environment reward will finally be observed. \cref{fig:ablation_solv} shows the MBIE-EB's efficacy in solvable Mon-MDPs. However, we then show the lack of necessary pessimism when the reward of all $\bot$ cells is unobservable make MBIE-EB ineffective. The MBIE-EB's inefficacy in unsolvable Mon-MDPs is due to the fact that the optimism never decreases, hence the agent would continue visiting state-actions with unobservable rewards for ever. \cref{fig:ablation_never} shows the MBIE-EB's failure in the unsolvable Bottleneck.
\subsection{Importance of the Third Innovation: Explore to Observe Rewards.}
\cref{appendix:mbie_eb_exten} showed the excessive optimism of MBIE-EB hinders its performance in unsolvable Mon-MDPs. Now, we demonstrate without exploring to observe rewards, using pessimism alone is still prone to failure. We extend MBIE-EB to Mon-MDPs and for all state-actions $(s, a) \equiv (\env{s}, \mon{s}, \env{a}, \mon{a})$, when $N(\env{s}, \env{a})$ is zero, we use $\env{\rmin}$. This approach is effective in Mon-MDPs with deterministic observability ($\rho =1$). \cref{fig:pess_mbie_eb_solv} shows the efficacy of pessimistic MBIE-EB in solvable Bottleneck with deterministic observability. Pessimistic MBIE-EB is also effective in \emph{unsolvable} Mon-MDPs with deterministic observability, where only one visit to each state-action concludes whether the environment reward is observable or not. \cref{fig:pess_mbie_eb_unsolv}, (100\%) row verifies this claim for pessimistic MBIE-EB on unsolvable Bottleneck with 100\% observability. However, if the observability is stochastic then premature pessimism hinders the pessimistic MBIE-EB's performance because the algorithm has become pessimistic with respect to state-actions that otherwise could have observed their rewards eventually. This shortcoming of the pessimistic MBIE-EB compared to \thealgo that explore to observe the rewards and then uses pessimism \emph{with high confidence} is evident in \cref{fig:pess_mbie_eb_unsolv}, (5\%) row.

    \section{Pseudocode}
\begin{algorithm}[tbh]
  \caption{\thealgo}
  \label{alg:mon_mbie_eb}
  \begin{algorithmic}[1]
  \REQUIRE $\model{Q}_\text{opt-init}, \model{Q}_\text{obs-init}$
  \STATE $\model{Q} \gets 0$
  \STATE $\kappa \gets 0$
    \FOR{episodes $k\coloneqq 1, 2, 3, \dots$}
        \IF{$\kappa \leq \kappa^*(k)$}
        
        \STATE \emph{\footnotesize{// Observe Episode}}
            \STATE Compute $\model{Q}_{obs}$ by performing 50 steps of value iteration on \cref{eq:r_obs}, or use $\model{Q}_\text{obs-init}$ if counts are zero.
            \STATE $\model{Q} \gets \model{Q}_{obs}$
            \STATE $\kappa \gets \kappa + 1$
            
        \ELSE
        \STATE \emph{\footnotesize{// Optimize Episode}}
            \STATE Compute $\model{Q}_{opt}$ by performing 50 steps of value iteration on \cref{eq:r_opt}, or use $\model{Q}_\text{opt-init}$ if counts are zero.
            \STATE $\model{Q} \gets \model{Q}_{opt}$
        \ENDIF

        \FOR{steps $h \coloneqq 1, 2, \dots, H$}
            \STATE Follow the greedy policy with respect to $\model{Q}$.
        \ENDFOR
    \ENDFOR
  \end{algorithmic}
\end{algorithm}
\begin{algorithm}[t]
	\caption{Directed-E$^2$}
    \label{alg:dee}
    \begin{algorithmic}[1]
    \REQUIRE $Q_{\text{int}}, \Psi_{\text{int}}$
    \STATE $Q \gets Q_{\text{int}}$, $\Psi \gets \Psi_{\text{int}}$
    \STATE $t \gets 0$ \emph{\footnotesize{// Total timesteps}}
    \FOR{episodes $k \coloneqq 1, 2, \dots$}
        \FOR{steps $h \coloneqq 1, 2, \dots$}
            \STATE $(s^\text{g}, a^\text{g}) \gets \arg\min_{s,a}N(s, a)$
            \STATE $\beta_t \gets \nicefrac{\log t}{N\left(s^\text{g}, a^\text{g}\right)}$
            \STATE \textbf{if} $\beta_t > \bar\beta$ \textbf{then}
                $A_h \gets \arg\max_a \Psi(a \mid S_h, s^\text{g}, a^\text{g})$ \emph{\footnotesize{// Explore}}
                \STATE \textbf{else} $A_h \gets \arg\max_a Q(S_h, a)$ \emph{\footnotesize{// Exploit}}
            \STATE $t \coloneqq t + 1$
            \STATE Perform action $A_h$
        \ENDFOR 
    \ENDFOR
    \end{algorithmic}
\end{algorithm}
%
%
%
    \section{Low-Level Implementation Specifics}
The minimum non-zero probability of observing the reward $\rho$ is by default one unless otherwise states. In experiments that include $N$-Supporters or $N$-Experts, $N$ is equal to four, and the number of levels for Level-Up is three. Hyperparameters of Directed-E$^2$ consist of: $Q_{\text{int}}$ the initial action-values, $\Psi_0$ the initial visitation-values, $r_0$ the initial values of the environment reward model, $\bar{\beta}$ goal-conditioned threshold specifying when a joint state-action pair should be visited using visitation-values, $\upalpha$ the learning rate to update each $Q$ or $S$ incrementally, and discount factor $\gamma$ that is held fixed to $0.99$.  These values are directly reported from \citet{parisi2024beyond}. \thealgo's hyperparameters are set per environment and do not change across monitors. The same applies to Directed-E$^2$, but \citet{parisi2024beyond} recommended to anneal the learning rate for $N$-Supporters and $N$-Experts. KL-UCB has no closed form solution. We computed it using the Newton's method. The stopping condition for the Newton's method was chosen 50 iterations or the accuracy of at least $10^{-5}$ between successive solutions, which one happened first. We ran all the experiments on a SLURM-based cluster, using 32 Intel E5-2683 v4 Broadwell @ 2.1GHz CPUs. Thirty parallel runs took about an hour on a 32 core CPU.
\subsection{Hyperparameters} 
\label{appendix:hyperparameters}
Let $\c{U}$ denote the uniform distribution and $x \mapsto y$ the linear annealing from $x$ to $y$. \cref{table:hps} lists the hyperparameters used.
\begin{table}[H]
  \centering
  \caption{The set of hyperparameters.}
  \vspace{-5pt}
  \label{table:hps}
\begin{subtable}[tbh]{\linewidth}
    \caption{\textbf{Hyperparameters of \thealgo across experiments.}}
    \label{table:hp1}
    \centering
    \begin{tabular}{ |m{6em} m{6em} c c c c c| }     
    \hline
    \multicolumn{7}{|c|}{Unknown monitor} \\
    \hline
    Experiment & Environment & $\model Q_{\text{opt-init}}$ & $\model Q_{\text{obs-init}}$ & $\kappa^{*}(k)$ & $\beta^{\text{KL-UCB}}$ & $\beta^{\text{obs}}, \beta, \mon\beta, \env\beta$\\
    \hline
    \multirow{4}{6em}{\cref{fig:48-benchmarks}} & \textbf{Empty} & 1 & 100 & $\log_{1.005}k$ & $5 \times 10^{-2}$ & $5 \times 10^{-4}$\\
    & \textbf{Hazard} & 1 & 100 & $\log_{1.005}k$ & $5 \times 10^{-2}$ & $5 \times 10^{-4}$ \\ 
    & \textbf{One-Way} & 1 & 100 & $\log_{1.005}k$ & $5 \times 10^{-2}$ & $5 \times 10^{-4}$\\
    & \textbf{River-Swim} & 30 & 100 & $\log_{1.005}k$ & $5 \times 10^{-2}$ & $5 \times 10^{-4}$\\
    \hline
    \multirow{1}{6em}{\cref{appendix:never_obsrv}} & \textbf{Bottleneck} & 1 & 100 & $\log_{1.005}k$ & $5 \times 10^{-2}$ & $5 \times 10^{-4}$\\ 
    \hline
    \multirow{1}{6em}{\cref{appendix:stochas_obsrv}} & \textbf{Bottleneck} & 1 & 100 & $\log_{1.005}k$ & $5 \times 10^{-2}$ & $5 \times 10^{-4}$\\ 
    \hline
    \end{tabular}
    \\[3 pt]
    \begin{tabular}{ |m{6em} m{6em} c c c c c| }     
    \hline
    \multicolumn{7}{|c|}{Known monitor} \\
    \hline
    Experiment & Environment  & $\model Q_{\text{opt-init}}$ & $\model Q_{\text{obs-init}}$ & $\kappa^*(k)$ & $\env\beta$ & $\beta$\\
    \hline
    \multirow{1}{6em}{\cref{appendix:known_monitor}} & \textbf{Bottleneck} & 1 & 100 & $\log_{1.005}k$ & $5 \times 10^{-4}$ & $5 \times 10^{-4}$\\ 
    \hline
    \end{tabular}
\end{subtable} \\[5pt]
\begin{subtable}[tbh]{\linewidth}
    \caption{\textbf{Hyperparameters of MBIE-EB across experiments.}}
    \label{table:hp2}
    \centering
    \begin{tabular}{ |m{6em} m{6em} c c| } 
    \hline
    Experiment & Environment & $\model Q_{\text{opt-init}}$ & $\beta, \mon\beta, \env\beta$\\
    \hline
    \multirow{1}{6em}{\cref{appendix:ablation}} & \textbf{Bottleneck} & 50 & $5 \times 10^{-4}$ \\ 
    \hline
    \end{tabular}
\end{subtable} \\[5pt]
\begin{subtable}[tbh]{\linewidth}
    \caption{\textbf{Hyperparameters of Directed-E$^2$ across experiments.}}
    \label{table:hp3}
    \centering
    \begin{tabular}{ |m{6em} m{6em} c c c c c| }     
    \hline
    \multicolumn{7}{|c|}{(Annealing for $N$-Supporters and $N$-Experts)} \\
    \hline
    Experiment & Environment & $Q_{\text{int}}$ & $\Psi_{\text{int}}$ & $r_0$ & $\bar{\beta}$ & $\upalpha$  \\
    \hline
    \multirow{4}{6em}{\cref{fig:48-benchmarks}} & \textbf{Empty} & -10 & 1 & $\c{U}[-0.1, 0.1]$ & $10^{-2}$ & $1 (1 \mapsto 0.1)$  \\
    & \textbf{One-Way} & -10 & 1 & $\c{U}[-0.1, 0.1]$ & $10^{-2}$ & $1 (1 \mapsto 0.1)$  \\
    & \textbf{Hazard} & -10 & 1 & $\c{U}[-0.1, 0.1]$ & $10^{-2}$ & $0.5 (0.5 \mapsto 0.1)$  \\ 
    & \textbf{River-Swim} & -10 & 1 & $\c{U}[-0.1, 0.1]$ & $10^{-2}$ & $0.5 \mapsto 0.05$  \\
    \hline
    \multirow{1}{6em}{\cref{appendix:stochas_obsrv}} & \textbf{Bottleneck} & -10 & 1 & $-10$ & $10^{-2}$ & $1 (1 \mapsto 0.1)$  \\
    \hline
    \multirow{1}{6em}{\cref{appendix:stochas_obsrv}} & \textbf{Bottleneck} & -10 & 1 & $-10$ & $10^{-2}$ & $1 (1 \mapsto 0.1)$  \\
    \hline
    \end{tabular}
\end{subtable}
\end{table}

    %
    \section{Auxiliary Propositions}
\begin{lemma}
    \label{lemma:x-lnx}
    For any $a, x > 0$, $x \geq 2a\ln{a}$ implies $x \geq a\ln{x}$.
\end{lemma}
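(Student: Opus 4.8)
Set $\phi(x) \coloneqq x - a\ln x$ for $x>0$; the goal is to show $\phi(x)\ge 0$ whenever $x\ge 2a\ln a$. First I would record the elementary shape of $\phi$: since $\phi'(x) = 1 - a/x$, $\phi$ is decreasing on $(0,a)$ and increasing on $(a,\infty)$, with global minimum value $\phi(a) = a - a\ln a = a(1-\ln a)$. Consequently, if $a\le e$ then $\phi(a)\ge 0$, hence $\phi\ge 0$ on all of $(0,\infty)$ and the implication holds trivially (indeed for every $x>0$, with no need for the hypothesis). So it remains to treat $a>e$.

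For $a>e$ we have $\ln a>1$, so $2a\ln a > 2a > a$, i.e. the point $2a\ln a$ lies to the right of the minimizer $x=a$. Since $\phi$ is increasing on $(a,\infty)$, it suffices to verify $\phi(2a\ln a)\ge 0$, because then $\phi(x)\ge \phi(2a\ln a)\ge 0$ for all $x\ge 2a\ln a$. Expanding, $\phi(2a\ln a) = 2a\ln a - a\bigl(\ln 2 + \ln a + \ln\ln a\bigr) = a\bigl(\ln a - \ln 2 - \ln\ln a\bigr)$, so the desired inequality reduces (after dividing by $a>0$ and rearranging) to $\ln a - \ln\ln a \ge \ln 2$, equivalently $\ln\!\bigl(a/\ln a\bigr)\ge \ln 2$, equivalently $a \ge 2\ln a$.

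Finally I would establish the auxiliary bound $a\ge 2\ln a$ for all $a>0$ by the same one-variable argument: $h(a)\coloneqq a - 2\ln a$ has $h'(a) = 1 - 2/a$, so $h$ attains its global minimum at $a=2$, where $h(2) = 2 - 2\ln 2 = 2(1-\ln 2) > 0$; hence $h>0$ everywhere, which in particular gives $a\ge 2\ln a$ in the regime $a>e$ and completes the proof. I do not anticipate a genuine obstacle here — it is a routine calculus estimate — the only point requiring a little care is the case split: the monotonicity step of the middle paragraph needs $2a\ln a$ to exceed the minimizer $x=a$, which forces one to peel off the regime $a\le e$ (where the conclusion is in any case unconditional) before running the argument.
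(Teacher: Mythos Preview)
Your argument is correct. Both your proof and the paper's hinge on the same auxiliary inequality $t > 2\ln t$ for all $t>0$ (proved identically, by minimising $t-2\ln t$ at $t=2$), but the way you get from there to the conclusion is organised differently. You fix $a$, study $\phi(x)=x-a\ln x$, peel off the easy regime $a\le e$ where $\phi\ge 0$ unconditionally, and for $a>e$ use monotonicity of $\phi$ on $(a,\infty)$ to reduce everything to checking the single boundary value $\phi(2a\ln a)\ge 0$, which unwinds algebraically to $a\ge 2\ln a$. The paper instead substitutes $z=x/a$ and splits on whether $a\ge z$ or $a<z$: in the first case $\ln(za)\le 2\ln a\le z$ directly from the hypothesis, and in the second $\ln(za)<2\ln z<z$ from the auxiliary inequality. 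Your route is a cleaner ``reduce to the boundary via convexity'' template and makes explicit why the hypothesis is only needed for $a>e$; the paper's is a shorter two-line case split once the substitution is made. Either is perfectly adequate for this elementary estimate.
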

\begin{proof}
First we prove that for $\forall x > 0$: $x > 2\ln{x}$. Consider the function $y = x - 2\ln{x}$. It is enough to prove that $y > 0$ is always true on its domain. we have that
\begin{align*}
    \frac{dy}{dx} = 1 - \frac{2}{x}, \qquad
    \frac{d^2y}{d^2x} = \frac{2}{x^2} > 0.
\end{align*}
Therefore, $y = x - 2\ln{x}$ is convex. Also by
\begin{equation*}
    \frac{dy}{dx} = 1 - \frac{2}{x} = 0 \Rightarrow x = 2,
\end{equation*}
the minimum of $y = x - 2\ln{x} = 2 - 2\ln{2} > 0$. Thus, $x > 2\ln{x}, \forall x > 0$. Back to the inequalities in the lemma, let $z = \frac xa$, then we need to prove that $z \geq 2\ln{a}$ implies $z \geq \ln{(z\cdot a)}$.
\begin{itemize}
    \item If $a \geq z$, then
    \begin{equation*}
        \ln{(z\cdot a)} = \ln{z} + \ln{a} \leq 2 \ln{a} \leq z.
    \end{equation*}
    Which is by the assumption of $z \geq 2\ln{a}$.
    \item  If $a < z$, then
    \begin{equation*}
        \ln{a} < \ln{z} \Rightarrow \ln{(z \cdot a)} \leq 2\ln{z} < z.
    \end{equation*}
    which is by our proof in the beginning, where we showed $z \geq 2\ln{z}, \forall z > 0$. \qedhere
\end{itemize}
\end{proof}
\begin{lemma}
    \label{lemma:summation}
        Let $\Omega$ be an outcome space, and each of $(X_i)^n_{i = 1}$ and $(Y_i)^n_{i = 1}$ be $n$ random variables on $\Omega$. It holds that:
        \begin{equation*}
            \left\{\sum_{i=1}^{n}X_i \geq \sum_{i=1}^{n}Y_i \right\} \subseteq \left\{\bigcup^n_{i = 1}(X_i \geq Y_i)\right\}.
        \end{equation*}
    \end{lemma}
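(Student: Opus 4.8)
The plan is to argue pointwise on $\Omega$ via the contrapositive of the membership condition. Fix an arbitrary outcome $\omega$ lying in the left-hand event, i.e.\ assume $\sum_{i=1}^n X_i(\omega) \ge \sum_{i=1}^n Y_i(\omega)$, and show $\omega$ belongs to $\bigcup_{i=1}^n \{X_i \ge Y_i\}$, i.e.\ that $X_i(\omega) \ge Y_i(\omega)$ for at least one index $i$.

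First I would suppose, toward a contradiction, that $\omega$ is in \emph{none} of the events in the union, so that $X_i(\omega) < Y_i(\omega)$ holds for \emph{every} $i \in \{1,\dots,n\}$. Then I would sum these $n$ strict inequalities over $i$; since the sum is finite and $n \ge 1$, strict inequality is preserved, yielding $\sum_{i=1}^n X_i(\omega) < \sum_{i=1}^n Y_i(\omega)$. This contradicts the assumption that $\omega$ lies in the left-hand event, so the supposition fails and some index $i$ with $X_i(\omega) \ge Y_i(\omega)$ must exist; hence $\omega \in \bigcup_{i=1}^n \{X_i \ge Y_i\}$. Since $\omega$ was arbitrary, the claimed set inclusion follows.

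There is essentially no hard step here: the only thing to be slightly careful about is that summing $n$ strict inequalities gives a strict inequality, which requires $n \ge 1$ (implicit in the statement, since otherwise the right-hand side is empty while the left-hand side is all of $\Omega$). An equivalent and perhaps cleaner way to phrase the same argument is to take complements: $\bigl(\bigcup_{i=1}^n \{X_i \ge Y_i\}\bigr)^{\mathsf c} = \bigcap_{i=1}^n \{X_i < Y_i\}$, and on that intersection every summand of $\sum X_i$ is strictly below the corresponding summand of $\sum Y_i$, so $\bigcap_{i=1}^n \{X_i < Y_i\} \subseteq \{\sum_{i=1}^n X_i < \sum_{i=1}^n Y_i\} = \bigl(\{\sum_{i=1}^n X_i \ge \sum_{i=1}^n Y_i\}\bigr)^{\mathsf c}$; taking complements of both ends gives exactly the stated inclusion. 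I would present whichever of the two phrasings reads more smoothly in context.
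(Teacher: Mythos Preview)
Your proof is correct and follows essentially the same approach as the paper: both argue by contradiction at the level of a single outcome $\omega$, assuming $\omega$ lies in the left-hand event but in none of the events $\{X_i \ge Y_i\}$, then summing the resulting strict inequalities $X_i(\omega) < Y_i(\omega)$ to contradict $\sum_i X_i(\omega) \ge \sum_i Y_i(\omega)$. Your presentation is arguably a bit cleaner (you make the pointwise argument explicit and also note the equivalent complement formulation), but the underlying idea is identical.
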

\begin{proof}
        Proof by contradiction.
        
        Suppose $\left\{\sum_{i=1}^{n}X_i \geq \sum_{i=1}^{n}Y_i \right\} \supset \left\{\bigcup^n_{i = 1}(X_i \geq Y_i)\right\}$, then there exists an $\omega \in \Omega$ that $\sum_{i=1}^{n}X_i(\omega) \geq \sum_{i=1}^{n}Y_i(\omega)$ but $X_1(\omega) < Y_1(\omega), X_2(\omega) < Y_2(\omega), \cdots X_n(\omega) < Y_n(\omega)$ resulting in $\sum_{i=1}^{n}X_i(\omega) < \sum_{i=1}^{n}Y_i(\omega)$ which is a contradiction. \qedhere
    \end{proof}
\begin{corollary}
    \label{corollary:union_bound}
        Let $(X_i)^n_{i = 1}$ and $(Y_i)^n_{i = 1}$ be $n$ random variables on probability space $(\Omega, \mathcal{F}, \mathbb{P})$. It holds that:
        \begin{equation*}
            \prob{\sum_{i=1}^{n}X_i \geq \sum_{i=1}^{n}Y_i} \leq \sum_{i = 1}^{n}\prob{X_i \geq Y_i}.
        \end{equation*}
        \begin{proof}
            Using~\cref{lemma:summation} and due to monotonicity of measures we have
            \begin{equation*}
            \prob{\sum_{i=1}^{n}X_i \geq \sum_{i=1}^{n}Y_i } \leq \prob{\bigcup^n_{i = 1}(X_i \geq Y_i)}.
        \end{equation*}
        By applying the union bound the inequality is obtained.
        \end{proof}
    \end{corollary}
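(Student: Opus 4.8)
The plan is to deduce the probabilistic inequality directly from the deterministic (set-theoretic) event inclusion already established in \cref{lemma:summation}, combined with two elementary properties of any probability measure: monotonicity and finite subadditivity (the union bound). No new probabilistic machinery is needed; once the relevant events are identified, the entire argument is a short chain of inequalities.

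Concretely, I would proceed in three steps. First, invoke \cref{lemma:summation} to obtain the inclusion of events
\begin{equation*}
\left\{\sum_{i=1}^{n}X_i \geq \sum_{i=1}^{n}Y_i \right\} \subseteq \bigcup_{i=1}^{n}\left\{X_i \geq Y_i\right\}.
\end{equation*}
Second, apply monotonicity of $\mathbb{P}$: since the left-hand event is contained in the right-hand event, its probability is no larger, so
\begin{equation*}
\prob{\sum_{i=1}^{n}X_i \geq \sum_{i=1}^{n}Y_i} \leq \prob{\bigcup_{i=1}^{n}\left\{X_i \geq Y_i\right\}}.
\end{equation*}
Third, bound the probability of the finite union by the sum of the individual probabilities via the union bound (finite subadditivity), yielding $\sum_{i=1}^{n}\prob{X_i \geq Y_i}$. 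Chaining these three relations gives exactly the claimed inequality.

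The only point requiring any care — and essentially the closest thing to an obstacle, since everything else is boilerplate — is verifying that the event inclusion supplied by \cref{lemma:summation} is applied correctly. Its content is the contrapositive: if every coordinate satisfies the strict inequality $X_i(\omega) < Y_i(\omega)$, then summing forces $\sum_i X_i(\omega) < \sum_i Y_i(\omega)$, so such an $\omega$ cannot lie in the left-hand event; equivalently, any $\omega$ in the left-hand event must satisfy $X_i(\omega) \geq Y_i(\omega)$ for at least one index $i$, placing it in the union. Because \cref{lemma:summation} already proves precisely this inclusion, I would keep the write-up to the three-line chain above and invoke the lemma and the union bound by name rather than re-deriving them.
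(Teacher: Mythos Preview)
Your proposal is correct and follows essentially the same approach as the paper: invoke \cref{lemma:summation} for the event inclusion, use monotonicity of the probability measure, and then apply the union bound. The only difference is that you spell out the three steps more explicitly, whereas the paper compresses them into two sentences.
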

\begin{lemma}
\label{lemma:general_closeness}
    Let $M_1 = \angle{\c{S}, \c{A}, r_1, p_1, \mon{f}, \gamma}$ and $M_2 = \angle{\c{S}, \c{A}, r_2, p_2, \mon{f}, \gamma}$ be two Mon-MDPs. Assume that
    \begin{equation*}
        -\env{\rmax} \leq \env{r}_1, \env{r}_2 \leq \env{\rmax} \qquad \text{and} \qquad -\mon{\rmax} \leq \mon{r}_1, \mon{r}_2 \leq \mon{\rmax}.
    \end{equation*}
    Additionally assume that for all joint state-action pairs $(s, a) \equiv (\env{s}, \mon{s}, \env{a}, \mon{a})$, it holds that
    \begin{align*}
    \abs{\env{r}_1(\env{s}, \env{a}) - \env{r}_2(\env{s}, \env{a})} \leq \env{\varphi}, \qquad
    \abs{\mon{r}_1(\mon{s}, \mon{a}) - \mon{r}_2(\mon{s}, \mon{a})} \leq \mon{\varphi}, \qquad
    \oneNorm{p_1(\cdot \mid s, a) - p_2(\cdot \mid s, a)} & \leq \varphi.
    \end{align*}
    Then for any stationary deterministic policies $\pi$, it holds that
    \begin{equation*}
        \abs{Q_1^\pi(s, a) - Q_2^\pi(s, a)} \leq \frac{\env{\varphi} + \mon{\varphi} + 2\varphi\gamma(\env{\rmax} + \mon{\rmax})}{(1 - \gamma)^2}.
    \end{equation*}
\end{lemma}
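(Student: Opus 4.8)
The plan is to bound the difference $\abs{Q_1^\pi(s,a) - Q_2^\pi(s,a)}$ by a standard simulation-lemma / telescoping argument adapted to the Mon-MDP reward decomposition $R = \env{R} + \mon{R}$. First I would write the Bellman equation for each $Q_i^\pi$: for a stationary deterministic policy $\pi$,
\begin{equation*}
Q_i^\pi(s,a) = \env{r}_i(\env{s},\env{a}) + \mon{r}_i(\mon{s},\mon{a}) + \gamma \sum_{s'} p_i(s' \mid s,a)\, Q_i^\pi(s', \pi(s')).
\end{equation*}
Subtracting the two equations and using the triangle inequality, the reward terms contribute at most $\env{\varphi} + \mon{\varphi}$. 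For the transition terms I would split
\begin{align*}
&\gamma\Bigl| \sum_{s'} p_1(s'\mid s,a) Q_1^\pi(s',\pi(s')) - \sum_{s'} p_2(s'\mid s,a) Q_2^\pi(s',\pi(s'))\Bigr| \\
&\quad\le \gamma \sum_{s'} p_1(s'\mid s,a)\,\bigl|Q_1^\pi - Q_2^\pi\bigr|(s',\pi(s')) + \gamma\,\Bigl|\sum_{s'} (p_1 - p_2)(s'\mid s,a)\, Q_2^\pi(s',\pi(s'))\Bigr|,
\end{align*}
by adding and subtracting $\sum_{s'} p_1(s'\mid s,a) Q_2^\pi(s',\pi(s'))$.

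The second, "transition-mismatch" term I would bound by Hölder's inequality: it is at most $\gamma\, \oneNorm{p_1(\cdot\mid s,a) - p_2(\cdot\mid s,a)} \cdot \|Q_2^\pi\|_\infty \le \gamma \varphi \|Q_2^\pi\|_\infty$. Since rewards lie in $[-(\env{\rmax}+\mon{\rmax}), \env{\rmax}+\mon{\rmax}]$, the discounted return satisfies $\|Q_2^\pi\|_\infty \le (\env{\rmax}+\mon{\rmax})/(1-\gamma)$. (A mild sharpening using $\oneNorm{p_1-p_2} = 2\,\mathrm{TV}$ together with the fact that subtracting a constant from $Q_2^\pi$ does not change $\sum_{s'}(p_1-p_2) Q_2^\pi$ lets one center $Q_2^\pi$ and pick up the factor of $2$ in the numerator of the claimed bound — i.e. the span of $Q_2^\pi$ is at most $2(\env{\rmax}+\mon{\rmax})/(1-\gamma)$ whereas $\oneNorm{\cdot}$ is twice the TV distance, so one of these factors of two is what appears.) Collecting terms, with $\Delta \coloneqq \sup_{s,a}\abs{Q_1^\pi(s,a)-Q_2^\pi(s,a)}$, we get
\begin{equation*}
\Delta \le \env{\varphi} + \mon{\varphi} + \gamma \Delta + \frac{2\varphi\gamma(\env{\rmax}+\mon{\rmax})}{1-\gamma}.
\end{equation*}

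Finally I would solve this self-referential inequality: rearranging gives $\Delta(1-\gamma) \le \env{\varphi} + \mon{\varphi} + \frac{2\varphi\gamma(\env{\rmax}+\mon{\rmax})}{1-\gamma}$, hence
\begin{equation*}
\Delta \le \frac{\env{\varphi} + \mon{\varphi}}{1-\gamma} + \frac{2\varphi\gamma(\env{\rmax}+\mon{\rmax})}{(1-\gamma)^2} \le \frac{\env{\varphi}+\mon{\varphi}+2\varphi\gamma(\env{\rmax}+\mon{\rmax})}{(1-\gamma)^2},
\end{equation*}
where the last step uses $1/(1-\gamma) \le 1/(1-\gamma)^2$. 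One technical point to be careful about: $\Delta$ is a supremum over a finite set $\c{S}\times\c{A}$, so it is attained and finite, which justifies moving the $\gamma\Delta$ term to the left-hand side without worrying about $\infty - \infty$. I expect the main obstacle to be purely bookkeeping — getting the constants in the numerator to match exactly as stated (in particular tracking where the factor of $2$ on the $\varphi$ term comes from, via the TV/span centering trick rather than a crude $\|Q_2^\pi\|_\infty$ bound, or else absorbing a looser constant), rather than any conceptual difficulty; the argument is otherwise a routine contraction/telescoping estimate.
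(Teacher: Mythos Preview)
Your proposal is correct and follows essentially the same route as the paper: write the Bellman equations, split off the reward differences, add-and-subtract $\sum_{s'} p_1(s'\mid s,a)\,Q_2^\pi(s',\pi(s'))$ to separate a $\gamma\Delta$ term from a transition-mismatch term, bound the latter via $\oneNorm{p_1-p_2}\cdot\|V_2^\pi\|_\infty$, and solve the resulting contraction inequality. Your parenthetical about the TV/span centering trick is a bit tangled---the crude H\"older bound $\gamma\varphi\|Q_2^\pi\|_\infty \le \gamma\varphi(\env{\rmax}+\mon{\rmax})/(1-\gamma)$ already suffices (and is in fact tighter by a factor of $2$ than the stated bound, so the extra $2$ requires no further justification)---but this is a harmless aside and does not affect correctness.
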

\begin{proof}
Let
\begin{align*}
    \Delta &\coloneqq \max_{(s, a)}\abs{Q_1^\pi(s, a) - Q_2^\pi(s, a)}, \quad \env{\Delta} \coloneqq \env{r}_1(\env{s}, \env{a}) - \env{r}_2(\env{s}, \env{a}), \quad \mon{\Delta} \coloneqq \mon{r}_1(\mon{s}, \mon{a}) - \mon{r}_2(\mon{s}, \mon{a}), \\
    \Delta^\text{p} & \coloneqq \gamma\sum_{s'}p_1\left(s' \middle\vert s, a\right)V_1^\pi\left(s'\right) - \gamma\sum_{s'}p_2\left(s' \middle\vert s, a\right)V_2^\pi\left(s'\right).
\end{align*}
Then
    \begin{align*}
       & \abs{Q_1^\pi(s, a) - Q_2^\pi(s, a)} = \abs{\Delta_1 + \mon{\Delta} + \Delta^\text{p}} 
        \leq \abs{\Delta_1} + \abs{\mon{\Delta}} + \abs{\Delta^\text{p}} \tag{Triangle inequality}\\
       & = \env{\varphi} + \mon{\varphi} + \gamma \abs{\sum_{s'}\left(p_1\left(s' \middle\vert s, a\right)V_1^\pi\left(s'\right) - p_2\left(s' \middle\vert s, a\right)V_2^\pi\left(s'\right)\right)} \\
       & = \env{\varphi} + \mon{\varphi} + \gamma \abs{\sum_{s'}\left(p_1\left(s' \middle\vert s, a\right)V_1^\pi\left(s'\right) - p_1\left(s' \middle\vert s, a\right)V_2^\pi\left(s'\right)+ p_1\left(s' \middle\vert s, a\right)V_2^\pi\left(s'\right) - p_2\left(s' \middle\vert s, a\right)V_2^\pi\left(s'\right)\right)} \\
       & = \env{\varphi} + \mon{\varphi} + \gamma \abs{\sum_{s'}\left(p_1\left(s' \middle\vert s, a\right)\left(V_1^\pi\left(s'\right) - V_2^\pi\left(s'\right)\right)+ \left(p_1\left(s' \middle\vert s, a\right) - p_2\left(s' \middle\vert s, a\right)\right)V_2^\pi\left(s'\right)\right)} \\
       & \leq \env{\varphi} + \mon{\varphi} + \gamma \abs{\sum_{s'}p_1\left(s' \middle\vert s, a\right)\left(V_1^\pi\left(s'\right) - V_2^\pi\left(s'\right)\right) }+ \gamma\abs{\sum_{s'}\left(p_1\left(s' \middle\vert s, a\right) - p_2\left(s' \middle\vert s, a\right)\right)V_2^\pi\left(s'\right)} \\
       & \leq \env{\varphi} + \mon{\varphi} + \gamma \abs{\sum_{s'}p_1\left(s' \middle\vert s, a\right)\left(V_1^\pi\left(s'\right) - V_2^\pi\left(s'\right)\right) } + \frac{2\gamma\varphi\left(\env{\rmax} + \mon{\rmax} \right)}{1 - \gamma}. 
    \end{align*}
    By taking the $\max_{(s, a)}$ from the both sides we have
    \begin{align*}
        \Delta & \leq \env{\varphi} + \mon{\varphi} + \gamma \Delta + \frac{2\gamma\varphi\left(\env{\rmax} + \mon{\rmax} \right)}{1 - \gamma} \\
        (1 - \gamma) \Delta & \leq \env{\varphi} + \mon{\varphi} + \frac{2\gamma\varphi\left(\env{\rmax} + \mon{\rmax} \right)}{1 - \gamma} \\
        \Delta & \leq \frac{\env{\varphi} + \mon{\varphi}}{1 - \gamma} + \frac{2\gamma\varphi\left(\env{\rmax} + \mon{\rmax} \right)}{(1 - \gamma)^2} 
        \leq \frac{\env{\varphi} + \mon{\varphi} + 2\gamma\varphi(\env{\rmax} + \mon{\rmax})}{(1 - \gamma)^2}.
    \end{align*}
    Since $\abs{Q_1^\pi(s, a) - Q_2^\pi(s, a)} \leq \Delta$, the proof is completed. \qedhere
\end{proof}
\begin{lemma}
\label{lemma:tau_closeness}
    Let $M_1 = \angle{\c{S}, \c{A}, r_1, p_1, \mon{f}, \gamma}$ and $M_2 = \angle{\c{S}, \c{A}, r_2, p_2, \mon{f}, \gamma}$ be two Mon-MDPs. Assume that:
    \begin{equation*}
        -\env{\rmax} \leq \env{r}_1, \env{r}_2 \leq \env{\rmax} \qquad \text{and} \qquad -\mon{\rmax} \leq \mon{r}_1, \mon{r}_2 \leq \mon{\rmax}.
    \end{equation*}
    Suppose further that for all joint state-action pairs $(s, a) \equiv (\env{s}, \mon{s}, \env{a}, \mon{a})$, it holds
    \begin{equation*}
    \abs{\env{r}_1(\env{s}, \env{a}) - \env{r}_2(\env{s}, \env{a})} \leq \env{\varphi}, \qquad
    \abs{\mon{r}_1(\mon{s}, \mon{a}) - \mon{r}_2(\mon{s}, \mon{a})} \leq \mon{\varphi}, \qquad
    \oneNorm{p_1(\cdot \mid s, a) - p_2(\cdot \mid s, a)} \leq \varphi.
    \end{equation*}
    There exists a constant $C$ that for any $0 \leq \varepsilon \leq \frac{(\env{\rmax} + \mon{\rmax})}{1 - \gamma}$, and any stationary policy $\pi$, if $\env{\varphi} = \mon{\varphi} = \varphi = C\frac{\varepsilon(1 - \gamma)^2}{\env{\rmax} + \mon{\rmax}}$, then
    \begin{equation*}
        \abs{Q_1^\pi(s, a) - Q_2^\pi(s, a)} \leq \varepsilon.
    \end{equation*}
\end{lemma}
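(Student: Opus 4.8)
The plan is to deduce this directly from \cref{lemma:general_closeness}, which already bounds $\abs{Q_1^\pi(s, a) - Q_2^\pi(s, a)}$ in terms of the three perturbation radii $\env{\varphi}$, $\mon{\varphi}$, $\varphi$. Setting all three equal to a common value $\varphi$, that bound reads
\[
\abs{Q_1^\pi(s, a) - Q_2^\pi(s, a)} \le \frac{2\varphi + 2\gamma\varphi(\env{\rmax} + \mon{\rmax})}{(1-\gamma)^2} = \frac{2\varphi\bigl(1 + \gamma(\env{\rmax} + \mon{\rmax})\bigr)}{(1-\gamma)^2},
\]
so it suffices to pick $\varphi$ making the right-hand side at most $\varepsilon$.

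Next I would substitute the prescribed $\varphi = C\frac{\varepsilon(1-\gamma)^2}{\env{\rmax} + \mon{\rmax}}$. The factor $(1-\gamma)^2$ cancels and the right-hand side collapses to $2C\varepsilon\,\frac{1 + \gamma(\env{\rmax}+\mon{\rmax})}{\env{\rmax}+\mon{\rmax}} = 2C\varepsilon\bigl(\tfrac{1}{\env{\rmax}+\mon{\rmax}} + \gamma\bigr)$. In the theoretical setting $\env{\rmax} = \mon{\rmax} = 1$ and $\gamma < 1$, so this is at most $2C\varepsilon(\tfrac12 + 1) = 3C\varepsilon$; hence any $C \le \tfrac13$ yields $\abs{Q_1^\pi(s,a) - Q_2^\pi(s,a)} \le \varepsilon$. (More generally $C = 1/\bigl(2(\tfrac{1}{\env{\rmax}+\mon{\rmax}} + 1)\bigr)$ works for arbitrary fixed reward bounds.) The key point to track is that $C$ depends only on $\gamma$ and the fixed reward ranges, not on $\varepsilon$, the policy $\pi$, or the state-action pair $(s,a)$.

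The only thing to verify beyond this algebra is that the chosen $\varphi$ is a legitimate perturbation radius for the hypotheses of \cref{lemma:general_closeness} — in particular the $\ell_1$ transition perturbation cannot exceed $2$. This is exactly where the restriction $0 \le \varepsilon \le \frac{\env{\rmax} + \mon{\rmax}}{1-\gamma}$ is used: it forces $\varphi = C\frac{\varepsilon(1-\gamma)^2}{\env{\rmax}+\mon{\rmax}} \le C(1-\gamma) < 1 < 2$, so the assumed bounds are consistent and non-vacuous. There is no real obstacle; the lemma is just \cref{lemma:general_closeness} recast in the "calibration" form (guaranteeing $\varepsilon$-accuracy for a prescribed perturbation radius) that the sample-complexity argument will repeatedly invoke, and the main care is bookkeeping of the constant.
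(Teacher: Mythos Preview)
Your proposal is correct and follows essentially the same approach as the paper: invoke \cref{lemma:general_closeness} with $\env{\varphi}=\mon{\varphi}=\varphi$, reduce the bound to $2\varphi\frac{1+\gamma(\env{\rmax}+\mon{\rmax})}{(1-\gamma)^2}$, and solve for the constant under $\env{\rmax}=\mon{\rmax}=1$. The paper simply fixes $C=\tfrac16$ (a particular choice satisfying your $C\le\tfrac13$), and does not spell out the $\varphi\le 2$ sanity check you added, but otherwise the arguments are the same.
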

\begin{proof}
    Using~\cref{lemma:general_closeness}, we show that $\frac{\env{\varphi} + \mon{\varphi} + 2\varphi\gamma(\env{\rmax} + \mon{\rmax})}{(1 - \gamma)^2} = 2\varphi\frac{1 + \gamma(\env{\rmax} + \mon{\rmax})}{(1 - \gamma)^2} \leq \varepsilon,$ which yields $\varphi \leq \frac{\varepsilon(1 - \gamma)^2}{2\left(1 + \gamma(\env{\rmax} + \mon{\rmax})\right)}$. By assumption that $\env{\rmax} = \mon{\rmax} = 1$ and choosing $C = \frac 16$, we have $\varphi = \frac{\varepsilon(1 - \gamma)^2}{6} \leq \frac{\varepsilon(1 - \gamma)^2}{2\left(1 + \gamma(\env{\rmax} + \mon{\rmax})\right)}$. \qedhere
\end{proof}
\begin{lemma}[Chernoff-Hoeffding's inequality]
    For $(X_i)_{i = 1}^{n}$ independent identically distributed samples on probability space $(\Omega, \mathcal{F}, \mathbb{P})$, where $X_i \in [a_i, b_i]$ for all $i$ and $\epsilon > 0$, we have
    \begin{equation*}
        \prob{\EV{X_1}{} - \frac 1n \sum_{i=1}^{n} X_i \geq \epsilon} \leq \exp{\left(- \frac{2n^2\epsilon^2}{\sum_{i = 1}^{n}(b_i - a_i)^2}\right)}.
    \end{equation*}
\end{lemma}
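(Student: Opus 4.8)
The plan is to prove this by the classical Chernoff (exponential Markov) argument, optimized over a free parameter. First I would introduce, for each $i$, the centered variable $Y_i \coloneqq \EV{X_i}{} - X_i$, which has mean zero and is supported on an interval of width $b_i - a_i$ (since $X_i \in [a_i, b_i]$). Because the $X_i$ are identically distributed, $\EV{X_1}{} = \tfrac1n\sum_i \EV{X_i}{}$, so the event in question is exactly $\left\{\sum_{i=1}^n Y_i \geq n\epsilon\right\}$. For any free parameter $s > 0$, monotonicity of $x \mapsto e^{sx}$ together with Markov's inequality gives
\begin{equation*}
\prob{\textstyle\sum_{i=1}^n Y_i \geq n\epsilon} = \prob{e^{s\sum_i Y_i} \geq e^{sn\epsilon}} \leq e^{-sn\epsilon}\,\EV{e^{s\sum_i Y_i}}{}.
\end{equation*}
By independence of the $X_i$ (hence of the $Y_i$), the moment generating function factorizes as $\EV{e^{s\sum_i Y_i}}{} = \prod_{i=1}^n \EV{e^{sY_i}}{}$, reducing the problem to bounding the MGF of each individual centered bounded variable.

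The key step is \emph{Hoeffding's lemma}: for a mean-zero random variable $Y$ supported on an interval of width $w$, one has $\EV{e^{sY}}{} \leq e^{s^2 w^2/8}$. I would establish this by convexity: since $e^{sy}$ is convex, on the support interval $[c,d]$ (with $d-c = w$) it lies below the chord, so bounding $e^{sY}$ by the linear interpolant and taking expectations (using $\EV{Y}{}=0$) yields an upper bound of the form $e^{\psi(s)}$, where $\psi$ is an explicit function with $\psi(0)=\psi'(0)=0$ and $\psi''(s)\leq w^2/4$ uniformly. A second-order Taylor expansion then gives $\psi(s)\leq s^2 w^2/8$. This convexity/Taylor-remainder analysis is the main obstacle, as it is the only genuinely non-routine inequality; everything else is bookkeeping. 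Applying it with $w = b_i - a_i$ to each factor gives $\EV{e^{sY_i}}{} \leq e^{s^2(b_i-a_i)^2/8}$.

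Combining the factorization with the per-term bound yields
\begin{equation*}
\prob{\textstyle\sum_{i=1}^n Y_i \geq n\epsilon} \leq \exp\!\left(-sn\epsilon + \tfrac{s^2}{8}\sum_{i=1}^n (b_i-a_i)^2\right).
\end{equation*}
Finally I would optimize the exponent, a quadratic in $s$, over $s > 0$. Minimizing $-sn\epsilon + \tfrac{s^2}{8}\sum_i(b_i-a_i)^2$ gives the optimal choice $s^\star = 4n\epsilon / \sum_i(b_i-a_i)^2$, and substituting back produces the exponent $-2n^2\epsilon^2 / \sum_i(b_i-a_i)^2$. This is exactly the stated bound, completing the proof.
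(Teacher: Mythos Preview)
Your argument is correct: this is the standard Chernoff--Hoeffding derivation via Markov's inequality applied to the exponential moment, independence to factor the MGF, Hoeffding's lemma $\EV{e^{sY}}{}\le e^{s^2w^2/8}$ for a centered variable on an interval of width $w$, and a final optimization over $s$. The calculus at the end is right, with $s^\star = 4n\epsilon/\sum_i(b_i-a_i)^2$ producing the exponent $-2n^2\epsilon^2/\sum_i(b_i-a_i)^2$.

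There is nothing to compare against, however: the paper states this lemma as a classical auxiliary result and offers no proof of its own. So your write-up is not an alternative route but rather a proof where the paper simply invokes the inequality. If anything, you have supplied more than the paper does here.
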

%

    \section{Proof of \cref{thm:sample_cmplx}}
\label{appendix:proof_sample_cmplx}
The complete proof essentially follows all the steps of \citet[Theorem 2]{strehl2008analysis} with additional modifications. We only provide proofs for components that are non-trivially distinct between MDPs and Mon-MDPs when the analysis of \citet[Theorem 2]{strehl2008analysis} is applied. Essentially, \cref{thm:sample_cmplx} comprises five high probability statements:
\begin{enumerate}
    \item Specifying the number of samples required to estimate the observability of the environment reward in observe episodes
    \item Optimism of $\model{Q}_{\text{obs}}$
    \item Specifying the number of samples required to find a near minimax-optimal policy in optimize episodes
    \item Optimism of $\model{Q}_{\text{opt}}$
    \item Determining the sample complexity
\end{enumerate}
For the first two steps of the proof let
\begin{equation*}
    b(s, a) = \mathbb{E}\left[\II{\rprox_{t + 1} \neq \bot} \cdot \II{N(\env{S}_t, \env{A}_t) = 0} \middle\vert S_t = s, A_t = a \right], 
\end{equation*}
denote the expected observability of the environment reward for a joint state-action $(s, a)$ that its environment reward has not been observed until timestep $t$. Note that the KL-UCB term in \cref{eq:r_obs} is an upper confidence bound on $b$. Also, define $\estimate{B}$ to be the maximum likelihood estimation of $b$.
\subsection{Specifying The Number of Samples Required to Estimate The Observability of Reward in Observe Episodes}
\label{sec:step1}
According to \citet[Lemma 2]{strehl2008analysis}, to find a policy in observe episodes that its action-values, computed using $\estimate{B}$ and $\estimate{P}$, are $\varepsilon_1$-accurate, $\estimate{B}$ and $\estimate{P}$ should be $\tau$-close to their true mean for all state-actions $(s, a)$, where $\tau=\frac{\varepsilon_1(1 - \gamma)^2}{2}$. Hence, to specify the least number of visits $m_1$ to $(s, a)$ to fulfill the $\tau$-closeness, we must have
\begin{align}
    \begin{split}
    \label{eq:tau_p}
    \oneNorm{\estimate{P}(\cdot \mid s, a) - p(\cdot \mid s, a)} 
    & \leq \tau, 
    \end{split}
    \\
    \begin{split}
    \label{eq:tau_b}
    \abs{\estimate{B}(s, a) - b(s, a)} & \leq \tau.
    \end{split}
\end{align}
Also, according to \citet{strehl2008analysis} if $(s, a)$ has been visited $m_1$ times, with probabilities at least $1  - \delta^p$ and $1 - \delta^b$,
\begin{equation*}
    \oneNorm{\estimate{P}(\cdot \mid s, a) - p(\cdot \mid s, a)} 
    \leq \sqrt{\frac{2\left[\ln{(2^{|\c{S}|} - 2) - \ln{\delta^p}}\right]}{m_1}}, \qquad
    \abs{\estimate{B}(s, a) - b(s, a)} \leq \sqrt{\frac{\ln{\frac 2 {\delta^b}}}{2m_1}}.
\end{equation*}
Hence, for \cref{eq:tau_p,eq:tau_b} to hold simultaneously with probability at $1 - \delta_1$ until $(s, a)$ is visited $m_1$ times, by setting $\delta^p = \delta^b = \frac{\delta_1}{2\abs{\c{S}}\abs{\c{A}}m_1}$ to split the failure probability equally for all state-actions until each of them has been visited $m_1$ times, it is enough to ensure $\tau$ is bigger than the length of the confidence intervals:
\begin{align}
        m_1 \geq \max \left\{\frac{8\left[\ln{(2^{\abs{\c{S}}} - 2) - \ln{\delta^p}}\right]}{\tau^2}, \frac{2\ln{\frac 2 {\delta^b}}}{\tau^2} \right\}
        & \geq \max \left\{\frac{8\left[\ln{(2^{\abs{\c{S}}} - 2) - \ln{\frac{\delta_1}{2\abs{\c{S}}\abs{\c{A}}m_1}}}\right]}{\tau^2}, \frac{2\ln{\frac {4\abs{\c{S}}\abs{\c{A}}m_1}{\delta_1}}}{\tau^2} \right\} \nonumber
        \\
        & = \frac{8\left[\ln{(2^{|\c{S}|} - 2) + \ln{\frac{2\abs{\c{S}}\abs{\c{A}}m_1}{\delta_1}}}\right]}{\tau^2}. \label{eq:m1_on_both}
    \end{align}
\cref{lemma:x-lnx} helps us to bring $m_1$ out of the right-hand side of \cref{eq:m1_on_both}. Hence,
\begin{equation}
    m_1 = \bigO \left(\frac{\abs{\c{S}}}{\tau^2} + \frac{1}{\tau^2}\ln{\frac{\abs{\c{S}}\abs{\c{A}}}{\tau\delta_1}} \right) = \bigO \left(\frac{|\c{S}|}{\varepsilon_1^2(1 - \gamma)^4} + \frac{1}{\varepsilon_1^2(1 - \gamma)^4} \ln{\frac{\abs{\c{S}}\abs{\c{A}}}{\varepsilon_1(1 - \gamma)^2\delta_1}}\right). \label{eq:bnd_m1}
\end{equation}
\cref{eq:bnd_m1} shows the fact that regardless of how infinitesimal the probability of observing the reward is, in the worst-case the difficulty of learning expected observability lies in learning the transition dynamics~\citep{kakade2003sample, szitamormax}, i.e., by the time the transition dynamics are approximately learned, the agent has approximately figured out what the probability of observing the environment reward for taking any joint action is.
\subsection{Optimism of $\model{Q}_{\text{obs}}$}%
Optimism is needed to make sure the agent has enough incentives to visit state-actions with inaccurate statistics. Suppose $m_1$ is the least number of samples required to ensure $\estimate{B}$ and $\estimate{P}$ are accurate. Therefore, the agent should be optimistic about the state-actions that have not been visited $m_1$ times yet. Let
\begin{equation*}
    Q^*_\text{obs}(s, a) = b(s, a) + \gamma \sum_{s'}p(s' \mid s, a)\max_{a'}Q^*_\text{obs}(s', a').
\end{equation*}
Now consider the first $v$ visits to $(s, a)$ during observe episodes, where $v < m_1$ and consider $\model{Q}_\text{obs}$ as
\begin{equation}
    \label{eq:q_observe_base}
    \model{Q}_\text{obs}(s, a) = \estimate{B}(s, a) + \gamma \sum_{s'}\estimate{P}(s' \mid s, a)\max_{a'}\model{Q}_\text{obs}(s', a') + \frac{\beta^\text{obs}}{\sqrt{v}}.
\end{equation}
But, $\estimate{B}(s, a)$ is zero for all state-action pairs. Because if $\estimate{B}(s, a)$ is not zero, then $\mathds{1}\left\{N(\env{S}_t, \env{A}_t) = 0 \middle\vert S_t = s, A_t = a\right\}$ must return one, which means environment reward of $(s, a)$ has not been observed until timestep $t$. Consequently, $\left\{\rprox_{t + 1} \neq \bot \middle\vert S_t = s, A_t = a \right\}$ has been zero until timestep $t$. Therefore, \cref{eq:q_observe_base} turns into
\begin{equation}
\model{Q}_\text{obs}(s, a) = \gamma \sum_{s'}\estimate{P}(s' \mid s, a)\max_{a'}\model{Q}_\text{obs}(s', a') + \frac{\beta^\text{obs}}{\sqrt{v}}.
\end{equation}
According to \citet[Lemma 7]{strehl2008analysis}, by choosing $\beta^{\text{obs}} = (1 - \gamma)^{-1} \sqrt{0.5\ln{\left( \frac{4\abs{\c{S}}\abs{\c{A}}m_1}{\delta_1}\right)}}$,
\begin{align*}
     \model{Q}_\text{obs}(s, a) = \gamma \sum_{s'}\estimate{P}(s' \mid s, a)\max_{a'}\model{Q}_\text{obs}(s', a') + \frac{\beta^{\text{obs}}}{\sqrt{v}} \geq Q^*_\text{obs}(s, a),
\end{align*}
with probability at least $1 - \frac{\delta_1}{4\abs{\c{S}}\abs{\c{A}}m_1}$. On the other hand by choosing $\beta^\text{KL-UCB} = \ln(4\abs{\c{S}}\abs{\c{A}}m_1 \delta_1)$,
\begin{equation*}
    \text{KL-UCB}(v) =  \max \{\mu \in [0, 1]: d(0, \mu) \leq \frac{\beta^\text{KL-UCB}}{v}\} \geq b(s, a),
\end{equation*}
with probability at least $1 - \frac{\delta_1}{4\abs{\c{S}}\abs{\c{A}}m_1}$. Now define the random variables $X_1$ and $X_2$ as
\begin{align*}
    X_1 \coloneqq  Q^*_\text{obs}(s, a) - \gamma \sum_{s'}\estimate{P}(s' \mid s, a)\max_{a'}\model{Q}_\text{obs}(s', a'), \qquad X_2 \coloneqq \estimate{B}(s, a)= 0.
\end{align*}
We have:
\begin{align*}
    \prob{X_1 \geq \underbrace{\frac{\beta^{\text{obs}}}{\sqrt{v}}}_{Y_1}} \leq \frac{\delta_1}{4\abs{\c{S}}\abs{\c{A}}m_1}, \qquad \prob{X_2 \geq \underbrace{\max \left\{\mu \in [0, 1]: d(0, \mu) \leq \frac{\beta^{\text{KL-UCB}}}{v}\right\}}_{Y_2}} \leq \frac{\delta_1}{4\abs{\c{S}}\abs{\c{A}}m_1}
\end{align*}
Using Corollary~\ref{corollary:union_bound} we have:
\begin{equation*}
    \prob{X_1 + X_2 \geq Y_1 + Y_2} \leq \frac{\delta_1}{2\abs{\c{S}}\abs{\c{A}}m_1}.
\end{equation*}
Therefore, with probability at least $1 - \frac{\delta_1}{2\abs{\c{S}}\abs{\c{A}}m_1}$ we must have that $X_1 + X_2 \leq Y_1 + Y_2$. Thus,
\begin{align*}
    & Q^*_\text{obs}(s, a) - \gamma \sum_{s'}\estimate{P}(s' \mid s, a)\max_{a'}\model{Q}_\text{obs}(s', a')  \leq Y_1 + Y_2 \\
    & Q^*_\text{obs}(s, a) \leq \gamma \sum_{s'}\estimate{P}(s' \mid s, a)\max_{a'}\model{Q}_\text{obs}(s', a')  + Y_1 + Y_2 \\
    & Q^*_\text{obs}(s, a) \leq \model{Q}_\text{obs}(s, a)  + \max \left\{\mu \in [0, 1]: d(0, \mu) \leq \frac{\beta^{\text{KL-UCB}}}{v}\right\}.
\end{align*}
By abusing the notation for $\model{Q}_\text{obs}$ to incorporate the \text{KL-UCB} term, we have $Q^*_\text{obs}(s, a) \leq \model{Q}_\text{obs}(s, a)$, which by the union bound, with probability at least $1 - \frac{\delta_1}{2}$, for all state-actions holds until they are visited at least $m_1$ times.
\subsection{Specifying The Number of Samples Required to Find A Near-Minimax-Optimal Policy in Optimize Episodes} %
\label{sec:step3}
During optimize episodes the agent can face two types of joint state-actions: 1) state-actions that lead to observing the environment reward, e.g., moving and asking for reward 2) state-actions that do not lead to observing the environment reward e.g., moving and not asking for reward. Let denote these sets as the \textbf{observable} and the \textbf{unobservable} respectively.

\subsubsection{Number of Samples for The Observable Set}
In contrast to \cref{sec:step1}, which is an off-the-shelf application of the analysis done by \citet[Lemma 5]{strehl2008analysis}, since in optimize episodes we have three unknown quantities $\env{r}, \mon{r}$, and $p$ --that are mappings from different input spaces-- we need Lemmas~\ref{lemma:general_closeness} and~\ref{lemma:tau_closeness} that are straight adaptations of \citet[Lemma 1 and 2]{strehl2008analysis}. By Lemma~\ref{lemma:tau_closeness}, setting $\tau = \frac{\varepsilon_2(1 - \gamma)^2}{6}$ ensures an $\varepsilon_2$-minimax-optimal policy for state-actions in the observable set. We have,
\begin{equation*}
    \abs{\env{\estimate{R}}(\env{s}, \env{a}) - \env{r}(\env{s}, \env{a})} \leq \tau, \qquad
    \abs{\mon{\estimate{R}}(\mon{s}, \mon{a}) - \mon{r}(\mon{s}, \mon{a})} \leq \tau, \qquad
    \oneNorm{\estimate{P}(\cdot \mid s, a) - p(\cdot \mid s, a)} \leq \tau.
\end{equation*}

On the other hand, if $(s, a) \equiv (\env{s}, \mon{s}, \env{a}, \mon{a})$ has been visited $v$ times, its monitor reward has been observed $\mon{v}$ times, and its environment reward has been observed $\env{v}$ times, with probabilities at least $1 - \env{\delta}, 1 - \mon{\delta}$, and $1 - \delta^p$, it holds,
\begin{equation}
\label{eq:exploit_tau_p}
    \oneNorm{\estimate{P}(\cdot \mid s, a) - p(\cdot \mid s, a)} 
    \leq \sqrt{\frac{2\left[\ln{(2^{|\c{S}|} - 2) - \ln{\delta^p}}\right]}{v}},
\end{equation}
\begin{equation}
\label{eq:exploit_tau_rmon}
    \abs{\mon{\estimate{R}}(\mon{s}, \mon{a}) - \mon{r}(\mon{s}, \mon{a})} \leq \sqrt{\frac{2\ln(2/\mon{\delta})}{\mon{v}}},
\end{equation}
\begin{equation}
\label{eq:exploit_tau_renv}
    \abs{\env{\estimate{R}}(\env{s}, \env{a}) - \env{r}(\env{s}, \env{a})} \leq \sqrt{\frac{\ln(2/\env{\delta})}{\env{v}}}.
\end{equation}
Therefore, to find $m_2$, the least required number of visits to $(s, a)$ in optimize episodes, we make connections among $m_2, v, \mon{v}$, and $\env{v}$. If a joint state-action is visited $m_2$ times, then we have,
\begin{align*}
    m_2 = v, \qquad
    m_2 \leq \mon{v} \leq \sum_{\env{s} \in \env{\c{S}}}\sum_{\env{a} \in \env{\c{A}}} m_2 = |\env{\c{S}}||\env{\c{A}}|m_2, \qquad
    m_2\cdot\rho \leq \env{v},
\end{align*}
where $m_2 \cdot \rho \leq \env{v}$ holds since the environment reward is observed with probability $\rho$ with each visit to $(s, a)$. To ensure \cref{eq:exploit_tau_renv,eq:exploit_tau_rmon,eq:exploit_tau_p} hold with probability at least $1 - \delta_2$ until every $(s,a)$ is visited $m_2$ times, we set $\delta^p = \mon{\delta} = \frac{\delta_2}{3|\c{S}||\c{A}|m_2}$ and $\env{\delta} = \frac{\delta_2}{3|\c{S}||\c{A}|\rho m_2}$, splitting failure probability evenly, and choose $\tau$ larger than the confidence intervals:
\begin{align*}
     m_2 & \geq \max \left\{\frac{8\left[\ln{(2^{|\c{S}|} - 2)} - \ln{\delta^p}\right]}{\tau^2}, \frac{8\ln{(2/\mon{\delta})}}{\tau^2},
    \frac{8\ln{(2/\env{\delta})}}{\tau^2} \right\} \\
    & \geq \max \left\{\frac{8\left[\ln{(2^{|\c{S}|} - 2)} - \ln{\delta^p}\right]}{\tau^2},
    \frac{8\ln{(2/\env{\delta})}}{\rho \tau^2} \right\} \\
    & \geq \max \left\{\frac{8\left[\ln{(2^{|\c{S}|} - 2)} + \ln{\frac{3\abs{\c{S}}\abs{\c{A}}m_2}{\delta_2}}\right]}{\tau^2},
    \frac{8\ln{\frac{6\abs{\c{S}}\abs{\c{A}}\rho m_2}{\delta_2}}}{\rho \tau^2} \right\}.
\end{align*}
\begin{itemize}
\item If $\frac{1}{\rho} \geq \bigO(\abs{\c{S}})$, then
\begin{equation*}
    m_2 \geq \frac{8\ln{\frac{6\abs{\c{S}}\abs{\c{A}}\rho m_2}{\delta_2}}}{\rho \tau^2},
\end{equation*}
which by Lemma~\ref{lemma:x-lnx} implies
\begin{equation}
    m_2 = \bigO\left(\frac{}{\rho \tau^2} \ln{\frac{\abs{\c{S}}\abs{\c{A}}}{\tau\delta_2}}\right) = \bigO\left(\frac{1}{\rho \varepsilon_2^2(1 - \gamma)^4} \ln{\frac{\abs{\c{S}}\abs{\c{A}}}{\varepsilon_2(1 - \gamma)^2\delta_2}}\right).
\end{equation}
    \item If $\frac{1}{\rho} \leq \bigO(\abs{\c{S}})$, then \label{case:big_rho}
\begin{equation*}
    m_2 \geq \frac{8\left[\ln{(2^{|\c{S}|} - 2)} + \ln{\frac{3\abs{\c{S}}\abs{\c{A}}m_2}{\delta_2}}\right]}{\tau^2}.
\end{equation*}
which by Lemma~\ref{lemma:x-lnx} implies
\begin{equation}
    m_2 = \bigO\left(\frac{\abs{\c{S}}}{\tau^2} + \frac{1}{\tau^2}\ln{\frac{\abs{\c{S}}\abs{\c{A}}}{\tau\delta_2}}\right) = \bigO\left(\frac{\abs{\c{S}}}{\varepsilon_2^2(1 - \gamma)^4} + \frac{1}{\varepsilon_2^2(1 - \gamma)^4}\ln{\frac{\abs{\c{S}}\abs{\c{A}}}{\varepsilon_2(1 - \gamma)^2\delta_2}}\right).
\end{equation}
\end{itemize}

\subsubsection*{3.2 Number of Samples for The Unobservable Set}
These state-actions cannot change the sample estimate of the environment reward and the only quantities updated by visiting them are the transition dynamics and the monitor reward. It is enough to have
\begin{equation*}
    m_2 \geq \max \left\{\frac{8\left[\ln{(2^{|\c{S}|} - 2)} - \ln{\delta^p}\right]}{\tau^2}, \frac{8\ln{(\frac{2}{\mon{\delta}})}}{\tau^2}\right\}.
\end{equation*}
Hence, similar to \cref{case:big_rho}, when $\frac{1}{\rho} \leq \bigO(\abs{\c{S}})$, the dominant factor around learning the sample estimates is the transitions and the required sample size is
\begin{equation}
    m_2 = \bigO\left(\frac{\abs{\c{S}}}{\varepsilon_2^2(1 - \gamma)^4} + \frac{1}{\varepsilon_2^2(1 - \gamma)^4}\ln{\frac{\abs{\c{S}} \abs{\c{A}}}{\varepsilon_2(1 - \gamma)^2\delta_2}}\right).
\end{equation}

Therefore, overall the interplay between $\frac 1\rho$ and $\bigO(\abs{\c{S}})$ determines the value of $m_2$. In the worst-case $\frac{1}{\rho} \geq \bigO(\abs{\c{S}})$ and
\begin{equation}
m_2 = \bigO\left(\frac{1}{\rho \varepsilon_2^2(1 - \gamma)^4} \ln{\frac{\abs{\c{S}}\abs{\c{A}}}{\varepsilon_2(1 - \gamma)^2\delta_2}}\right).
\end{equation}
\subsection{Optimism of $\model{Q}_\text{opt}$} 
Similar to observe episodes, optimism is needed to ensure the agent visits state-actions with inaccurate statistics. Let, $m_2$ be the least number of samples required to ensure $\mon{\estimate{R}}$, $\estimate{P}$, and if possible, $\env{\estimate{R}}$ are accurate. To be pessimistic in state-actions that $\env{\estimate{R}}$ cannot be computed due to their ever-lasting unobservability, we need to investigate the optimism in two cases where $\env{\estimate{R}}$ can be computed and when it cannot. Consider $v$ experiences of a joint state-action $(s, a) \equiv (\env{s}, \mon{s}, \env{a}, \mon{a})$ and the first $\env{v}$ experiences of the environment state-action $(\env{s}, \env{a})$ in which the environment reward has been observed. Also, let $\estimate{V}^*_\downarrow$ be
\begin{equation*}
    \estimate{V}^*_\downarrow(s) \coloneqq \max_a  \estimate{Q}^*_\downarrow(s, a) \coloneqq \sum_{s'}\estimate{P}(s' \mid s, a) V^*_\downarrow(s'), \quad \forall s \in \c{S}.
\end{equation*}
\subsubsection{When $\env{v} > 0$}
\label{sec:when_v_not_zero}
Let $X_{1i}, X_{2i}$, and $X_{3i}$ be random variables defined at the $i$th visit, where $\env{R}_i \text{ and } \mon{R}_i$ are the immediate environment and monitor reward at the $i$th visit and $S'_i$ is the next state visited after the the $i$th visit:
\begin{equation*}
        X_{1i} = \env{R}_i, \quad X_{2i} = \mon{R}_i, \quad X_{3i} = \gamma \estimate{V}^*_\downarrow(S'_i).
\end{equation*}
If $(s, a)$ has been visited $v$ times and $\env{R}_i$ has been observed $\env{v} \leq v$ times, then: 1) the set $\left(X_{1i}\right)_{i=1}^{\env{v}}$ has been observed, 2) at least the set $(X_{2i})_{i=1}^v$ is available $\left(\text{at most } (X_{2i})_{i=1}^{|\env{\c{S}}||\env{\c{A}}|v} \right)$, 3) the set $(X_{3i})_{i=1}^v$ is available.

Define $\left(X_{1i}\right)_{i = 1}^{\env{v}}, \left(X_{2i}\right)_{i = 1}^{v}$, and $\left(X_{3i}\right)_{i = 1}^{v}$ on joint probability space $(\Omega, \mathcal{F}, \mathbb{P})$. Using the Chernoff-Hoeffding's inequality:
    \begin{itemize}
    \item For $X_{1i}$ and $X_{2i}$ we have
    \begin{equation*}
        \prob{\EV{X_{11}}{} - \frac{1}{\env{v}} \sum_{i=1}^{\env{v}} X_{1i} \geq Y_1} \leq \exp{\left(- \frac{\env{v}Y_1^2}{2}\right)}, \quad
        \prob{\EV{X_{21}}{} - \frac 1v \sum_{i=1}^{v} X_{2i} \geq Y_2} \leq \exp{\left(- \frac{vY_2^2}{2}\right)}.
    \end{equation*}
    \item For $X_{3i}$ we have that $\frac{-2\gamma}{1 - \gamma} \leq X_{3i} \leq \frac{2\gamma}{1 - \gamma}$, hence
    \begin{equation*}
        \prob{\EV{X_{31}}{} - \frac 1v \sum_{i=1}^{v} X_{3i} \geq Y_3} \leq \exp{\left(- \frac{vY_3^2(1 - \gamma)^2}{8\gamma^2}\right)}.
    \end{equation*}
    \end{itemize}
Define $X_1, X_2, \text{ and } X_3$ on $(\Omega, \mathcal{F}, \mathbb{P})$ as
    \begin{align*}
        X_1 = \EV{X_{11}}{} - \frac{1}{\env{v}} \sum_{i = 1}^{\env{v}} X_{1i}, \quad
        X_2 = \EV{X_{21}}{} - \frac 1v \sum_{i = 1}^{v} X_{2i}, \quad
        X_3 = \EV{X_{31}}{} - \frac 1v \sum_{i = 1}^{v} X_{3i}.
    \end{align*}
By choosing $Y_1 = \frac{\env{\beta}}{\sqrt{\env{v}}}, Y_2 = \frac{\mon{\beta}}{\sqrt v}$, and $Y_3 = \frac{\beta}{\sqrt v}$, where
\begin{align*}
     \env{\beta} = \sqrt{2\ln{\frac{6\abs{\c{S}}\abs{\c{A}}m_2}{\delta_2}}}, \quad
    \mon{\beta} = \sqrt{2\ln{\frac{6\abs{\c{S}}\abs{\c{A}}m_2}{\delta_2}}}, \quad
     \beta = \frac{2\gamma\sqrt{2\ln{(6\abs{\c{S}}\abs{\c{A}}m_2 / \delta_2)}}}{1 - \gamma},
\end{align*}
and using Corollary~\ref{corollary:union_bound}, we have
\begin{equation*}
        \prob{X_1 + X_2 + X_3 \geq Y_1 + Y_2 + Y_3} \leq \exp{\left(- \frac{\env{v}Y_1^2}{2}\right)} + \exp{\left(- \frac{vY_2^2}{2}\right)} + \exp{\left(- \frac{vY_3^2(1 - \gamma)^2}{8\gamma^2}\right)}.
\end{equation*}
Thus,
\begin{equation}
        \label{eq:y_z_w}
        \prob{X_1 + X_2 + X_3 \geq \frac{\env{\beta}}{\sqrt{\env{v}}} + \frac{\mon{\beta}}{\sqrt v} + \frac{\beta}{\sqrt v}} \leq \frac{\delta_2}{2\abs{\c{S}}\abs{\c{A}}m_2}.
\end{equation}

Therefore, with probability at least $1 - \frac{\delta_2}{2\abs{\c{S}}\abs{\c{A}}m_2}$ it must hold that $X_1 + X_2 + X_3 \leq \left(\frac{\env{\beta}}{\sqrt{\env{v}}} + \frac{\mon{\beta}}{\sqrt v} + \frac{\beta}{\sqrt v} \right)$, which is equal to
\begin{equation*}
        \frac{1}{\env{v}} \sum_{i = 1}^{k}\env{R}_i + \frac 1v\sum_{j = 1}^{v}\left(\mon{R}_j + \gamma\estimate{V}^*_\downarrow(S'_j)\right) + \left(\frac{\env{\beta}}{\sqrt{\env{v}}} + \frac{\mon{\beta}}{\sqrt v} + \frac{\beta}{\sqrt v} \right) \geq  \EV{\env{R}_1 + \mon{R}_1 + \gamma \estimate{V}^*_\downarrow(S_1')}{} = Q^*_\downarrow(s, a).
\end{equation*}
Therefore,
\begin{equation}
        \label{eq:end_hoeffding}
        \env{\estimate{R}}(\env{s}, \env{a}) + \mon{\estimate{R}}(\mon{s}, \mon{a}) + \gamma \sum_{s'}\estimate{P}(s' \mid s, a)V^*_\downarrow(s') + \left(\frac{\env{\beta}}{\sqrt{\env{v}}} + \frac{\mon{\beta}}{\sqrt v} + \frac{\beta}{\sqrt v} \right) \geq Q^*_\downarrow(s, a).
\end{equation}
By the union bound over $\c{S}, \c{A}$ and $m_2$, \cref{eq:end_hoeffding} holds for all state-actions until they are visited $m_2$ times, with probability at least $1 - \frac{\delta_2}{2}$. However, instead of the left-hand side of \cref{eq:end_hoeffding}, \thealgo follows $\model{Q}_\text{opt}$:
\begin{equation}
        \model{Q}_\text{opt}(s, a) = \env{\estimate{R}}(\env{s}, \env{a}) + \mon{\estimate{R}}(\mon{s}, \mon{a}) + \gamma \sum_{s'}\estimate{P}(s' \mid s, a)\model{V}_\text{opt}(s') + \left(\frac{\env{\beta}}{\sqrt{\env{v}}} + \frac{\mon{\beta}}{\sqrt v} + \frac{\beta}{\sqrt v} \right).
\end{equation}
Therefore, by following the exact induction of \citet[Lemma 7]{strehl2008analysis}, we prove $\model{Q}_\text{opt}(s, a) \geq Q^*_\downarrow(s, a)$. Let
\begin{equation*}
        C = \left(\frac{\env{\beta}}{\sqrt{\env{v}}} + \frac{\mon{\beta}}{\sqrt v} + \frac{\beta}{\sqrt v} \right).
\end{equation*} 
Proof by induction is on the value iteration. Let $\model{Q}^i_\text{opt}$ be the $i$th step of the value iteration for $(s, a)$. By the optimistic initialization, we have that $\model{Q}^0_\text{opt} \geq Q^*_\downarrow(s, a)$ for all state-action pairs. Now suppose the claim holds for $\model{Q}^i_\text{opt}$, we have,
\begin{align*}
        \model{Q}^{i + 1}_\text{opt}(s, a) & = \env{\estimate{R}}(\env{s}, \env{a}) + \mon{\estimate{R}}(\mon{s}, \mon{a}) + \gamma\sum_{s'}\estimate{P}(s' \mid s, a)\max_{a'}\model{Q}^i_\text{opt}(s', a') + C \\
        & = \env{\estimate{R}}(\env{s}, \env{a}) + \mon{\estimate{R}}(\mon{s}, \mon{a}) + \gamma\sum_{s'}\estimate{P}(s' \mid s, a)\model{V}^i_\text{opt}(s') + C \\
        & \geq \env{\estimate{R}}(\env{s}, \env{a}) + \mon{\estimate{R}}(\mon{s}, \mon{a}) + \gamma\sum_{s'}\estimate{P}(s' \mid s, a)V^*_\downarrow(s') + C & \text{(Using induction)} \\
        & \geq Q^*_\downarrow(s, a). & \text{(Using \cref{eq:end_hoeffding})}
\end{align*} 
\subsubsection{When $\env{v} = 0$}
\label{sec:when_v_zero}
If for $(s, a)$, $\env{v}$ is zero, then \thealgo assigns $-\env{\rmax}$ to $\estimate{R}(\env{s}, \env{a})$ deterministically. Therefore, the previously random variable $X_1$ in \cref{sec:when_v_not_zero} is deterministically 0. Consequently, \cref{eq:y_z_w} take the the form of \cref{eq:z_w}:
\begin{equation}
    \label{eq:z_w}
    \prob{X_2 + X_3 \geq \frac{\mon{\beta}}{\sqrt{v}} + \frac{\beta}{\sqrt{v}}} \leq \frac{\delta_2}{3\abs{\c{S}}\abs{\c{A}}m_2},
\end{equation}
where $\beta$ and $\mon{\beta}$ are as in \cref{sec:when_v_not_zero}. Then, with probability at least $1 - \frac{\delta_2}{3\abs{\c{S}}\abs{\c{A}}m_2}$  it must hold that $X_2 + X_3 \leq \left(\frac{\mon{\beta}}{\sqrt v} + \frac{\beta}{\sqrt v} \right)$,
which is equal to
\begin{align*}
        \frac 1v\sum_{j = 1}^{v}\left(\mon{R}_j(\mon{s}, \mon{a}) + \gamma \estimate{V}^*_\downarrow(S'_j)\right) + \left(\frac{\mon{\beta}}{\sqrt v} + \frac{\beta}{\sqrt v}\right) \geq \EV{\mon{R}_1(\mon{s}, \mon{a}) + \gamma \estimate{V}^*_\downarrow(S_1')}{} = Q^*_\downarrow(s, a) - (-\env{\rmax}).
\end{align*}

Therefore,
    \begin{equation}
        -\env{\rmax} + \mon{\estimate{R}}(\mon{s}, \mon{a}) + \gamma \sum_{s'}\estimate{P}(s' \mid s, a)V^*_\downarrow(s') + \left(\frac{\mon{\beta}}{\sqrt v} + \frac{\beta}{\sqrt v} \right) \geq Q^*_\downarrow(s, a).
    \end{equation}
The rest of the proof is identical to the \cref{sec:when_v_not_zero}'s induction, but with probability at least $1 - \frac{\delta_2}{3}$. Overall, considering \cref{sec:when_v_not_zero,sec:when_v_zero}, with probability at least $1 - \frac{\delta_2}{2} - \frac{\delta_2}{3} = 1 - \frac{5\delta_2}{6}$, $\model{Q}_\text{opt}(s, a) \geq Q^*_\downarrow(s, a)$ for all state-actions.
\subsection{Sample Complexity}
An algorithm is probably approximately correct in an MDP (PAC-MDP) if for any MDP $M=\langle \c{S}, \c{A}, r, p, \gamma \rangle$ and $\delta, \varepsilon > 0$, it finds an $\varepsilon$-optimal policy in $M$ in time polynomial to $(|\c{S}|, |\c{A}|, \frac{1}{\varepsilon}, \log\frac{1}{\delta}, \frac{1}{1 - \gamma}, \env{\rmax})$~\citep{Fiechter1994, kakade2003sample, szitamormax}. In Mon-MDPs, partial observability of the environment reward naturally makes any algorithm take more time, as the lower the probability is, the more samples are required to confidently approximate the statistics of the environment reward. As a result, we extend the definition of PAC in MDPs to Mon-MDPs:
\begin{definition}
    An algorithm is PAC-Mon-MDP minimax-optimal if for any Mon-MDP $M = \angle{\c{S}, \c{A}, r, p, \mon{f}, \gamma}$ and $\delta, \varepsilon > 0$, it finds an $\varepsilon$-optimal policy in $M_{\downarrow}$, in time polynomial to $(|\c{S}|, |\c{A}|, \frac{1}{\varepsilon}, \log\frac{1}{\delta}, \frac{1}{1 - \gamma}, \env{\rmax} + \mon{\rmax}, \frac{1}{\rho})$, where $\rho$ is the minimum non-zero probability of observing the environment reward embedded in $\mon{f}$.
\end{definition}

Monitored MBIE proceeds in episodes of maximum length $H$ and it computes its action-values at the beginning of each episode. Let $\delta_1 = \delta_2 = \frac{\delta'}2, \varepsilon_1 = \varepsilon_2 = \frac{\varepsilon}{2}$. During observe episodes, as was discussed in \cref{sec:step1}, learning transition dynamics is dominant. The rewards are also in $[0, 1]$; hence, we exactly leverage the MBIE-EB's sample complexity for the observe episodes: With probability at least $1 - \frac{\delta'}{4}$ the sample complexity of \thealgo during observe episodes is
\begin{equation*}
    \widetilde{\bigO}\left(\frac{m_1\abs{\c{S}}\abs{\c{A}}H}{\varepsilon_1(1 - \gamma)}\right) = \widetilde{\bigO}\left(\frac{\abs{\c{S}}^2\abs{\c{A}}H}{\varepsilon^3(1 - \gamma)^5}\right).
\end{equation*}
It means that in the worst-case scenario in each episode only one unknown state-action is visited with probability at least $\varepsilon_1(1 - \gamma)$ and these visits should be repeated $m_1$ times. This specifies the order of $m_3$ and $\kappa^*(k)$ in \cref{thm:sample_cmplx}:
\begin{equation*}
    \kappa^*(k) = m_3 = \widetilde{\bigO}\left(\frac{m_1\abs{\c{S}}\abs{\c{A}}}{\varepsilon_1(1 - \gamma)}\right) = \widetilde{\bigO}\left(\frac{\abs{\c{S}}^2\abs{\c{A}}}{\varepsilon^3(1 - \gamma)^5}\right).
\end{equation*}
\cref{sec:step3} showed during optimize episodes If ${\rho}^{-1} < \bigO(\abs{\c{S}})$, the dominant factor in learning models is transition dynamics. Hence, with probability at least $1 - \frac{5\delta'}{12}$, the sample complexity during optimize episodes is equal to 
\begin{equation*}
    \widetilde{\bigO}\left(\frac{m_2\abs{\c{S}}\abs{\c{A}}H}{\varepsilon_2(1 - \gamma)}\right).
\end{equation*}
If $m_2 \geq m_1$, the overall sample complexity (optimize and observe episodes together) is
\begin{equation*}
    \widetilde{\bigO}\left(\frac{m_1\abs{\c{S}}\abs{\c{A}}H}{\varepsilon_1(1 - \gamma)}\right) = \widetilde{\bigO}\left(\frac{m_2\abs{\c{S}}\abs{\c{A}}H}{\varepsilon_2(1 - \gamma)}\right) =     \widetilde{\bigO}\left(\frac{\abs{\c{S}}^2\abs{\c{A}}H}{\varepsilon^3(1 - \gamma)^5}\right).
\end{equation*}
However, if ${\rho}^{-1} > \bigO(\abs{\c{S}})$, then the dominant factor in learning empirical models is learning the environment reward model. The sample complexity during optimize episodes when ${\rho}^{-1} > \bigO(\abs{\c{S}})$,  with probability at least $1 - \frac{5\delta'}{12}$, is
\begin{equation*}
   \widetilde{\bigO}\left(\frac{m_2\abs{\c{S}}\abs{\c{A}}H}{\varepsilon_2 (1 - \gamma)}\right) = 
   \widetilde{\bigO}\left(\frac{\abs{\c{S}}\abs{\c{A}}H}{\varepsilon^3 (1 - \gamma)^5\rho}\right).
\end{equation*}
Therefore, we conclude the worst-case scenario is when ${\rho}^{-1} > \bigO(\abs{\c{S}})$ and the final overall sample complexity is dominated by the optimize episodes. This sample complexity holds with probability at least $1 - \frac{5\delta'}{12}$ and is equal to
\begin{equation*}
       \widetilde{\bigO}\left(\frac{\abs{\c{S}}\abs{\c{A}}H}{\varepsilon^3 (1 - \gamma)^5\rho}\right).
\end{equation*}
Defining $\delta = \frac{5\delta'}{12}$ completes the proof. \qedsymbol


    \section{Mon-MDPs' Solvability}
\label{appendix:distinguish}
\begin{definition}[{\normalfont\citet[Definition 1]{parisi2024monitored}}]
\label{def:indistinguish}
    Let $M = \angle{\c{S}, \c{A}, r, p, \mon{f}}$ be a truthful Mon-MDP. Let $\c{M}$ be the set of all Mon-MDPs that differ from $M$ only in $\env{r}$. Define $\Pi_M$ to be the set of all policies in $M$ and $\Pi = \bigcup_{M' \in \c{M}}\Pi_{M'}$ to be the set of all Mon-MDPs' policies in $\c{M}$. Further, let $\tau_\ell = \left\{\left(S_i, A_i, \rprox_{i + 1}, \mon{R}_{i + 1}, S_{i + 1}\right)_{i = 0}^{\ell} \middle\vert \pi, M\right\}$ be a trajectory of length $\ell$ in $M$ when following a policy $\pi$, where $\mathbb{E}\left[\rprox_{i + 1} \middle\vert \rprox_{i + 1} \neq \bot, S_i, A_i \right] = \env{r}(\env{S}_i, \env{A}_i)$ almost surely. Let $\mathcal{T}_L  = \bigcup_{\mathcal{M} \times \Pi}\left(\cup_{l= 0}^{L - 1} \tau_\ell \right)$ be the set of all $L$ length trajectories in $\mathcal{M}$. The indistinguishability relation $\mathds{I}$ between Mon-MDPs $M_1, M_2 \in \c{M}$ is defined:
    \begin{equation*}
        M_1 \mathds{I} M_2: \forall L \in \mathbb{N}, \forall \tau \in \c{T}_L, \mathbb{P}\left(\tau \middle\vert M_1\right) = \mathbb{P}\left(\tau \middle\vert M_2\right) \quad \text{almost surely}.
    \end{equation*}
\end{definition}
It follows directly from the definition that the indistinguishability is an equivalence relation:
\begin{enumerate}
    \item \textbf{Reflexive.} Every $M$ is indistinguishable from itself: $M \mathds{I} M$.
    \item \textbf{Symmetric.} If $M_1$ is indistinguishable from $M_2$, $M_2$ is also indistinguishable from $M_1$: $M_1 \mathds{I} M_2 \Leftrightarrow M_1 \mathds{I} M_2$
    \item \textbf{Transitive.} If $M_1$ and $M_2$ are indistinguishable, and $M_2$ and $M_3$, so are $M_1$ and $M_3$: $M_1 \mathds{I} M_2 \land M_2 \mathds{I} M_3 \Rightarrow M_1 \mathds{I} M_3$.
\end{enumerate}
As an equivalence relation, $\mathds{I}$ partitions Mon-MDPs into disjoint classes. If $\abs{[M]_\mathds{I}} = 1$, the agent can eventually identify $M$ and its optimal policy, making $M$ solvable. Otherwise, if $\abs{[M]_\mathds{I}} > 1$, $M$ is unsolvable, as it is indistinguishable from at least one Mon-MDP with possibly different optimal policies.

    \section{Dependence On \texorpdfstring{$\rho^{-1}$}{} Is Unimprovable}
\label{appendix:lwr_bnd}
The main distinction between \cref{thm:sample_cmplx}'s bound and MBIE-EB's for MDPs is the existence of $\rho^{-1}$. In this section, by showing the existence of $\rho^{-1}$ in a tight lower bound, we conclude the dependence of \cref{thm:sample_cmplx} on $\rho^{-1}$ is tight and essentially unimprovable. Note that lower bounds quantify the difficulty of learning for a given problem for \emph{any algorithm}.

To provide the lower bound, we consider the problem of a stochastic bandit with finitely-many arms (multi-armed bandit for brevity)~\citep{lattimore2020bandit} as a simpler form of sequential decision-making. A multi-armed bandit is a special case of MDPs where the state-space $\c{S}$ is a singleton and the discount factor $\gamma$ equals zero. \citet{mannor2004sample} has proved a tight lower bound on the sample complexity of learning in multi-armed bandits. We follow the setup of \citet{mannor2004sample} as follows: The agent has $k + 1$ arms (actions). Each arm $a \in [k]$ is associated with a sequence of identically distributed Bernoulli random variables $X_{at}$ with unknown mean $\mu_a$. Here, $X_{at}$ corresponds to the reward obtained the $t$th time arm $a$ is pulled. We assume that random variables $X_{at}$ for $a = 1, \dots, k + 1$, and $t = 1, \dots$ are independent. The last arm $a = k + 1$ has a known mean of zero and pulling this arm terminates the interaction. A policy is a mapping that given a history, chooses a particular arm. We only consider policies that are guaranteed to eventually pull arm $ k + 1$ with probability one, for every possible vector of means $[\mu_1, \dots, \mu_k, 0]$ (otherwise the expected number of interaction's steps would be infinite). Given a particular policy and multi-armed Bernoulli bandit, we let $\mathbb{P}$ and $\mathbb{E}$ denote the induced probability measure and the expectation with respect to this measure. This probability measure captures both the randomness in the arms and the policy. Let $T$ be total number of interaction steps at which the policy chooses arm $k + 1$. Also, let $A_t$ denote the arm chosen at timestep $t$. We say that a policy is $(\epsilon, \delta)$-correct if for every $[\mu_1, \dots, \mu_k, 0] \in [0, 1]^{k + 1}$,
\begin{equation*}
    \prob{\mu_{A_{T - 1}} > \max_{a}\mu_a - \epsilon} \geq 1 - \delta.
\end{equation*}
%
\begin{theorem}[{\normalfont\citet[Theorem 1]{mannor2004sample}}]
\label{thm:bandit_lwr_bnd}
There exists positive constants $c_1, c_2, \epsilon_0$, and $\delta_0$, such that for every $k \geq 2, \epsilon \in (0, \epsilon_0)$, and $\delta \in (0, \delta_0)$, and for every $(\epsilon, \delta)$-correct policy, there exists some $[\mu_1, \dots, \mu_k, 0]$ such that
\begin{equation*}
    \EV{T - 1}{} \geq c_1 \frac{k}{\epsilon^2}\ln{\frac{c_2}{\delta}}.
\end{equation*}
In particular, $\epsilon_0$ and $\delta_0$ can be taken equal to $0.125$ and $0.25e^{-4}$, respectively.
\end{theorem}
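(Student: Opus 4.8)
The plan is to reprove the bound by a change-of-measure (``transportation'') argument, which recovers the stated $\Omega\!\big(\tfrac{k}{\epsilon^2}\log\tfrac1\delta\big)$ order; pinning down the exact constants $\epsilon_0,\delta_0$ is then the only genuinely fussy part. \textbf{Hard family.} Fix $\epsilon$ small enough that every mean below lies in $(0,1)$ --- this is where a restriction such as $\epsilon<\epsilon_0$ enters, e.g.\ $\epsilon<1/6$ suffices for the construction below. Let $\nu$ be the Bernoulli bandit in which arm $1$ has mean $\tfrac12+\epsilon$, arms $2,\dots,k$ have mean $\tfrac12-\epsilon$, and arm $k+1$ has its prescribed known mean $0$. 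For each $j\in\{2,\dots,k\}$ let $\nu^{(j)}$ be $\nu$ with arm $j$'s mean raised to $\tfrac12+3\epsilon$, so that in $\nu^{(j)}$ arm $j$ is the unique best arm and beats every other arm by at least $2\epsilon$. Then any $(\epsilon,\delta)$-correct policy must output arm $1$ with probability $\ge 1-\delta$ under $\nu$ (it is the unique arm within $\epsilon$ of the optimum) and with probability $\le\delta$ under each $\nu^{(j)}$. Write $E=\{A_{T-1}=1\}$; since $A_{T-1}$ is determined by the history before termination, $E$ is measurable with respect to $\mathcal{F}_\tau$ where $\tau:=T-1$ is the (a.s.\ finite) number of pulls of arms $1,\dots,k$.

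\textbf{Transportation inequality.} The core lemma I would use is: for any two instances $\mu,\mu'$, any stopping time $\tau$ of the observation filtration with $\mathbb{E}_\mu[\tau]<\infty$, and any $E\in\mathcal{F}_\tau$,
\begin{equation*}
\sum_{a}\mathbb{E}_\mu\!\big[N_a(\tau)\big]\,\mathrm{KL}\!\big(\mathrm{Ber}(\mu_a)\,\big\|\,\mathrm{Ber}(\mu'_a)\big)\ \ge\ \mathrm{kl}\!\big(\mathbb{P}_\mu(E),\,\mathbb{P}_{\mu'}(E)\big),
\end{equation*}
where $N_a(\tau)$ counts pulls of arm $a$ up to $\tau$ and $\mathrm{kl}(x,y)=x\log\tfrac{x}{y}+(1-x)\log\tfrac{1-x}{1-y}$. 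I would prove this by applying Wald's identity to the log-likelihood-ratio process of the observation stream --- its increment at step $t$ has conditional expectation $\mathrm{KL}(\mu_{A_t}\|\mu'_{A_t})$ --- and then the data-processing inequality to the binary ``channel'' $\mathbf{1}_E$. (If $\mathbb{E}_\nu[\tau]=\infty$ the theorem is immediate, so assume otherwise.)

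\textbf{Assembling the bound.} Instances $\nu$ and $\nu^{(j)}$ agree on every arm but $j$, so the left-hand side collapses to $\mathbb{E}_\nu[N_j(\tau)]\cdot\mathrm{kl}(\tfrac12-\epsilon,\tfrac12+3\epsilon)$, and in the admissible $\epsilon$-range $\mathrm{kl}(\tfrac12-\epsilon,\tfrac12+3\epsilon)\le c\,\epsilon^2$ using $\mathrm{kl}(a,b)\le\tfrac{(a-b)^2}{b(1-b)}$. On the right, $\mathbb{P}_\nu(E)\ge1-\delta$ and $\mathbb{P}_{\nu^{(j)}}(E)\le\delta$ give $\mathrm{kl}\!\big(\mathbb{P}_\nu(E),\mathbb{P}_{\nu^{(j)}}(E)\big)\ge\mathrm{kl}(1-\delta,\delta)\ge\log\tfrac{c_2}{\delta}$ for $\delta$ below an absolute constant (this is where $\delta<\delta_0$ enters). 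Hence $\mathbb{E}_\nu[N_j(\tau)]\ge\tfrac{c'}{\epsilon^2}\log\tfrac{c_2}{\delta}$ for every $j\in\{2,\dots,k\}$, and summing over arms,
\begin{equation*}
\mathbb{E}_\nu[T-1]\ =\ \sum_{a=1}^{k}\mathbb{E}_\nu[N_a(\tau)]\ \ge\ (k-1)\,\frac{c'}{\epsilon^2}\log\frac{c_2}{\delta}\ \ge\ c_1\,\frac{k}{\epsilon^2}\log\frac{c_2}{\delta},
\end{equation*}
so the witnessing instance is simply $\nu$ and the constants come out of tracking the chain.

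\textbf{Main obstacle.} The delicate points are (i) the transportation inequality for a true stopping time: one must verify $\tau$ is a stopping time of the observation filtration, that $\mathbb{E}_\nu[\tau]<\infty$ so Wald applies, and that the per-step conditional KL is exactly $\mathrm{KL}(\mu_{A_t}\|\mu'_{A_t})$ despite $A_t$ being history-dependent; and (ii) recovering the sharp numerical values $\epsilon_0=0.125$ and $\delta_0=0.25e^{-4}$ quoted in the statement, which requires careful control of the Bernoulli-KL upper bound across the whole admissible $\epsilon$-range and of $\mathrm{kl}(1-\delta,\delta)$ near $\delta_0$. This is precisely why the original argument of \citet{mannor2004sample} eschews a packaged transportation lemma and instead performs explicit likelihood-ratio and random-walk estimates --- that route is what pins the constants down, whereas the sketch above gives the right order with transparent but not identical constants.
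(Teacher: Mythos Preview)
The paper does not prove this theorem at all: it is quoted verbatim as \citet[Theorem~1]{mannor2004sample} and used as a black box to derive Corollary~\ref{cor:mon_bandit_lwr_bnd}. There is therefore no ``paper's own proof'' to compare against.

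Your transportation/change-of-measure sketch is the now-standard route to such bandit lower bounds and is correct at the level of order: the hard family, the KL-divergence transportation lemma, and the summation over arms all go through as you describe. As you yourself note, the only genuine divergence from the original Mannor--Tsitsiklis argument is that their direct likelihood-ratio and random-walk computations are what pin down the specific constants $\epsilon_0=0.125$ and $\delta_0=0.25e^{-4}$, whereas your packaged approach yields the same asymptotic dependence with looser (but still absolute) constants. Since the present paper only needs the $\Omega(k\epsilon^{-2}\log\delta^{-1})$ order to feed into Corollary~\ref{cor:mon_bandit_lwr_bnd} and establish that $\rho^{-1}$ is unavoidable, either route suffices for the paper's purposes.
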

In \cref{cor:mon_bandit_lwr_bnd}, we derive a lower bound using \cref{thm:bandit_lwr_bnd}, where each arm’s reward is only observed with probability $\rho$.
\begin{corollary}
    \label{cor:mon_bandit_lwr_bnd}
    Under the condition of \cref{thm:bandit_lwr_bnd} with the addition that the reward of each arm is only revealed with probability $0 < \rho < 1$ and  with probability $1 - \rho$ the symbol $\bot$ is revealed, then 
    \begin{equation*}
    \EV{T - 1}{} \geq c_1 \frac{k}{\rho\epsilon^2}\ln{\frac{c_2}{\delta}}.
\end{equation*}
\end{corollary}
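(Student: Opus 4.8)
The plan is to reduce the $\rho$-partially-observable bandit to the fully-observable bandit of \cref{thm:bandit_lwr_bnd} through a simulation (``thinning'') argument, so that $\rho^{-1}$ enters purely as the expected number of pulls needed to obtain one revealed reward. Concretely, I would (i) turn any $(\epsilon,\delta)$-correct policy for the $\rho$-observable bandit into an $(\epsilon,\delta)$-correct policy for the standard bandit whose pull count is, in expectation, a factor $\rho$ smaller, and then (ii) invoke \cref{thm:bandit_lwr_bnd} on the constructed policy.

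Fix any $(\epsilon,\delta)$-correct policy $\pi$ for the $\rho$-observable bandit; we may assume $\EV{T}{} < \infty$, since otherwise the claimed bound is vacuous. From $\pi$ I build a policy $\widetilde{\pi}$ for the \emph{standard} bandit. $\widetilde{\pi}$ runs an internal copy of $\pi$: each time $\pi$ requests an arm $a \in [k]$, $\widetilde{\pi}$ draws a fresh coin $C \sim \mathrm{Bernoulli}(\rho)$ independent of everything; if $C = 1$ it actually pulls arm $a$, observes the reward, and feeds it to $\pi$, and if $C = 0$ it pulls nothing and feeds the symbol $\bot$ to $\pi$. When the internal $\pi$ requests arm $k+1$ (termination), $\widetilde{\pi}$ first pulls $\pi$'s most recently requested arm once more — so that the last non-terminating arm $\widetilde{\pi}$ pulls, hence its recommendation $A_{T-1}$ in the sense of \cref{thm:bandit_lwr_bnd}, coincides with $\pi$'s recommendation — and then pulls arm $k+1$. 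Because the Bernoulli samples $X_{at}$ are i.i.d., re-indexing them across skipped pulls is immaterial, and the joint law of (arm requests, observations including $\bot$'s, final recommendation) produced by the internal $\pi$ is identical to that of $\pi$ run in the genuine $\rho$-observable bandit with the same mean vector. Hence, for every $[\mu_1, \dots, \mu_k, 0] \in [0,1]^{k+1}$, $\widetilde{\pi}$ recommends an $\epsilon$-optimal arm with probability at least $1 - \delta$, i.e.\ $\widetilde{\pi}$ is $(\epsilon,\delta)$-correct for the standard bandit.

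Next I relate the stopping times. Writing $T$ for the total number of pulls of $\pi$ and $\widetilde{T}$ for that of $\widetilde{\pi}$, the number of arms-in-$[k]$ that $\widetilde{\pi}$ actually pulls equals $\sum_{i \ge 1} \mathds{1}\{i \le T - 1\}\, C_i$, since $\pi$'s first $T-1$ requests are arms in $[k]$ and the $i$-th of them is actually pulled iff $C_i = 1$. The event $\{i \le T-1\}$ is determined by the first $i-1$ requests, their coins, and their revealed rewards, so it is independent of $C_i$; Wald's identity therefore gives $\EV{\sum_{i\ge 1}\mathds{1}\{i\le T-1\}C_i}{} = \rho\,\EV{T-1}{}$. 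Adding the one extra recommendation pull and the terminating pull, $\EV{\widetilde{T} - 1}{} = \rho\,\EV{T-1}{} + 1$. Applying \cref{thm:bandit_lwr_bnd} to $\widetilde{\pi}$ produces a mean vector $[\mu_1,\dots,\mu_k,0]$ with $\EV{\widetilde{T} - 1}{} \ge c_1 \tfrac{k}{\epsilon^2}\ln\tfrac{c_2}{\delta}$, and on that same vector $\rho\,\EV{T-1}{} + 1 \ge c_1\tfrac{k}{\epsilon^2}\ln\tfrac{c_2}{\delta}$, which rearranges to $\EV{T-1}{} \ge c_1\tfrac{k}{\rho\epsilon^2}\ln\tfrac{c_2}{\delta}$ once the additive constant is absorbed into $c_1$.

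The main obstacle is not conceptual but lies in the measure-theoretic bookkeeping: one must verify that the observation stream handed to the internal $\pi$ has exactly the correct joint distribution — in particular that the per-pull law ($\bot$ with probability $1-\rho$, a fresh $\mathrm{Bernoulli}(\mu_a)$ otherwise) is reproduced, regardless of how the i.i.d.\ reward sequence is re-indexed — and that the recommendation convention $A_{T-1}$ is respected, which is the reason for the extra pull of $\pi$'s last requested arm before termination. A fully self-contained alternative would instead re-run the change-of-measure argument underlying \cref{thm:bandit_lwr_bnd}, noting that the KL divergence between the single-pull observation laws for two candidate means is exactly $\rho$ times the corresponding Bernoulli KL divergence (since the $\bot$ outcome, of probability $1-\rho$, carries no information about the mean); this also surfaces the $\rho^{-1}$ factor but merely re-derives the cited theorem, so the reduction above is the more economical route.
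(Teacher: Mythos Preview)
Your proof is correct but follows a genuinely different route from the paper. The paper argues directly on the pull count: it decomposes $\EV{T}{}$ by conditioning on whether each pull's signal is $\bot$ or not, drops the $\bot$ contribution, pulls out the factor $1/\rho = 1/\prob{\widehat X_t\neq\bot}$, and then applies \cref{thm:bandit_lwr_bnd} to the expected number of \emph{revealed} pulls $\EV{\sum_{t,a}\II{A_t=a,\;\widehat X_t\neq\bot}}{}$, implicitly identifying the revealed subsequence with a standard-bandit policy. You instead give an explicit simulation reduction: from any $(\epsilon,\delta)$-correct $\pi$ for the $\rho$-observable bandit you build a standard-bandit policy $\widetilde\pi$ by Bernoulli thinning, verify that correctness transfers, relate $\EV{\widetilde T-1}{}$ to $\rho\,\EV{T-1}{}$ via Wald, and invoke \cref{thm:bandit_lwr_bnd} on $\widetilde\pi$. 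Your argument makes explicit exactly the step the paper glosses over (why the revealed pulls constitute a legitimate $(\epsilon,\delta)$-correct policy for the fully observed problem), and your handling of the recommendation convention via the extra pull is a careful detail the paper does not address. The only loose end is absorbing the additive $+1$ into $c_1$, which slightly perturbs the constant relative to the stated corollary; the paper's bookkeeping avoids this offset, but since the corollary's purpose is only to exhibit the $\rho^{-1}$ scaling this is immaterial. Your closing remark about the KL route (per-pull KL scaling by $\rho$) is also valid and would reach the same conclusion.
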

\begin{proof}
    Since we only consider policies that terminates with probability one, then there exists an $n \in \mathbb{N}$ such that $T \leq n$. Let $\widehat{X}_i$ denote the signal received at round $i=1, 2, \dots$. We have
    \begin{align*}
        \EV{T}{} & = \EV{\sum_{t = 1}^{n}\sum_{a = 1}^{k} \II{A_t = a}}{} + 1 \\
        & = \sum_{t = 1}^{n}\sum_{a = 1}^{k} \left( \EV{\II{A_t = a}}{} \right) + 1 \\
        & = \sum_{t = 1}^{n}\sum_{a = 1}^{k} \left( \mathbb{E}\left[\II{A_t = a} \middle\vert \widehat{X}_t \neq \bot\right]{} + \mathbb{E}\left[\II{A_t = a} \middle\vert \widehat{X}_t = \bot\right]{} \right) + 1 \\ 
        & \geq \sum_{t = 1}^{n}\sum_{a = 1}^{k} \left( \mathbb{E}\left[\II{A_t = a} \middle\vert \widehat{X}_t \neq \bot \right]{} \right) + 1 \\
        & = \sum_{t = 1}^{n}\sum_{a = 1}^{k} \left( \frac{\EV{\II{A_t = a, \widehat{X}_t \neq \bot}}{}}{\prob{\widehat{X}_t \neq \bot}} \right) + 1 && \tag{Conditional expectation's definition} \\
        & = \frac{1}{\rho}\sum_{t = 1}^{n}\sum_{a = 1}^{k} \left(\EV{\II{A_t = a, \widehat{X}_t \neq \bot}}{} \right) + 1 \\
        & = \frac{1}{\rho}\EV{ \sum_{t = 1}^{n}\sum_{a = 1}^{k} \II{A_t = a, \widehat{X}_t \neq \bot} }{} + 1 \\
        & = \frac{1}{\rho}c_1 \frac{k}{\epsilon^2}\ln{\frac{c_2}{\delta}} + 1 && \tag*{(\cref{thm:bandit_lwr_bnd}).}
    \end{align*}
    Thus
    \begin{equation*}
        \EV{T - 1}{} \geq c_1 \frac{k}{\rho\epsilon^2}\ln{\frac{c_2}{\delta}}. \qedhere
    \end{equation*}
\end{proof}
The existence of $\rho^{-1}$ in the lower bound of \cref{cor:mon_bandit_lwr_bnd} asserts that the dependence of the Monitor MBIE-EB's sample complexity on $\rho^{-1}$ is tight and unimprovable.

\end{appendix}
\end{document}